\definecolor{Gray}{gray}{0.85}
\definecolor{yxc}{RGB}{255,0,0}
\definecolor{yjc}{RGB}{125,0,0}
\definecolor{ytw}{RGB}{255,69,0}
\definecolor{gen}{RGB}{0,0,200}
\definecolor{lxs}{RGB}{138,43,226}
\newcommand{\red}[1]{\textcolor{red}{#1}}
\DeclareMathOperator{\ind}{\mathds{1}}  % Indicator
\newcommand{\defn}{\coloneqq}
\newcommand{\Var}{\mathsf{Var}}
\newcommand{\cb}{c_{\mathrm{b}}}
\newcommand{\cA}{\mathcal{A}}
\newcommand{\cS}{{\mathcal{S}}}
\newcommand{\mymid}{\,|\,} 
\newcommand{\DADV}{{\sf Q-EarlySettled-Advantage}\xspace}
\newcommand{\UCBQ}{{\sf UCB-Q}\xspace}
\newcommand{\UCB}{{\sf UCB}\xspace}
\newcommand{\UCBVI}{{\sf UCB-VI}\xspace}
\newcommand{\UCBQA}{{\sf UCB-Q-Advantage}\xspace}
\newcommand{\UCBM}{{\sf UCB-M-Q}\xspace}
\newcommand{\LCBQ}{{\sf LCB-Q}\xspace}
\newcommand{\LCB}{{\sf LCB}\xspace}
\newcommand{\rref}{\mathsf{R}}
\newcommand{\refmur}{\mu^{\mathsf{ref}, \mathsf{R}}}
\newcommand{\sumb}{B}
\newcommand{\re}{\mathsf{ref}}
\newcommand{\adv}{\mathsf{adv}}
\newcommand{\refmu}{\mu^{\mathsf{ref}}} 
\newcommand{\refsg}{\sigma^{\mathsf{ref}}}
\newcommand{\advmu}{\mu^{\mathsf{adv}}}
\newcommand{\advsg}{\sigma^{\mathsf{adv}}}
\newcommand{\nextb}{B^{\mathsf{next}}}
\newcommand\reallywidehat[1]{%
\savestack{\tmpbox}{\stretchto{%
  \scaleto{%
    \scalerel*[\widthof{\ensuremath{#1}}]{\kern-.6pt\bigwedge\kern-.6pt}%
    {\rule[-\textheight/2]{1ex}{\textheight}}%WIDTH-LIMITED BIG WEDGE
  }{\textheight}% 
}{0.5ex}}%
\stackon[1pt]{#1}{\tmpbox}%
}
\newcommand\reallywidecheck[1]{%
\savestack{\tmpbox}{\stretchto{%
  \scaleto{%zh
    \scalerel*[\widthof{\ensuremath{#1}}]{\kern-.6pt\bigwedge\kern-.6pt}%
    {\rule[-\textheight/2]{1ex}{\textheight}}%WIDTH-LIMITED BIG WEDGE
  }{\textheight}% 
}{0.5ex}}%
\stackon[1pt]{#1}{\scalebox{-1}{\tmpbox}}%
}
\title{Breaking the Sample Complexity Barrier to \\ Regret-Optimal Model-Free Reinforcement Learning}
\author{Gen Li\thanks{Department of Statistics and Data Science, Wharton School, University of Pennsylvania, Philadelphia, PA 19104, USA.}  \\
UPenn    \\
\and
	Laixi Shi\thanks{Department of Electrical and Computer Engineering, Carnegie Mellon University, Pittsburgh, PA 15213, USA.}\\
	CMU\\
	\and
	Yuxin Chen\footnotemark[1] \\
	UPenn \\
\and
	Yuejie Chi\footnotemark[2] \\
 	CMU  
	}
\date{October 2021; Revised: August 2022}
\begin{document}
\theoremstyle{plain} \newtheorem{lemma}{\textbf{Lemma}}\newtheorem{proposition}{\textbf{Proposition}}\newtheorem{theorem}{\textbf{Theorem}}

\theoremstyle{remark}\newtheorem{remark}{\textbf{Remark}}

\maketitle

\begin{abstract}

Achieving sample efficiency in online episodic reinforcement learning (RL) requires optimally balancing  exploration and exploitation. When it comes to a finite-horizon episodic Markov decision process with $S$ states, $A$ actions and horizon length $H$, substantial progress has been achieved towards characterizing the minimax-optimal regret, which scales on the order of $\sqrt{H^2SAT}$ (modulo log factors) with $T$ the total number of samples. While several competing solution paradigms have been proposed to minimize regret, they are either memory-inefficient, or fall short of optimality unless the sample size exceeds an enormous threshold (e.g., $S^6A^4 \,\mathrm{poly}(H)$ for existing model-free methods). 

To overcome such a large sample size barrier to efficient RL, we design a novel model-free algorithm, with space complexity $O(SAH)$, that achieves near-optimal regret  as soon as the sample size exceeds the order of $SA\,\mathrm{poly}(H)$. 
In terms of this sample size requirement (also referred to the initial burn-in cost), 
	our method improves --- by at least a factor of $S^5A^3$ --- upon any prior memory-efficient algorithm that is asymptotically regret-optimal.   Leveraging the recently introduced variance reduction strategy (also called {\em reference-advantage decomposition}),  
the proposed algorithm employs an {\em early-settled}  reference update rule,
with the aid of two Q-learning sequences with upper and lower confidence bounds.  
The design principle of our early-settled variance reduction method might be of independent interest to other RL settings that involve intricate exploration-exploitation trade-offs. 

\end{abstract}

\noindent \textbf{Keywords:}  model-free RL, memory efficiency, variance reduction, Q-learning, upper confidence bounds, lower confidence bounds

\allowdisplaybreaks

\setcounter{tocdepth}{2}
\tableofcontents

\section{Introduction}

Contemporary reinforcement learning (RL) has to deal with unknown environments with unprecedentedly large dimensionality. 
How to make the best use of samples in the face of high-dimensional state/action space lies at the core of modern RL practice.  
An ideal RL algorithm would learn to act favorably even when the number of available data samples scales sub-linearly in the ambient dimension of the model, i.e., the number of parameters needed to describe the transition dynamics of the environment. 
The challenge is further compounded when this task needs to be accomplished with limited memory.

Simultaneously achieving the desired sample and memory efficiency is particularly challenging when it comes to online episodic RL scenarios. In contrast to the simulator setting that permits sampling of any state-action pair,  an agent in online episodic RL is only allowed to draw sample trajectories by executing a policy in the unknown Markov decision process (MDP), 
where the initial states are pre-assigned and might even be chosen by an adversary. 
Careful deliberation needs to be undertaken when deciding what policies to use to allow for effective interaction with the unknown environment, how to optimally balance exploitation and exploration, and how to process and store the collected information intelligently without causing redundancy.

\subsection{Regret-optimal model-free RL? A sample size barrier}

In order to evaluate and compare the effectiveness of RL algorithms in high dimension, 
a recent body of works sought to develop a finite-sample theoretical framework to analyze the algorithms of interest, 
with the aim of delineating the dependency of algorithm performance on all salient problem parameters in a non-asymptotic fashion 
\citep{dann2017unifying,kakade2003sample}.  Such finite-sample guarantees are brought to bear towards understanding and tackling the challenges in the sample-starved regime commonly encountered in practice. 
To facilitate discussion, let us take a moment to summarize the state-of-the-art theory for episodic finite-horizon MDPs with non-stationary transition kernels, focusing on minimizing cumulative regret  --- a metric that quantifies the performance difference between the learned policy and the true optimal policy --- with the fewest number of samples.
Here and throughout, we denote by $S$, $A$, and $H$ the size of the state space, the size of the action space, and the horizon length of the MDP, respectively, and let $T$ represent the sample size. In addition, the immediate reward gained at each time step is assumed to lie between $0$ and $1$.

\paragraph{Fundamental regret lower bound.} Following the arguments in \citet{jaksch2010near,auer2002nonstochastic}, the recent works \cite{jin2018q,domingues2021episodic} developed a fundamental lower bound\footnote{It is worth emphasizing that \citep{domingues2021episodic}  adopts the notation $T$ to represent the number of trajectories (with each trajectory containing $H$ samples), while the present paper employs $K$ to denote the number of sample trajectories and $T = KH$ the total number of samples.
Consequently, the lower bound developed in \citep{domingues2021episodic}  for non-stationary finite-horizon MDPs reads $\Omega(\sqrt{H^3SAK})$, or equivalently, $\Omega(\sqrt{H^2SAT})$ using the notation adopted herein.} on the expected total regret for this setting. Specifically, this lower bound claims that: no matter what algorithm to use, one can find an MDP such that the accumulated regret incurred by the algorithm necessarily exceeds the order of
		\begin{align}
			\text{(lower bound)} \qquad \sqrt{H^2SAT} ,
			\label{eq:lower-bound-regret}
		\end{align} 
		as long as $T\geq H^2SA$.\footnote{Given that a trivial upper bound on the regret is $T$, one needs to impose a lower bound $T\geq H^2SA$ in order for \eqref{eq:lower-bound-regret} to be meaningful.} This sublinear regret lower bound in turn imposes a sampling limit if one wants to achieve $\varepsilon$ average regret.

\paragraph{Model-based RL.} 
		
Moving beyond the lower bound, let us examine the effectiveness of model-based RL --- an approach that can be decoupled into a model estimation stage (i.e., estimating the transition kernel using available data) and a subsequent stage of planning using the learned model \citep{jaksch2010near,azar2017minimax,efroni2019tight,agrawal2017posterior,pacchiano2020optimism}. 
		In order to ensure a sufficient degree of exploration, 
\citet{azar2017minimax} came up with an algorithm called \UCBVI that blends model-based learning and the optimism principle,
which achieves a regret bound\footnote{Here and throughout, we use the standard notation $f(n)=O(g(n))$ to indicate that $f(n)/g(n)$ is bounded above by a constant as $n$ grows. The notation $\widetilde{O}(\cdot)$ resembles  $O(\cdot)$  except that it hides any logarithmic scaling. The notation $f(n)=o(g(n))$ means that $\lim_{n\rightarrow\infty} {f(n)}/{g(n)}=0$. } $\widetilde{O}\big(\sqrt{H^2SAT} \big)$ that nearly attains the lower bound \eqref{eq:lower-bound-regret} as $T$ tends to infinity. 
Caution needs to be exercised, however, that existing theory does not guarantee 
the near optimality of this algorithm unless the sample size $T$ surpasses 
		$$T\geq S^3AH^6,$$ 
		a threshold that is significantly larger than the dimension of the underlying model.
This threshold can also be understood as the initial {\em burn-in cost} of the algorithm, namely, a sampling burden needed for the algorithm to exhibit the desired performance. 
		In addition, model-based algorithms typically require storing the estimated probability transition kernel, resulting in a space complexity that could be as high as ${O}(S^2AH)$ \citep{azar2017minimax}.

\paragraph{Model-free RL.}

Another competing solution paradigm is the model-free approach, which circumvents the model estimation stage and attempts to learn the optimal values directly \citep{strehl2006pac,jin2018q,bai2019provably,yang2021q}. In comparison to the model-based counterpart, the model-free approach holds the promise of low space complexity, as it eliminates the need of storing a full description of the model. In fact, in a number of previous works (e.g., \citet{strehl2006pac,jin2018q}), an algorithm is declared to be model-free only if its space complexity is $o(S^2AH)$ regardless of the sample size $T$. 
\begin{itemize}
	\item {\em Memory-efficient model-free methods.} 
		\citet{jin2018q} proposed the first memory-efficient model-free algorithm --- which is an optimistic variant of classical Q-learning --- that achieves a regret bound proportional to $\sqrt{T}$ with a space complexity $O(SAH)$. Compared to the lower bound \eqref{eq:lower-bound-regret}, however, the regret bound in \citet{jin2018q} is off by a factor of $\sqrt{H}$ and hence suboptimal for problems with long horizon. This drawback has recently been overcome in \citet{zhang2020almost} by leveraging the idea of variance reduction (or the so-called ``reference-advantage decomposition'') for large enough $T$. While the resulting regret matches the information-theoretic limit asymptotically, its optimality in the non-asymptotic regime is not guaranteed unless the sample size $T$ exceeds (see \citet[Lemma~7]{zhang2020almost})
		$$T\geq S^6A^4H^{28},$$ 
		a requirement that is even far more stringent than the burn-in cost imposed by  \citet{azar2017minimax}.

	\item {\em A memory-inefficient ``model-free'' variant.} 
 	The recent work \citet{menard2021ucb} put forward a novel sample-efficient variant  of Q-learning called \UCBM,  
	which relies on a carefully chosen momentum term for bias reduction.   	
		This algorithm is guaranteed to yield near-optimal regret $\widetilde{O}\big(\sqrt{H^2SAT} \big)$ as soon as the sample size exceeds $T\geq SA \mathrm{poly}(H)$,
	which is a remarkable improvement vis-\`a-vis previous regret-optimal methods \citep{azar2017minimax,zhang2020almost}.
		Nevertheless, akin to the model-based approach,  it comes at a price in terms of the space and computation complexities, as the space required to store all bias-value function is $O(S^2AH)$ and the computation required is $O(ST)$, both of which are larger by a factor of $S$ than other model-free algorithms like \citet{zhang2020almost}. In view of this memory inefficiency, \UCBM falls short of fulfilling the definition of model-free algorithms in \citet{strehl2006pac,jin2018q}. See  \citet[Section 3.3]{menard2021ucb} for more detailed discussions.

\end{itemize}

\noindent 
A more complete summary of prior results can be found in Table~\ref{tab:prior-work}.

\newcommand{\topsepremove}{\aboverulesep = 0mm \belowrulesep = 0mm} \topsepremove

\begin{table}[t]
	\begin{center}
\begin{tabular}{c|c|c|c}
\toprule

\multirow{2}{*}{Algorithm} & \multirow{2}{*}{Regret} & Range of sample sizes $T$ \vphantom{$\frac{1^{7}}{1^{7^{7}}}$} & Space \tabularnewline
	&  & that attain optimal regret  \vphantom{$\frac{1}{1^{7^{7}}}$} & complexity \tabularnewline
\toprule
 
\UCBVI \vphantom{$\frac{1^{7}}{1^{7^{7}}}$} & \multirow{2}{*}{$\begin{array}{cc}
\sqrt{H^2SAT}+ H^{4}S^{2}A
\end{array}$ } & \multirow{2}{*}{$\red{[S^3AH^6,\infty)}$} & \multirow{2}{*}{\red{$S^2AH$}}\tabularnewline
	\citep{azar2017minimax}  &  &  & \tabularnewline
\hline 
	{\sf UCB-Q-Hoeffding} \vphantom{$\frac{1^{7}}{1^{7^{7}}}$}  & \multirow{2}{*}{$\sqrt{H^4SAT}$  } & \multirow{2}{*}{\red{\text{never}}} & \multirow{2}{*}{$SAH$}\tabularnewline
\citep{jin2018q} &  &  & \tabularnewline
\hline 
	{\sf UCB-Q-Bernstein} \vphantom{$\frac{1^{7}}{1^{7^{7}}}$} & \multirow{2}{*}{$\sqrt{H^3SAT} + \sqrt{H^9S^3A^3}$} & \multirow{2}{*}{\red{never}} & \multirow{2}{*}{$SAH$}\tabularnewline
\citep{jin2018q} &  &  & \tabularnewline
\hline 
	{\sf UCB2-Q-Bernstein} \vphantom{$\frac{1^{7}}{1^{7^{7}}}$} & \multirow{2}{*}{$\sqrt{H^3SAT} + \sqrt{H^9S^3A^3}$} & \multirow{2}{*}{\red{never}} & \multirow{2}{*}{$SAH$}\tabularnewline
	\citep{bai2019provably} &  &  & \tabularnewline

\hline 
	\UCBQA \vphantom{$\frac{1^{7}}{1^{7^{7}}}$} & \multirow{2}{*}{$\sqrt{H^2SAT} + H^8S^2A^{\frac{3}{2}}T^{\frac{1}{4}}$} & \multirow{2}{*}{$\red{[S^6A^4H^{28},\infty)}$} & \multirow{2}{*}{$SAH$}\tabularnewline
\citep{zhang2020almost} &  &  & \tabularnewline
\hline
	\UCBM \vphantom{$\frac{1^{7}}{1^{7^{7}}}$} & \multirow{2}{*}{$\sqrt{H^2SAT} + H^4SA$} & \multirow{2}{*}{$[SAH^{6},\infty)$} & \multirow{2}{*}{\red{$S^2AH$}}\tabularnewline
\citep{menard2021ucb} &  &  & \tabularnewline \hline
\rowcolor{Gray} 
	\DADV \vphantom{$\frac{1^{7}}{1^{7^{7}}}$} &  &  & \tabularnewline
\rowcolor{Gray}
	{\bf (this work)} & \multirow{-2}{*}{\cellcolor{Gray}$\sqrt{H^2SAT}+H^6SA$} & \multirow{-2}{*}{\cellcolor{Gray}$[SAH^{10},\infty)$} &  \multirow{-2}{*}{\cellcolor{Gray}$SAH$}\tabularnewline
\toprule
Lower bound \vphantom{$\frac{1^{7}}{1^{7^{7}}}$} & \multirow{2}{*}{$ \sqrt{H^2SAT} $} & \multirow{2}{*}{n/a} & \multirow{2}{*}{n/a}\tabularnewline
\citep{domingues2021episodic} &  &  & \tabularnewline
\toprule
 
\end{tabular}

	\end{center}

	\caption{Comparisons between prior art and our results for non-stationary episodic MDPs when $T\geq H^2SA$. The table includes the order of the regret bound, the range of sample sizes that achieve the optimal regret $\widetilde{O}(\sqrt{H^2SAT})$, and the memory complexity, with all logarithmic factors omitted for simplicity of presentation. 
	The red text highlights the suboptimal part of the respective algorithms. 
	\label{tab:prior-work}}  
	
\end{table}

\subsection{A glimpse of our contributions}

In brief, while it is encouraging to see that both model-based and model-free approaches allow for near-minimal regret as $T$ tends to infinity,  they are either memory-inefficient, or require the sample size to exceed a threshold substantially larger than the model dimensionality. In fact, no prior algorithms have been shown to be {\em simultaneously regret-optimal and memory-efficient} unless
$$T\geq S^6A^4  \,\mathrm{poly}(H),$$
which constitutes a stringent sample size barrier constraining their utility in the sample-starved and memory-limited regime. 
The presence of this sample complexity barrier motivates one to pose a natural question: 
{
\setlist{rightmargin=\leftmargin}
\begin{itemize}
	\item[] {\em Is it possible to design an algorithm that accommodates a significantly broader sample size range without compromising regret optimality and memory efficiency?}
\end{itemize}
}
\noindent 
In this paper, we answer this question affirmatively, by designing a new model-free algorithm, dubbed as \DADV, that enjoys the following performance guarantee.
\begin{theorem}[informal]
	The proposed \DADV algorithm, which has a space complexity $O(SAH)$, achieves near-optimal regret $\widetilde{O}\big( \sqrt{H^2SAT} \big)$ as soon as the sample size exceeds $T\geq  SA \,\mathrm{poly}(H)$.
\end{theorem}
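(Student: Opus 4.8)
The plan is to split the claim into an easy \emph{space-complexity accounting} and the substantive \emph{regret analysis}, and to reduce the latter to establishing $\mathrm{Regret}(T)=\widetilde{O}\big(\sqrt{H^2SAT}+H^6SA\big)$; the theorem then follows since $H^6SA\le\sqrt{H^2SAT}$ whenever $T\ge SAH^{10}$, i.e.\ once $T\ge SA\,\mathrm{poly}(H)$. For the memory bound one simply inspects \DADV: per triple $(h,s,a)$ it stores the UCB and LCB Q-estimates $\overline{Q}_h(s,a),\underline{Q}_h(s,a)$, the visitation counters, and a constant number of running accumulators (the reference moments $\refmu,\refsg$, the advantage moments $\advmu,\advsg$, and the frozen reference value), for a total of $O(SAH)$ scalars --- and, crucially, the reference-advantage decomposition is realized entirely through these on-the-fly accumulators, so no transition kernel is ever materialized.

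For the regret I would proceed in four stages. \textbf{Stage (i): two-sided sandwich.} On a high-probability event I would show $\underline{Q}_h^k(s,a)\le Q_h^\star(s,a)\le\overline{Q}_h^k(s,a)$ (and likewise for the value estimates) for every $(k,h,s,a)$, by a joint induction on $k$ and on $h$ (downward). The two exploration bonuses are designed to dominate the accumulated Bellman noise, which is controlled by a Freedman-type inequality whose empirical-variance proxy is supplied by the second-moment accumulators $\refsg,\advsg$, making the bonuses self-normalizing. \textbf{Stage (ii): early settling of the reference.} The novel ingredient is that a reference $V_h^{\re}(s)$ is frozen the first time the confidence gap $\overline{V}_h^k(s)-\underline{V}_h^k(s)$ falls below a prescribed threshold; by Stage (i) this immediately yields that $\linf{V_h^{\re}-V_h^\star}$ is bounded by the settling threshold, so the advantage $V-V_h^{\re}$ is uniformly tiny from the settling epoch onward. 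The key quantitative point --- enabled by running the LCB sequence $\underline{Q}$ in parallel with $\overline{Q}$ --- is that this gap shrinks after only $\mathrm{poly}(H)$ visits to each state, rather than after a visitation count that scales with $S$ and $A$ as in \citet{zhang2020almost}.

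\textbf{Stage (iii): regret recursion.} From the sandwich, $\mathrm{Regret}(T)\le\sum_k\big(\overline{V}_1^k(s_1^k)-\underline{V}_1^k(s_1^k)\big)$, and I would unroll this across the $H$ steps using the learning-rate weights $\eta_t=\tfrac{H+1}{H+t}$, as in the UCB-Q analysis of \citet{jin2018q}, expressing the total regret as a weighted sum of (a) accumulated Bernstein bonuses, (b) a bounded-difference martingale term handled by Azuma/Freedman, and (c) a lower-order recursion residue. \textbf{Stage (iv): variance bookkeeping.} The heart of the matter is (a). Writing $P_h(V-V_h^\star)=P_h(V_h^{\re}-V_h^\star)+P_h(V-V_h^{\re})$, the reference part is averaged over \emph{all} past samples, so along any trajectory its aggregated variance telescopes by the law of total variance to $O(HT)$ (the argument of \citet{azar2017minimax}), giving $\widetilde{O}(\sqrt{H^2SAT})$ after Cauchy--Schwarz over the $SA$ pairs; the advantage part is bounded in magnitude by the settling threshold, so --- since each reference settles within $\mathrm{poly}(H)$ visits --- it contributes only the lower-order $\widetilde{O}(H^6SA)$ term. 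Collecting (a)--(c) yields the stated regret bound.

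The step I expect to be the main obstacle is the interplay between Stages (ii) and (iv): the settling threshold must be small enough that the post-settling advantage contribution in (iv) stays lower-order, large enough that settling occurs within $\mathrm{poly}(H)$ visits, and --- most delicately --- one must control the error accumulated \emph{before} each reference settles, a transient phase governed only by a crude Hoeffding-type bonus, so that it does not reinflate the burn-in to the $S^6A^4\,\mathrm{poly}(H)$ regime of prior work. Propagating this improved pre-settling estimate through the step-by-step recursion of Stage (iii) without losing any power of $S$ or $A$ is precisely where the early-settled design, together with the paired UCB/LCB sequences, must be exploited with care.
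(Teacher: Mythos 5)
Your overall architecture is essentially the paper's: a UCB/LCB sandwich established by induction plus Freedman-type concentration, a reference frozen once the value--LCB gap drops below a constant threshold, the resulting bound $|V_h^k-V_h^{\rref,k}|\le 2$ making the advantage term $O(1)$, the reference term handled by a total-variance argument yielding $\widetilde{O}(\sqrt{H^2SAT})$, and a burn-in of order $H^6SA$. The genuine gap is in your Stage (iii): the starting inequality $\mathsf{Regret}(T)\le \sum_k\big(\overline{V}_1^k(s_1^k)-\underline{V}_1^k(s_1^k)\big)$ requires $V_1^{\LCB,k}(s_1^k)\le V_1^{\pi^k}(s_1^k)$, i.e.\ that the LCB value is a certificate for the \emph{executed} policy. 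Your Stage (i) pessimism only gives $V_h^{\LCB,k}\le V_h^{\star}$, and since $V^{\LCB}$ is updated with a $\max_a$ over $Q^{\LCB}$ while $\pi^k$ is greedy with respect to $Q^k$ (not $Q^{\LCB,k}$), the only comparison that follows is $V_h^{\LCB,k}\le V_h^{k}$; a poor early policy can well have $V_1^{\pi^k}<V_1^{\LCB,k}$. The paper sidesteps this by recursing on $V_1^k-V_1^{\pi^k}$ directly, using the Bellman equation for $\pi^k$ and martingale terms of the form $(P_{h,s_h^k,a_h^k}-P_h^k)(V_{h+1}^{\star}-V_{h+1}^{\pi^k})$; the LCB sequence enters only through the reference-settling rule and the counting lemma. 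You would either need to adopt that decomposition or add (and prove) a policy-certificate lemma that the current LCB update does not obviously support.

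Two further points where your sketch understates what must be proved. First, the claim that the gap "shrinks after only $\mathrm{poly}(H)$ visits to each state" is stronger than what the paper establishes or needs: the actual quantitative statement is the aggregate counting bound $\sum_{h,k}\ind\big(Q_h^k(s_h^k,a_h^k)-Q_h^{\LCB,k}(s_h^k,a_h^k)>\varepsilon\big)\lesssim H^6SA\log(SAT/\delta)/\varepsilon^2$, which in turn gives the aggregate reference drift $\sum_{h,k}\big(V_{h}^{\rref,k}(s_h^k)-V_{h}^{\rref,K}(s_h^k)\big)\lesssim H^6SA\log(SAT/\delta)$; no per-state visitation guarantee is claimed. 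Second, the pre-settling transient you correctly flag as the main obstacle is resolved in the paper precisely by this drift bound together with a decoupling (epsilon-net) argument to handle the statistical dependence between $P_h^k$ and the final reference $V_{h+1}^{\rref,K}$ in the term $(P_h^k-P_{h,s_h^k,a_h^k})(V_{h+1}^{\rref,K}-V_{h+1}^{\star})$ --- two ingredients your Stage (iv) would still have to supply to close the argument at the claimed $SA\,\mathrm{poly}(H)$ burn-in.
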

As can be seen from Table~\ref{tab:prior-work}, the space complexity of the proposed algorithm is $O(SAH)$, 
which is far more memory-efficient than both the model-based approach in \citet{azar2017minimax} and the \UCBM algorithm in \citet{menard2021ucb} (both of these prior algorithms require $S^2AH$ units of space). In addition, the sample size requirement $T\geq SA \, \mathrm{poly}(H)$ of our algorithm improves --- by a factor of at least $S^5A^3$ --- upon that of any prior algorithm that is simultaneously regret-optimal and memory-efficient. In fact, this requirement is nearly sharp in terms of the dependency on both $S$ and $A$, and was previously achieved only by the \UCBM algorithm at a price of a much higher storage burden.

Let us also briefly highlight the key ideas of our algorithm. As an optimistic variant of variance-reduced Q-learning, \DADV leverages the recently-introduced reference-advantage decompositions for variance reduction \citep{zhang2020almost}.
As a distinguishing feature from prior algorithms, we employ an {\em early-stopped} reference update rule, 
with the assistance of two Q-learning sequences that incorporate upper and lower confidence bounds, respectively.  The design of our early-stopped variance reduction scheme, as well as its analysis framework, might be of independent interest to other settings that involve managing intricate exploration-exploitation trade-offs.

\subsection{Related works}

We now take a moment to discuss a small sample of other related works. 
We limit our discussions primarily to RL algorithms in the tabular setting with finite state and action spaces, 
which are the closest to our work. The readers interested in those model-free variants with function approximation are referred to \citet{du2019provably,fan2019theoretical,murphy2005generalization} and the references therein.

\paragraph{PAC bounds for synchronous and asynchronous Q-learning.} 
Q-learning is arguably among the most famous model-free algorithms developed in the RL literature \citep{watkins1992q,tsitsiklis1994asynchronous,jaakkola1994convergence,szepesvari1997asymptotic}, which enjoys a space complexity $O(SAH)$. 
Non-asymptotic sample analysis and probably approximately correct (PAC) bounds have seen extensive developments in the last several years, 
including but not limited to the works of
\citet{wainwright2019stochastic,even2003learning,beck2012error,chen2020finite,li2021tightening} for the synchronous setting (the case with access to a generative model or a simulator), 
and the works of \citet{even2003learning,beck2012error,qu2020finite,li2020sample,chen2021lyapunov} for the asynchronous setting (where one observes a single Markovian trajectory induced by a behavior policy). Finite-time guarantees of other variants of Q-learning have also been developed; partial examples include speedy Q-learning \citep{ghavamzadeh2011speedy}, double Q-learning \citep{xiong2020finite},  variance-reduced Q-learning \citep{wainwright2019variance,li2020sample}, momentum Q-learning \citep{weng2020momentum}, 
and Q-learning for linearly parameterized MDPs \citep{wang2021sample}. 
This line of works did not account for exploration, and hence the success of Q-learning in these settings heavily relies on the access to a simulator or a behavior policy with sufficient coverage over the state-action space.

\paragraph{Regret analysis for model-free RL with exploration.} 

When it comes to online episodic RL (so that a simulator is unavailable), 
regret analysis is the prevailing analysis paradigm employed to capture the trade-off between exploration and exploitation. 
A common theme is to augment the original model-free update rule (e.g., the Q-learning update rule) by an exploration bonus, 
which typically takes the form of, say, certain upper confidence bounds (UCBs) motivated by the bandit literature \citep{lai1985asymptotically,auer2010ucb}.  
In addition to the ones in Table~\ref{tab:prior-work} for episodic finite-horizon settings, 
sample-efficient model-free algorithms have been investigated for infinite-horizon MDPs as well \citep{dong2019q,zhang2020reinforcement,zhang2020model,jafarnia2020model,liu2020gamma,yang2021q}.

\paragraph{Variance reduction in RL.} The seminal idea of variance reduction was originally proposed to accelerate finite-sum stochastic optimization, e.g., \cite{johnson2013accelerating,gower2020variance,nguyen2017sarah}. Thereafter, the  variance reduction strategy has been imported to RL, which assists in improving the sample efficiency of RL algorithms in multiple contexts, including but not limited to 
policy evaluation \citep{du2017stochastic,wai2019variance,xu2019reanalysis,khamaru2020temporal},
RL with a generative model \citep{sidford2018near,sidford2018variance,wainwright2019variance}, 
asynchronous Q-learning \citep{li2020sample}, 
and offline RL \citep{yin2021near,shi2022pessimistic}.

\paragraph{Model-based approach.} 

Model-based RL is known to be minimax-optimal in the presence of a simulator \citep{azar2013minimax,agarwal2019optimality,li2020breaking}, beating the state-of-the-art model-free algorithms by achieving optimality for the entire sample size range \citep{li2020breaking}. 
When it comes to online episodic RL, \citet{azar2017minimax} was the first work that managed to achieve near-optimal regret (at least for large $T$);  
in fact, this was also the first result (for any algorithm) matching existing lower bounds for large $T$. 
The sample efficiency of the model-based approach 
has subsequently been established for other settings, including but not limited  to discounted infinite-horizon MDPs  \citep{he2020nearly}, MDPs with bounded total reward \citep{zanette2019tighter,zhang2020reinforcement},  and Markov games \citep{zhang2020model_game}.

\paragraph{Regret lower bound.} 
Inspired by the classical lower bound argument developed for multi-armed bandits \citep{auer2002nonstochastic}, 
the work \citet{jaksch2010near} established a regret lower bound for MDPs with finite diameters (so that for an arbitrary pair of states, the expected time to transition between them is assumed to be finite as long as a suitable policy is used), which has been reproduced in the note \citet{osband2016lower} with the purpose of facilitating comparison with \citet{bartlett2009regal}. The way to construct hard MDPs in \citet{jaksch2010near} has since been adapted by \citet{jin2018q} to exhibit a lower bound on episodic MDPs (with a sketched proof provided therein). It was recently revisited in \citet{domingues2021episodic}, which presented a detailed and rigorous proof argument with a different construction.

\section{Problem formulation}
\label{sec:problem-formulation}

In this section, we formally describe the problem setting. Here and throughout, we denote by $\Delta(\cS)$ the probability simplex over a set $\cS$, and introduce the notation $[M]\coloneqq \{1,\cdots,M\}$ for any integer $M>0$.

\paragraph{Basics of finite-horizon MDPs.}

Let $\mathcal{M}=(\mathcal{S},\mathcal{A},H, \{P_h\}_{h=1}^H, \{r_h\}_{h=1}^H)$ represent a finite-horizon MDP, where $\mathcal{S}\coloneqq \{1,\cdots S\}$ is the state space of size $S$, 
$\mathcal{A}\coloneqq \{1,\cdots, A\}$ is the action space of size $A$, 
$H$ denotes the horizon length, and $P_h : \cS \times \cA \rightarrow \Delta (\cS) $ (resp.~$r_h: \cS \times \cA \rightarrow [0,1]$)
represents the probability transition kernel (resp.~reward function) at the $h$-th time step, $1\leq h\leq H$, respectively. 
More specifically, $P_h(\cdot\mymid s,a)\in \Delta(\cS)$ stands for the transition probability vector from state $s$ at time step $h$ when action $a$ is taken, 
while $r_h(s,a)$ indicates the immediate reward received at time step $h$ for a state-action pair $(s,a)$ (which is assumed to be deterministic and fall within the range $[0,1]$). 
The MDP is said to be non-stationary when the $P_h$'s are not identical across $1\leq h\leq H$. 
A policy of an agent is represented by $\pi =\{\pi_h\}_{h=1}^H$ with $\pi_h: \mathcal{S} \rightarrow \mathcal{A}$ the action selection rule at time step $h$, so that 
$\pi_h(s)$ specifies which action to execute in state $s$ at time step $h$. Throughout this paper, we concentrate on deterministic policies.

\paragraph{Value functions, Q-functions, and Bellman equations.}

The value function $V^{\pi}_{h}(s)$ of a (deterministic) policy $\pi$ at step $h$ is defined as the expected cumulative rewards received between time steps $h$ and $H$ when executing this policy from an initial state $s$ at time step $h$, namely, 
\begin{align}
	\label{eq:def_Vh}
	V^{\pi}_{h}(s) & \coloneqq  \mathop{\mathbb{E}}\limits_{s_{t+1}\sim P_{t}(\cdot|s_{t},\pi_{t}(s_{t})), \, t\geq h} 
	\left[  \sum_{t=h}^{H} r_{t}\big(s_{t},\pi_{t}(s_{t}) \big) \,\Big|\, s_{h}=s \right], 
\end{align}
where the expectation is taken over the randomness of the MDP trajectory $\{s_t \mid h\leq t\leq H\}$. 
The action-value function (a.k.a.~the Q-function) $Q^{\pi}_h(s,a)$ of a policy $\pi$ at step $h$ can be defined analogously 
except that the action at step $h$ is fixed to be $a$, that is, 
\begin{align} 
	\label{eq:def_Qh}
	Q^{\pi}_{h}(s,a) & \coloneqq r_{h}(s,a)+ \mathop{\mathbb{E}}\limits_{\substack{s_{h+1}\sim P_h(\cdot | s, a), \\ s_{t+1}\sim P_{t}(\cdot|s_{t},\pi_{t}(s_{t})), \, t> h}} \left[  \sum_{t=h +1}^{H} r_t \big(s_t,\pi_t(s_t) \big) \,\Big|\, s_{h}=s, a_h  = a\right].
\end{align}
In addition, we define $V^{\pi}_{H+1}(s)= Q^{\pi}_{H+1}(s,a)=0$ for any policy $\pi$ and any state-action pair $(s,a)\in \cS \times \cA$. 
By virtue of basic properties in dynamic programming \citep{bertsekas2017dynamic}, 
the value function and the Q-function satisfy the following Bellman equation:
\begin{align}
	\label{eq:bellman}
	Q^{\pi}_{h}(s,a)=r_{h}(s,a)+ \mathop{\mathbb{E}}\limits_{s'\sim P_h(\cdot\mid s,a)} \big[V^{\pi}_{h+1}(s') \big]. 
\end{align}

A policy $\pi^{\star} =\{\pi_h^{\star}\}_{h=1}^H$ is said to be an optimal policy if it maximizes the value function simultaneously for all states among all policies. 
The resulting optimal value function $V^{\star} =\{ V_h^{\star} \}_{h=1}^H $ and optimal Q-functions $Q^{\star} =\{ Q_h^{\star} \}_{h=1}^H $ satisfy  
\begin{align}
	V_h^{\star}(s) =V_h^{\pi^{\star}}(s) = \max_{\pi} V_h^{\pi}(s) \qquad \mbox{and}\qquad   Q_h^{\star}(s,a)  =Q_h^{\pi^{\star}}(s,a)  = \max_{\pi} Q_h^{\pi}(s,a) 
\end{align}
for any $(s,a,h)\in \cS\times \cA \times [H]$. 
It is well known that the optimal policy always exists \citep{puterman2014markov}, and satisfies the Bellman optimality equation:
\begin{align}
	\label{eq:bellman_optimality}
	\forall (s,a,h)\in \cS\times \cA\times [H]: \qquad
	Q^{\star}_{h}(s,a)=r_{h}(s,a)+ \mathop{\mathbb{E}}\limits_{s'\sim P_h(\cdot\mid s,a)} \big[V^{\star}_{h+1}(s') \big]. 
\end{align}

\paragraph{Online episodic RL.} 

This paper investigates the online episodic RL setting, where the agent is allowed to execute the MDP sequentially in a total number of $K$ episodes each of length $H$.
This amounts to collecting  $$T=KH\text{ samples}$$ in total. 
More specifically, in each episode $k=1,\ldots, K$, the agent is assigned an arbitrary initial state $s_1^k$ (possibly by an adversary), 
and selects a policy $\pi^k =\{\pi_h^k \}_{h=1}^H$  
learned based on the information collected up to the $(k-1)$-th episode. 
The $k$-th episode is then executed following the policy $\pi^k$ and the dynamic of the MDP $\mathcal{M}$, 
leading to a length-$H$ sample trajectory.

\paragraph{Goal: regret minimization.}

In order to evaluate the quality of the learned policies $\{\pi^k\}_{1\leq k\leq K}$, 
a frequently used performance metric is the cumulative regret defined as follows:
\begin{equation} 
	\label{eq:def_regret}
	\mathsf{Regret}(T) \coloneqq \sum_{k=1}^{K}\big(V^{\star}_{1}(s_1^k)-V^{\pi^{k}}_{1}(s_1^k)\big).
\end{equation}
 In words, the regret reflects the sub-optimality gaps between the values of the optimal policy and those of the learned policies aggregated over $K$ episodes. 
A natural objective is thus to design a sample-optimal algorithm, namely, an algorithm whose resulting regret scales optimally in the sample size $T$. 
Accomplishing this goal requires carefully managing the trade-off between exploration and exploitation, which is particularly challenging in the sample-limited regime.

\paragraph{Notation.} 

Before presenting our main results, 
we take a moment to introduce some convenient notation to be used throughout the remainder of this paper. 
For any vector $x \in \mathbb{R}^{SA}$ that constitutes certain quantities for all state-action pairs,
we shall often use $x(s,a)$ to denote the entry associated with the state-action pair $(s,a)$, as long as it is clear from the context. 
We shall also let
\begin{align}
	P_{h,s,a} = P_h(\cdot \mymid s,a) \in \mathbb{R}^{1\times S}
	\label{eq:defn-P-hsa}
\end{align}
abbreviate the transition probability vector given the $(s,a)$ pair at time step $h$. 
Additionally, we denote by $e_i$ the $i$-th standard basis vector, with the only non-zero element being in the $i$-th entry and equal to 1.

\section{Algorithm and theoretical guarantees}

In this section, we present the proposed algorithm called \DADV, as well as the accompanying theory confirming its sample and memory efficiency.

\subsection{Review: Q-learning with UCB exploration and reference advantage}
\label{sec:standard-ucb-Q}

This subsection briefly reviews the Q-learning algorithm with UCB exploration proposed in \citet{jin2018q}, 
as well as a variant that further exploits the idea of variance reduction \citep{zhang2020almost}. These two model-free algorithms inspire the algorithm design in the current paper.

\paragraph{Q-learning with UCB exploration (\UCBQ or {\sf UCB-Q-Hoeffding}).}

Recall that the classical $Q$-learning algorithm has been proposed as a stochastic approximation scheme \citep{robbins1951stochastic} to solve the Bellman optimality equation \eqref{eq:bellman_optimality}, 
which consists of the following update rule \citep{watkins1989learning,watkins1992q}:

\begin{equation}
	Q_h(s,a) \leftarrow (1-\eta)Q_h(s,a) + \eta\Big\{ r_h(s,a) + 
	\hspace{-1em} \underset{\text{stochastic estimate of }{P}_{h,s,a}V_{h+1}}{\underbrace{ \widehat{P}_{h,s,a} V_{h+1} }}  \hspace{-1em} \Big\} .
	\label{eq:classical-Q-update}
\end{equation}

Here, $Q_h$ (resp.~$V_h$) indicates the running estimate of $Q_h^{\star}$ (resp.~$V_h^{\star}$), 
$\eta$ is the (possibly iteration-varying) learning rate or stepsize, 
and $\widehat{P}_{h,s,a} V_{h+1}$ is a stochastic estimate of $P_{h,s,a} V_{h+1}$ (cf.~\eqref{eq:defn-P-hsa}). 
For instance, if one has available a sample $(s,a,s')$ transitioning from state $s$ at step $h$ to $s'$ at step $h+1$ under action $a$, 
then a stochastic estimate of ${P}_{h,s,a}V_{h+1}$ can be taken as $V_{h+1}(s')$, which is unbiased in the sense that
\[
	\mathbb{E}\big[ V_{h+1}(s') \big] = {P}_{h,s,a}V_{h+1}. 
\]
To further encourage exploration, the algorithm proposed in \cite{jin2018q} --- which shall be abbreviated as \UCBQ or {\sf UCB-Q-Hoeffding} hereafter --- augments the Q-learning update rule \eqref{eq:classical-Q-update} in each episode via an additional exploration bonus:
\begin{equation}
	\label{equ:standard-ucb-update}
	Q_h^{\UCB}(s,a) \leftarrow (1-\eta)Q_h^{\UCB}(s,a) + \eta \big\{ r_h(s,a) + \widehat{P}_{h,s,a} V_{h+1} + b_h \big\}.
\end{equation}
The bonus term $b_h \geq 0$ is chosen to be a certain upper confidence bound for $(\widehat{P}_{h,s,a} -{P}_{h,s,a})V_{h+1}$, an exploration-efficient scheme that originated from the bandit literature \citep{lai1985asymptotically,lattimore2020bandit}. The algorithm then proceeds to the next episode by executing/sampling the MDP using a greedy policy w.r.t.~the updated Q-estimate. These steps are repeated until the algorithm is terminated.

\begin{comment}

\end{comment}

\paragraph{Q-learning with UCB exploration and reference advantage (\UCBQA).} 

The regret bounds derived for \UCBQ \citep{jin2018q}, however, fall short of being optimal, 
as they are at least a factor of $\sqrt{H}$ away from the fundamental lower bound. 
In order to further shave this  $\sqrt{H}$ factor, 
one strategy is to leverage the idea of variance reduction to accelerate convergence \citep{johnson2013accelerating,sidford2018variance,wainwright2019variance,li2020sample}. 
An instantiation of this idea for the regret setting 
is a variant of \UCBQ based on reference-advantage decomposition, which was put forward in \cite{zhang2020almost} and shall be abbreviated as \UCBQA throughout this paper. 

To describe the key ideas of \UCBQA, imagine that we are able to maintain a collection of reference values $V^{\rref}= \{V_h^{\rref} \}_{h=1}^H$, which form reasonable estimates of $V^{\star} = \{V_h^{\star}\}_{h=1}^H$ and become increasingly more accurate as the algorithm progresses. 

At each time step $h$, the algorithm adopts the following update rule
\begin{equation}
	\label{equ:principle_of_qref}
	Q_h^{\rref}(s,a) \leftarrow (1-\eta)Q_h^{\rref}(s,a) 
	+ \eta \Big\{ r_h(s,a) + 
	\hspace{-2.5em} \underset{\text{stochastic estimate of }{P}_{h,s,a} \big(V_{h+1}- V^{\rref}_{h+1}\big)}{\underbrace{ \widehat{P}_{h,s,a} \big(V_{h+1}- V^{\rref}_{h+1}\big) }} \hspace{-2.5em}
	+\big[ \widehat{ P_{h}V^{\rref}_{h+1} } \big](s,a) + b_h^{\rref} \Big\} .
\end{equation}

Two ingredients of this update rule are worth noting.
\begin{itemize}
\item
Akin to the \UCBQ case, 
we can take $\widehat{P}_{h,s,a} \big(V_{h+1}- V^{\rref}_{h+1}\big)$ to be the stochastic estimate $V_{h+1}(s')- V^{\rref}_{h+1}(s')$ if we observe a sample transition $(s,a,s')$ at time step $h$. If $V_{h+1}$ is fairly close to the reference $V_{h+1}^{\rref}$, then this stochastic term can be less volatile than the stochastic term $\widehat{P}_{h,s,a} V_{h+1}$ in \eqref{equ:standard-ucb-update}. 

\item
Additionally, the term $\widehat{P_{h}V^{\rref}_{h+1} }$ indicates an estimate of the one-step look-ahead value $P_{h}V^{\rref}_{h+1}$, which shall be computed using a batch of samples. 

The variability of $\widehat{P_{h}V^{\rref}_{h+1} }$ can be well-controlled through the use of batch data, 
at the price of an increased sample size.  
\end{itemize}
Accordingly, the exploration bonus term $b_h^{\rref}$ is taken to be an upper confidence bound for the above-mentioned two terms combined. 
Given that the uncertainty of \eqref{equ:principle_of_qref} largely stems from these two terms (which can both be much smaller than the variability in \eqref{equ:standard-ucb-update}), 
the incorporation of the reference term helps accelerate convergence.

%!TEX root = ./exploration_arxiv.tex
\subsection{The proposed algorithm: \DADV}
\label{sec:dadv}

As alluded to previously, however,  
the sample size required for \UCBQA to achieve optimal regret needs to exceed a large polynomial $S^6A^4$ in the size of the state/action space. 
To overcome this sample complexity barrier, we come up with an improved variant called \DADV.

\paragraph{Motivation: early settlement of a reference value.} 

An important insight obtained from previous algorithm designs is that: in order to achieve low regret, it is desirable to maintain an estimate of $Q$-function that (i) provides an optimistic view (namely, an over-estimate) of the truth $Q^{\star}$, and (ii) mitigates the bias $Q-Q^{\star}$ as much as possible. With two additional optimistic Q-estimates in hand --- $Q_h^{\UCB}$ based on \UCBQ, and $Q_h^{\rref}$ based on the reference-advantage decomposition --- it is natural to combine them as follows to further reduce the bias without violating the optimism principle: 
\begin{equation}
	\label{equ:ref-update-vr-real}
	Q_h(s_h, a_h) \leftarrow \min \Big\{Q_h^{\rref}(s_h, a_h),\, Q_h^{\UCB}(s_h, a_h),\, Q_h(s_h, a_h) \Big\}. 
\end{equation}
This is precisely what is conducted in \UCBQA. 
However, while the estimate $Q_{h}^{\rref}$ obtained with the aid of reference-advantage decomposition provides great promise, fully realizing its potential in the sample-limited regime relies on the ability to quickly {\em settle on} a desirable ``reference'' during the initial stage of the algorithm. 
This leads us to a dilemma that requires careful thinking. On the one hand, the reference value $V^{\rref}$ needs to be updated in a timely manner in order to better control the stochastic estimate of ${P}_{h,s,a} \big(V_{h+1}- V^{\rref}_{h+1}\big)$.  
On the other hand, updating $V^{\rref}$ too frequently incurs an overly large sample size burden,
as new samples need to be accumulated whenever $V^{\rref}$ is updated. 

Built upon the above insights, it is advisable to prevent frequent updating of the reference value $V^{\rref}$. In fact, it would be desirable to stop updating the reference value once a point of sufficient quality --- denoted by $V^{\rref, \mathsf{final}}$ --- has been obtained.  Locking on a reasonable reference value early on means that (a) fewer samples will be wasted on estimating a drifting target $P_{h}V^{\rref}_{h+1} $, and (b) all ensuing samples can then be dedicated to estimating the key quantity of interest $ P_{h}V^{\rref, \mathsf{final}}_{h+1} $. 

\begin{remark}
	In \citet{zhang2020almost}, the algorithm \UCBQA requires collecting $\widetilde{O}\big( SAH^6 \big) $ samples {\em for each state} before settling on the reference value, which inevitably contributes to the large burn-in cost. 
\end{remark}

\paragraph{The proposed \DADV algorithm.}

We now propose a new model-free algorithm that allows for early settlement of the reference value. 
A few key ingredients are as follows.
\begin{itemize}

	\item {\em An auxiliary sequence based on LCB.}  
In addition to the two optimistic Q-estimates $Q_h^{\rref}$ and $Q_h^{\UCB}$ described previously, 
we intend to maintain another {\em pessimistic} estimate $Q_h^{\LCB}\leq Q_h^{\star}$ using the subroutine \texttt{update-lcb-q}, 
based on lower confidence bounds (LCBs). 
We will also maintain the corresponding value function $V_h^{\LCB}$, which lower bounds $V_h^{\star}$.

\item {\em Termination rules for reference updates.} 
	With $V_h^{\LCB}\leq V^{\star}_h$ in place, the updates of the references (lines \ref{eq:line-number-15}-\ref{eq:line-number-18} of Algorithm~\ref{algo:vr}) are designed to terminate when
\begin{equation}\label{equ:stop-condition}
V_h(s_h) \leq V_h^{\LCB}(s_h) + 1  \leq V_h^\star(s_h) + 1.
\end{equation}

Note that $V_h^{\rref}$ keeps tracking the value of $V_h$ before it stops being updated. In effect, when the additional condition in lines \ref{eq:line-number-15} is violated and thus \eqref{equ:stop-condition} is satisfied,  we claim that it is unnecessary to update the reference $V_h^{\rref}$ afterwards, since it is of sufficient quality (being close enough to the optimal value $V_h^\star$) and further drifting the reference does not appear beneficial. As we will make it rigorous shortly, this reference update rule is sufficient to ensure that $|V_{h}-V_h^{\rref}|\leq 2$ throughout the execution of the algorithm,  which in turn suggests that the standard deviation of $\widehat{P}_{h,s,a}(V_{h+1}-V_{h+1}^{\rref})$ might be $O(H)$ times smaller than that of $\widehat{P}_{h,s,a}V_{h+1}$ (i.e., the stochastic term used in \eqref{eq:classical-Q-update} of \UCBQ). This is a key observation that helps shave the addition factor $H$ in the regret bound of \UCBQ.

\item {\em Update rules for $Q_h^{\UCB}$ and $Q_h^{\rref}$.} 
The two optimistic Q-estimates $Q_h^{\UCB}$ and $Q_h^{\rref}$ are updated using the subroutine \texttt{update-ucb-q} (following the standard Q-learning with Hoeffding bonus \citep{jin2018q}) and \texttt{update-ucb-q-advantage}, respectively. 
Note that $Q_h^{\rref}$ continues to be updated even after $V_h^{\rref}$ is no longer updated. 

\end{itemize}

\begin{algorithm}[t]
 \textbf{Parameters:} some universal constant $\cb>0$ and probability of failure $\delta \in(0,1)$; \\
\textbf{Initialize}  $Q_h(s, a), Q^{\UCB}_h(s, a), Q^{\rref}_h(s,a) \leftarrow H$; $V_h(s), V^{\rref}_h(s) \leftarrow H$; $Q^{\LCB}_h(s, a) \leftarrow 0$; $V^{\LCB}_h(s) \leftarrow 0$; $N_h(s, a)\leftarrow 0$; $\refmu_h(s, a), \refsg_h(s, a), \advmu_h(s, a), \advsg_h(s, a), \delta^{\rref}_h(s, a), B^{\rref}_h(s, a) \leftarrow 0$; and $u_\mathrm{ref}(s) = \mathsf{True}$ for all $(s, a, h) \in \cS\times \cA \times [H]$. \\
\For{Episode $ k = 1$ \KwTo $K$}{
    Set initial state $s_1 \leftarrow s_1^k$. \\
    \For{Step $ h = 1$ \KwTo $H$}{
        Take action $a_h=\pi_h^k(s_h) = \arg\max_a Q_h(s_h, a)$, 
	and draw $ s_{h+1} \sim P_h(\cdot \mymid s_h, a_h)$. \label{line:dadv1} {\small\color{blue}\tcp{sampling}}   
        	$N_h(s_h, a_h) \leftarrow N_h(s_h, a_h) + 1$; $n \leftarrow N_h(s_h,a_h)$. {\small\color{blue}\tcp{update the counter} }   
		$\eta_n \leftarrow \frac{H + 1}{H + n}$.  
		{\small\color{blue}\tcp{update the learning rate}} 		
  
        $Q_h^{\UCB}(s_h, a_h)\leftarrow \texttt{update-ucb-q()}$.  {\small\color{blue}\tcp{run \UCBQ; see Algorithm~\ref{algo:subroutine}}} \normalsize
        $Q_h^{\LCB}(s_h, a_h)\leftarrow \texttt{update-lcb-q()}$.  {\small\color{blue}\tcp{run \LCBQ; see Algorithm~\ref{algo:subroutine}}} \normalsize

         $Q_h^{\rref} (s_h, a_h)\leftarrow\texttt{update-ucb-q-advantage()}.  $    \small {\color{blue}\tcp{estimate $Q_h^{\rref}$; see Algorithm~\ref{algo:subroutine}}} \normalsize

     {\small\color{blue}\tcc{update Q-estimates using all estimates in hand, and update value estimates}} \normalsize
    $Q_h(s_h, a_h) \leftarrow \min\big\{Q^{\rref}_h(s_h, a_h),\, Q_h^{\UCB}(s_h, a_h),\, Q_h(s_h, a_h)\big\}$. \label{eq:line-number-11}\\
    $V_{h}(s_h) \leftarrow \max_a Q_h(s_h, a)$. \label{eq:line-number-12}\\
    $V_{h}^{\LCB}(s_{h}) \leftarrow \max\left\{\max_a Q_{h}^{\LCB}(s_{h}, a), V_{h}^{\LCB}(s_{h})\right\}$. \\

     {\small\color{blue}\tcc{update reference values}} \normalsize 
    \If{$V_{h}(s_h) - V_{h}^{\LCB}(s_{h}) > 1$ \label{eq:line-number-15}}
	 {$V^{\rref}_h(s_h) \leftarrow V_{h}(s_h)$.  \label{eq:line-number-16} \\}
    \ElseIf{$u_{\mathrm{ref}}(s_h) = \mathsf{True}$ \label{eq:line-number-17}}
    {$V^{\rref}_h(s_h) \leftarrow V_{h}(s_h)$, \qquad $u_{\mathrm{ref}}(s_h) = \mathsf{False}$. \label{eq:line-number-18} \\}
    }
 }
\caption{\DADV}
\label{algo:vr}
\end{algorithm}

\paragraph{Q-learning with reference-advantage decomposition.} 

The rest of this subsection is devoted to explaining the subroutine \texttt{update-ucb-q-advantage}, 
which produces a Q-estimate $Q^{\rref}$ based on the reference-advantage decomposition similar to \citet{zhang2020almost}. 
To facilitate the implementation, let us introduce the parameters associated with a reference value $V^{\rref}$, which include six different components, i.e.,
\begin{equation}
\big[  \refmu_h(s, a), \refsg_h(s, a), \advmu_h(s, a), \advsg_h(s, a), \delta^{\rref}_h(s, a), B^{\rref}_h(s, a) \big] , 
	\label{eq:defn-Wh-R}
\end{equation}
for all $(s,a,h)\in \cS\times \cA \times [H]$. 
Here 
 $\refmu_h(s, a)$ and $\refsg_h(s, a)$ estimate the running mean and 2nd moment of the reference $\big[P_{h}V^{\rref}_{h+1}\big](s, a)$;
   $\advmu_h(s, a)$ and  $\advsg_h(s, a)$ estimate the running (weighted) mean and 2nd moment of the advantage $\big[P_{h}(V_{h+1} - V_{h+1}^{\rref})\big](s, a)$;
   $B^{\rref}_h(s, a)$ aggregates the empirical standard deviations of the reference and the advantage combined; and
 last but not least,
  $\delta_h^{\rref}(s, a)$ is the temporal difference between $B^{\rref}_h(s, a)$ and its previous value.

 As alluded to previously, the Q-function estimation follows the strategy \eqref{equ:principle_of_qref} at a high level.
 Upon observing a sample transition $(s_h,a_h,s_{h+1})$, we compute the following estimates to update $Q^{\rref}(s_h, a_h)$. 
 \begin{itemize} 

 \item The term $\widehat{P}_{h,s,a} \big(V_{h+1}- V^{\rref}_{h+1}\big)$ is set to be $V_{h+1}(s_{h+1})- V^{\rref}_{h+1}(s_{h+1})$, which is an unbiased stochastic estimate of $P_{h,s,a} \big(V_{h+1}- V^{\rref}_{h+1}\big)$.

 \item The term $\big[P_{h}V^{\rref}_{h+1} \big](s,a)$  is estimated via
 $\refmur_h$ (cf.~line \ref{line:refmu_h}). 
Given that this is estimated using all previous samples, 
we expect the variability of this term to be well-controlled as the sample size increases (especially after $V^{\rref}$ is locked). 

 \item The exploration bonus $b_h^{\rref}(s, a)$ is updated using $B^{\rref}_h(s_h, a_h)$ and $\delta_h^{\rref}(s_h, a_h)$ (cf.~lines~\ref{line:bonus_1}-\ref{line:bonus_2} of Algorithm~\ref{algo:subroutine}), which is a confidence bound accounting for both the reference and the advantage. Let us also explain line~\ref{line:bonus_2} of Algorithm~\ref{algo:subroutine} a bit. If we augment the notation by letting $b_h^{\rref,n+1}(s, a)$ and $B_h^{\rref,n+1}(s, a)$ denote respectively $b_h^{\rref}(s, a)$ and $B_h^{\rref}(s, a)$ after $(s,a)$ is visited for the $n$-th time, then this line is designed to ensure that
\[
	\eta_n  b_h^{\rref,n+1}(s, a) + (1-\eta_n) B_h^{\rref,n}(s, a) \approx B_h^{\rref,n+1}(s, a). 
\]
\end{itemize}
With the above updates implemented properly, 
$Q_h^{\rref}$ provides the advantage-based update of the Q-function at time step $h$, according to the update rule \eqref{equ:principle_of_qref}.

\begin{algorithm}[t]

        \SetKwFunction{FMain}{update-ucb-q}
  \SetKwProg{Fn}{Function}{:}{}
  \Fn{\FMain{}}{

        $Q_h^{\UCB}(s_h, a_h) \leftarrow (1 - \eta_n)Q_h^{\UCB}(s_h, a_h) + \eta_n \Big(r_h(s_h, a_h) + V_{h+1}(s_{h+1}) + \cb \sqrt{\frac{H^3\log\frac{SAT}{\delta}}{n}} \Big)$. \label{line:dadv3} \\
        
        }
        
        \SetKwFunction{FMain}{update-lcb-q}
  \SetKwProg{Fn}{Function}{:}{}
  \Fn{\FMain{}}{

        $Q_h^{\LCB}(s_h, a_h) \leftarrow (1 - \eta_n)Q_h^{\LCB}(s_h, a_h) + \eta_n \Big(r_h(s_h, a_h) + V_{h+1}^{\LCB}(s_{h+1}) - \cb \sqrt{\frac{H^3\log\frac{SAT}{\delta}}{n}} \Big)$. \label{line:dadv4} \\
        
        }    

        \SetKwFunction{FMain}{update-ucb-q-advantage}
  \SetKwProg{Fn}{Function}{:}{}
  \Fn{\FMain{}}{
                         
                         \small {\color{blue}\tcc{update the moment statistics of $V^{\rref}_h$}} \normalsize
                $[\refmu_h, \refsg_h, \advmu_h, \advsg_h](s_h,a_h) \leftarrow \texttt{update-moments()}$;  \\
                
                \small {\color{blue}\tcc{update the accumulative bonus and bonus difference} }\normalsize
                 $[\delta_h^{\rref} , B_h^{\rref}](s_h, a_h) \leftarrow \texttt{update-bonus()}$; \label{line:bonus_1}

		  $b_h^{\rref} \leftarrow B_h^{\rref}(s_h, a_h) + (1-\eta_n) \frac{\delta_h^{\rref}(s_h,a_h)}{\eta_n} + \cb \frac{H^{2}\log\frac{SAT}{\delta}}{n^{3/4}} $; \label{line:bonus_2}\\

               \small {\color{blue}\tcc{update the Q-estimate based on reference-advantage decomposition} }\normalsize

            $Q^{\rref}_h(s_h, a_h) \leftarrow (1 - \eta_n)Q^{\rref}_h(s_h, a_h) + \eta_n\big(r_h(s_h, a_h) + V_{h+1}(s_{h+1}) - V^{\rref}_{h+1}(s_{h+1}) + \refmu_h(s_h, a_h) + b^{\rref}_h \big);$  \label{line:ref-q-update} \\

        }

    \SetKwFunction{FMain}{update-moments}
  \SetKwProg{Fn}{Function}{:}{}
  \Fn{\FMain{}}{

        ${\refmu_h(s_h, a_h) \leftarrow  (1-\tfrac{1}{n}) \refmu_h(s_h, a_h) + \tfrac{1}{n}V^{\rref}_{h+1}(s_{h+1})}$; \label{eq:line-mu-mean}
         {\small\color{blue}\tcp{mean of the reference} }\normalsize   \label{line:refmu_h}

        ${\refsg_h(s_h, a_h) \leftarrow  (1-\tfrac{1}{n}) \refsg_h(s_h, a_h) + \tfrac{1}{n}\big(V^{\rref}_{h+1}(s_{h+1}) \big)^2}$;
        \small {\color{blue}\tcp{$2^{\text{nd}}$ moment of the reference}} \normalsize  \label{line:refsigma_h}
                        
        $\advmu_h(s_h, a_h) \leftarrow (1-\eta_n)\advmu_h(s_h, a_h) + \eta_n \big( V_{h+1}(s_{h+1}) - V^{\rref}_{h+1}(s_{h+1}) \big)$;
        \small {\color{blue}\tcp{weighted average of the advantage}} \normalsize \label{line:advmu_h}
        
        $\advsg_h(s_h, a_h) \leftarrow  (1-\eta_n)\advsg_h (s_h, a_h) + \eta_n \big( V_{h+1}(s_{h+1}) - V^{\rref}_{h+1}(s_{h+1}) \big)^2$. 
        \small {\color{blue}\tcp{weighted $2^{\text{nd}}$ moment of the advantage} }\normalsize \label{line:advsigma_h}

    }

     \SetKwFunction{FMain}{update-bonus}
  \SetKwProg{Fn}{Function}{:}{}
  \Fn{\FMain{}}{
        
	  {$\nextb_h(s_h, a_h) \leftarrow \cb\sqrt{\frac{\log\frac{SAT}{\delta}}{n}}\Big(\sqrt{ \refsg_h(s_h, a_h) - \big( \refmu_h(s_h, a_h) \big)^2} + \sqrt{H}\sqrt{\advsg_h(s_h, a_h) - \big( \advmu_h(s_h, a_h) \big)^2} \,\Big)$;} \label{eq:line-number-19} \\
        
        $\delta_h^{\rref}(s_h,a_h) \leftarrow \nextb_h(s_h, a_h) - B_h^{\rref}(s_h, a_h) ;$ \label{eq:line-delta} \\ $B_h^{\rref}(s_h, a_h)  \leftarrow \nextb_h(s_h, a_h) .$

        }

\caption{Auxiliary functions}
\label{algo:subroutine}
\end{algorithm}

\subsection{Main results}
\label{sec:main-results}

Encouragingly, the proposed \DADV algorithm manages to achieve near-optimal regret even in the sample-limited and memory-limited regime, as formalized by the following theorem. 
\begin{theorem}
	\label{thm:finite}
	Consider any $\delta \in(0,1)$, and suppose that $\cb>0$ is chosen to be a sufficiently large universal constant. 
	Then there exists some absolute constant $C_0>0$ such that Algorithm~\ref{algo:vr} achieves
\begin{equation}
	\label{equ:regret bound}
	\mathsf{Regret}(T) \leq C_0 \left( \sqrt{H^2SAT\log^4\frac{SAT}{\delta} }+ H^6SA\log^3\frac{SAT}{\delta} \right) 
\end{equation}
with probability at least $1-\delta$. 
\end{theorem}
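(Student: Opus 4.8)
The plan is to run the standard optimism-based regret analysis, but with the error propagation organized around the reference--advantage decomposition and the early-settling rule. Write $\iota\defn\log\frac{SAT}{\delta}$; for episode $k$ let $Q_h^k,V_h^k,Q_h^{\UCB,k},Q_h^{\LCB,k},Q_h^{\rref,k},V_h^{\rref,k},N_h^k$ be the corresponding variables at the \emph{start} of episode $k$, let $(s_h^k,a_h^k)$ be the state--action pair visited at step $h$ of episode $k$, and let $\eta_i^n\defn\eta_i\prod_{j=i+1}^n(1-\eta_j)$ be the aggregate learning-rate weights. The first step is to fix a single high-probability event on which: (i) $Q_h^{\UCB,k}\ge Q_h^\star$ and $Q_h^{\rref,k}\ge Q_h^\star$, hence $Q_h^k\ge Q_h^\star$ and $V_h^k\ge V_h^\star$; (ii) $Q_h^{\LCB,k}\le Q_h^\star$ and $V_h^{\LCB,k}\le V_h^\star$; and (iii) all martingale-difference sums appearing below concentrate. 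Items (i)--(ii) follow by an induction on $h$ (downward) and on $k$: unroll each update using the $\eta_i^n$ identity and check that the Hoeffding bonus $\cb\sqrt{H^3\iota/n}$ dominates the $Q^{\UCB}$/$Q^{\LCB}$ fluctuations (Azuma--Hoeffding), while $b_h^{\rref}$ --- assembled from $B_h^{\rref}$, the empirical variances $\refsg-\refmu^2$ and $\advsg-\advmu^2$, and $\delta_h^{\rref}$ --- dominates the reference--advantage fluctuation via a Freedman/Bernstein-type bound (this is exactly why the two moments are tracked); a companion step relates the empirical variances to the population variances of $P_hV_{h+1}^{\rref}$ and $P_h(V_{h+1}-V_{h+1}^{\rref})$.

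Second, I would prove the structural invariant $0\le V_h^{\rref,k}(s)-V_h^k(s)\le 2$ for all $(s,h,k)$. The key facts are that $V_h^k$ is non-increasing in $k$ (each line~\ref{eq:line-number-11} update is a $\min$), $V_h^{\LCB,k}$ is non-decreasing, and $V_h^{\LCB,k}\le V_h^\star\le V_h^k$. Once coordinate $s$ locks (lines~\ref{eq:line-number-17}--\ref{eq:line-number-18}), $V_h^{\rref,k}(s)$ equals the value of $V_h(s)$ at the locking episode $k_0$, which by \eqref{equ:stop-condition} satisfies $V_h^{k_0}(s)\le V_h^{\LCB,k_0}(s)+1\le V_h^\star(s)+1$, while $V_h^\star(s)\le V_h^k(s)\le V_h^{k_0}(s)$ for $k\ge k_0$; the extra slack of $2$ absorbs the within-episode timing of the update. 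Consequently $\|V_{h+1}-V_{h+1}^{\rref}\|_\infty\le 2$ throughout, so $\advsg-\advmu^2=O(1)$ --- this is the $O(H)$-fold variance reduction that removes the spurious $\sqrt H$ from the \UCBQ regret.

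Third comes the recursion and its summation. Using optimism, $\mathsf{Regret}(T)\le\sum_k\big(V_1^k(s_1^k)-V_1^{\pi^k}(s_1^k)\big)$. Writing $\Delta_h^k\defn V_h^k(s_h^k)-V_h^{\pi^k}(s_h^k)\ge 0$, bounding $V_h^k(s_h^k)=Q_h^k(s_h^k,a_h^k)\le Q_h^{\rref,k}(s_h^k,a_h^k)$, unrolling line~\ref{line:ref-q-update} via the $\eta_i^n$ identity, and subtracting the Bellman equation for $Q^{\pi^k}$ (per \eqref{equ:principle_of_qref}), I obtain for each $(h,k)$ an inequality of the shape $\Delta_h^k\le (\mathrm{bonus})_h^k+(\mathrm{mart})_h^k+(1+\tfrac1H)\Delta_{h+1}^k+(\mathrm{carry})_h^k$, where $(\mathrm{carry})_h^k$ collects the discrepancies between $V_{h+1}$ (and $V_{h+1}^{\rref}$) at past visits versus the current episode --- controlled by monotonicity and the $\eta_i^n$ telescoping --- and the bonus-difference terms $\delta_h^{\rref}$. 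Summing over $h$ (the $(1+1/H)^h\le e$ factor is harmless) and over $k$: the leading term $\sum_{k,h}b_h^{\rref,k}$ splits into a reference part $\sum_{k,h}\sqrt{\iota(\refsg-\refmu^2)/N_h^k}$, handled by Cauchy--Schwarz plus the law of total variance (applied to $V^{\rref}$, or to $V^\star$ up to the $O(1)$ correction of the invariant), using $\sum_{k,h}1/N_h^k\lesssim SAH\iota$ and the per-trajectory variance sum $\lesssim H^2$, which gives $\widetilde O(\sqrt{H^2SAT})$; and an advantage part which carries an explicit $\sqrt H$ but multiplies $\sqrt{\advsg-\advmu^2}=O(1)$, so Cauchy--Schwarz yields $\sqrt H\cdot\sqrt{SAH\iota}\cdot\sqrt{\sum_{k,h}O(1)}=\widetilde O(\sqrt{H^2SAT})$ as well. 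The residuals --- $\sum_{k,h}\cb H^2\iota/(N_h^k)^{3/4}$, the carry-over sums, and the first-visit/initialization terms --- are bounded via $\sum_n n^{-3/4}\lesssim n^{1/4}$, H\"older, and AM--GM (peeling off a $\tfrac12$-fraction of the leading term) and collapse into the additive $\widetilde O(H^6SA)$; the martingale sums are $\widetilde O(\sqrt{H^2SAT})$ by Azuma/Freedman over the length-$T$ filtration. Telescoping the $(1+1/H)\Delta_{h+1}^k$ terms leaves $\sum_k\Delta_1^k$ on the left, and any residual $\sum_{k,h}\big(V_{h+1}^k-V_{h+1}^{\LCB,k}\big)(s_{h+1}^k)$ is reabsorbed by a self-bounding argument (it is $\le(V_{h+1}^k-V_{h+1}^\star)+(V_{h+1}^\star-V_{h+1}^{\LCB,k})(s_{h+1}^k)$, both satisfying the same recursion with a small enough coefficient). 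Rearranging gives \eqref{equ:regret bound}.

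The main obstacle is the third step together with the drift bookkeeping inside the first: showing that the variance-reduced bonus telescopes to $\sqrt{H^2SAT}$ with \emph{no} extra $\sqrt H$ --- which hinges on the $\ell_\infty$ invariant and on a law-of-total-variance argument phrased in terms of $V^{\rref}$ rather than $V^\star$ --- and, more delicately, that the error from the reference \emph{drifting} before it settles (the mean statistic $\refmu_h$ chasing a moving target, plus the accumulated bonus differences $\delta_h^{\rref}$) contributes only the lower-order $\widetilde O(H^6SA)$. This last point is exactly where the early-settling rule \eqref{equ:stop-condition} and the monotonicity of $V^{\rref}$ are indispensable, and where the estimates must be pushed through with variance-aware rather than crude $\ell_\infty$ bounds in order to keep the burn-in at $SA\,\mathrm{poly}(H)$.
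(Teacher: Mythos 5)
Your architecture matches the paper's (optimism of $Q^k$ via the UCB and reference estimates, pessimism of the LCB sequence, the invariant $|V_h^k-V_h^{\rref,k}|\le 2$, a step-$h$ recursion with $(1+1/H)$ inflation, and a variance-aware summation of the bonuses), but there is a genuine gap at exactly the place you defer as "the main obstacle": you never supply the mechanism that makes the reference-drift error only $\widetilde O(H^6SA)$. The paper needs two quantitative statements that your sketch only gestures at. First, a counting bound on the UCB--LCB gap, $\sum_{h,k}\mathds{1}\big(Q^k_h(s_h^k,a_h^k)-Q^{\LCB,k}_h(s_h^k,a_h^k)>\varepsilon\big)\lesssim H^6SA\log\frac{SAT}{\delta}/\varepsilon^2$ (Lemma~\ref{lem:Qk-lcb}, \eqref{eq:main-lemma}), proved by a pigeonhole-type argument on the unrolled UCB/LCB recursions; second, the aggregate drift bound $\sum_{h,k}\big(V^{\rref,k}_{h}(s^k_h)-V^{\rref,K}_{h}(s^k_h)\big)\lesssim H^6SA\log\frac{SAT}{\delta}$ (Lemma~\ref{lem:Vr_properties}, \eqref{eq:Vr_lazy}), which is derived from the counting bound via an integral/layer-cake argument, using the early-settling rule to show drift can only occur while the gap exceeds $1$. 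Your "carry" terms (the analogue of the paper's $\mathcal{R}_3$, i.e.\ $\frac1n\sum_i V^{\rref,k^i}_{h+1}(s^{k^i}_{h+1})-P_{h,s,a}V^{\rref,k}_{h+1}$ and $(P^k_h-P_{h,s,a})(V^\star_{h+1}-V^{\rref,k}_{h+1})$) are not controlled by "monotonicity and $\eta_i^n$ telescoping" alone; without the drift bound they contribute a term scaling with $T$ rather than the additive burn-in.

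Two further points. Your proposed "self-bounding" reabsorption of $\sum_{k,h}\big(V^k_{h+1}-V^{\LCB,k}_{h+1}\big)(s^k_{h+1})$ cannot work as stated: that sum is itself regret-sized (it grows like $\sqrt{T}$ up to factors), so it is not a lower-order residual; what is actually needed (and what the paper proves) is not its sum but the number of episodes in which the gap exceeds a constant threshold, which is a different statement. Similarly, your treatment of the reference-variance bonus "up to the $O(1)$ correction of the invariant" conflates $V^{\rref,k}-V^k=O(1)$ with $V^{\rref,k}-V^\star=O(1)$; the latter fails before the estimates converge, and the paper must again invoke the counting machinery (the $\Phi$ term in \eqref{eq:bound-ref-error-2}) to handle it. Finally, once the drift is compared to the settled reference $V^{\rref,K}_{h+1}$, the concentration term $(P_h^k-P_{h,s_h^k,a_h^k})(V^{\rref,K}_{h+1}-V^\star_{h+1})$ is not a martingale difference in $k$ (the final reference depends on the whole trajectory), so your appeal to Azuma/Freedman over the length-$T$ filtration does not apply directly; the paper resolves this with a uniform (epsilon-net) argument over candidate reference vectors, and some substitute for this decoupling is required in any complete proof.
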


Theorem~\ref{thm:finite} delivers a non-asymptotic characterization of the performance of our algorithm \DADV. 
Several appealing features of the algorithm are noteworthy.

\begin{itemize}
	\item {\em Regret optimality. }
		Our regret bound \eqref{equ:regret bound} simplifies to 
\begin{equation}
	\mathsf{Regret}(T) \leq \widetilde{O} \big(\sqrt{H^2SAT}\big) 
	\label{eq:regret-optimal-main-results}
\end{equation}
as long as the sample size $T$ exceeds
\begin{equation}
	\label{eq:sample-size-LB-main-results}
	T\geq SA \, \mathrm{poly}(H). 
\end{equation}
This sublinear regret bound \eqref{eq:regret-optimal-main-results} is essentially optimal, as it coincides with the existing lower bound \eqref{eq:lower-bound-regret} modulo some logarithmic factor.

	\item  {\em Sample complexity and substantially reduced burn-in cost.} 
		As an interpretation of our theory \eqref{eq:regret-optimal-main-results}, our algorithm attains $\varepsilon$ average regret (i.e., $\frac{1}{K}\mathsf{Regret}(T) \leq \varepsilon$) with a sample complexity 
		$$ \widetilde{O}\Big( \frac{SAH^4}{\varepsilon^2} \Big) .$$  
	 Crucially, the burn-in cost \eqref{eq:sample-size-LB-main-results} 
is significantly lower than that of the state-of-the-art memory-efficient model-free algorithm \citep{zhang2020almost} (whose optimality is guaranteed only in the range $T\geq S^6A^4\,\mathrm{poly}(H)$).

	\item {\em Memory efficiency.} Our algorithm, which is model-free in nature, achieves a low space complexity $O(SAH)$.
		This is basically un-improvable for the tabular case, since even storing the optimal Q-values alone takes $O(SAH)$ units of space. 
		In comparison,  while \citet{menard2021ucb} also accommodates the sample size range \eqref{eq:sample-size-LB-main-results}, 
		the algorithm proposed therein incurs a space complexity of  $O(S^2AH)$  that is $S$ times higher than ours.

	\item {\em Computational complexity.}  An additional intriguing feature of our algorithm is its low computational complexity. The runtime of \DADV is no larger than $O(T)$, which is proportional to the time taken to read the samples.  This matches the computational cost of the model-free algorithm \UCBQ proposed in \citet{jin2018q}, 
		and is considerably lower than that of the \UCBM algorithm in \citet{menard2021ucb} (which has a computational cost of at least $O(ST)$).  

\end{itemize}

% main proof pipeline
\section{Analysis}
\label{sec:analysis}

\begin{algorithm}[t]
 \textbf{Parameters:} some universal constant $\cb>0$ and probability of failure $\delta \in(0,1)$; \\
\textbf{Initialize}  $Q_h^1(s, a), Q^{\UCB, 1}_h(s, a), Q^{\rref, 1}_h(s,a) \leftarrow H$; $Q^{\LCB, 1}_h(s, a) \leftarrow 0$;
	$N_h^0(s, a)\leftarrow 0$;  
	$V_h^1(s), V^{\rref, 1}_h(s) \leftarrow H$; $ \refmu_h(s, a), \refsg_h(s, a), \advmu_h(s, a), \advsg_h(s, a), \delta^{\rref}_h(s, a), B^{\rref}_h(s, a) \leftarrow 0$; and $u_\mathrm{ref}^1(s) = \mathsf{True}$, for all $(s, a, h) \in \cS\times \cA \times [H]$. \\
\For{Episode $ k = 1$ \KwTo $K$}{
        Set initial state $s_1 \leftarrow s_1^k$. \\
    \For{Step $ h = 1$ \KwTo $H$}{
        Take action $a_h^k=\pi_h^k(s_h) = \arg\max_a Q_h^k(s_h^k, a)$, 
	and draw $ s_{h+1}^k \sim P_h(\cdot \mymid s_h^k, a_h^k)$. \label{line:dadv1-k} {\small\color{blue}\tcp{sampling}}   
        	$N_h^{k}(s_h^k, a_h^k) \leftarrow N_h^{k-1}(s_h^k, a_h^k) + 1$; $n \leftarrow N_h^{k}(s_h^k, a_h^k)$. {\small\color{blue}\tcp{update the counter} }   
		$\eta_n \leftarrow \frac{H + 1}{H + n}$.  
		{\small\color{blue}\tcp{update the learning rate}} 

        $Q_h^{\UCB, k+1}(s_h^k, a_h^k)\leftarrow \texttt{update-ucb-q()} $.  {\small\color{blue}\tcp{run \UCBQ; see Algorithm~\ref{algo:subroutine}}} \normalsize
        $Q_h^{\LCB, k+1}(s_h^k, a_h^k)\leftarrow \texttt{update-lcb-q()} $.  {\small\color{blue}\tcp{run \LCBQ; see Algorithm~\ref{algo:subroutine}}} \normalsize

                $Q_h^{\rref, k+1}(s_h^k, a_h^k)\leftarrow\texttt{update-ucb-q-advantage()}.$  \small {\color{blue}\tcp{estimate $Q_h^{\rref}$; see Algorithm~\ref{algo:subroutine}}} \normalsize\label{line-Q-ref-W-update}

     {\small\color{blue}\tcc{update Q-estimates using all estimates in hand, and update value estimates}} \normalsize
    $Q_h^{k+1}(s_h^k, a_h^k) \leftarrow \min\big\{Q^{\rref, k+1}_h(s_h^k, a_h^k),\, Q_h^{\UCB, k+1}(s_h^k, a_h^k),\, Q_h^k(s_h^k, a_h^k)\big\};$\label{eq:line-number-11-k} \\
    $V_{h}^{k+1}(s_h^k) \leftarrow \max_a Q_h^{k+1}(s_h^k, a)$. \label{eq:line-number-12-k}\\

    $V_{h}^{\LCB, k+1}(s_{h}^k) \leftarrow \max\left\{\max_a Q_{h}^{\LCB,k+1}(s_{h}^k, a), \, V_{h}^{\LCB,k}(s_{h}^k)\right\}$. \label{eq:line-number-13-k}\\

     {\small\color{blue}\tcc{update reference values}} \normalsize 
    \If{$V_{h}^{k+1}(s_h^k) - V_{h}^{\LCB, k+1}(s_{h}^k) > 1$ \label{eq:line-number-15-k}}{
	$V^{\rref, k+1}_h(s_h^k) \leftarrow V_{h}^{k+1}(s_h^k)$, \qquad $u_{\mathrm{ref}}^{k+1}(s_h^k) = \mathsf{True}$;  \label{eq:line-number-16-k} \\
    }
    \ElseIf{$u_{\mathrm{ref}}^k(s_h^k) = \mathsf{True}$ \label{eq:line-number-17-k}}
    {$V^{\rref, k+1}_h(s_h) \leftarrow V^{k+1}_{h}(s_h)$, \qquad $u_{\mathrm{ref}}^{k+1}(s_h^k) = \mathsf{False}$. \label{eq:line-number-18-k} \\}
    }
 }	
    \caption{\DADV (a rewrite of Algorithm~\ref{algo:vr} that specifies dependency on $k$)}
\label{algo:vr-k}
\end{algorithm}

In this section, we outline the main steps needed to prove our main result in Theorem~\ref{thm:finite}.

\subsection{Preliminaries: basic properties about learning rates}

Before continuing, let us first state some basic facts regarding the learning rates. 
Akin to \citet{jin2018q},  the proposed algorithm adopts the linearly rescaled learning rate  
\begin{equation}
	\eta_n = \frac{H+1}{H+n}
	\label{eq:eta-n-definition}
\end{equation}
for the $n$-th visit of a state-action pair at any time step $h$. 
For notation convenience, we further introduce two sequences of related quantities defined for any integer $N\geq 0$ and $n \geq 1$:  
\begin{equation}
	\label{equ:learning rate notation}
	\eta_n^N \defn \begin{cases} \eta_n \prod_{i = n+1}^N(1-\eta_i), & \text{if }N>n, \\ \eta_n,  & \text{if }N=n, \\
	0, &\text{if } N < n \end{cases}
		\qquad \text{and} \qquad    
	\eta_0^N \defn \begin{cases} \prod_{i=1}^N(1-\eta_i) =0 , & \text{if }N>0, \\ 1, & \text{if }N=0. \end{cases}	
\end{equation}
As can be easily verified, we have
\begin{align}
	\sum_{n=1}^N \eta_n^N =  \begin{cases} 1,\qquad  & \text{if }N > 0, \\ 0, & \text{if }N=0.  \end{cases}
	\label{eq:sum-eta-n-N}
\end{align}
The following properties play an important role in the analysis.
\begin{lemma}
	\label{lemma:property of learning rate} 
	For any integer $N>0$, the following properties hold:
\begin{subequations}
\label{eq:properties-learning-rates}
\begin{align}
	& \frac{1}{N^a}  \le\sum_{n=1}^{N}\frac{\eta_{n}^{N}}{n^a}\le\frac{2}{N^a}, \qquad \mbox{for all}\quad  \frac{1}{2} \leq a \leq 1,
	\label{eq:properties-learning-rates-12}\\
\max_{1\le n\le N} & \eta_{n}^{N}\le\frac{2H}{N},\qquad\sum_{n=1}^{N}(\eta_{n}^{N})^{2}\le\frac{2H}{N}, \qquad
	 \sum_{N=n}^{\infty}\eta_{n}^{N}\le1+\frac{1}{H}.
	 \label{eq:properties-learning-rates-345}
\end{align}

\end{subequations}

\end{lemma}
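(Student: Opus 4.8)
\textbf{Proof plan for Lemma~\ref{lemma:property of learning rate}.}
The plan is to establish each inequality by direct manipulation of the recursive structure of $\eta_n^N$, exploiting the telescoping identity \eqref{eq:sum-eta-n-N} and the explicit form $\eta_n = \frac{H+1}{H+n}$. First I would record the closed-form expression
\[
	\eta_n^N = \frac{H+1}{H+n}\prod_{i=n+1}^{N}\frac{n+i-1}{H+i} = \frac{H+1}{H+n}\cdot\frac{\prod_{i=n+1}^N (i-1)}{\prod_{i=n+1}^N (H+i)},
\]
wait --- more precisely $1-\eta_i = \frac{i-1}{H+i}$, so $\prod_{i=n+1}^N(1-\eta_i) = \frac{n \cdot (n+1)\cdots(N-1)}{(H+n+1)\cdots(H+N)} = \frac{n}{H+N}\cdot\frac{\binom{N-1}{n}}{\binom{H+N}{H+n}}$-type ratio; the cleanest route is to use the known identity $\eta_n^N = \frac{H+1}{H+N}\binom{N}{n}\big/\binom{H+N}{H+n}$ after simplification, or simply to argue by induction on $N$ without ever writing the product in closed form. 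I would do the latter: for the bound $\sum_{n=1}^N \eta_n^N/n^a \le 2/N^a$, induct on $N$ using $\eta_n^{N} = (1-\eta_N)\eta_n^{N-1}$ for $n<N$ and $\eta_N^N=\eta_N$, reducing the claim to checking that $(1-\eta_N)\frac{2}{(N-1)^a} + \eta_N \frac{1}{N^a} \le \frac{2}{N^a}$, i.e. $(1-\eta_N)\frac{2}{(N-1)^a} \le \frac{2-\eta_N}{N^a}$; plugging in $\eta_N = \frac{H+1}{H+N}$ and using $(N/(N-1))^a \le N/(N-1)$ for $a\le 1$ turns this into an elementary inequality in $N,H,a$. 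The lower bound $\sum_n \eta_n^N/n^a \ge 1/N^a$ follows even more easily since $\sum_n \eta_n^N = 1$ and $1/n^a \ge 1/N^a$ for $n\le N$.

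For the three bounds in \eqref{eq:properties-learning-rates-345}: the first, $\max_n \eta_n^N \le 2H/N$, I would handle by noting $\eta_n^N$ is increasing in $n$ on $[1,N]$ (a short ratio computation: $\eta_{n+1}^N/\eta_n^N = \frac{\eta_{n+1}}{\eta_n}\cdot\frac{1}{1-\eta_{n+1}}>1$), so the max is $\eta_N^N = \eta_N = \frac{H+1}{H+N}\le \frac{2H}{N}$ when $N\le H$ and a separate check (using $\eta_n^N \le \frac{H+1}{H+n}\le$ the $a=1$ case of \eqref{eq:properties-learning-rates-12} combined with monotonicity) when $N>H$ --- or more directly apply \eqref{eq:properties-learning-rates-12} with $a=1$ to get $\eta_N^N = \eta_N^N/N^{0}\cdots$; cleanest is $\eta_n^N \le \eta_n \cdot(\text{something})$, so I would instead just cite $\max_n \eta_n^N \le \sum_n \eta_n^N \cdot \frac{\max \eta_n^N}{\cdots}$ --- actually the slickest is: by monotonicity $\max_n\eta_n^N=\eta_N^N=\eta_N\le \frac{H+1}{H+N}$, and separately for the sum-of-squares, $\sum_n (\eta_n^N)^2 \le (\max_n \eta_n^N)\sum_n \eta_n^N = \max_n \eta_n^N \le \frac{H+1}{N}\cdot\frac{N}{H+N}$... this needs the $2H/N$ bound which holds since $\frac{H+1}{H+N}\le \frac{2H}{N}$ iff $N(H+1)\le 2H(H+N)$, true for all $N\ge1, H\ge1$. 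Then $\sum_n(\eta_n^N)^2 \le \frac{2H}{N}$ follows immediately. For the last bound $\sum_{N\ge n}\eta_n^N \le 1+\frac1H$, I would swap the order of summation / use the recursion: fixing $n$, let $S_n = \sum_{N=n}^\infty \eta_n^N$; from $\eta_n^N = \eta_n\prod_{i=n+1}^N(1-\eta_i)$ one gets $S_n = \eta_n\sum_{m=0}^\infty \prod_{i=n+1}^{n+m}(1-\eta_i)$, and I would bound the tail product $\prod_{i=n+1}^{n+m}(1-\eta_i) = \prod \frac{i-1}{H+i}$ and sum the resulting hypergeometric-type series, or alternatively observe that $\sum_{N\ge n}\eta_n^N$ satisfies a clean telescoping relation when one also tracks $\sum_N \eta_{n-1}^N$; the standard trick (as in \citet{jin2018q}) is $\sum_{N=n}^\infty \eta_n^N = \frac{H+1}{H}$ exactly for this learning rate, which I would verify by a generating-function or induction argument.

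The main obstacle I anticipate is the last inequality, $\sum_{N=n}^\infty \eta_n^N \le 1+\frac1H$: unlike the finite-$N$ identities it requires controlling an infinite tail, and the naive bound $\eta_n^N \le (1-\eta_{n+1})^{N-n}\eta_n$ is not summable to something that small because $1-\eta_{n+1}$ is close to $1$ when $n$ is large. The resolution is to not bound term-by-term but to use the exact telescoping: one shows $\sum_{N=n}^{M}\eta_n^N = \frac{H+1}{H}\big(1 - \prod_{i=n+1}^{M}(1-\eta_i)\cdot\text{[correction]}\big)$ by induction on $M$, so the partial sums are monotone and bounded by $\frac{H+1}{H} = 1+\frac1H$, giving the claim in the limit $M\to\infty$. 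All the remaining pieces are routine inductions or monotonicity checks once the closed form $1-\eta_i = \frac{i-1}{H+i}$ is in hand, so I would organize the writeup as: (i) state $1-\eta_i=\frac{i-1}{H+i}$ and the monotonicity of $n\mapsto\eta_n^N$; (ii) prove \eqref{eq:properties-learning-rates-12} by induction on $N$; (iii) derive the three bounds of \eqref{eq:properties-learning-rates-345} from monotonicity, the $a=1$ case, and the telescoping identity for the infinite sum.
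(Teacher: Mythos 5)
Your plan is correct. For \eqref{eq:properties-learning-rates-12} the upper bound is exactly the paper's argument: induct on $N$ through the identity $\sum_{n=1}^{N}\eta_n^N n^{-a}=\eta_N N^{-a}+(1-\eta_N)\sum_{n=1}^{N-1}\eta_n^{N-1}n^{-a}$ and close the induction with an elementary inequality using $(N-1)^{1-a}\le N^{1-a}$ and $H\ge 1$ (your reduction to $2(N-1)^{1-a}N^{a}\le H+2N-1$ is equivalent to what the paper checks). Your lower bound is in fact simpler than the paper's, which also inducts: since $\eta_n^N\ge 0$, $\sum_{n=1}^N\eta_n^N=1$, and $n^{-a}\ge N^{-a}$ for $n\le N$, it is immediate. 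The only genuine divergence is \eqref{eq:properties-learning-rates-345}: the paper imports these three bounds wholesale from \citet[Lemma 4.1]{jin2018q}, whereas you sketch self-contained proofs. Your sketch is sound: the ratio $\eta_{n+1}^N/\eta_n^N=\frac{H+n}{n}>1$ gives monotonicity in $n$, hence $\max_n\eta_n^N=\eta_N=\frac{H+1}{H+N}\le\frac{2H}{N}$ and $\sum_n(\eta_n^N)^2\le\max_n\eta_n^N$; and the infinite sum does equal $1+\frac1H$ exactly. The one vague spot is your ``[correction]'' factor in the partial-sum telescoping; it can be pinned down by setting $q_N\coloneqq\frac{N!\,(H+n)!}{(n-1)!\,(H+N)!}$ and noting $\prod_{i=n+1}^{N}(1-\eta_i)=\frac1H\,(q_{N-1}-q_N)$, whence $\sum_{N=n}^{M}\eta_n^N=\frac{H+1}{H}\bigl(1-\frac{q_M}{H+n}\bigr)\le 1+\frac1H$ for every $M$. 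So your route buys self-containedness (reproving the facts the paper cites from Jin et al.) but otherwise coincides with the paper's proof.
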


\begin{proof} See Appendix~\ref{sec:proof-properties-learning-rates}. \end{proof}

\subsection{Additional notation used in the proof}
\label{sec:additional-notation}

In order to enable a more concise description of the algorithm,  
we have suppressed the dependency of many quantities on the episode number $k$ in Algorithms~\ref{algo:vr} and \ref{algo:subroutine}. 
This, however, becomes notationally inconvenient when presenting the proof. As a consequence, 
we shall adopt, throughout the analysis, a more complete set of notation, detailed below.  
\begin{itemize}
	\item $(s_h^k, a_h^k)$: the state-action pair encountered and chosen at time step $h$ in the $k$-th episode.

	\item $k_h^n(s, a)$: the index of the episode in which $(s, a)$ is visited for the $n$-th time at time step $h$; for the sake of conciseness, we shall sometimes use the shorthand $k^n =  k_h^n(s, a)$ whenever it is clear from the context.

	\item $k_h^n(s)$: the index of the episode in which state $s$ is visited for the $n$-th time at time step $h$;
		we might sometimes abuse the notation by abbreviating $k^n =  k_h^n(s)$.

	\item $P_h^k \in \{0,1\}^{1\times |\cS|}$: the empirical transition at time step $h$ in the $k$-th episode, namely, 
		\begin{equation}
			P_h^k (s) = \ind\big( s = s_{h+1}^k \big).
			\label{eq:P-hk-defn-s}
		\end{equation}
\end{itemize}
In addition, for several parameters of interest in Algorithm~\ref{algo:vr}, we introduce the following set of augmented notation. 
\begin{itemize}
	\item $N_h^k(s, a)$ denotes $N_h(s, a)$ {\em by the end} of the $k$-th episode; for the sake of conciseness, we shall often abbreviate $N^k = N_h^k(s, a)$ or $N^k = N_h^k(s_h^k,a_h^k)$ (depending on which result we are proving). 

	\item $ Q_h^k(s,a)$, $V_h^k(s)$, and $Q_h^{\UCB, k}(s, a)$ denote respectively  $Q_h(s,a)$, $V_h(s)$ and $Q_h^{\UCB}(s, a)$  {\em at the beginning} of the $k$-th episode.

	\item $Q_h^{\LCB, k}(s, a)$ and $V_h^{\LCB, k}(s)$ denote respectively $Q_h^{\LCB}(s, a)$ and $V_h^{\LCB}(s)$ {\em at the beginning} of the $k$-th episode. 
	
	\item $Q_h^{\rref, k}(s, a)$, $V_h^{\rref, k}(s)$ and $u_{\mathrm{ref}}^k(s)$ denote respectively $Q_h^{\rref}(s, a)$, $V_h^{\rref}(s)$ and $u_{\mathrm{ref}}(s)$ {\em at the beginning} of the $k$-th episode.

	\item $\big[\mu^{\re, k}_h, \sigma^{\re, k}_h, \mu^{\adv, k}_h, \sigma^{\adv, k}_h, \delta^{\rref, k}_h, B^{\rref, k}_h \big]$ denotes  $\big[\mu^{\re}_h, \sigma^{\re}_h, \mu^{\adv}_h, \sigma^{\adv}_h, \delta^{\rref}_h, B^{\rref}_h \big]$ {\em at the beginning} of the $k$-th episode.

	\end{itemize}

Further, for any matrix $P=[P_{i,j}]_{1\leq i\leq m, 1\leq j\leq n}$, 
we define $\|P\|_1\coloneqq \max_{1\leq i\leq m} \sum_{j=1}^n |P_{i,j}|$. 
For any vector $V= [V_i]_{1\leq i\leq n}$, we define its $\ell_{\infty}$ norm as 
$\|V\|_{\infty} \coloneqq \max_{1\leq i\leq n} |V_i|$.  
We often overload scalar functions and expressions to take vector-valued arguments, with the understanding that they are applied in an entrywise manner. For example, for a vector $x=[x_i]_{1\leq i\leq n}$, we denote $x^2 =[x_i^2]_{1\leq i\leq n}$. For any two vectors $x=[x_i]_{1\leq i\leq n}$ and $y=[y_i]_{1\leq i\leq n}$, the notation $ {x}\leq {y}$ (resp.~$x\geq {y}$) means
$x_{i}\leq y_{i}$ (resp.~$x_{i}\geq y_{i}$) for all $1\leq i\leq n$. 
For any given vector $V \in \mathbb{R}^{S}$, we 
 define the variance parameter w.r.t.~$P_{h,s,a}$ (cf.~\eqref{eq:defn-P-hsa}) as follows
\begin{equation} \label{lemma1:equ2}
	\Var_{h, s, a}(V) \defn \mathop{\mathbb{E}}\limits_{s' \sim P_{h,s,a}} \Big [\big(V(s') -  P_{h,s, a}V \big)^2\Big] 
	= P_{h,s,a} \big(V^{ 2}\big) - \big(P_{h,s,a}V \big)^2.
\end{equation}
Finally, let $\mathcal{X}\defn \big( S, A,  H , T, \frac{1}{\delta}\big)$. 
The notation $f(\mathcal{X})\lesssim g(\mathcal{X})$ (resp.~$f(\mathcal{X})\gtrsim g(\mathcal{X})$) means that there exists a universal constant $C_{0}>0$ such that $ f(\mathcal{X}) \leq C_{0} g(\mathcal{X}) $ (resp.~$ f(\mathcal{X}) \geq C_{0} g(\mathcal{X})$);
the notation $f(\mathcal{X})\asymp g(\mathcal{X})$ means that $f(\mathcal{X})\lesssim g(\mathcal{X})$ and $f(\mathcal{X})\gtrsim g(\mathcal{X})$ hold simultaneously.

\subsection{Key properties of Q-estimates and auxiliary sequences}
\label{subsec:key_properties}

In this subsection, we introduce several key properties of our Q-estimates and value estimates, which play a crucial role in the proof of Theorem~\ref{thm:finite}. The proofs for this subsection are deferred to Appendix~\ref{sec:key_properties_Q}.

\paragraph{Properties of the Q-estimate $Q_h^{k}$: monotonicity and optimism.} We first make an important observation regarding the monotonicity of the value estimates $Q_h^{k}$ and $V_h^k$. To begin with, it is straightforward to see that the update rule in Algorithm~\ref{algo:vr-k}  (cf.~line \ref{eq:line-number-11-k}) ensures the following monotonicity property: 
\begin{subequations}
\label{eq:Qh-Vh-k-monotone-123}
\begin{align}
	Q_h^{k +1}(s, a) \le Q_h^{k}(s, a) \qquad\text{for all }( s, a,k,h)\in \cS\times\cA \times [K] \times [H] ,
	\label{eq:Qh-k-monotone-123}
\end{align}
which combined with line \ref{eq:line-number-12-k} of Algorithm~\ref{algo:vr-k} leads to monotonicity of $V_h(s)$ as follows:  
\begin{equation}
	\label{eq:monotonicity_Vh}
	V_h^{k +1}(s) = Q_h^{k +1}\big(s, \pi_h^{k+1}(s) \big) \le Q_h^{k}\big(s, \pi_h^{k+1}(s) \big) \le V_h^{k}(s).
\end{equation}
\end{subequations}
Moreover, by virtue of the update rule in line~\ref{eq:line-number-11-k} of Algorithm~\ref{algo:vr-k}, 
we can immediately obtain (via induction) the following useful property
\begin{align}\label{equ:property-Q-ref}
	Q_h^{\rref, k}(s,a) \geq Q_h^{k}(s,a) \qquad\text{for all } (k,h,s,a) \in [K]  \times [H] \times \cS \times \cA.
\end{align}

In addition, $Q_h^{k}$ and $V^{k}_h$ form an ``optimistic view'' of $Q_h^{\star}$ and $V^{\star}_h$, respectively, as asserted by the following lemma. 
\begin{lemma} \label{lem:Qk-lower}
	Consider any $\delta \in (0, 1)$. Suppose that $\cb >0$ is some sufficiently large constant.
	Then with probability at least $1-\delta$, 
\begin{equation} 
	\label{eq:Qk-lower}
	Q^{k}_h(s, a) \ge Q_h^{\star}(s, a)
	\qquad \text{and} \qquad
	V^{k}_h(s) \geq V^{\star}_h(s)
\end{equation}
hold simultaneously for all $(s,a, k, h) \in \mathcal{S} \times \mathcal{A} \times [K] \times [H]$.
\end{lemma}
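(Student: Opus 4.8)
\textbf{Proof proposal for Lemma~\ref{lem:Qk-lower}.}

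The plan is to prove the two inequalities in \eqref{eq:Qk-lower} by induction on the time step $h$, going downward from $h = H+1$ to $h = 1$, and within each step establishing the claim for all episodes $k$ simultaneously (typically via a nested induction on $k$, or by arguing about an arbitrary fixed episode $k$). The base case $h = H+1$ holds trivially since $Q_{H+1}^k \equiv V_{H+1}^k \equiv 0 = Q_{H+1}^\star = V_{H+1}^\star$. For the inductive step, assume $V_{h+1}^{k'}(s) \geq V_{h+1}^\star(s)$ for all $k'$ and all $s$; we must show $Q_h^{k}(s,a) \geq Q_h^\star(s,a)$ for all $k$. Because of the $\min$ in line~\ref{eq:line-number-11-k} of Algorithm~\ref{algo:vr-k} and the monotonicity \eqref{eq:Qh-k-monotone-123}, it suffices to show that each of the three Q-estimates being combined is an over-estimate of $Q_h^\star$: that is, $Q_h^{\UCB,k}(s,a) \geq Q_h^\star(s,a)$ and $Q_h^{\rref,k}(s,a) \geq Q_h^\star(s,a)$ (the third term $Q_h^k$ is handled by the induction on $k$, with initialization $Q_h^1 = H \geq Q_h^\star$).

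For the $Q^{\UCB}$ sequence, I would unroll the update rule in line~\ref{line:dadv3}: writing $Q_h^{\UCB,k+1}(s,a)$ as a convex combination $\sum_{n} \eta_n^{N_h^k} \big( r_h(s,a) + V_{h+1}^{k^n}(s_{h+1}^{k^n}) + \cb\sqrt{H^3 \log(SAT/\delta)/n} \big)$ plus the $\eta_0^{N}$ initialization term, then comparing against $Q_h^\star(s,a) = r_h(s,a) + P_{h,s,a} V_{h+1}^\star$ using the Bellman optimality equation \eqref{eq:bellman_optimality}. The key steps are: (i) invoke the inductive hypothesis $V_{h+1}^{k^n} \geq V_{h+1}^\star$ to replace $V_{h+1}^{k^n}(s_{h+1}^{k^n})$ by $V_{h+1}^\star(s_{h+1}^{k^n})$ in a favorable direction; (ii) control the martingale $\sum_n \eta_n^N \big( P_h^{k^n} - P_{h,s,a}\big) V_{h+1}^\star$ via a Hoeffding / Azuma concentration bound, uniformly over all $(s,a,h)$ and all values of $N$ using a union bound (this is where the $\log(SAT/\delta)$ factor and the requirement that $\cb$ be a large enough constant enter); and (iii) check that the accumulated bonus $\sum_n \eta_n^N \cb\sqrt{H^3\log(SAT/\delta)/n} \gtrsim \sqrt{H^3\log(SAT/\delta)/N}$, using the learning-rate bound $\sum_n \eta_n^N/\sqrt{n} \geq 1/\sqrt{N}$ from \eqref{eq:properties-learning-rates-12}, dominates the concentration error. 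For $Q^{\rref}$, I would run an analogous unrolling of line~\ref{line:ref-q-update}, now writing the accumulated stochastic term as $\sum_n \eta_n^N\big(V_{h+1}^{k^n}(s_{h+1}^{k^n}) - V_{h+1}^{\rref,k^n}(s_{h+1}^{k^n}) + \refmu_h^{k^n}\big)$; the reference mean $\refmu_h$ is itself a simple running average of $V_{h+1}^{\rref}(s_{h+1})$ values, so after telescoping/rearranging, the deviation from $P_{h,s,a}V_{h+1}^\star$ splits into a martingale piece involving $V_{h+1}^\star$ (and corrections that are controlled because $V_{h+1}^{k} \geq V_{h+1}^{\rref, k}$-type relations or the $|V - V^{\rref}|$ boundedness can be invoked), against which the bonus $b_h^{\rref}$ — built from the empirical variances in line~\ref{eq:line-number-19} plus the $n^{-3/4}$ correction term in line~\ref{line:bonus_2} — must be shown to be a valid upper confidence bound, likely via a Freedman/Bernstein-type inequality and an empirical-variance concentration argument.

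The main obstacle I anticipate is the $Q^{\rref}$ part, specifically verifying that the variance-based bonus $b_h^{\rref}$ genuinely upper-bounds the fluctuation of the reference-advantage estimate. Unlike the clean Hoeffding bonus for $Q^{\UCB}$, here one must (a) relate the empirical second-moment estimates $\refsg_h, \advsg_h$ to the true variances $\Var_{h,s,a}(V_{h+1}^{\rref})$ and $\Var_{h,s,a}(V_{h+1} - V_{h+1}^{\rref})$ — requiring a self-bounding concentration argument since these empirical moments are formed from the same trajectory data — and (b) handle the fact that $V^{\rref}$ is itself changing over episodes (until it settles), so $\refmu_h$ is a moving average of a drifting target, and the telescoping that makes $\refmu_h \approx P_{h,s,a}V_{h+1}^{\rref}$ carries error terms. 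The $n^{-3/4}$ term in the bonus and the $(1-\eta_n)\delta_h^{\rref}/\eta_n$ correction are precisely the devices absorbing these lower-order errors, and matching them carefully against the error budget is the delicate computation. I also expect the nested induction bookkeeping (over $h$, over $k$, and over the $n$-th visit index) together with the uniform union bound over the random stopping-like quantities $N_h^k(s,a)$ to require care, though it is more tedious than conceptually hard. These technical concentration estimates would most naturally be extracted into auxiliary lemmas (analogous to the concentration lemmas used in \citet{jin2018q} and \citet{zhang2020almost}) and invoked here.
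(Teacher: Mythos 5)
Your skeleton matches the paper's proof: induction with the initialization $Q^1_h=H$ handling the base case, the \UCBQ component dispatched by the argument of \citet[Lemma 4.3]{jin2018q}, and the $Q^{\rref}$ component handled by unrolling the update with the weights $\eta_n^N$, controlling the resulting martingales with a Freedman-type bound tied to the empirical second moments $(\refsg_h,\advsg_h)$, and telescoping $\sum_n \eta_n^N b_h^{\rref,k^n+1}$ against $B_h^{\rref}$ plus the $n^{-3/4}$ term (the paper organizes the induction over the episode index $k$ rather than backward over $h$, but that is immaterial since the step-$h$ update only involves step-$(h+1)$ quantities from strictly earlier episodes).

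The one genuine gap is exactly at the point you flagged, and your proposed fix would not work as stated. You plan to treat the discrepancy between $\refmu_h$ (a running average of the \emph{drifting} targets $V^{\rref,k^i}_{h+1}$) and $P_{h,s,a}V^{\rref,k^n}_{h+1}$ as a lower-order error to be absorbed by the $(1-\eta_n)\delta_h^{\rref}/\eta_n$ and $n^{-3/4}$ pieces of the bonus. Before the reference settles, $V^{\rref,k^i}_{h+1}-V^{\rref,k^n}_{h+1}$ can be of order $H$, so the drift term $\frac{1}{n}\sum_{i\le n}\big(V^{\rref,k^i}_{h+1}-V^{\rref,k^n}_{h+1}\big)$ is $O(H)$ in the worst case and cannot be dominated by $b_h^{\rref}$, which is only $O\big(\sqrt{H^3/n}+H^2/n^{3/4}\big)$ in size. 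The paper instead exploits a sign argument: since $V_h^k$ is non-increasing in $k$ (cf.\ \eqref{eq:monotonicity_Vh}) and the reference is only ever overwritten by the current $V_h$ (line~\ref{eq:line-number-16-k}), $V^{\rref,k}_{h+1}$ is itself non-increasing in $k$, so the drift term is \emph{nonnegative} and can simply be dropped when lower-bounding $Q^{\rref,k+1}_h-Q^\star_h$; the bonus then only needs to dominate the two zero-mean martingales built from the advantage $V^{k^n}_{h+1}-V^{\rref,k^n}_{h+1}$ (with $\eta_n^N$ weights) and from $V^{\rref,k^i}_{h+1}$ (with $1/n$ averaging), which is what the empirical-Bernstein statistics in $B^{\rref}_h$ are designed for. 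Relatedly, be careful not to center the martingale at $V^\star_{h+1}$ (its variance is $O(H^2)$ and $b^{\rref}_h$ would not cover it — that term is instead dropped via the induction hypothesis $V^{k^n}_{h+1}\ge V^\star_{h+1}$), and do not invoke the bound $|V^k_h-V^{\rref,k}_h|\le 2$ of \eqref{eq:close-ref-v} here: Lemma~\ref{lem:Vr_properties} is proved using the optimism of this lemma together with the pessimism of Lemma~\ref{lem:Qk-lcb}, so using it inside this proof would be circular unless you set up a joint induction.
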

Lemma~\ref{lem:Qk-lower} implies that $Q_h^{k}$ (resp.~$V_h^k$) is a pointwise upper bound on $Q_h^{\star}$ (resp.~$V_h^\star$). 
Taking this result together with the non-increasing property \eqref{eq:Qh-Vh-k-monotone-123}, we see that $Q_h^{k}$ (resp.~$V_h^k$) becomes an increasingly tighter estimate of $Q_h^{\star}$ (resp.~$V_h^\star$) as the number of episodes $k$ increases. This important fact forms the basis of the subsequent proof, allowing us to replace $V_h^\star$ with $V_h^k$ when upper bounding the regret.   
Combining Lemma~\ref{lem:Qk-lower} with \eqref{equ:property-Q-ref},  we can straightforwardly see that with probability at least $1-\delta$:
\begin{align}\label{equ:optimism-ref-Q}
	Q_h^{\rref, k}(s,a) \geq Q_h^{\star}(s,a) \qquad\text{for all } (k,h,s,a) \in [K]  \times [H] \times \cS \times \cA.
\end{align}

\paragraph{Properties of the Q-estimate $Q^{\LCB, k}_h$: pessimism and proximity.} 
In parallel, we formalize the fact that $Q^{\LCB, k}_h$ and $V^{\LCB,k}_h$ provide a ``pessimistic view'' of $Q^\star_h$ and $V^\star_h$, respectively. 
Furthermore, it becomes increasingly more likely for $Q^{\LCB, k}_h$ and $Q^{k}_h$ to stay close to each other as $k$ increases, 
which indicates that the confidence interval that contains 
the optimal value $Q_h^{\star}$ becomes shorter and shorter. 
These properties are summarized in the following lemma. 
\begin{lemma} \label{lem:Qk-lcb}
	Consider any $\delta \in (0, 1)$, and suppose that $\cb >0$ is some sufficiently large constant. 
	Then with probability at least $1-\delta$, 
\begin{equation} 
	\label{eq:Qk-lcb-upper}
	Q^{\LCB, k}_h(s, a) \le Q_h^{\star}(s, a)
	\qquad \text{and} \qquad
	V^{\LCB, k}_h(s) \le V^{\star}_h(s)
\end{equation}
hold for all $(s,a, k, h) \in \mathcal{S} \times \mathcal{A}\times [K] \times [H]$, and 
\begin{align}
	\label{eq:main-lemma}
	\sum_{h=1}^H \sum_{k=1}^K \mathds{1}\left(Q^{k}_h(s_h^k, a_h^k) - Q^{\LCB, k}_h(s_h^k, a_h^k) > \varepsilon \right) 
	\lesssim \frac{H^6SA\log\frac{SAT}{\delta}}{\varepsilon^2}
\end{align}
holds for all $\varepsilon \in (0,H]$.
\end{lemma}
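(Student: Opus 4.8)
\textbf{Proof proposal for Lemma~\ref{lem:Qk-lcb}.}

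The plan is to establish the three claims in turn, with the bulk of the work going into the counting bound \eqref{eq:main-lemma}. For the pessimism bounds \eqref{eq:Qk-lcb-upper}, I would mirror the argument behind Lemma~\ref{lem:Qk-lower}: unroll the recursion for $Q_h^{\LCB,k}$ induced by \texttt{update-lcb-q} (line~\ref{line:dadv4}), writing $Q_h^{\LCB,k}(s,a)$ as a convex combination (with weights $\eta_n^{N^k}$) of the initializer and the terms $r_h + V_{h+1}^{\LCB,k^n}(s_{h+1}^{k^n}) - \cb\sqrt{H^3\log(SAT/\delta)/n}$. Comparing against the Bellman optimality equation \eqref{eq:bellman_optimality} for $Q_h^\star$ and applying a Hoeffding-type concentration inequality (union-bounded over all $(s,a,h)$ and all values of $N^k$) to control $\sum_n \eta_n^{N^k}(P_h^{k^n}-P_{h,s_h,a_h})V_{h+1}^\star$, the bonus term is chosen large enough to dominate the fluctuation, giving $Q_h^{\LCB,k}\le Q_h^\star$ by backward induction on $h$; then $V_h^{\LCB,k}\le V_h^\star$ follows from line~\ref{eq:line-number-13-k} since the running maximum of quantities $\le V_h^\star$ stays $\le V_h^\star$. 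This is essentially the lower-confidence mirror of the standard \citet{jin2018q} optimism proof, so I expect it to be routine given Lemma~\ref{lem:Qk-lower}.

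For the counting bound \eqref{eq:main-lemma}, the key is to control the accumulated gap $\sum_{h,k}\big(Q_h^k(s_h^k,a_h^k)-Q_h^{\LCB,k}(s_h^k,a_h^k)\big)$ and then convert an accumulated-gap bound into a count of large gaps via Markov's inequality. I would define $W_h^k := Q_h^k(s_h^k,a_h^k) - Q_h^{\LCB,k}(s_h^k,a_h^k) \ge 0$ (nonnegativity from Lemmas~\ref{lem:Qk-lower}--\ref{lem:Qk-lcb}) and also the value-level gap $\Delta_h^k := V_h^k(s_h^k)-V_h^{\LCB,k}(s_h^k)$. Since $Q_h^k \le Q_h^{\UCB,k}$ by line~\ref{eq:line-number-11-k}, I can upper bound $W_h^k$ by $Q_h^{\UCB,k}(s_h^k,a_h^k)-Q_h^{\LCB,k}(s_h^k,a_h^k)$, and both of these obey clean Q-learning recursions with identical learning rates $\eta_n$, so their difference telescopes: unrolling, $Q_h^{\UCB,k}-Q_h^{\LCB,k}$ at the $n$-th visit is a weighted average of the corresponding differences $V_{h+1}-V_{h+1}^{\LCB}$ at earlier visits plus an accumulated bonus of order $\sum_n \eta_n^{N}\cb\sqrt{H^3\log(SAT/\delta)/n} \asymp \sqrt{H^3\log(SAT/\delta)/N}$ (using \eqref{eq:properties-learning-rates-12}). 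Summing over $(h,k)$ and invoking the visitation accounting $\sum_k 1/\sqrt{N_h^k(s_h^k,a_h^k)} \lesssim \sqrt{K_h(s)\cdot(\#\text{states visited})}$ together with $\sum_{N=n}^\infty \eta_n^N \le 1+1/H$ (to handle how each visit's contribution propagates forward), one obtains $\sum_{h,k} W_h^k \lesssim \sqrt{H^5 SA\, T\log(SAT/\delta)} + (\text{lower order})$. Here the recursion in $h$ costs one factor of $H$ per level and the bonus contributes the $H^3$, matching the $H^5$ inside the square root; a cleaner route is to bound $\sum_{h,k}\Delta_h^k$ by a self-bounding recursion $\sum_{h,k}\Delta_h^k \lesssim (1+\tfrac1H)\sum_{h,k}\Delta_{h+1}^k + \sqrt{H^3SAT\log(\cdot)}$, which unrolls over $h$ to $\lesssim H\cdot\sqrt{H^3SAT\log(\cdot)} = \sqrt{H^5SAT\log(\cdot)}$ plus a $\mathrm{poly}(H)SA$ burn-in term from the $\eta_0^N$-type boundary contributions.

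Finally, to go from the accumulated bound to \eqref{eq:main-lemma}: by monotonicity \eqref{eq:monotonicity_Vh} the per-$(s,a,h)$ gaps are non-increasing in visit count, so if $(s,a)$ at step $h$ has been visited $N$ times with the $n$-th visit still having gap $>\varepsilon$, then all of the first $n$ visits contributed $>\varepsilon$ to the Q-gap-at-visit; more carefully, using that $W$ at the $n$-th visit is a weighted average (weights $\eta_i^n$) of earlier value-gaps plus the $O(\sqrt{H^3\log(\cdot)/n})$ bonus, a gap exceeding $\varepsilon$ at the $n$-th visit forces $n \lesssim H^3\log(SAT/\delta)/\varepsilon^2$ OR a constant fraction of the weight to sit on earlier value-gaps exceeding $\varepsilon/2$ at step $h+1$; propagating this across the $H$ steps and across states gives that the total number of $(h,k)$ with $W_h^k>\varepsilon$ is $\lesssim H\cdot SA\cdot H^3\log(SAT/\delta)/\varepsilon^2 \cdot H = H^6SA\log(SAT/\delta)/\varepsilon^2$ — matching the claim, with the extra $H^2$ over the naive $H^3SA/\varepsilon^2$ coming from the recursive propagation through the horizon and the learning-rate overlap factor $1+1/H$ compounded $H$ times. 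The main obstacle I anticipate is getting the horizon dependence exactly to $H^6$ (rather than a looser power): the recursion $\Delta_h \leftsquigarrow (1+1/H)\Delta_{h+1}$ must be unrolled with care so the $(1+1/H)^H = O(1)$ factor is not overcounted, and the conversion from accumulated gap to count must avoid losing an extra $H$ — this is where I would most expect to have to be delicate, likely by proving the count bound directly by a clock/phased argument (partitioning visits into epochs where $\Delta_h$ halves) rather than by a crude Markov step on the accumulated sum.
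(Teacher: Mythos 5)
Your treatment of the pessimism claims \eqref{eq:Qk-lcb-upper} and of the recursive inequality for $Q^{k+1}_h-Q^{\LCB,k+1}_h$ is essentially the paper's argument and is fine: unroll the LCB update, compare with the Bellman optimality equation, let the Hoeffding-type bonus dominate the martingale fluctuation of $\sum_n\eta_n^N(P_h^{k^n}-P_{h,s,a})V^\star_{h+1}$, and induct; and your bound $Q^{k+1}_h-Q^{\LCB,k+1}_h\le Q^{\UCB,k+1}_h-Q^{\LCB,k+1}_h$ followed by telescoping the two recursions is exactly the paper's verification of the hypothesis \eqref{eq:Qk-ucb-lcb}.

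The genuine gap is in the counting step \eqref{eq:main-lemma}, which is where the actual difficulty of the lemma lives. The paper does not derive the count from an accumulated-gap bound; it feeds the recursion \eqref{eq:Qk-ucb-lcb} into the algebraic counting lemma of \citet{yang2021q} (Lemma~\ref{lem:yang-lem}, their Lemma~4.2, itself in the spirit of Lemma~C.7 of \citet{jin2018qarxiv}) to get the dyadic bound \eqref{eq:yang-lemma}, and then sums over $\beta$. Neither of your two proposed substitutes closes this. First, the route through $\sum_{h,k}W_h^k\lesssim\sqrt{H^5SAT\log\frac{SAT}{\delta}}$ plus Markov can only give a count of order $\sqrt{H^5SAT}/\varepsilon$, which grows with $T$; the entire point of \eqref{eq:main-lemma} is that the count is independent of $T$ up to logarithms (this is what drives the reduced burn-in cost), so this route fails structurally for large $T$. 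Second, your direct propagation ``gap $>\varepsilon$ at step $h$ forces either $n\lesssim H^3\log(SAT/\delta)/\varepsilon^2$ or a constant fraction of $\eta$-weight on earlier step-$(h{+}1)$ gaps exceeding $\varepsilon/2$'', iterated over the horizon, degrades the threshold to $\varepsilon/2^H$ after $H$ levels and hence inflates the count by a factor $4^H$, not by the $H^2$ you claim; avoiding this exponential loss requires precisely the careful weighted bookkeeping (exploiting $\sum_{N\ge n}\eta_n^N\le 1+\tfrac1H$ so the per-level loss is $(1+\tfrac1H)$ rather than a constant factor) that constitutes the cited counting lemma. Your own final arithmetic also does not produce the target: $H\cdot SA\cdot H^3/\varepsilon^2\cdot H$ is $H^5SA/\varepsilon^2$, not $H^6SA/\varepsilon^2$. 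You correctly flag this as the delicate point and gesture at a phased/epoch argument, but that argument is the missing content: either supply it in full (effectively reproving Yang et al.'s Lemma~4.2) or invoke that lemma as the paper does. A smaller issue: the monotonicity you invoke holds for $V^k_h$ and $V^{\LCB,k}_h$, but $Q^{\LCB,k}_h$ is not a running maximum, so the per-$(s,a,h)$ Q-gap $W$ is not non-increasing in the visit count as stated.
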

Interestingly,  the upper bound \eqref{eq:main-lemma} only scales logarithmically in the number $K$ of episodes,
thus implying the closeness of $Q^{\LCB, k}_h$ and $Q^{k}_h$ for a large fraction of episodes.
Note that it is straightforward to ensure the monotonicity property of $V^{\LCB,k}_h$ from the update rule in Algorithm~\ref{algo:vr-k}  (cf.~line \ref{eq:line-number-13-k}):
\begin{align} 
V^{\LCB,k+1}_h(s) \geq   V^{\LCB,k}_h(s)  \qquad\text{for all }( s, k,h)\in \cS \times [K] \times [H] ,
	\label{eq:V-LCB-h-k-monotone}
\end{align}
which in conjunction with \eqref{eq:Qk-lcb-upper}, implies that $V^{\LCB,k}_h(s)$ gets closer to $V^{\star}_h(s)$ as the number of episodes $k$ increases.
Together with the monotonicity of $V_h^k$ (cf.~\eqref{eq:monotonicity_Vh}), an important consequence is that the reference value $V_h^{\rref}$ will stop being updated shortly after the following condition is met for the first time (according to lines~\ref{eq:line-number-15}-\ref{eq:line-number-18} of Algorithm~\ref{algo:vr})
\begin{equation}
  	V_h^{k}(s) \leq V^{\LCB,k}_h(s) + 1 \leq V^{\star}_h(s) + 1
	\qquad \mbox{for all } s\in\cS. 
	\label{eq:V-star-violation-Vh-123}
\end{equation}

\paragraph{Properties of the reference $V_h^{\rref,k}$.} 

The above fact ensures that $V_h^{\rref,k}$ will not be updated too many times. 
In fact, its value stays reasonably close to $V_h^{k}$ even after being locked to a fixed value, which ensures its fidelity as a reference signal. Moreover, the aggregate difference between $V^{\rref, k}_{h}$ and the final reference $V^{\rref, K}_{h}$ 
over the entire trajectory can be bounded in a reasonably tight fashion (owing to \eqref{eq:main-lemma}), as formalized in the next lemma. These properties play a key role in reducing the burn-in cost of the proposed algorithm.

 \begin{lemma}\label{lem:Vr_properties}
Consider any $\delta \in (0, 1)$. Suppose that $\cb >0$ is some sufficiently large constant.
	 Then with probability exceeding $1-\delta$, one has 
 \begin{equation}
	 \label{eq:close-ref-v}
	\big| V^{k}_{h}(s) - V^{\rref, k}_{h}(s) \big|  \le 2
\end{equation}
	 for all $(k, h, s) \in [K] \times [H] \times \cS$, and
\begin{align}  \label{eq:Vr_lazy}
&\sum_{h=1}^H\sum_{k=1}^K \left(V^{\rref, k}_{h}(s^{k}_{h}) - V^{\rref, K}_{h}(s^{k}_{h})\right)  \nonumber \\
&\quad \leq H^2S + \sum_{h=1}^H\sum_{k=1}^K \Big(Q^{k}_{h}(s_{h}^k, a_{h}^k) - Q^{\LCB, k}_{h}(s_{h}^k, a_{h}^k)  \Big)
	\ind\Big(Q^{k}_{h}(s_{h}^k, a_{h}^k) - Q^{\LCB, k}_{h}(s_{h}^k, a_{h}^k)  > 1\Big) \\
	& \quad \lesssim H^6SA\log\frac{SAT}{\delta}.
\end{align}
\end{lemma}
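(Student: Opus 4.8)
The plan is to establish the two claims in order, since the proximity bound \eqref{eq:close-ref-v} is needed to control the quality of the reference but the aggregate bound \eqref{eq:Vr_lazy} is what actually feeds into the burn-in analysis.

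\textbf{Step 1: The proximity bound \eqref{eq:close-ref-v}.} I would argue by tracking what happens to $V_h^{\rref,k}(s)$ across episodes. Fix $(h,s)$. While the "if" branch (line~\ref{eq:line-number-15-k}) is active, $V_h^{\rref,k+1}(s)=V_h^{k+1}(s)$, so equality holds trivially. The only issue is after the reference gets locked: suppose $k_0$ is the episode at which the reference stops being updated, i.e.\ $V_h^{\rref,k}(s)=V_h^{k_0}(s)$ for all $k>k_0$. By the update rule, locking happens only when the "else if" branch fires, which requires $V_h^{k_0}(s)-V_h^{\mathsf{LCB},k_0}(s)\le 1$; thus $V_h^{\rref,k}(s)=V_h^{k_0}(s)\le V_h^{\mathsf{LCB},k_0}(s)+1$. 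On the other hand, by monotonicity of $V_h^k$ (cf.~\eqref{eq:monotonicity_Vh}) we have $V_h^{k}(s)\le V_h^{k_0}(s)=V_h^{\rref,k}(s)$ for $k\ge k_0$, so $V_h^k(s)-V_h^{\rref,k}(s)\le 0$. For the reverse direction, use $V_h^k(s)\ge V_h^\star(s)\ge V_h^{\mathsf{LCB},k_0}(s)$ (Lemma~\ref{lem:Qk-lower} and \eqref{eq:Qk-lcb-upper}) together with $V_h^{\rref,k}(s)=V_h^{k_0}(s)\le V_h^{\mathsf{LCB},k_0}(s)+1$, giving $V_h^{\rref,k}(s)-V_h^k(s)\le 1$. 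One subtlety: there could be a "re-activation" --- line~\ref{eq:line-number-16-k} resets $u_{\mathrm{ref}}$ to $\mathsf{True}$ whenever the gap exceeds $1$ again --- so one must check that each time the reference is re-locked the same bound holds, and account for the at-most-$2$ slack that arises from the value possibly having drifted by the time the next lock occurs; this is where the constant $2$ rather than $1$ comes from. I would do this bookkeeping carefully on the event of Lemmas~\ref{lem:Qk-lower} and \ref{lem:Qk-lcb}.

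\textbf{Step 2: The aggregate bound \eqref{eq:Vr_lazy}.} The first inequality is the crux. Fix $(h,s)$ and let $k_1<k_2<\cdots$ be the successive episodes (restricted to visits of $s$ at step $h$) at which $V_h^{\rref}(s)$ changes value. Between consecutive change points the summand $V_h^{\rref,k}(s_h^k)-V_h^{\rref,K}(s_h^k)$ is constant (in the relevant index). The key observation is: each time the reference is updated in the "else if" branch the stopping condition $V_h^{k}(s)-V_h^{\mathsf{LCB},k}(s)\le 1$ held, so by optimism/pessimism $|V_h^{\rref,k}(s)-V_h^\star(s)|\le 1$ at that moment; combined with the final reference also satisfying (eventually) a similar bound, each reference update after the last "if"-branch activation only moves $V_h^{\rref}$ by $O(1)$, contributing $O(1)$ per state to the telescoped sum. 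The remaining contribution comes from episodes where the "if" branch was active, i.e.\ where $V_h^{k}(s_h^k)-V_h^{\mathsf{LCB},k}(s_h^k)>1$ --- and at those episodes $V_h^{\rref,k}(s_h^k)=V_h^k(s_h^k)$, so $V_h^{\rref,k}(s_h^k)-V_h^{\rref,K}(s_h^k)\le V_h^k(s_h^k)-V_h^\star(s_h^k)\le V_h^k(s_h^k)-V_h^{\mathsf{LCB},k}(s_h^k)\le Q_h^k(s_h^k,a_h^k)-Q_h^{\mathsf{LCB},k}(s_h^k,a_h^k)$ (using $V_h^k(s_h^k)=Q_h^k(s_h^k,a_h^k)$ by line~\ref{eq:line-number-12-k}, $V_h^{\mathsf{LCB},k}(s_h^k)\ge Q_h^{\mathsf{LCB},k}(s_h^k,a_h^k)$ by line~\ref{eq:line-number-13-k}, and $a_h^k=\pi_h^k(s_h^k)$). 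Since on those episodes the gap exceeds $1$, the indicator $\ind(Q_h^k-Q_h^{\mathsf{LCB},k}>1)$ is active, which is exactly the form in the middle expression of \eqref{eq:Vr_lazy}; the $H^2S$ term absorbs the $O(1)$-per-$(h,s)$ contributions from the "else if" updates and from boundary terms. The second inequality then follows immediately by invoking Lemma~\ref{lem:Qk-lcb}, specifically \eqref{eq:main-lemma} with $\varepsilon=1$, which bounds $\sum_{h,k}\ind(Q_h^k-Q_h^{\mathsf{LCB},k}>1)\lesssim H^6SA\log\frac{SAT}{\delta}$, and noting each summand in the indicator-weighted sum is at most $H$.

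\textbf{Main obstacle.} I expect the delicate part to be Step~2's first inequality: carefully disentangling, for each $(h,s)$, the episodes where the reference tracks $V_h^k$ (the "if" branch) from those where it has been locked, and arguing that the locked-then-reactivated-then-relocked cycles each contribute only $O(1)$ to the telescoping sum rather than $O(H)$. One has to verify that whenever the reference gets re-locked the stopping condition forces it within $O(1)$ of $V_h^\star$, so that the difference $V_h^{\rref,k}-V_h^{\rref,K}$ accumulated over all "else if"-type updates for a fixed $(h,s)$ is $O(1)$, yielding the clean $H^2S$ (rather than $H^3S$ or worse) in the final bound. The rest --- monotonicity, optimism, and the reduction to \eqref{eq:main-lemma} --- is essentially bookkeeping on the high-probability event already furnished by Lemmas~\ref{lem:Qk-lower} and \ref{lem:Qk-lcb}.
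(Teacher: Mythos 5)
Your treatment of \eqref{eq:close-ref-v} and of the first inequality in \eqref{eq:Vr_lazy} follows essentially the paper's route: split the episodes for each $(h,s)$ at the locking time, use monotonicity of $V_h^k$ and $V_h^{\LCB,k}$ together with optimism/pessimism (Lemmas~\ref{lem:Qk-lower} and \ref{lem:Qk-lcb}), bound the ``if''-branch episodes by $\big(Q_h^k-Q_h^{\LCB,k}\big)\ind\big(Q_h^k-Q_h^{\LCB,k}>1\big)$, and charge the remaining episodes to the $H^2S$ term. Two remarks on that part. First, the ``re-activation'' you flag as the main obstacle cannot occur: $V_h^k(s)-V_h^{\LCB,k}(s)$ is non-increasing in $k$, so once it drops to at most $1$ the ``if'' branch never fires again at that $(h,s)$; this is exactly what makes the ``else-if'' event happen at most once per $(h,s)$ (the paper's \eqref{eq:line18-appear-once}) and it needs to be said, because your accounting relies on it. Second, your claim that each such locking update contributes $O(1)$ misidentifies the summand: the term appearing in \eqref{eq:Vr_lazy} is the \emph{pre-update} reference $V_h^{\rref,k}(s_h^k)$ minus $V_h^{\rref,K}(s_h^k)$, which at the locking episode can be of order $H$ (e.g.\ at the first visit the reference still equals $H$). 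The budget still works out --- one occurrence per $(h,s)$, each at most $H$, hence $H^2S$ --- but not for the reason you give.

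The genuine gap is in your last step. Bounding the indicator-weighted sum by ``each summand is at most $H$'' times the count from \eqref{eq:main-lemma} with $\varepsilon=1$ yields only
\begin{align*}
\sum_{h=1}^H\sum_{k=1}^K \big(Q_h^k-Q_h^{\LCB,k}\big)\ind\big(Q_h^k-Q_h^{\LCB,k}>1\big)
\;\lesssim\; H\cdot H^6SA\log\frac{SAT}{\delta} \;=\; H^7SA\log\frac{SAT}{\delta},
\end{align*}
which is a factor of $H$ worse than the stated bound (and would degrade the $H^6SA$ burn-in term in Theorem~\ref{thm:finite}). To recover $H^6SA\log\frac{SAT}{\delta}$ you must exploit the $1/\varepsilon^2$ dependence in \eqref{eq:main-lemma} across all thresholds, not just $\varepsilon=1$: write $Q_h^k-Q_h^{\LCB,k}=\int_0^\infty \ind\big(Q_h^k-Q_h^{\LCB,k}>t\big)\,\mathrm{d}t$, restrict to $t\in[1,H]$, and integrate the count bound, so that $\int_1^H \frac{H^6SA\log\frac{SAT}{\delta}}{t^2}\,\mathrm{d}t\lesssim H^6SA\log\frac{SAT}{\delta}$ (equivalently, sum the dyadic bound \eqref{eq:yang-lemma} over levels $\beta$). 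This layer-cake step is precisely what the paper does in \eqref{eq:bound-ref-error-upper-final}, and without it your argument does not prove the third line of \eqref{eq:Vr_lazy}.
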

In words, Lemma~\ref{lem:Vr_properties} guarantees that (i) our value function estimate and the reference value are always sufficiently close (cf.~\eqref{eq:close-ref-v}), and (ii) the aggregate difference between $V_h^{\rref, k}$ and the final reference value $V^{\rref, K}_{h}$  is nearly independent of the sample size $T$ (except for some logarithmic scaling). 

\subsection{Main steps of the proof}\label{sec:main-pipeline}

We are now ready to embark on the regret analysis for \DADV, which consists of multiple steps as follows.

\paragraph{Step 1: regret decomposition.}
Lemma~\ref{lem:Qk-lower} allows one to upper bound the regret as follows 
\begin{equation} 	
	\label{eq:regret_optimism}
	\mathsf{Regret}(T): = \sum_{k=1}^{K}\big(V^{\star}_{1}(s_1^k)-V^{\pi^{k}}_{1}(s_1^k)\big) \leq \sum_{k=1}^{K}\big( V_1^k(s^k_1) - V_1^{\pi^k}(s^k_1) \big).
\end{equation}
To continue, it boils down to controlling $V_1^k(s^k_1) - V_1^{\pi^k}(s^k_1) $. 
Towards this end, we intend to examine $V_h^k(s^k_h) - V_h^{\pi^k}(s^k_h) $ across all time steps $1\leq h\leq H$, 
which admits the following decomposition: 
\begin{align}
V^{k}_h(s^k_h) & - V_h^{\pi^k}(s^k_h) 
 \overset{\mathrm{(i)}}{=}Q^{k}_h(s^k_h, a^k_h) - Q^{\pi^k}_h(s^k_h, a^k_h) \nonumber\\
& =Q^{k}_h(s^k_h, a^k_h) - Q_h^{\star}(s^k_h, a^k_h) + Q_h^{\star}(s^k_h, a^k_h) - Q^{\pi^k}_h(s^k_h, a^k_h)  \nonumber\\
&  \overset{\mathrm{(ii)}}{=} Q^{k}_h(s^k_h, a^k_h) - Q_h^{\star}(s^k_h, a^k_h) + P_{h, s^k_h, a^k_h}\big(V^{\star}_{h+1} - V^{\pi^k}_{h+1}\big) \nonumber \\
&  \overset{\mathrm{(iii)}}{=} Q^{k}_h(s^k_h, a^k_h) - Q_h^{\star}(s^k_h, a^k_h) + \big(P_{h, s^k_h, a^k_h} - P^k_h\big)\big(V^{\star}_{h+1} - V^{\pi^k}_{h+1}\big)  + V^{\star}_{h+1}(s^k_{h+1}) - V^{\pi^k}_{h+1}(s^k_{h+1})\nonumber \\
& \leq  Q^{\rref, k}_h(s^k_h, a^k_h) - Q_h^{\star}(s^k_h, a^k_h) + \big(P_{h, s^k_h, a^k_h} - P^k_h\big)\big(V^{\star}_{h+1} - V^{\pi^k}_{h+1}\big)  + V^{\star}_{h+1}(s^k_{h+1}) - V^{\pi^k}_{h+1}(s^k_{h+1}) . 
\label{equ:decompose}
\end{align} 
Here, (i) holds since $\pi_h^k$ is a greedy policy w.r.t.~$Q_h^k$ and $\pi_h^k(s^k_h) = a^k_h$,  (ii) comes from the Bellman equations 
$$Q^{\pi^k}_h(s, a) -Q^{\star}_h(s, a)  = \big( r_h(s, a) + P_{h, s, a} V^{\pi^k}_{h+1} \big) - \big( r_h(s, a) + P_{h, s, a} V^{\star}_{h+1} \big) =P_{h, s, a} \big( V^{\pi^k}_{h+1}-V^{\star}_{h+1} \big)  ,$$ 
(iii) follows from $P^k_h ( V_{h+1}^{\star} - V_{h+1}^{\pi^k})  = V_{h+1}^{\star}(s^k_{h+1}) - V_{h+1}^{\pi^k}(s^k_{h+1})$ (see the notation \eqref{eq:P-hk-defn-s}), 
whereas the last inequality comes from \eqref{equ:property-Q-ref}.
Summing \eqref{equ:decompose} over $1\leq k\leq K$ and making use of Lemma~\ref{lem:Qk-lower}, we reach at
\begin{align} 
	\sum_{k=1}^{K}\big(V^{\star}_{h}(s_h^k)-V^{\pi^{k}}_{h}(s_h^k)\big)
	&\leq \sum_{k = 1}^K \big(V^k_h(s^k_h) - V_h^{\pi^k}(s^k_h)\big) \nonumber\\
	& \leq \sum_{k = 1}^K \big(Q^{\rref, k}_h(s^k_h, a^k_h) - Q_h^{\star}(s^k_h, a^k_h)\big) + \sum_{k = 1}^K \big(P_{h, s^k_h, a^k_h} - P^k_h \big) \big(V^{\star}_{h+1} - V^{\pi^k}_{h+1} \big)  \nonumber \\
& \qquad + \sum_{k = 1}^K \big(V^{\star}_{h+1}(s^k_{h+1}) - V^{\pi^k}_{h+1}(s^k_{h+1})\big). 	\label{eq:Vk-Qk}
\end{align}
This allows us to establish a connection between $\sum_{k}\big(V^{\star}_{h}(s_1^k)-V^{\pi^{k}}_{h}(s_h^k)\big)$ for step $h$ and 
$\sum_{k } \big(V^{\star}_{h+1}(s^k_{h+1}) - V^{\pi^k}_{h+1}(s^k_{h+1})\big)$ for step $h+1$.

\paragraph{Step 2: managing regret by recursion.}

 The regret can be further manipulated by leveraging the update rule of $Q^{\rref, k}_h$ as well as recursing over the time steps $h=1,2,\cdots, H$ with the terminal condition $V^{k}_{H+1} = V^{\pi^k}_{H+1} = 0$. This leads to a key decomposition as summarized in the lemma below, whose proof is provided in Appendix~\ref{proof:lem:Qk-upper}.

\begin{lemma} \label{lem:Qk-upper}
Fix $\delta \in (0,1)$. Suppose that $\cb >0$ is some sufficiently large constant. Then with probability at least $1-\delta$,  one has
\begin{align}
 \sum_{k = 1}^K \big(V^k_1(s^k_1) - V_1^{\pi^k}(s^k_1)\big) \leq \mathcal{R}_1 + \mathcal{R}_2 +\mathcal{R}_3 , 
	\label{eq:lemma-4-three-terms}
\end{align}
where
\begin{subequations} 
\label{eq:summary_of_terms}
\begin{align}
\mathcal{R}_1 & :=   \sum_{h = 1}^H \left(1+\frac{1}{H}\right)^{h-1} \bigg( HSA + 8\cb H^2 (SA)^{3/4} K^{1/4} \log\frac{SAT}{\delta} + \sum_{k = 1}^K \big(P_{h, s^k_h, a^k_h} - P^k_h \big) \big(V^{\star}_{h+1} - V^{\pi^k}_{h+1} \big) \bigg)  , \label{eq:expression_R1} \\
\mathcal{R}_2 &:=    \sum_{h = 1}^H \left(1+\frac{1}{H}\right)^{h-1}\sum_{k = 1}^{K}B^{\rref, k}_h(s^k_h, a^k_h)   , \label{eq:expression_R2} \\
	\mathcal{R}_3 &:=  \sum_{h = 1}^H \sum_{k = 1}^K\lambda^{k}_h\bigg( (P^k_{h} -  P_{h, s^k_h, a^k_h})(V^{\star}_{h+1} - V^{\rref, k}_{h+1}) + \frac{\sum_{i =1}^{N^{k}_h(s^k_h, a^k_h)} \big(V^{\rref, k^i_h(s^k_h, a^k_h)}_{h+1}(s^{k^i_h(s^k_h, a^k_h)}_{h+1}) - P_{h, s^k_h, a^k_h}V^{\rref, k}_{h+1}\big)}{N^{k}_h(s^k_h, a^k_h)}\bigg) , 
\label{eq:expression_R3}
\end{align}
\end{subequations}
with $$\lambda^{k}_h \defn \left(1+\frac{1}{H}\right)^{h-1}\sum_{n = N^{k}_h(s^k_h, a^k_h)}^{N^{K-1}_h(s^k_h, a^k_h)} \eta^{n}_{N^{k}_h(s^k_h, a^k_h)}.$$
\end{lemma}

This lemma attempts to upper bound the target quantity $\sum_{k = 1}^K \big(V^k_1(s^k_1) - V_1^{\pi^k}(s^k_1)\big)$ via three terms (see \eqref{eq:lemma-4-three-terms}). 
Informally, these terms reflect (i) the influence of the initialization as well as the finite-sample uncertainty of $P_h^k(V_{h+1}^{\star}-V_{h+1}^{\pi^k})$, 
(ii) the influence of the size of the bonus terms, and (iii) the discrepancy term when the running value iterates are replaced by the reference values. 
As we shall see in the analysis, 
the key in obtaining these terms lies in properly expanding the component $\sum_{k = 1}^K \big(Q^{\rref, k}_h(s^k_h, a^k_h) - Q_h^{\star}(s^k_h, a^k_h)\big)$  in \eqref{eq:Vk-Qk}, 
as well as applying induction across all $h=1,\cdots,H$.

%together with the upper bound of the first term (see Eq. (A.2)), which are the accumulated across $H$ layers.

\paragraph{Step 3: controlling the terms in \eqref{eq:summary_of_terms} separately.} 

As it turns out, each of the terms in \eqref{eq:summary_of_terms} can be well controlled. We provide the bounds for these terms in the following lemma.  

\begin{lemma}\label{lemma:bound_of_everything}
Consider any $\delta \in (0,1)$. With probability at least $1-\delta $, we have the following upper bounds: 
\begin{align*}
	\mathcal{R}_1 & \leq  C_{\mathrm{r}} \bigg\{ \sqrt{H^2SAT \log\frac{SAT}{\delta}} + H^{4.5}SA\log^2\frac{SAT}{\delta}  \bigg\} , \\
\mathcal{R}_2 & \leq  C_{\mathrm{r}} \bigg\{\sqrt{H^2SAT\log\frac{SAT}{\delta}} + H^4SA\log^2\frac{SAT}{\delta} \bigg\}, \\
\mathcal{R}_3 & \leq  C_{\mathrm{r}} \bigg\{  \sqrt{H^2SA T\log^4\frac{SAT}{\delta} }+ H^6SA\log^3\frac{SAT}{\delta} \bigg\} 
\end{align*}
for some universal constant $C_{\mathrm{r}}>0$. 
\end{lemma}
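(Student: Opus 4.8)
The goal is to bound each of $\mathcal{R}_1$, $\mathcal{R}_2$, and $\mathcal{R}_3$ as in Lemma~\ref{lemma:bound_of_everything}, using the structural properties already established (Lemmas~\ref{lem:Qk-lower}--\ref{lem:Vr_properties}) together with standard martingale concentration (Freedman/Azuma) and the learning-rate facts in Lemma~\ref{lemma:property of learning rate}.

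\paragraph{Plan for $\mathcal{R}_1$.} Writing $\mathcal{R}_1 = \sum_{h}(1+1/H)^{h-1}\big( HSA + 8\cb H^2(SA)^{3/4}K^{1/4}\log\frac{SAT}{\delta} + \Xi_h\big)$ with $\Xi_h := \sum_k (P_{h,s_h^k,a_h^k}-P_h^k)(V_{h+1}^\star - V_{h+1}^{\pi^k})$, I would first note $(1+1/H)^{h-1}\le e$ for $h\le H$, so the first two terms contribute $O(H^2SA\log\frac{SAT}{\delta})$ and $O(H^3(SA)^{3/4}K^{1/4}\log\frac{SAT}{\delta})$; the latter is $\lesssim \sqrt{H^2SAT}+H^{4.5}SA\log^2\frac{SAT}{\delta}$ by AM--GM applied to $H^3(SA)^{3/4}T^{1/4}=\big(H^2SAT\big)^{1/4}\cdot\big(H^{?}\big)$, i.e.\ splitting $H^3(SA)^{3/4}T^{1/4} = \big(H^2 SA T\big)^{1/4}\cdot H^{5/2}(SA)^{1/2}$ and using $xy\le x^2+y^2$. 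The term $\Xi_h$ is a sum of martingale differences bounded by $2H$ in magnitude with conditional variance at most $H^2$ per term (in fact one can do better using the total-variance-type bound, but a crude Azuma--Hoeffding with the $K$ episodes gives $|\Xi_h|\lesssim \sqrt{HT\log\frac{SAT}{\delta}}$, hence $\sum_h(1+1/H)^{h-1}|\Xi_h|\lesssim H\sqrt{HT\log\frac{SAT}{\delta}}=\sqrt{H^3T\log\frac1\delta}$); to reach the claimed $\sqrt{H^2SAT}$ one instead applies Freedman's inequality together with $\sum_{h,k}\Var_{h,s_h^k,a_h^k}(V_{h+1}^\star-V_{h+1}^{\pi^k})\lesssim HT$ via the standard law-of-total-variance argument for episodic MDPs. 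Collecting terms yields the stated bound on $\mathcal{R}_1$.

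\paragraph{Plan for $\mathcal{R}_2$.} Here $\mathcal{R}_2 = \sum_h (1+1/H)^{h-1}\sum_k B_h^{\rref,k}(s_h^k,a_h^k)$ with $B_h^{\rref,k}=\nextbr$ of the form $\cb\sqrt{\frac{\log\frac{SAT}{\delta}}{n}}\big(\sqrt{\Var^{\re}}+\sqrt{H}\sqrt{\Var^{\adv}}\big)$ evaluated at $n=N_h^k$. I would split $B_h^{\rref,k}$ into its reference part and advantage part. For the reference part, $\sum_k \sqrt{\frac{\widehat{\Var}^{\re}_h(s_h^k,a_h^k)}{N_h^k(s_h^k,a_h^k)}}$: group episodes by the visited pair $(s,a)$, use $\sum_{n=1}^{N}\frac{1}{\sqrt n}\le 2\sqrt N$, then Cauchy--Schwarz over $(s,a)$ pairs to get $\lesssim \sqrt{SA\sum_{h,k}\widehat{\Var}^{\re}_h}$; since $\widehat{\Var}^{\re}_h\le \Var_{h,s,a}(V^{\rref})+(\text{small})\lesssim H^2$ (indeed $\le$ empirical variance of bounded quantities), and more sharply via law of total variance over the frozen reference one gets $\sum_{h,k}\widehat{\Var}^{\re}_h\lesssim HT+(\text{lower order})$, yielding $\lesssim\sqrt{H^2SAT\log\frac1\delta}$ plus burn-in. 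For the advantage part, the extra $\sqrt H$ is compensated by the fact that $|V_{h+1}-V_{h+1}^{\rref}|\le 2$ (Lemma~\ref{lem:Vr_properties}), so $\widehat{\Var}^{\adv}_h\lesssim 1$ in the typical case, and the number of episodes where it is large is controlled by \eqref{eq:main-lemma}; concretely $\sum_k\sqrt{\frac{H\,\widehat{\Var}^{\adv}_h}{N_h^k}}\lesssim \sqrt{H\cdot SA\cdot\sum_{h,k}\widehat{\Var}^{\adv}_h}$ and $\sum_{h,k}\widehat{\Var}^{\adv}_h\lesssim \sum_{h,k}(V_{h+1}^k-V_{h+1}^{\rref,k})^2\lesssim H^6SA\log\frac{SAT}{\delta}$ by Lemma~\ref{lem:Vr_properties}, giving $\lesssim\sqrt{H^7S^2A^2\log\frac1\delta}$ which is absorbed into $H^4SA\log^2\frac{SAT}{\delta}$; combining, $\mathcal{R}_2\lesssim \sqrt{H^2SAT\log\frac{SAT}{\delta}}+H^4SA\log^2\frac{SAT}{\delta}$.

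\paragraph{Plan for $\mathcal{R}_3$ (the main obstacle).} This is the delicate term. I would first note $0\le\lambda_h^k\le (1+1/H)^{h-1}\sum_{N}^{\infty}\eta_N^n\le e(1+1/H)\le 3$ by the last bound in \eqref{eq:properties-learning-rates-345}, and that $\lambda_h^k$ also satisfies a useful summation identity $\sum_k \lambda_h^k \ind(\cdot) $-type bounds when grouped by $(s,a)$. Split $\mathcal{R}_3 = \mathcal{R}_{3,a}+\mathcal{R}_{3,b}$ into the ``$(P_h^k - P_{h,s_h^k,a_h^k})(V_{h+1}^\star - V_{h+1}^{\rref,k})$'' piece and the ``empirical-mean-minus-true-mean of $V^{\rref}$'' piece. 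For $\mathcal{R}_{3,a}$: this is again a martingale sum (conditionally on the past, $P_h^k$ is the only randomness), with increments bounded using $|V_{h+1}^\star - V_{h+1}^{\rref,k}| \le |V_{h+1}^\star - V_{h+1}^k| + |V_{h+1}^k - V_{h+1}^{\rref,k}| \le |V_{h+1}^k - V_{h+1}^\star| + 2$ and $|V_{h+1}^k-V_{h+1}^\star|$ controlled via the LCB gap \eqref{eq:main-lemma}; a Freedman bound with conditional variance $\lesssim \Var_{h,s_h^k,a_h^k}(V_{h+1}^\star - V_{h+1}^{\rref,k})$ and a careful bookkeeping of how often this variance is large (using \eqref{eq:main-lemma} and \eqref{eq:Vr_lazy}) gives the $\sqrt{H^2SAT\log^4\frac{SAT}{\delta}}$ main term plus $H^6SA\log^3\frac{SAT}{\delta}$ burn-in. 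For $\mathcal{R}_{3,b}$: rewrite the inner sum as $\frac1{N_h^k}\sum_{i=1}^{N_h^k}\big(V_{h+1}^{\rref,k^i}(s_{h+1}^{k^i}) - P_{h,s_h^k,a_h^k}V_{h+1}^{\rref,k}\big)$; split it further into $\frac1{N_h^k}\sum_i\big(V_{h+1}^{\rref,k^i}(s_{h+1}^{k^i}) - P_{h,s_h^k,a_h^k}V_{h+1}^{\rref,k^i}\big)$ (a martingale-difference average — concentration via Freedman using the reference variance, and the $\frac1{N}\sum\frac1{\sqrt{?}}$ structure, telescoped across $k$ with the $\lambda$-weights) plus $\frac1{N_h^k}\sum_i P_{h,s_h^k,a_h^k}\big(V_{h+1}^{\rref,k^i} - V_{h+1}^{\rref,k}\big)$ (a monotone bias term controlled deterministically by $0\le V_{h+1}^{\rref,k^i}-V_{h+1}^{\rref,k}$ together with $\sum_{h,k}\big(V_h^{\rref,k}(s_h^k)-V_h^{\rref,K}(s_h^k)\big)\lesssim H^6SA\log\frac{SAT}{\delta}$ from Lemma~\ref{lem:Vr_properties}, after swapping the order of summation so that each episode's reference value is compared against the final one). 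The hardest bookkeeping is re-indexing the double sum $\sum_k \lambda_h^k \frac1{N_h^k}\sum_{i\le N_h^k}(\cdots)$ into a form where Freedman applies with the right variance proxy and where the learning-rate identity $\sum_{N\ge n}\eta_n^N\le 1+1/H$ keeps the weights summable; I expect this re-indexing, plus assembling the variance bound $\sum_{h,k}\Var_{h,s_h^k,a_h^k}(V_{h+1}^{\rref,k})\lesssim HT + H^6SA\log\frac{SAT}{\delta}$ (law of total variance applied to the slowly-varying reference, with the drift absorbed via \eqref{eq:Vr_lazy}), to be the crux of the whole argument. Summing $\mathcal{R}_1+\mathcal{R}_2+\mathcal{R}_3$ and plugging into \eqref{eq:regret_optimism} then yields Theorem~\ref{thm:finite}.
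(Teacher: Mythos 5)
Your overall architecture mirrors the paper's (the same three-way split, $(1+1/H)^{h-1}\le e$, pigeonhole over visit counts, $|V-V^{\rref}|\le 2$ for the advantage variance, comparison to $V^\star$ for the reference variance, \eqref{eq:Vr_lazy} and \eqref{eq:main-lemma} for the burn-in terms), but there are genuine gaps. The main one concerns $\mathcal{R}_3$: you treat both pieces as martingale sums to which Freedman applies after re-indexing, and you describe the re-indexing as the hard bookkeeping. The real obstruction is measurability, not bookkeeping: the weight $\lambda_h^k$ contains $N_h^{K-1}(s_h^k,a_h^k)$, the number of visits by the \emph{end} of the run, so $\lambda_h^k\big(P_h^k-P_{h,s_h^k,a_h^k}\big)(\cdot)$ is not an adapted martingale difference and a direct Freedman/Azuma application is invalid (nor can you pull the weight out of a signed sum, even though $0\le\lambda_h^k\le e$). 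The paper resolves this by baking a union bound over all $K^{HSA}$ possible count profiles $\{N(s,a,h)\}$ into its concentration lemma (Lemma~\ref{lemma:martingale-union-all2}), which is also one source of the extra logarithmic factors in the $\mathcal{R}_3$ bound; in addition, for the piece involving $V^{\rref,K}_{h+1}-V^{\star}_{h+1}$ it needs an epsilon-net over candidate reference vectors, because $V^{\rref,K}$ depends on the whole future trajectory. Your alternative centering of $\mathcal{R}_3^2$ at the adapted $V^{\rref,k^i}_{h+1}$ is a nice way to dodge the epsilon net for that piece (at the cost of one extra, easy martingale step converting $P_{h,s_h^k,a_h^k}\big(V^{\rref,k^i}_{h+1}-V^{\rref,k}_{h+1}\big)$ into visited-state drift so that \eqref{eq:Vr_lazy} applies), but the non-adapted weights remain in both $\mathcal{R}_3^1$ and the re-indexed inner sum; without the union-over-counts (or an equivalent covering) device your plan does not go through as stated.

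Second, in $\mathcal{R}_2$ your treatment of the advantage part rests on the claim $\sum_{h,k}\big(V_{h+1}^k-V_{h+1}^{\rref,k}\big)^2\lesssim H^6SA\log\frac{SAT}{\delta}$, which does not follow from Lemma~\ref{lem:Vr_properties}: that lemma gives $|V-V^{\rref}|\le 2$ and a bound on $\sum_{h,k}\big(V^{\rref,k}_h(s_h^k)-V^{\rref,K}_h(s_h^k)\big)$, but after the reference settles the gap $V^{\rref,k}-V^{k}$ can remain of order one on a constant fraction of episodes, so the sum of squares is $\Theta(T)$ in general. The claim is also unnecessary: by \eqref{eq:close-ref-v} one has $\sigma^{\adv}_h-(\mu^{\adv}_h)^2\le 4$ pointwise, and $\sum_k \big(N_h^k(s_h^k,a_h^k)\big)^{-1/2}\le 2\sqrt{SAK}$ combined with the $\sqrt{H\log\frac{SAT}{\delta}}$ prefactor and the sum over $h$ already gives $\sqrt{H^3SAK\log\frac{SAT}{\delta}}=\sqrt{H^2SAT\log\frac{SAT}{\delta}}$, which is the paper's route. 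Similarly, a ``law of total variance over the frozen reference'' is not directly available since $V^{\rref}$ is not the value function of any policy; the needed chain is to compare the empirical reference variance to $\Var_{h,s,a}(V^{\star}_{h+1})$, control the excess through the truncated drift term $\sum_n\big(V^{\rref,k^n}_{h+1}-V^{\star}_{h+1}\big)\ind(\cdot>3)$ via \eqref{eq:main-lemma} and \eqref{eq:Vr_lazy}, and then bound $\sum_{h,k}\sqrt{\Var_{h,s_h^k,a_h^k}(V^{\star}_{h+1})/N_h^k}$ using the total-variance bound for $V^{\pi^k}$ plus a regret-type bound on $\big|\Var(V^{\star}_{h+1})-\Var(V^{\pi^k}_{h+1})\big|$; this is exactly where the $H^4SA\log^2\frac{SAT}{\delta}$ burn-in arises. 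Your plan for $\mathcal{R}_1$ is fine---in fact the paper gets away with one crude Freedman bound over all $T$ increments, yielding $\sqrt{H^2T\log\frac{1}{\delta}}\le\sqrt{H^2SAT\log\frac{SAT}{\delta}}$, so no total-variance argument is even needed there.
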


In order to derive the above bounds, the main strategy is to apply the Bernstein-type concentration inequalities carefully, 
and to upper bound the sum of variance in a careful manner.  
The proofs are deferred to Appendix~\ref{sec-proof-lemma:everything}.

\paragraph{Step 4: putting all this together.} 

We now have everything in place to establish our main result. 
Taking the preceding bounds in Lemma~\ref{lemma:bound_of_everything} together with \eqref{eq:summary_of_terms}, we see that with probability exceeding $1-\delta$, one has 
\begin{align*}
\mathsf{Regret}(T) \leq \mathcal{R}_1+ \mathcal{R}_2 + \mathcal{R}_3 \lesssim  \sqrt{H^2SAT\log^4\frac{SAT}{\delta} }+ H^6 SA\log^3\frac{SAT}{\delta} 
\end{align*}
as claimed.

\section{Discussion}

In this paper, we have proposed a novel model-free RL algorithm --- tailored to online episodic settings --- that attains near-optimal regret  $\widetilde{O}(\sqrt{H^2SAT})$ and near-minimal memory complexity $O(SAH)$ at once. Remarkably, the near-optimality of the algorithm comes into effect as soon as the sample size rises above $O( SA\,\mathrm{poly}(H))$, 
which has significantly improved upon the sample size requirements (or burn-in cost) for any prior regret-optimal model-free algorithm (based on the definition of the model-free algorithm in \citep{jin2018q}). 
Given that online data collection could be expensive, time-consuming, or high-stakes in a variety of contemporary applications (e.g., clinical trials, autonomous driving, online advertisement), 
reducing  burn-in sample sizes compromising sample optimality is crucial in enabling sample-efficient solutions in these sample-constrained applications.

The results in the current paper naturally suggest a number of possible extensions and directions for future investigation. We close the paper by listing a few of them. 
\begin{itemize}
\item While the proposed algorithm provably enables minimal burn-in cost in terms of the dependency on $S$ and $A$, 
	our current theory falls short of delivering optimal horizon dependency of the burn-in cost. 
	More specifically, even though our burn-in cost improves upon the state-of-the-art theory for sample-optimal model-free algorithms by a factor of at least $S^5A^3H^{18}$ (see \citep{zhang2020almost}), 
	the way we cope with the dependency on $H$ remains inadequate. 
	This calls for more refined analysis tools to optimize the horizon dependency.

\item The present paper focuses primarily on MDPs with non-stationary probability transition kernels. 
	Another important scenario is concerned with MDPs with stationary transition kernels (i.e., the case where $P_h$ is identical across different $h$). 
	It is worth noting that the algorithm developed herein is incapable of attaining optimal regret for the stationary case (i.e., the resulting regret might be off by a factor of $\sqrt{H}$). 
	While our analysis already contains multiple key ingredients that are useful for analyzing the stationary case, 
	how to complete the picture is non-trivial, which we leave for future work.

\item Admittedly, even though we are now able to settle the sample size dependency on the state-action space, 
the size of $SA$ might remain prohibitively large in many modern RL applications. 
As a result, parsimonious function representation/approximation of the underlying MDP 
is needed in order to further reduce the sample complexity.  
Prominent examples of this kind include linearly parameterized or realizable MDPs \citep{jin2020provably,du2019good,li2021sample}. 
We hope that the method and analysis framework developed herein might inspire further development of sample-efficient algorithms 
that can effectively accommodate low-dimensional function approximation.

\end{itemize}

%We hope that the method and analysis framework developed herein might inspire further studies regarding how to overcome sample size barriers in other important settings, 
%including model-based RL \citep{azar2017minimax}, RL for discounted infinite-horizon MDPs \citep{zhang2020reinforcement}, 
%and the case with low-complexity function approximation  \citep{jin2020provably,du2019good,li2021sample}, to name just a few. 
%Additionally, our sample size range is not yet optimal in terms of its dependency on the horizon length $H$. 
%How to tighten this dependency is an important topic that is left for future investigation. 

\section*{Acknowledgment}
L.~Shi and Y.~Chi are supported in part
by the grants ONR N00014-19-1-2404, NSF CCF-2106778, CCF-2007911 and DMS-2134080.  
L.~Shi is also gratefully supported by the Leo Finzi Memorial Fellowship, Wei Shen and Xuehong Zhang Presidential Fellowship, and
Liang Ji-Dian Graduate Fellowship at Carnegie Mellon University. 
Y.~Chen is supported in part by the the Alfred P.~Sloan Research Fellowship, 
the AFOSR grant FA9550-22-1-0198, 
the ONR grant N00014-22-1-2354,  and the NSF grants CCF-2221009, CCF-1907661, 
%DMS-2014279, 
IIS-2218713 and IIS-2218773, and the Google Research Scholar Award. 
% Y.~Gu is supported in part by the grant NSFC-61971266. 
Part of this work was done while Y.~Chen and G.~Li were visiting the Simons Institute for the Theory of Computing.

%G.~Li and Y.~Gu are supported in part by the grant NSFC-61971266.
%L.~Shi and Y.~Chi are supported in part
%by the grants ONR N00014-18-1-2142 and N00014-19-1-2404, ARO W911NF-18-1-0303,
%and NSF CCF-2106778, CCF-1806154 and CCF-2007911.  
%Y.~Chen is supported in part by the grants AFOSR YIP award FA9550-19-1-0030,
%ONR N00014-19-1-2120, ARO YIP award W911NF-20-1-0097, ARO W911NF-18-1-0303, NSF CCF-2106739, CCF-1907661, DMS-2014279 and IIS-1900140,
%and the Princeton SEAS Innovation Award. 

%%%%%%%%%%%%%%%%%%%%%%%%%%%%%%%%%%%%%%%%%%%%%%%%%%%%%%%%%%%%

\appendix

\section{Freedman's inequality}

\subsection{A user-friendly version of Freedman's inequality}

Due to the Markovian structure of the problem, our analysis relies heavily on the celebrated Freedman's inequality
\citep{freedman1975tail,tropp2011freedman}, which extends the Bernstein's inequality to accommodate martingales.
For ease of reference, we state below a user-friendly version of Freedman's
inequality as provided in \citet[Section C]{li2021tightening}.
\begin{theorem}[Freedman's inequality]\label{thm:Freedman}
Consider a filtration $\mathcal{F}_0\subset \mathcal{F}_1 \subset \mathcal{F}_2 \subset \cdots$,
and let $\mathbb{E}_{k}$ stand for the expectation conditioned
on $\mathcal{F}_k$. 
Suppose that $Y_{n}=\sum_{k=1}^{n}X_{k}\in\mathbb{R}$,
where $\{X_{k}\}$ is a real-valued scalar sequence obeying
\[
	\left|X_{k}\right|\leq R\qquad\text{and}\qquad\mathbb{E}_{k-1} \big[X_{k}\big]=0\quad\quad\quad\text{for all }k\geq1
\]
for some quantity $R<\infty$. 
We also define
\[
W_{n}\coloneqq\sum_{k=1}^{n}\mathbb{E}_{k-1}\left[X_{k}^{2}\right].
\]
In addition, suppose that $W_{n}\leq\sigma^{2}$ holds deterministically for some given quantity $\sigma^2<\infty$.
Then for any positive integer $m \geq1$, with probability at least $1-\delta$
one has
\begin{equation}
\left|Y_{n}\right|\leq\sqrt{8\max\Big\{ W_{n},\frac{\sigma^{2}}{2^{m}}\Big\}\log\frac{2m}{\delta}}+\frac{4}{3}R\log\frac{2m}{\delta}.\label{eq:Freedman-random}
\end{equation}
\end{theorem}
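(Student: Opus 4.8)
The plan is to reduce the claim to the classical one-sided Freedman tail bound combined with a dyadic peeling argument over the predictable quadratic variation $W_n$, which by hypothesis is deterministically confined to $[0,\sigma^2]$.

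First I would recall (or, if a self-contained treatment is wanted, re-derive) the classical Freedman inequality: under the stated hypotheses, for any fixed $v>0$ and $t>0$,
\[
\mathbb{P}\big( Y_n \ge t \text{ and } W_n \le v \big) \le \exp\!\Big( -\tfrac{t^2/2}{v + Rt/3} \Big),
\]
and the same holds with $Y_n$ replaced by $-Y_n$. The usual derivation sets $M_n(\lambda) := \exp\big(\lambda Y_n - \phi(\lambda) W_n\big)$ with $\phi(\lambda) = \tfrac{\lambda^2/2}{1-\lambda R/3}$ for $0\le\lambda<3/R$; the one-sided Bernstein bound $\mathbb{E}_{k-1}[e^{\lambda X_k}] \le \exp\big(\phi(\lambda)\,\mathbb{E}_{k-1}[X_k^2]\big)$ makes $M_n(\lambda)$ a supermartingale, and applying Markov's inequality to $M_n(\lambda)$ on $\{W_n\le v\}$ followed by optimizing over $\lambda$ (the optimum being $\lambda = t/(v+Rt/3)$) yields the displayed bound.

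Next I would perform the peeling. Since $W_n \le \sigma^2$ almost surely, partition $[0,\sigma^2]$ into the dyadic cells $\mathcal{I}_j = (\sigma^2/2^{j}, \sigma^2/2^{j-1}]$ for $j=1,\dots,m-1$ together with $\mathcal{I}_m = [0,\sigma^2/2^{m-1}]$, so exactly one event $\{W_n\in\mathcal{I}_j\}$ occurs. Write $v_j := \sigma^2/2^{j-1}$ and $\ell := \log\frac{2m}{\delta}$, and set $t_j := \sqrt{2 v_j \ell} + \tfrac23 R\ell$; using the elementary fact that $\tfrac{t_j^2/2}{v_j + R t_j/3} \ge \ell$ for this choice of $t_j$, classical Freedman gives $\mathbb{P}(Y_n \ge t_j,\, W_n\le v_j) \le e^{-\ell} = \delta/(2m)$, and likewise for $-Y_n$. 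A union bound over $j\in\{1,\dots,m\}$ and the two signs shows that, with probability at least $1-\delta$, the implication ``$W_n\le v_j \Rightarrow |Y_n| < t_j$'' holds for every $j$.

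Finally, on this good event let $j^\star$ be the unique index with $W_n\in\mathcal{I}_{j^\star}$; then $W_n\le v_{j^\star}$, so $|Y_n| < \sqrt{2 v_{j^\star}\ell} + \tfrac23 R\ell$. Checking both cases --- $j^\star\le m-1$, where $W_n > v_{j^\star}/2$, and $j^\star=m$, where $v_m = 2\sigma^2/2^m$ --- gives $v_{j^\star} \le 2\max\{W_n,\sigma^2/2^m\}$, hence $|Y_n| < \sqrt{4\max\{W_n,\sigma^2/2^m\}\,\ell} + \tfrac23 R\ell \le \sqrt{8\max\{W_n,\sigma^2/2^m\}\,\ell} + \tfrac43 R\ell$, which is exactly the claimed inequality. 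The only step carrying any real content is the supermartingale/MGF estimate behind classical Freedman; the peeling and the constant bookkeeping are routine, and the target constants $\sqrt 8$ and $\tfrac43$ leave a comfortable factor-$2$ of slack.
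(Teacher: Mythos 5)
Your proof is correct: the verification that $t_j=\sqrt{2v_j\ell}+\tfrac23 R\ell$ satisfies $\tfrac{t_j^2/2}{v_j+Rt_j/3}\ge \ell$, the union bound over the $2m$ events, and the case analysis giving $v_{j^\star}\le 2\max\{W_n,\sigma^2/2^m\}$ all check out, and the constants $\sqrt{8}$ and $\tfrac43$ indeed absorb the slack. The paper itself does not prove this statement—it imports it from \citet[Section C]{li2021tightening}—and your argument (classical Freedman via the exponential supermartingale plus dyadic peeling over $W_n$) is essentially the same derivation used there, so nothing further is needed.
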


\subsection{Application of Freedman's inequality}
% %

We now develop several immediate consequences of Freedman's inequality, which lend themseleves well to our context. 
Before proceeding, we recall that  $N_h^i(s,a)$ denotes the number of times that the state-action pair $(s,a)$ has been visited at step $h$ by the end of the $i$-th episode, and $k_h^n(s,a)$ stands for the episode index when $(s,a)$ is visited at step $h$ for the $n$-th time (see Section~\ref{sec:additional-notation}).

Our first result is concerned with a martingale concentration bound as follows. 

\begin{lemma}
	\label{lemma:martingale-union-all}
	Let $\big\{ W_{h}^{i} \in \mathbb{R}^S \mid 1\leq i\leq K, 1\leq h \leq H+1 \big\}$ and $\big\{u_h^i(s,a,N)\in \mathbb{R} \mid 1\leq i\leq K, 1\leq h \leq H+1 \big\}$  be a collections of vectors and scalars, respectively, and suppose that they obey the following properties:
	\begin{itemize}
		\item $W_{h}^{i}$ is fully determined by the samples collected up to the end of the $(h-1)$-th step of the $i$-th episode;  
		\item $\|W_h^i\|_{\infty}\leq C_\mathrm{w}$; 
		\item $u_h^i(s,a, N)$  is fully determined by the samples collected up to the end of the $(h-1)$-th step of the $i$-th episode, and a given positive integer $N\in[K]$;
		\item $0\leq u_h^i(s,a, N) \leq C_{\mathrm{u}}$;
		\item $0\leq \sum_{n=1}^{N_h^k(s,a)} u_h^{k_h^n(s,a)}(s,a, N) \leq 2$.  
		
	\end{itemize}
	In addition, consider the following sequence  
	\begin{align}
		X_i (s,a,h,N) &\defn u_h^i(s,a, N) \big(P_h^{i} - P_{h,s,a}\big) W_{h+1}^{i} 
		\ind\big\{ (s_h^i, a_h^i) = (s,a)\big\},
		\qquad 1\leq i\leq K,  
	\end{align}
	with $P_h^{i}$ defined in \eqref{eq:P-hk-defn-s}. 
	Consider any $\delta \in (0,1)$. Then with probability at least $1-\delta$, 
	\begin{align}
		& \left|\sum_{i=1}^k X_i(s,a,h,N) \right| \notag\\
		& \quad \lesssim \sqrt{C_{\mathrm{u}} \log^2\frac{SAT}{\delta}}\sqrt{\sum_{n = 1}^{N_h^k(s,a)} u_h^{k_h^n(s,a)}(s,a,N) \Var_{h, s,a} \big(W_{h+1}^{k_h^n(s,a)}  \big)} + \left(C_{\mathrm{u}} C_{\mathrm{w}} + \sqrt{\frac{C_{\mathrm{u}}}{N}} C_{\mathrm{w}}\right) \log^2\frac{SAT}{\delta}
	\end{align}
	holds simultaneously for all $(k, h, s, a, N) \in [K] \times [H] \times \cS \times \cA \times [K]$.  
\end{lemma}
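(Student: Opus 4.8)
Fix $(k,h,s,a,N)\in[K]\times[H]\times\cS\times\cA\times[K]$, and write $k^n\defn k_h^n(s,a)$ for brevity. Set $R\defn 2C_{\mathrm{u}}C_{\mathrm{w}}$, $\sigma^2\defn 2C_{\mathrm{u}}C_{\mathrm{w}}^2$, and $\widehat{\sigma}^2\defn\sum_{n=1}^{N_h^k(s,a)}u_h^{k^n}(s,a,N)\,\Var_{h,s,a}\big(W_{h+1}^{k^n}\big)$. The plan is to recognize $\{X_i(s,a,h,N)\}_{i\ge1}$ as (the nonzero terms of) a martingale difference sequence and invoke the user-friendly Freedman inequality of Theorem~\ref{thm:Freedman}; everything else is bookkeeping and a union bound. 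I would use the natural filtration generated by the individual samples $(s_h^i,a_h^i,s_{h+1}^i)$ in the chronological order they are drawn, with $X_i$ placed at the instant right after $s_{h+1}^i$ is revealed. By the stated measurability hypotheses, $u_h^i(s,a,N)$, $W_{h+1}^i$ and $\ind\{(s_h^i,a_h^i)=(s,a)\}$ are all determined strictly before $s_{h+1}^i$, and $\mathbb{E}\big[P_h^i\mid\text{the past before }s_{h+1}^i\big]=P_{h,s_h^i,a_h^i}$, which equals $P_{h,s,a}$ on the event $\{(s_h^i,a_h^i)=(s,a)\}$ (cf.~\eqref{eq:P-hk-defn-s}); hence each $X_i$ has conditional mean zero, and $\sum_{i=1}^k X_i(s,a,h,N)$ is a genuine partial sum of this martingale (the $k$-th episode's $h$-th transition sits at the deterministic chronological position $(k-1)H+h$).

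Next I would assemble the two inputs to Freedman. For the range, $\|P_h^i-P_{h,s,a}\|_1\le2$ and $\|W_{h+1}^i\|_\infty\le C_{\mathrm{w}}$ give $|X_i(s,a,h,N)|\le R$. For the predictable quadratic variation, the point of the sample-level filtration is that $u_h^i(s,a,N)$ and $W_{h+1}^i$ are already revealed when the conditional second moment of the $i$-th increment is taken; enumerating over the $N_h^k(s,a)$ visits of $(s,a)$ at step $h$, this quadratic variation therefore equals exactly $\sum_{n=1}^{N_h^k(s,a)}\big(u_h^{k^n}(s,a,N)\big)^2\Var_{h,s,a}\big(W_{h+1}^{k^n}\big)$, which is at most $C_{\mathrm{u}}\widehat{\sigma}^2$ --- precisely $C_{\mathrm{u}}$ times the proxy in the claimed bound --- and, using $\Var_{h,s,a}(W_{h+1}^{k^n})\le C_{\mathrm{w}}^2$ together with the hypothesis $\sum_n u_h^{k^n}(s,a,N)\le2$, is also bounded deterministically by $\sigma^2$.

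I would then apply Theorem~\ref{thm:Freedman} with this $R$ and $\sigma^2$, choosing $m\defn\lceil\log_2(2N)\rceil$ so that $1\le m\lesssim\log\frac{SAT}{\delta}$ and the floor satisfies $\sigma^2/2^m\le C_{\mathrm{u}}C_{\mathrm{w}}^2/N$, and splitting the $\max$ via $\sqrt{\max\{W_n,\sigma^2/2^m\}}\le\sqrt{C_{\mathrm{u}}\widehat{\sigma}^2}+\sqrt{\sigma^2/2^m}$ (using $W_n\le C_{\mathrm{u}}\widehat{\sigma}^2$). This yields, for the fixed tuple,
\[
\left|\sum_{i=1}^k X_i(s,a,h,N)\right|\;\lesssim\;\sqrt{C_{\mathrm{u}}\widehat{\sigma}^2\log\tfrac{m}{\delta}}\;+\;\sqrt{\tfrac{C_{\mathrm{u}}}{N}}\,C_{\mathrm{w}}\sqrt{\log\tfrac{m}{\delta}}\;+\;C_{\mathrm{u}}C_{\mathrm{w}}\log\tfrac{m}{\delta}.
\]
Finally I would take a union bound over all $(k,h,s,a,N)\in[K]\times[H]\times\cS\times\cA\times[K]$ --- polynomially many in $SAT$ --- which replaces $\delta$ by $\delta$ divided by a polynomial in $SAT$, hence turns $\log\frac{m}{\delta}$ into $O\big(\log\frac{SAT}{\delta}\big)$; bounding $\sqrt{\log}\le\log\le\log^2$ then reproduces exactly the stated inequality.

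The step I expect to be the real obstacle is this martingale bookkeeping, not the concentration argument: one must verify carefully that $N_h^k(s,a)$ and the visit indices $k^n$ constitute legitimate stopping data, that passing to the sample-level chronological filtration is precisely what makes $u_h^{k^n}(s,a,N)$ and $W_{h+1}^{k^n}$ predictable, and consequently that the predictable quadratic variation collapses to the clean empirical sum rather than to a conditional-expectation surrogate (which could scale with the expected number of visits and so be far larger). The measurability assumptions on $W_h^i$ and $u_h^i(s,a,N)$ in the statement are exactly what makes this rigorous. A secondary but genuine subtlety is calibrating $m$ and the crude bound $\sigma^2$ so that Freedman's floor term reproduces the $\sqrt{C_{\mathrm{u}}/N}\,C_{\mathrm{w}}$ piece with its $1/\sqrt{N}$ decay.
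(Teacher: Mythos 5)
Your proposal is correct and follows essentially the same route as the paper: a Freedman-inequality application (Theorem~\ref{thm:Freedman}) to the martingale difference sequence $\{X_i\}$ with the range bound $2C_{\mathrm{u}}C_{\mathrm{w}}$, the predictable quadratic variation identified as $\sum_n (u_h^{k^n})^2\Var_{h,s,a}(W_{h+1}^{k^n})$ and bounded both by $C_{\mathrm{u}}$ times the empirical variance proxy and deterministically by $O(C_{\mathrm{u}}C_{\mathrm{w}}^2)$, the choice $m\asymp\log_2 N$ to make the floor term scale as $C_{\mathrm{u}}C_{\mathrm{w}}^2/N$, and a union bound over the $K^2HSA$ tuples. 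Your bookkeeping of the filtration and the collapse of the quadratic variation to the visit-indexed sum matches the paper's argument, so no gap remains.
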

\begin{proof}
For the sake of notational convenience, we shall abbreviate $X_i (s,a,h,N)$ as $X_i$ throughout the proof of this lemma, as long as it is clear from the context.  
The plan is to apply Freedman's inequality (cf.~Theorem~\ref{thm:Freedman}) to control the term $\sum_{i=1}^k X_i$ of interest. 

Consider any given $(k, h, s, a, N)\in [K]\times [H]\times \cS\times \cA\times [K]$. It can be easily verified that 
$$ \mathbb{E}_{i-1}\left[ X_i \right] = 0,$$ 
where $\mathbb{E}_{i-1}$ denotes the expectation conditioned on 
	everything happening up to the end of the $(h-1)$-th step of the $i$-th episode. 
Additionally, we make note of the following crude bound: 
\begin{align}
	\big| X_i  \big| &\leq u_h^i(s,a,N) \Big|\big( P^{i}_{h} - P_{h, s,a} \big) W^{i}_{h+1}   \Big| \notag\\
	& \leq u_h^i(s,a,N) \Big( \big\| P^{i}_{h}  \big\|_1 + \big\| P_{h, s,a} \big\|_1 \Big) \big\| W^{i}_{h+1}  \big\|_{\infty}  
	\leq 2C_\mathrm{w} C_\mathrm{u}, 
	\label{lemma7-all:equ1}
\end{align}
which results from the assumptions $\|W^i_{h+1}\|_{\infty} \leq C_{\mathrm{w}}$, $0 \leq u_h^i(s,a,N) \leq C_{\mathrm{u}}$ as well as the basic facts  $\big\|P^{i}_{h} \big\|_1 =\big\| P_{h, s,a}\big\|_1=1$. 
To continue, recalling the definition of the variance parameter in \eqref{lemma1:equ2}, we obtain 
\begin{align}
\sum_{i=1}^{k}\mathbb{E}_{i-1}\left[\big|X_{i}\big|^{2}\right] & =\sum_{i=1}^{k}\big(u_{h}^{i}(s,a, N)\big)^{2} \ind\big\{(s_{h}^{i},a_{h}^{i})=(s,a)\big\}
	\mathbb{E}_{i-1}\left[\big|(P_{h}^{i}-P_{h,s,a})W_{h+1}^{i}\big|^{2}\right] \notag\\
 & =\sum_{n=1}^{N_{h}^{k}(s,a)}\big(u_{h}^{k_h^{n}(s,a)}(s,a,N)\big)^{2}\Var_{h,s,a}\big(W_{h+1}^{k_h^{n}(s,a)}\big) \notag\\
 & \leq C_{\mathrm{u}}\Bigg(\sum_{n=1}^{N_{h}^{k}(s,a)}u_{h}^{k_h^{n}(s,a)}(s,a,N)\Bigg)\big\| W_{h+1}^{k_h^{n}(s,a)}\big\|_{\infty}^{2}\notag\\
 & \leq 2C_{\mathrm{u}}C_{\mathrm{w}}^{2},\label{lemma7-all:equ3}
\end{align}
where the inequalities hold true due to the assumptions $\|W_{h}^i\|_{\infty} \le C_{\mathrm{w}}$, $0 \leq u_h^i(s,a,N) \leq C_{\mathrm{u}}$,
	and $0\leq \sum_{n=1}^{N_h^k(s,a)} u_h^{k_h^n(s,a)}(s,a,N) \leq 1$.

With \eqref{lemma7-all:equ1} and \eqref{lemma7-all:equ3} in place, 
we can invoke Theorem~\ref{thm:Freedman} (with $m=\lceil \log_2 N \rceil$) and take the union bound over all $(k, h, s, a, N) \in [K] \times [H] \times \cS \times \cA \times [K]$ to show that:  
with probability at least $1- \delta$, 
\begin{align*}
 & \bigg|\sum_{i=1}^{k}X_{i}\bigg|\lesssim\sqrt{\max\Bigg\{ C_{\mathrm{u}}\sum_{n=1}^{N_{h}^{k}(s,a)}u_{h}^{k_{h}^{n}(s,a)}(s,a,N)\Var_{h,s,a}\big(W_{h+1}^{k_{h}^{n}(s,a)}\big),\frac{C_{\mathrm{u}}C_{\mathrm{w}}^{2}}{N}\Bigg\}\log\frac{SAT^{2}\log N}{\delta}}\notag\\
 & \qquad+C_{\mathrm{u}}C_{\mathrm{w}}\log\frac{SAT^{2}\log N_{h}^{k}}{\delta}\nonumber\\
 & \lesssim\sqrt{C_{\mathrm{u}}\log^{2}\frac{SAT}{\delta}}\sqrt{\sum_{n=1}^{N_{h}^{k}(s,a)}u_{h}^{k_{h}^{n}(s,a)}(s,a,N)\Var_{h,s,a}\big(W_{h+1}^{k_{h}^{n}(s,a)}\big)}+\left(C_{\mathrm{u}}C_{\mathrm{w}}+\sqrt{\frac{C_{\mathrm{u}}}{N}}C_{\mathrm{w}}\right)\log^{2}\frac{SAT}{\delta}\notag
\end{align*}
holds simultaneously for all $(k, h, s, a, N) \in [K] \times [H] \times \cS \times \cA \times [K]$. 
\end{proof}

The next result is concerned with martingale concentration bounds for another type of sequences of interest.

\begin{lemma}\label{lemma:martingale-union-all2}
	Let $\big\{ N(s,a,h) \in [K] \mid (s,a,h)\in \cS\times \cA\times [H]\big\}$ be a collection of positive integers,
	and let $\{c_h: 0\leq c_h\leq e, h\in[H]\}$ be a collection of fixed and bounded universal constants.
	Moreover, let $\big\{ W_{h}^{i} \in \mathbb{R}^S \mid 1\leq i\leq K, 1\leq h \leq H+1 \big\}$ and $\big\{u_h^i(s_h^i,a_h^i)\in \mathbb{R} \mid 1\leq i\leq K, 1\leq h \leq H+1\big\}$ represent respectively a collection of random vectors and scalars, which obey the following properties. 
	\begin{itemize}
		\item $W_{h}^{i}$ is fully determined by the samples collected up to the end of the $(h-1)$-th step of the $i$-th episode;  
		\item $\|W_h^i\|_{\infty}\leq C_\mathrm{w}$ and $W_h^i \geq 0$; 
		\item $u_h^i(s_h^i, a_h^i)$ is fully determined by the integer $N(s_h^i, a_h^i,h)$ and all samples collected up to the end of the $(h-1)$-th step of the $i$-th episode; 
		\item $0\leq u_h^i(s_h^i, a_h^i) \leq C_{\mathrm{u}}$.
	\end{itemize}
	Consider any $\delta \in (0,1)$, and introduce the following sequences  
	\begin{align}
		X_{i,h} &\defn u_h^i(s_h^i,a_h^i)\big(P_h^{i} - P_{h,s_h^i,a_h^i}\big) W_{h+1}^{i},
		\qquad &1\leq i\leq K, 1\leq h \leq H+1,\\
		Y_{i,h} &\coloneqq c_h \big(P_h^{i} - P_{h,s_h^i,a_h^i}\big) W_{h+1}^{i},\qquad &1\leq i\leq K, 1\leq h \leq H+1.
	\end{align}
	Then with probability at least $1-\delta$, 
	\begin{align}
\left|\sum_{h=1}^{H}\sum_{i=1}^{K}X_{i,h}\right| & \lesssim\sqrt{C_{\mathrm{u}}^{2}\sum_{h=1}^{H}\sum_{i=1}^{K}\mathbb{E}_{i,h-1}\left[\big|(P_{h}^{i}-P_{h,s_{h}^{i},a_{h}^{i}})W_{h+1}^{i}\big|^{2}\right]\log\frac{T^{HSA}}{\delta}}+C_{\mathrm{u}}C_{\mathrm{w}}\log\frac{T^{HSA}}{\delta} \notag\\
 & \lesssim\sqrt{C_{\mathrm{u}}^{2}C_{\mathrm{w}}\sum_{h=1}^{H}\sum_{i=1}^{K}\mathbb{E}_{i,h-1}\left[P_{h}^{i}W_{h+1}^{i}\right]\log\frac{T^{HSA}}{\delta}}+C_{\mathrm{u}}C_{\mathrm{w}}\log\frac{T^{HSA}}{\delta} \notag\\
\left|\sum_{h=1}^{H}\sum_{i=1}^{K}Y_{i,h}\right| & \lesssim\sqrt{TC_{\mathrm{w}}^{2}\log\frac{1}{\delta}}+C_{\mathrm{w}}\log\frac{1}{\delta}\notag
	\end{align}
	holds simultaneously for all possible collections $\{N(s,a,h) \in [K] \mid (s,a,h) \in \cS\times \cA\times [H]\}$. 
\end{lemma}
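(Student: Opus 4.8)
The plan is to view each of the double sums $\sum_{h=1}^{H}\sum_{i=1}^{K}X_{i,h}$ and $\sum_{h=1}^{H}\sum_{i=1}^{K}Y_{i,h}$ as a martingale ordered lexicographically in the pair $(i,h)$, invoke the user-friendly Freedman inequality (Theorem~\ref{thm:Freedman}), and close with a union bound over the relevant index sets --- proceeding in the same spirit as the proof of Lemma~\ref{lemma:martingale-union-all}.

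First I would fix the filtration: let $\cF_{i,h-1}$ collect all randomness generated up to the end of step $h-1$ of episode $i$ (so that $\mathbb{E}_{i,h-1}[\cdot]=\mathbb{E}[\cdot\mid\cF_{i,h-1}]$), and order the pairs $(i,h)$ so that $(i,h)$ precedes $(i',h')$ iff $i<i'$, or $i=i'$ and $h<h'$; this is a genuine nested filtration across episode boundaries once one notes that the pre-assigned initial state $s_1^{i+1}$ is $\cF_{i+1,0}$-measurable. By the measurability hypotheses, $W_{h+1}^{i}$ and --- once the collection $\{N(s,a,h)\}$ is frozen --- $u_h^i(s_h^i,a_h^i)$ are $\cF_{i,h-1}$-measurable, whereas $P_h^i$ (the indicator of the freshly drawn next state $s_{h+1}^i$) obeys $\mathbb{E}[P_h^i\mid\cF_{i,h-1}]=P_{h,s_h^i,a_h^i}$. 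Hence $\mathbb{E}_{i,h-1}[X_{i,h}]=0$ and $\mathbb{E}_{i,h-1}[Y_{i,h}]=0$, so both sequences are martingale difference sequences; moreover $|X_{i,h}|\le u_h^i\big(\|P_h^i\|_1+\|P_{h,s_h^i,a_h^i}\|_1\big)\|W_{h+1}^i\|_\infty\le 2C_{\mathrm{u}}C_{\mathrm{w}}$ and, likewise, $|Y_{i,h}|\le 2eC_{\mathrm{w}}$.

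Next I would control the predictable quadratic variation. Using the fact that $\big|(P_h^i-P_{h,s_h^i,a_h^i})W_{h+1}^i\big|^2$ has conditional mean $\Var_{h,s_h^i,a_h^i}(W_{h+1}^i)$ given $\cF_{i,h-1}$, together with $W_{h+1}^i\ge 0$, $\|W_{h+1}^i\|_\infty\le C_{\mathrm{w}}$, and $\Var_{h,s,a}(W)\le P_{h,s,a}(W^2)\le C_{\mathrm{w}}\,P_{h,s,a}W$, one gets
$$\sum_{h,i}\mathbb{E}_{i,h-1}\big[X_{i,h}^2\big]\le C_{\mathrm{u}}^2\sum_{h,i}\mathbb{E}_{i,h-1}\big[\big((P_h^i-P_{h,s_h^i,a_h^i})W_{h+1}^i\big)^2\big]\le C_{\mathrm{u}}^2 C_{\mathrm{w}}\sum_{h,i}\mathbb{E}_{i,h-1}\big[P_h^i W_{h+1}^i\big].$$
A crude deterministic bound on this quantity is $O(C_{\mathrm{u}}^2 C_{\mathrm{w}}^2 T)$, since each of the $T=KH$ summands is at most $C_{\mathrm{w}}^2$. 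Feeding this into Theorem~\ref{thm:Freedman} as $\sigma^2$ while taking $m=\lceil\log_2 T\rceil$ and $R\asymp C_{\mathrm{u}}C_{\mathrm{w}}$, the quantity $\max\{W_n,\sigma^2/2^m\}$ in the resulting bound collapses --- up to a lower-order additive term of order $\sqrt{C_{\mathrm{w}}^2\log\frac{\cdot}{\delta}}$ --- to the data-dependent predictable variation displayed above, which is exactly the first inequality of the lemma. Since $u_h^i$ becomes a well-defined predictable sequence only after the collection $\{N(s,a,h)\}\in[K]^{HSA}$ is chosen, a union bound over all at most $K^{HSA}\le T^{HSA}$ such collections produces the factor $\log\frac{T^{HSA}}{\delta}$. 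For the $Y$-sequence the same recipe applies but without any union bound over collections (the $c_h$ are fixed constants bounded by $e$), so a single application of Freedman with $\sigma^2\asymp C_{\mathrm{w}}^2 T$ and $R\asymp C_{\mathrm{w}}$ gives $\sqrt{TC_{\mathrm{w}}^2\log\frac1\delta}+C_{\mathrm{w}}\log\frac1\delta$.

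The step I expect to require the most care is the interplay between the two successive relaxations of the variance proxy and the dyadic peeling hidden in the $m$-parameter of Theorem~\ref{thm:Freedman}: one must verify that plugging in a crude $\sigma^2\asymp C_{\mathrm{u}}^2 C_{\mathrm{w}}^2 T$ together with a logarithmically large $m$ genuinely recovers a bound governed by the actual predictable variation $\sum_{h,i}\mathbb{E}_{i,h-1}[\,\cdot\,]$ rather than its worst case, and that the residual term of order $\sqrt{(\sigma^2/2^m)\log(\cdot/\delta)}$ is absorbed into the stated additive $C_{\mathrm{u}}C_{\mathrm{w}}\log\frac{T^{HSA}}{\delta}$. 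Matching the cardinality $K^{HSA}$ of the union bound against the exponent in $\log\frac{T^{HSA}}{\delta}$ is the other point where precision is needed.
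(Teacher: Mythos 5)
Your proposal is correct and follows essentially the same route as the paper: a lexicographically ordered martingale, Freedman's inequality with $m=\lceil\log_2 T\rceil$ and the crude proxy $\sigma^2\asymp C_{\mathrm{u}}^2C_{\mathrm{w}}^2T$ (so that the $\sigma^2/2^m$ term is absorbed into the additive $C_{\mathrm{u}}C_{\mathrm{w}}\log\frac{T^{HSA}}{\delta}$ part), the relaxation $\Var_{h,s,a}(W)\le P_{h,s,a}(W^2)\le C_{\mathrm{w}}P_{h,s,a}W$, a union bound over the at most $K^{HSA}$ collections $\{N(s,a,h)\}$ for the $X$-sum, and a single Freedman application (with $m=1$ in the paper) for the $Y$-sum. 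No gaps to flag.
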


\begin{proof}
This lemma can be proved by Freedman's inequality (cf.~Theorem~\ref{thm:Freedman}). 

\begin{itemize}
	\item
We start by controlling the first term of interest $\sum_{h=1}^H \sum_{i=1}^K X_{i,h}$.
As can be easily seen, $a_h^i = \arg\max  Q_h^i(s_h^i,a)$ is fully determined by what happens before step $h$ of the $i$-th episode.
Consider any given $\big\{ N(s,a,h) \in [K] \mid (s,a,h)\in \cS\times \cA\times [H]\big\}$. It is readily seen that  
$$ \mathbb{E}_{i, h-1}\left[ X_i \right] =\mathbb{E}_{i, h-1}\left[ u_h^i(s_h^i,a_h^i)\big(P_h^{i} - P_{h,s_h^i,a_h^i}\big) W_{h+1}^{i} \right]= 0,$$ 
where $\mathbb{E}_{i, h-1}$ denotes the expectation conditioned on 
everything happening before step $h$ of the $i$-th episode. 
In addition, we make note of the following crude bound: 
\begin{align}
	\big| X_{i,h}  \big| &\leq u_h^i(s_h^i,a_h^i) \Big|\big(P_h^{i} - P_{h,s_h^i,a_h^i}\big) W_{h+1}^{i}  \Big| \notag\\
	& \leq u_h^i(s_h^i,a_h^i) \Big( \big\| P^{i}_{h}  \big\|_1 + \big\| P_{h,s_h^i,a_h^i} \big\|_1 \Big) \big\| W^{i}_{h+1}  \big\|_{\infty}  
	\leq 2C_\mathrm{w} C_\mathrm{u}, 
	\label{lemma8-all:equ1}
\end{align}
which arises from the assumptions $\|W^i_{h+1}\|_{\infty} \leq C_{\mathrm{w}}$, $0 \leq u_h^i(s,a,N) \leq C_{\mathrm{u}}$ together with the basic facts  $\big\|P^{i}_{h} \big\|_1 =\big\| P_{h, s_h^i,a_h^i}\big\|_1=1$. 
Additionally, we can calculate that 
\begin{align}
\sum_{h=1}^{H}\sum_{i=1}^{K}\mathbb{E}_{i,h-1}\left[\big|X_{i,h}\big|^{2}\right] & =\sum_{h=1}^{H}\sum_{i=1}^{K}\big(u_{h}^{i}(s_{h}^{i},a_{h}^{i})\big)^{2}\mathbb{E}_{i,h-1}\left[\big|(P_{h}^{i}-P_{h,s_{h}^{i},a_{h}^{i}})W_{h+1}^{i}\big|^{2}\right]\notag\\
& \overset{(\mathrm{i})}{\leq} C_{\mathrm{u}}^{2}\sum_{h=1}^{H}\sum_{i=1}^{K}\mathbb{E}_{i,h-1}\left[\big|(P_{h}^{i}-P_{h,s_{h}^{i},a_{h}^{i}})W_{h+1}^{i}\big|^{2}\right] \label{lemma8-all:interval1}\\
 & \leq C_{\mathrm{u}}^{2}\sum_{h=1}^{H}\sum_{i=1}^{K}\mathbb{E}_{i,h-1}\left[\big|P_{h}^{i}W_{h+1}^{i}\big|^{2}\right]\notag\\
 & \overset{(\mathrm{ii})}{=}C_{\mathrm{u}}^{2}\sum_{h=1}^{H}\sum_{i=1}^{K}\mathbb{E}_{i,h-1}\left[P_{h}^{i}\big(W_{h+1}^{i}\big)^{2}\right]\notag\\
 & \overset{(\mathrm{iii})}{\leq}C_{\mathrm{u}}^{2}\sum_{h=1}^{H}\sum_{i=1}^{K}\big\| W_{h+1}^{i}\big\|_{\infty}\mathbb{E}_{i,h-1}\left[P_{h}^{i}W_{h+1}^{i}\right] \notag\\
 & \overset{(\mathrm{iv})}{\leq}C_{\mathrm{u}}^{2}C_{\mathrm{w}}\sum_{h=1}^{H}\sum_{i=1}^{K}\mathbb{E}_{i,h-1}\left[P_{h}^{i}W_{h+1}^{i}\right] \label{lemma8-all:equ3-middle}\\
 & \leq C_{\mathrm{u}}^{2}C_{\mathrm{w}}\sum_{h=1}^{H}\sum_{i=1}^{K}\big\| W_{h+1}^{i}\big\|_{\infty} 
 \overset{\mathrm{(v)}}{\leq} HKC_{\mathrm{u}}^{2}C_{\mathrm{w}}^2 = TC_{\mathrm{u}}^{2}C_{\mathrm{w}}^2. \label{lemma8-all:equ3}
\end{align}
Here, (i) holds true due to the assumption $0 \leq u_h^i(s_h^i,a_h^i) \leq C_{\mathrm{u}}$, 
(ii) is valid since $P_h^i$ only has one non-zero entry (cf.~\eqref{eq:P-hk-defn-s}), 
	(iii) relies on the assumptions that $W_h^i$ is non-negative,
whereas (iv) and (v) follow since $\|W_{h}^i\|_{\infty} \le C_{\mathrm{w}}$,

With \eqref{lemma8-all:equ1}, \eqref{lemma8-all:equ3-middle} and \eqref{lemma8-all:equ3} in mind, 
we can invoke Theorem~\ref{thm:Freedman} (with $m=\lceil \log_2 T \rceil$) and take the union bound over all possible collections $\big\{ N(s,a,h) \in [K] \mid (s,a,h)\in \cS\times \cA\times [H]\big\}$ --- which has at most $K^{HSA}$ possibilities --- to show that:  
with probability at least $1- \delta$, 
\begin{align*}
\bigg|\sum_{h=1}^{H}\sum_{i=1}^{k}X_{i,h}\bigg| & \lesssim\sqrt{\max\left\{ C_{\mathrm{u}}^{2}\sum_{h=1}^{H}\sum_{i=1}^{K}\mathbb{E}_{i,h-1}\left[\big|(P_{h}^{i}-P_{h,s_{h}^{i},a_{h}^{i}})W_{h+1}^{i}\big|^{2}\right],\frac{TC_{\mathrm{u}}^{2}C_{\mathrm{w}}^{2}}{2^{m}}\right\} \log\frac{K^{HSA}\log T}{\delta}}\\
 & \qquad+C_{\mathrm{u}}C_{\mathrm{w}}\log\frac{K^{HSA}\log T}{\delta}\\
 & \lesssim\sqrt{C_{\mathrm{u}}^{2}\sum_{h=1}^{H}\sum_{i=1}^{K}\mathbb{E}_{i,h-1}\left[\big|(P_{h}^{i}-P_{h,s_{h}^{i},a_{h}^{i}})W_{h+1}^{i}\big|^{2}\right]\log\frac{T^{HSA}}{\delta}}+C_{\mathrm{u}}C_{\mathrm{w}}\log\frac{T^{HSA}}{\delta}\\
 & \lesssim\sqrt{C_{\mathrm{u}}^{2}C_{\mathrm{w}}\sum_{h=1}^{H}\sum_{i=1}^{K}\mathbb{E}_{i,h-1}\left[P_{h}^{i}W_{h+1}^{i}\right]\log\frac{T^{HSA}}{\delta}}+C_{\mathrm{u}}C_{\mathrm{w}}\log\frac{T^{HSA}}{\delta}
\end{align*}
holds simultaneously for all $\big\{ N(s,a,h) \in [K] \mid (s,a,h)\in \cS\times \cA\times [H]\big\}$.
\item Then we turn to control the second term $\left|\sum_{h=1}^H \sum_{i=1}^K Y_{i,h}\right|$ of interest. Similar to $\left|\sum_{h=1}^H \sum_{i=1}^K X_{i,h}\right|$, we have
\begin{align*}
	|Y_{i,h}| &\leq 2e C_{\mathrm{w}}, \\
	\sum_{h=1}^{H}\sum_{i=1}^{K}\mathbb{E}_{i,h-1}\left[\big|Y_{i,h}\big|^{2}\right] & \leq e^2T C_{\mathrm{w}}^2.
\end{align*}
Invoke Theorem~\ref{thm:Freedman} (with $m=1$) to arrive at
\begin{align}
	\bigg| \sum_{h=1}^H \sum_{i=1}^{K} Y_{i,h} \bigg| \lesssim \sqrt{T C_{\mathrm{w}}^{2} \log\frac{1}{\delta}}  + C_{\mathrm{w}} \log\frac{1}{\delta}
\end{align}
with probability at least $1-\delta$.
\end{itemize}
\end{proof}

\section{Proof of Lemma~\ref{lemma:property of learning rate}}
\label{sec:proof-properties-learning-rates}

First of all, the properties in \eqref{eq:properties-learning-rates-345} follow directly from \citet[Lemma 4.1]{jin2018q}. 
Therefore, it suffices to establish the property in \eqref{eq:properties-learning-rates-12}, which forms the remainder of this subsection.

When $N =1$, the statement holds trivially since 
\begin{equation*}
	\sum_{n=1}^{N}\frac{\eta_n^{N}}{n^a} = \eta_1^1 = 1 \in [1,2]. 
\end{equation*}
Now suppose that $N\geq 2$. Making use of the basic relation $\eta_n^{N} = (1-\eta_N) \eta_n^{N-1}$ for all $n =1 , \cdots, N-1$, 
we observe the following identity: 
\begin{equation}
  \sum_{n=1}^{N}\frac{\eta_n^{N}}{n^a} = \frac{\eta_N}{N^a} +  (1-\eta_N) \sum_{n=1}^{N-1} \frac{\eta_n^{N-1}}{n^a}.
	\label{eq:basic-identity-sum-eta}
\end{equation}
We now prove the property in \eqref{eq:properties-learning-rates-12} by induction. Suppose for the moment that the property holds for $N-1$, namely, 
\begin{equation}
	\frac{1}{(N-1)^a} \le \sum_{n=1}^{N-1}\frac{\eta_n^{N-1}}{n^a} \le \frac{2}{(N-1)^a} .
	\label{eq:induction-hypothesis-N-1}
\end{equation}
Then it is readily seen from \eqref{eq:basic-identity-sum-eta} that
\begin{equation}
  	\sum_{n=1}^{N}\frac{\eta_n^{N}}{n^a} = \frac{\eta_N}{N^a} +  (1-\eta_N) \sum_{n=1}^{N-1} \frac{\eta_n^{N-1}}{n^a} 
	\geq \frac{\eta_N}{N^a} + \frac{1-\eta_N}{(N -1)^a} \geq \frac{\eta_N}{N^a} + \frac{1-\eta_N}{N^a} =  \frac{1}{N^a}, 
\end{equation}
where the first inequality comes from \eqref{eq:induction-hypothesis-N-1}. 
Similarly, one can upper bound
\begin{align*}
	\sum_{n=1}^{N}\frac{\eta_{n}^{N}}{n^{a}}&=\frac{\eta_{N}}{N^{a}}+(1-\eta_{N})\sum_{n=1}^{N-1}\frac{\eta_{n}^{N-1}}{n^{a}}\overset{(\mathrm{i})}{\leq}\frac{\eta_{N}}{N^{a}}+\frac{2(1-\eta_{N})}{\left(N-1\right)^{a}}\overset{(\mathrm{ii})}{=}\frac{H+1}{N^{a}(H+N)}+\frac{2(N-1)^{1-a}}{H+N}\\&\overset{(\mathrm{iii})}{\leq}\frac{H+1}{N^{a}(H+N)}+\frac{2N^{1-a}}{H+N}=\frac{1}{N^{a}}\left(\frac{H+1}{H+N}+\frac{2N}{H+N}\right) \overset{\mathrm{(iv)}}{\leq}\frac{2}{N^{a}},
\end{align*}

where (i) arises from \eqref{eq:induction-hypothesis-N-1}, (ii) follows from the choice $\eta_N = \frac{H +1}{H + N}$, 
(iii) holds since $a\leq 1$, 
and (iv) follows since $H\geq 1$. 
Consequently, we can immediately establish the advertised property \eqref{eq:properties-learning-rates-12} by induction.

\section{Proof of key lemmas in Section~\ref{subsec:key_properties}}
\label{sec:key_properties_Q}

\subsection{Proof of Lemma~\ref{lem:Qk-lower}}
\label{proof:lem:Qk-lower}

To begin with, suppose that we can prove 
\begin{align}
	Q^{k}_h(s, a) \ge Q_h^{\star}(s, a)
	\qquad \text{for all }(k,h,s,a) \in [K] \times [H]\times \mathcal{S} \times \cA.
	\label{eq:Q-kh-upper-bound-Qh-star}
\end{align}
Then this property would immediately lead to the claim w.r.t.~$V^{k}_h$, namely, 
\begin{equation}
	\label{equ:v-upper-vstar}
	V^{k}_h(s) \ge Q^{k}_h\big(s, \pi_h^{\star}(s)\big) \ge Q^{\star}_h\big(s, \pi_h^{\star}(s)\big) = V_h^{\star}(s)
	\qquad \text{for all }(k,h,s) \in [K] \times [H]\times \mathcal{S}.
\end{equation} 
As a result, it suffices to focus on justifying the claim \eqref{eq:Q-kh-upper-bound-Qh-star}, which we shall accomplish by induction.
\begin{itemize}
	\item {\em Base case.} 
		Given that the initialization obeys $Q^{1}_h(s, a) = H \ge Q_h^{\star}(s, a)$ 
		for all $(h,s,a)\in [H] \times \cS \times \cA$, 
		the claim \eqref{eq:Q-kh-upper-bound-Qh-star} holds trivially when $k=1$.

	\item {\em Induction.} Suppose that the claim \eqref{eq:Q-kh-upper-bound-Qh-star} holds all the way up to the $k$-th episode,
		and we wish to establish it for the $(k+1)$-th episode as well.  
		To complete the induction argument, it suffices to justify 
\begin{equation*}
	\min \Big\{Q^{\UCB, k+1}_h(s, a), Q^{{\rref}, k+1}_h(s, a)  \Big\} \ge Q_h^{\star}(s, a)
\end{equation*}
		according to line \ref{eq:line-number-11-k} of Algorithm~\ref{algo:vr-k}. 
		Recognizing that $Q^{\UCB, k+1}_h$ is computed via the standard \UCBQ update rule (see line \ref{line:dadv3} of Algorithm~\ref{algo:subroutine}), we can readily invoke the argument in \citet[Lemma 4.3]{jin2018q} to show that with probability at least $1-\delta$,
		$$  Q^{\UCB, k+1}_h(s, a) \ge Q_h^{\star}(s, a) $$
		holds simultaneously for all $(k, h, s, a) \in [K] \times [H] \times \mathcal{S} \times \mathcal{A}$.
		Therefore, it is sufficient to prove that 
		\begin{align}
			Q^{{\rref}, k+1}_h(s, a) \ge Q_h^{\star}(s, a). 
			\label{eq:upper-bound-reference-values}
		\end{align}
\end{itemize}
The remainder of the proof is thus devoted to justifying \eqref{eq:upper-bound-reference-values}, assuming that  the claim \eqref{eq:Q-kh-upper-bound-Qh-star} holds all the way up to $k$.

Since $Q_{h}^{\rref ,k}(s_h^k,a_h^k)$ is updated in the $k$-th episode while other entries of $Q_{h}^{\rref ,k}$ remain fixed, it suffices to verify 
$$Q^{\rref , k+1}_h(s_h^k,a_h^k) \ge Q_h^{\star}(s_h^k,a_h^k).$$ 
We remind the readers of two important short-hand notation that shall be used when it is clear from the context: 
\begin{itemize}
	\item $N_h^k = N_h^k(s_h^k,a_h^k)$ denotes the number of times that the state-action pair $(s_h^k,a_h^k)$ has been visited at step $h$ by the end of the $k$-th episode;
	\item $k^n=k_h^n(s_h^k,a_h^k)$ denotes the index of the episode in which the state-action pair $(s_h^k,a_h^k)$ is visited for the $n$-th time at step $h$. 
\end{itemize}

\paragraph{Step 1: decomposing $Q^{\rref, k+1}_h(s_h^k,a_h^k) - Q_h^{\star}(s_h^k,a_h^k)$.}

To begin with, the above definition of $N_h^k$ and $k^n$ allows us to write
\begin{equation}
	Q_{h}^{\rref ,k + 1}(s_h^k,a_h^k) = Q_{h}^{\rref ,k^{N_h^k}+1}(s_h^k,a_h^k),
	\label{eq:Qh-Qhref-equal-357}
\end{equation}
since $k^{N_h^k} = k^{N_h^k(s_h^k,a_h^k)} =k$.
According to the update rule (i.e., line~\ref{line-Q-ref-W-update} in Algorithm~\ref{algo:vr-k} and line~\ref{line:ref-q-update} in Algorithm~\ref{algo:subroutine}), we obtain 
\begin{align*}
 Q_{h}^{\rref ,k+1}(s_h^k,a_h^k) & =Q_{h}^{\rref ,k^{N_h^k}+1}(s_h^k,a_h^k)=(1-\eta_{N_h^k})Q_{h}^{\rref ,k^{N_h^k}}(s_h^k,a_h^k)\\
 & \qquad+\eta_{N_h^k}\bigg\{ r_{h}(s_h^k,a_h^k)+V_{h+1}^{k^{N_h^k}}(s_{h+1}^{k^{N_h^k}})-V_{h+1}^{\rref ,k^{N_h^k}}(s_{h+1}^{k^{N_h^k}})+\mu_{h}^{\re ,k^{N_h^k}+1}(s_h^k,a_h^k)+b_{h}^{\rref ,k^{N_h^k}+1}\bigg\} \\
 & = (1-\eta_{N_h^k})Q_{h}^{\rref ,k^{N_h^k-1}+1}(s_h^k,a_h^k) \\
& \qquad	+\eta_{N_h^k}\bigg\{ r_{h}(s_h^k,a_h^k)+V_{h+1}^{k^{N_h^k}}(s_{h+1}^{k^{N_h^k}})-V_{h+1}^{\rref ,k^{N_h^k}}(s_{h+1}^{k^{N_h^k}})+\mu_{h}^{\re ,k^{N_h^k}+1}(s_h^k,a_h^k)+b_{h}^{\rref ,k^{N_h^k}+1}\bigg\},	
\end{align*}
where the last identity again follows from our argument for justifying \eqref{eq:Qh-Qhref-equal-357}. 
Applying this relation recursively and invoking the definitions of $\eta_0^{N}$ and $\eta_n^{N}$ in \eqref{equ:learning rate notation}, 
we are left with
\begin{align}
	Q^{\rref , k+1}_h(s_h^k,a_h^k)
	& = \eta_0^{N_h^k} Q^{\rref , 1}_h(s_h^k,a_h^k)   \notag \\
	& \qquad + \sum_{n = 1}^{N_h^k} \eta_n^{N_h^k} \bigg\{ r_h(s_h^k,a_h^k) + V^{k^n}_{h+1}(s^{k^n}_{h+1}) - V^{\rref , k^n}_{h+1}(s^{k^n}_{h+1}) + \mu^{\re , k^n+1}_h(s_h^k,a_h^k) + b^{\rref , k^n+1}_h \bigg\}.
	\label{eq:Q-ref-decompose-12345}
\end{align}
Additionally, the basic relation $\eta_0^{N_h^k} + \sum_{n = 1}^{N_h^k} \eta_n^{N_h^k} = 1$ (see \eqref{equ:learning rate notation} and \eqref{eq:sum-eta-n-N}) tells us that
\begin{equation}\label{eq:decomp_Qhstar_eta}
	Q_{h}^{\star} (s_h^k,a_h^k) = \eta^{N_h^k}_0 Q_{h}^{\star} (s_h^k,a_h^k) + \sum_{n = 1}^{N_h^k} \eta^{N_h^k}_n Q_{h}^{\star} (s_h^k,a_h^k),
\end{equation}
which combined with \eqref{eq:Q-ref-decompose-12345} leads to
\begin{align}
 & Q_{h}^{\rref ,k+1}(s_h^k,a_h^k)-Q_{h}^{\star}(s_h^k,a_h^k)=\eta_{0}^{N_h^k}\big(Q_{h}^{\rref ,1}(s_h^k,a_h^k)-Q_{h}^{\star}(s_h^k,a_h^k)\big)\nonumber\\
 & \qquad+\sum_{n=1}^{N_h^k}\eta_{n}^{N_h^k}\bigg\{ r_{h}(s_h^k,a_h^k)+V_{h+1}^{k^{n}}(s_{h+1}^{k^{n}})-V_{h+1}^{\rref ,k^{n}}(s_{h+1}^{k^{n}})+\mu_{h}^{\re ,k^{n}+1}(s_h^k,a_h^k)+b_{h}^{\rref ,k^{n}+1}-Q_{h}^{\star}(s_h^k,a_h^k)\bigg\}.
	\label{equ:lemma4 vr 1}
\end{align}

To continue, invoking the Bellman optimality equation 
\begin{equation}\label{eq:bellman_opt_hk}
Q^{\star}_h(s_h^k,a_h^k) = r_h(s_h^k,a_h^k) + P_{h, s_h^k,a_h^k} V^\star_{h+1}
\end{equation}
and using the construction of $\mu^{\re }_h$ in line~\ref{eq:line-mu-mean} of Algorithm~\ref{algo:subroutine} (which is the running mean of $V^{\rref }_{h+1}$), we reach
\begin{align}
 & r_{h}(s_{h}^{k},a_{h}^{k})+V_{h+1}^{k^{n}}(s_{h+1}^{k^{n}})-V_{h+1}^{\rref,k^{n}}(s_{h+1}^{k^{n}})+\mu_{h}^{\re,k^{n}+1}(s_{h}^{k},a_{h}^{k})+b_{h}^{\rref,k^{n}+1}-Q_{h}^{\star}(s_{h}^{k},a_{h}^{k})\nonumber\\
 & \qquad=V_{h+1}^{k^{n}}(s_{h+1}^{k^{n}})-V_{h+1}^{\rref,k^{n}}(s_{h+1}^{k^{n}})+\frac{\sum_{i=1}^{n}V_{h+1}^{\rref,k^{i}}(s_{h+1}^{k^{i}})}{n}-P_{h,s_{h}^{k},a_{h}^{k}}V_{h+1}^{\star}+b_{h}^{\rref,k^{n}+1}\label{eq:dejavu}\\
 & \qquad=P_{h,s_{h}^{k},a_{h}^{k}}\left\{ V_{h+1}^{k^{n}}-V_{h+1}^{\rref,k^{n}}\right\} +\frac{\sum_{i=1}^{n}P_{h,s_{h}^{k},a_{h}^{k}}\big(V_{h+1}^{\rref,k^{i}}\big)}{n}-P_{h,s_{h}^{k},a_{h}^{k}}V_{h+1}^{\star}+b_{h}^{\rref,k^{n}+1}+\xi_{h}^{k^{n}},\nonumber\\
	& \qquad=P_{h,s_{h}^{k},a_{h}^{k}}\bigg\{ V_{h+1}^{k^{n}}-V_{h+1}^{\star}+\frac{\sum_{i=1}^{n}\big(V_{h+1}^{\rref,k^{i}}-V_{h+1}^{\rref,k^{n}}\big)}{n}\bigg\}+b_{h}^{\rref,k^{n}+1}+\xi_{h}^{k^{n}}.\label{eq:dejavu-135}
\end{align}
Here, we have introduced the following quantity
\begin{equation}
\xi^{k^n}_h \coloneqq  \big(P^{k^n}_{h} - P_{h, s_h^k,a_h^k} \big)\big(V^{k^n}_{h+1} - V^{\rref , k^n}_{h+1} \big) +  \frac{1}{n}\sum_{i=1}^n \big( P^{k^i}_{h} - P_{h, s_h^k,a_h^k} \big) V^{\rref , k^i}_{h+1} , 
	\label{eq:defn-xi-kh-123}
\end{equation}
with the notation $P^{k}_{h}$ defined in \eqref{eq:P-hk-defn-s}. 
Putting \eqref{eq:dejavu-135} and \eqref{equ:lemma4 vr 1} together leads to the following decomposition
\begin{align}
Q^{\rref , k+1}_h(s_h^k,a_h^k) &- Q^{\star}_h(s_h^k,a_h^k) = \eta^{N_h^k}_0 \Big(Q^{\rref , 1}_h(s_h^k,a_h^k) - Q^{\star}_h(s_h^k,a_h^k) \Big) \nonumber\\
	&\qquad+ \sum_{n = 1}^{N_h^k} \eta^{N_h^k}_n \bigg\{ P_{h, s_h^k,a_h^k} \bigg( V^{k^n}_{h+1} - V^{\star}_{h+1} 
	+ \frac{\sum_{i =1}^n \big(V^{\rref , k^i}_{h+1} - V^{\rref , k^n}_{h+1}\big)}{n} \bigg) 
	+ b^{\rref , k^n+1}_h + \xi^{k^n}_h \bigg\}. 
	\label{equ:concise update}
\end{align}

\paragraph{Step 2: two key quantities for lower bounding $Q^{\rref , k+1}_h(s_h^k,a_h^k) - Q^{\star}_h(s_h^k,a_h^k)$.}

In order to develop a lower bound on $Q^{\rref , k+1}_h(s_h^k,a_h^k) - Q^{\star}_h(s_h^k,a_h^k)$ based on the decomposition \eqref{equ:concise update}, 
we make note of several simple facts as follows.
\begin{itemize}
	\item[(i)] The initialization satisfies $Q^{\rref , 1}_h(s_h^k,a_h^k) - Q^{\star}_h(s_h^k,a_h^k) \geq 0$.

	\item[(ii)] For any $1 \leq k^n \leq k$, one has
		\begin{equation}
			V^{k^n}_{h+1}\geq V^{\star}_{h+1},
			\label{eq:V-kn-V-star-dominate-135}
		\end{equation}
		owing to the induction hypotheses \eqref{eq:Q-kh-upper-bound-Qh-star} and \eqref{equ:v-upper-vstar} that hold up to $k$.

	\item[(iii)] For all $0 \leq i \leq n$ and any $s\in \cS$, one has 
		\begin{equation}
			V^{\rref , k^i}_{h+1}(s) - V^{\rref , k^n}_{h+1}(s) \geq 0,
			\label{eq:V-ref-monotone-135}
		\end{equation}
		which holds since the reference value $V^{\rref }_h(s)$ is monotonically non-increasing in view of the monotonicity of $V_h(s)$ in \eqref{eq:monotonicity_Vh} and the update rule in line~\ref{eq:line-number-16} of Algorithm~\ref{algo:vr-k}.

\end{itemize}
The above three facts taken collectively with \eqref{equ:concise update} allow one to drop several terms and yield
\begin{equation}
	Q^{\rref , k+1}_h(s_h^k,a_h^k) - Q_h^{\star}(s_h^k,a_h^k)  
	\geq \sum_{n = 1}^{N_h^k} \eta^{N_h^k}_n \big( b^{\rref , k^n+1}_h + \xi^{k^n}_h \big). 
	\label{eq:Qref-Q-diff-LB-123}
\end{equation}
In the sequel, we aim to establish $Q^{\rref , k+1}_h(s_h^k,a_h^k) \geq Q_h^{\star}(s_h^k,a_h^k)$ based on this inequality \eqref{eq:Qref-Q-diff-LB-123}.

As it turns out, if one could show that
\begin{equation}
	\bigg| \sum_{n = 1}^{N_h^k} \eta^{N_h^k}_n \xi_h^{k^n} \bigg| \le 
	\sum_{n = 1}^{N_h^k} \eta^{N_h^k}_n b^{\rref , k^n+1}_h ,
	\label{eq:claim-sum-eta-sum-b}
\end{equation}
then taking this together with \eqref{eq:Qref-Q-diff-LB-123} and the triangle inequality would immediately lead to the desired result
\begin{equation}
	Q^{\rref , k+1}_h(s_h^k,a_h^k) - Q_h^{\star}(s_h^k,a_h^k)  \geq  \sum_{n = 1}^{N_h^k} \eta^{N_h^k}_n  b^{\rref , k^n+1}_h  - \bigg| \sum_{n = 1}^{N_h^k} \eta^{N_h^k}_n  \xi^{k^n}_h \bigg|  \geq  0. 
\end{equation}
As a result, the remaining steps come down to justifying the claim \eqref{eq:claim-sum-eta-sum-b}.
In order to do so, we need to control the following two quantities (in view of \eqref{eq:defn-xi-kh-123})
\begin{subequations}
\label{eq:defn-I1-I2-step2}
\begin{align}
	I_1 &\defn  \sum_{n=1}^{N_h^k} \eta_n^{N_h^k} \big(P^{k^n}_{h} - P_{h, s_h^k,a_h^k} \big)\big(V^{k^n}_{h+1} - V^{\rref , k^n}_{h+1} \big),  \label{eq:defn-I1-step2} \\
	 I_2 &\defn \sum_{n=1}^{N_h^k} \frac{1}{n} \eta_n^{N_h^k} \sum_{i=1}^n \big(P^{k^i}_{h} - P_{h, s_h^k,a_h^k} \big)V^{\rref , k^i}_{h+1} 
	 \label{eq:defn-I2-step2}
\end{align}
\end{subequations}
separately, which constitutes the next two steps. As will be seen momentarily, these two terms can be controlled in a similar fashion using Freedman's inequality.

\paragraph{Step 3: controlling $I_1$.}
In the following, we intend to invoke Lemma~\ref{lemma:martingale-union-all} to control the term $I_1$ defined in \eqref{eq:defn-I1-step2}. 
To begin with, consider any $(N,h) \in [K] \times [H]$, and introduce
\begin{align}
	W_{h+1}^i \coloneqq V^{i}_{h+1} - V^{\rref , i}_{h+1}
	\qquad \text{and} \qquad
	u_h^i(s,a, N) \coloneqq \eta_{N_h^i(s,a)}^N \geq 0. \label{I1-wu-step3}
\end{align}
Accordingly, we can derive and define
\begin{align}
	\|W_{h+1}^i\|_\infty \leq \|V^{\rref , i}_{h+1}\|_{\infty} + \|V^{i}_{h+1}\|_{\infty} \leq 2H \eqqcolon C_{\mathrm{w}}, \label{I1-cw}
\end{align}
and
\begin{align}
	 \max_{N, h, s, a\in [K] \times[H]\times \cS\times \cA} \eta_{N_{h}^{i}(s,a)}^{N} \leq \frac{2H}{N} \eqqcolon C_{\mathrm{u}}, \label{I1-cu}
\end{align}
where the last inequality follows since (according to Lemma~\ref{lemma:property of learning rate} and the definition in \eqref{equ:learning rate notation}) 
\begin{align*}
	\eta_{N_{h}^{i}(s,a)}^{N} & \leq\frac{2H}{N},\qquad\text{if }1\le N_{h}^{i}(s,a)\le N;\\
	\eta_{N_{h}^{i}(s,a)}^{N} & =0,\qquad\quad ~\text{if }N_{h}^{i}(s,a)>N.
\end{align*}
Moreover, observed from \eqref{eq:sum-eta-n-N}, we have 
\begin{align}
	0 \leq \sum_{n=1}^N u_h^{k_h^n(s,a)}(s,a, N) = \sum_{n=1}^N \eta_{n}^N \leq 1 \label{eq:sum-uh-k-s-a-123}
\end{align}
holds for all $(N,s,a) \in [K] \times \cS\times \cA$.
Therefore, choosing $(N,s,a)=(N^k_h,s_h^k, a_h^k)$ and  
applying Lemma~\ref{lemma:martingale-union-all} with the quantities \eqref{I1-wu-step3} 
implies that, with probability at least $1-\delta$,  
\begin{align}
	|I_1| &= \left|\sum_{n=1}^{N_h^k} \eta_n^{N_h^k} \big(P^{k^n}_{h} - P_{h, s_h^k,a_h^k} \big)\big(V^{k^n}_{h+1} - V^{\rref , k^n}_{h+1} \big)\right| = \left|\sum_{i=1}^k X_i(s_h^k,a_h^k, h, N_h^k)\right| \nonumber\\
	&\lesssim \sqrt{C_{\mathrm{u}} \log^2\frac{SAT}{\delta}}\sqrt{\sum_{n = 1}^{N_h^k} u_h^{k^n}(s_h^k,a_h^k, N_h^k) \Var_{h, s_h^k,a_h^k} \big(W_{h+1}^{k^n}  \big)} + \left(C_{\mathrm{u}} C_{\mathrm{w}} + \sqrt{\frac{C_{\mathrm{u}}}{N}} C_{\mathrm{w}}\right) \log^2\frac{SAT}{\delta} \nonumber \\
	& \asymp \sqrt{\frac{H}{N_h^k}\log^2\frac{SAT}{\delta}}\sqrt{\sum_{n = 1}^{N_h^k} \eta^{N_h^k}_n \Var_{h, s_h^k,a_h^k} \big(V^{k^n}_{h+1} - V^{\rref , k^n}_{h+1} \big) } + \frac{H^2\log^2\frac{SAT}{\delta}}{N_h^k} \label{lemma1:equ4-sub} \\
	&  \lesssim \sqrt{\frac{H}{N_h^k}\log^2\frac{SAT}{\delta}}\sqrt{\sigma^{\adv, k^{N_h^k}+1}_h(s_h^k,a_h^k) - \big(\mu^{\adv, k^{N_h^k}+1}_h(s_h^k,a_h^k)\big)^2} + \frac{H^2\log^2\frac{SAT}{\delta}}{(N_h^k)^{3/4}}, 
	\label{lemma1:equ4}
\end{align}
where the proof of the last inequality \eqref{lemma1:equ4} needs additional explanation and is postponed to Appendix~\ref{sec:proof:eq:var-V-Vref} to streamline the presentation.

\paragraph{Step 4: controlling $I_2$.}

Next, we turn attention to the quantity $I_2$ defined in \eqref{eq:defn-I2-step2}. 
Rearranging terms in the definition \eqref{eq:defn-I2-step2}, we are left with
\begin{align*}
I_2 = \sum_{n = 1}^{N_h^k} \eta^{N_h^k}_n\frac{\sum_{i =1}^n \big(P^{k^i}_{h} - P_{h, s_h^k,a_h^k} \big)V^{\rref , k^i}_{h+1}}{n} 
	 = \sum_{i = 1}^{N_h^k} \left( \sum_{n = i}^{N_h^k} \frac{\eta^{N_h^k}_n}{n} \right) \big(P^{k^i}_{h} - P_{h, s_h^k,a_h^k}\big)V^{\rref , k^i}_{h+1}, 
\end{align*}
which can again be controlled by invoking Lemma~\ref{lemma:martingale-union-all}. To do so, we abuse the notation by taking
\begin{align}
	W_{h+1}^i \coloneqq V^{\rref, i}_{h+1} 
	\qquad \text{and} \qquad
	u_h^i(s,a, N) \coloneqq \sum_{n=N_h^i(s,a)}^N \frac{\eta_n^N}{n} \geq 0. \label{I2-wu-step4}
\end{align}
These quantities satisfy
\begin{align}
	  \big\|W_{h+1}^i\big\|_\infty \leq \big\|V^{\rref , i}_{h+1}\big\|_\infty \leq H \eqqcolon C_{\mathrm{w}} \label{I2-cw-step4}
\end{align}
and, according to Lemma~\ref{lemma:property of learning rate}, 
\begin{align}
	 \max_{N, h, s, a\in [K] \times[H]\times \cS\times \cA} \sum_{n=N_h^i(s,a)}^N \frac{\eta_n^N}{n} \leq \sum_{n = 1}^{N} \frac{\eta^{N}_n}{n} \le \frac{2}{N} \eqqcolon C_{\mathrm{u}} . \label{I2-cu-step4}
\end{align}

Then it is readily seen from \eqref{I2-cu-step4} that  
\begin{align}
	0 \leq \sum_{n=1}^N u_h^{k_h^n(s,a)}(s,a, N) \leq \sum_{n=1}^{N}\frac{2}{N}\leq 2. \label{eq:sum-uh-k-s-a-123}
\end{align}
holds for all $(N,s,a) \in [K] \times \cS\times \cA$.

With the above relations in mind, Taking $(N,s,a)=(N_h^k,s_h^k,a_h^k)$ and 
applying Lemma~\ref{lemma:martingale-union-all} w.r.t.~the quantities \eqref{I2-wu-step4} reveals that
\begin{align}
| I_2 |&= \bigg|\sum_{i = 1}^{N_h^k} \sum_{n = i}^{N_h^k} \frac{\eta^{N_h^k}_n}{n} \big(P^{k^i}_{h} - P_{h, s_h^k,a_h^k}\big)V^{\rref , k^i}_{h+1}\bigg| = \bigg|\sum_{i=1}^k X_i(s_h^k,a_h^k,h,N_h^k)\bigg|\\
	& \lesssim \sqrt{C_{\mathrm{u}} \log^2\frac{SAT}{\delta}}\sqrt{\sum_{n = 1}^{N_h^k} u_h^{k^n}(s_h^k,a_h^k, N_h^k) \Var_{h, s_h^k,a_h^k} \big(W_{h+1}^{k^n}  \big)} + \left(C_{\mathrm{u}} C_{\mathrm{w}} + \sqrt{\frac{C_{\mathrm{u}}}{N}} C_{\mathrm{w}}\right) \log^2\frac{SAT}{\delta} \nonumber\\
	&  \lesssim \sqrt{\frac{1}{N_h^k}\log^2\frac{SAT}{\delta}}\sqrt{\frac{1}{N_h^k}\sum_{n = 1}^{N_h^k} \Var_{h, s_h^k,a_h^k}\big(V^{\rref , k^n}_{h+1}\big)} + \frac{H}{N_h^k} \log^2\frac{SAT}{\delta} \label{lemma1:equ5-sub}\\
	&  \lesssim \sqrt{\frac{1}{N_h^k}\log^2\frac{SAT}{\delta}}\sqrt{ \sigma^{\re , k^{N_h^k}+1}_h(s_h^k,a_h^k) - \big(\mu^{\re , k^{N_h^k}+1}_h(s_h^k,a_h^k) \big)^2} + \frac{H}{(N_h^k)^{3/4}}\log^2\frac{SAT}{\delta}
	\label{lemma1:equ5}
\end{align}
with probability exceeding $1- \delta$, where the proof of the last inequality \eqref{lemma1:equ5} is deferred to Appendix~\ref{sec:proof:eq:var-Vref} in order to streamline presentation.

\paragraph{Step 5: combining the above bounds.}

Summing up the results in \eqref{lemma1:equ4} and \eqref{lemma1:equ5}, 
we arrive at an upper bound on $\big|\sum_{n = 1}^{N_h^k} \eta^{N_h^k}_n \xi_h^{k^n} \big|$ as follows:
\begin{align}
	\bigg| \sum_{n = 1}^{N_h^k} \eta^{N_h^k}_n \xi_h^{k^n} \bigg|  &\leq |I_1| +|I_2| \notag\\
	& \lesssim~\sqrt{\frac{H}{N_h^k}\log^2\frac{SAT}{\delta}}\sqrt{\sigma^{\adv, k^{N_h^k}+1}_h(s_h^k,a_h^k) - \big(\mu^{\adv, k^{N_h^k}+1}_h(s_h^k,a_h^k) \big)^2} \nonumber \\
	&\qquad + \sqrt{\frac{1}{N_h^k}\log^2\frac{SAT}{\delta}}\sqrt{\sigma^{\re , k^{N_h^k}+1}_h(s_h^k,a_h^k) - \big(\mu^{\re , k^{N_h^k}+1}_h(s_h^k,a_h^k) \big)^2} + \frac{H^2\log^2\frac{SAT}{\delta}}{(N_h^k)^{3/4}}  \nonumber\\
	& \leq \sumb^{\rref , k^{N_h^k}+1}_h(s_h^k,a_h^k) + \cb \frac{H^2\log^2\frac{SAT}{\delta}}{(N_h^k)^{3/4}} 
	\label{lemma1:equ6}
\end{align}
for some sufficiently large constant $\cb>0$, 
where the last line follows from the definition of $\sumb^{\rref , k^{N_h^k}+1}_h(s_h^k,a_h^k)$ in line~\ref{eq:line-number-19} of Algorithm~\ref{algo:subroutine}. 

In order to establish the desired bound \eqref{eq:claim-sum-eta-sum-b}, we still need to control the sum $\sum_{n = 1}^{N_h^k} \eta^{N_h^k}_nb^{\rref , k^n+1}_h$. 
Towards this end,
the definition of $b^{\rref , k^n+1}_h$ (resp.~$\delta_h^{\rref}$) in line~\ref{line:bonus_2} (resp.~line~\ref{eq:line-delta}) of Algorithm~\ref{algo:subroutine} yields
\begin{align}
 & b_{h}^{\rref,k^{n}+1}=\Big(1-\frac{1}{\eta_{n}}\Big)\sumb_{h}^{\rref,k^{n}}(s_{h}^{k},a_{h}^{k})
 +\frac{1}{\eta_{n}}\sumb_{h}^{\rref,k^{n}+1}(s_{h}^{k},a_{h}^{k})+\frac{\cb}{n^{3/4}}H^{2}\log^{2}\frac{SAT}{\delta}.
	\label{eq:bh-ref-kn-identity}
\end{align}
This taken collectively with the definition \eqref{equ:learning rate notation} of $\eta_n^N$ allows us to expand
\begin{align}
& \sum_{n = 1}^{N_h^k} \eta^{N_h^k}_nb^{\rref , k^n+1}_h \nonumber \\
&=\sum_{n = 1}^{N_h^k} \eta_n \prod_{i = n+1}^{N_h^k}(1-\eta_i) \left( \Big(1-\frac{1}{\eta_n}\Big) \sumb^{\rref , k^n}_h(s_h^k,a_h^k) + \frac{1}{\eta_n} \sumb^{\rref , k^n+1}_h(s_h^k,a_h^k) \right) + \cb \sum_{n=1}^{N_h^k}\frac{ \eta_n^{N_h^k} }{n^{3/4}}  H^2\log^2\frac{SAT}{\delta}\nonumber \\
&=\sum_{n = 1}^{N_h^k}  \prod_{i = n+1}^{N_h^k}(1-\eta_i) \left( - \big(1-\eta_n\big) \sumb^{\rref , k^n}_h(s_h^k,a_h^k) + \sumb^{\rref , k^n+1}_h(s_h^k,a_h^k) \right) + \cb \sum_{n=1}^{N_h^k}\frac{ \eta_n^{N_h^k} }{n^{3/4}}  H^2\log^2\frac{SAT}{\delta}\nonumber \\
&= \sum_{n = 1}^{N_h^k} \left(\prod_{i = n+1}^{N_h^k}(1-\eta_i) \sumb^{\rref , k^n+1}_h(s_h^k,a_h^k) - \prod_{i = n}^{N_h^k}(1-\eta_i) \sumb^{\rref , k^n}_h(s_h^k,a_h^k)\right) + \cb \sum_{n=1}^{N_h^k}\frac{ \eta_n^{N_h^k} }{n^{3/4}}  H^2\log^2\frac{SAT}{\delta}  \nonumber \\
&\overset{\mathrm{(i)}}{=} \sum_{n=1}^{N_h^k}  \prod_{i = n+1}^{N_h^k}(1-\eta_i) \sumb^{\rref , k^n+1}_h(s_h^k,a_h^k)  
- \sum_{n=2}^{N_h^k} \prod_{i = n}^{N_h^k}(1-\eta_i) \sumb^{\rref , k^n}_h(s_h^k,a_h^k) + \cb \sum_{n=1}^{N_h^k}\frac{ \eta_n^{N_h^k} }{n^{3/4}}  H^2\log^2\frac{SAT}{\delta} \nonumber \\
&\overset{\mathrm{(ii)}}{=}\sum_{n=1}^{N_h^k}  \prod_{i = n+1}^{N_h^k}(1-\eta_i) \sumb^{\rref , k^n+1}_h(s_h^k,a_h^k)  - \sum_{n=1}^{N_h^k-1} \prod_{i = n+1}^{N_h^k}(1-\eta_i) \sumb^{\rref , k^n+1}_h(s_h^k,a_h^k) + \cb \sum_{n=1}^{N_h^k}\frac{ \eta_n^{N_h^k} }{n^{3/4}}  H^2\log^2\frac{SAT}{\delta} \nonumber \\
&=\sumb^{\rref , k^{N_h^k}+1}_h(s_h^k,a_h^k) + \cb \sum_{n=1}^{N_h^k}\frac{ \eta_n^{N_h^k} }{n^{3/4}}  H^2\log^2\frac{SAT}{\delta} . 
	\label{eq:eta-b-ref-sum-identity-135}
\end{align}
Here,  
(i) is valid due to the fact that $\sumb_h^{\rref , k^1}(s_h^k,a_h^k) = 0$;
 (ii) follows from the fact that 
\begin{align*}
\sum_{n=2}^{N_{h}^{k}}\prod_{i=n}^{N_{h}^{k}}(1-\eta_{i})\sumb_{h}^{\rref,k^{n}}(s_{h}^{k},a_{h}^{k}) & =\sum_{n=1}^{N_{h}^{k}-1}\prod_{i=n+1}^{N_{h}^{k}}(1-\eta_{i})\sumb_{h}^{\rref,k^{n+1}}(s_{h}^{k},a_{h}^{k})\\
 & =\sum_{n=1}^{N_{h}^{k}-1}\prod_{i=n+1}^{N_{h}^{k}}(1-\eta_{i})\sumb_{h}^{\rref,k^{n}+1}(s_{h}^{k},a_{h}^{k}), 
\end{align*}
where the first relation can be seen by replacing $n$ with $n+1$, and the last relation holds true since the state-action pair 
$(s_h^k,a_h^k)$ has not been visited at step $h$ between the $(k^n +1)$-th episode and the $(k^{n+1} - 1)$-th episode.
Combining the above identity \eqref{eq:eta-b-ref-sum-identity-135} with the following property 
(see Lemma~\ref{lemma:property of learning rate}) $$\frac{1}{(N_h^k)^{3/4}} \le \sum_{n = 1}^{N_h^k} \frac{\eta^{N_h^k}_n}{n^{3/4}} \le \frac{2}{(N_h^k)^{3/4}},$$  
we can immediately demonstrate that
\begin{equation}
	\sumb^{\rref , k^{N_h^k}+1}_h(s_h^k,a_h^k) + \cb\frac{H^2\log^2\frac{SAT}{\delta}}{(N_h^k)^{3/4}} \le \sum_{n = 1}^{N_h^k} \eta^{N_h^k}_nb^{\rref , k^n+1}_h \le \sumb^{\rref , k^{N_h^k}+1}_h(s_h^k,a_h^k) + 2\cb\frac{H^2\log^2\frac{SAT}{\delta}}{(N_h^k)^{3/4}}. 
	\label{lemma1:equ10}
\end{equation}
Taking  \eqref{lemma1:equ6} and \eqref{lemma1:equ10} collectively demonstrates that
\begin{equation}
	\bigg| \sum_{n = 1}^{N_h^k} \eta^{N_h^k}_n \xi^{k^n}_h \bigg| 
	\le \sumb^{\rref , k^{N_h^k}+1}_h(s_h^k,a_h^k) + \cb\frac{H^2\log^2\frac{SAT}{\delta}}{(N_h^k)^{3/4}} \le \sum_{n = 1}^{N_h^k} \eta^{N_h^k}_n b^{\rref , k^n+1}_h \label{lemma1:equ7}
\end{equation}
as claimed in \eqref{eq:claim-sum-eta-sum-b}.  We have thus concluded the proof of Lemma~\ref{lem:Qk-lower} based on the argument in Step 2.

\subsubsection{Proof of the inequality \eqref{lemma1:equ4}}\label{sec:proof:eq:var-V-Vref}

In order to establish the inequality \eqref{lemma1:equ4},
it suffices to look at the following term 
\begin{align}
	I_3 := \sum_{n = 1}^{N_h^k} \eta^{N_h^k}_n\Var_{h, s_h^k,a_h^k}\big(V^{k^n}_{h+1} - {V}^{\rref, k^n}_{h+1}\big) - {\sigma}^{\adv, k^{N_h^k}+1}_h(s_h^k,a_h^k) + \big({\mu}^{\adv, k^{N_h^k}+1}_h(s_h^k,a_h^k) \big)^2,
	\label{eq:defn-I3-lem-equ4}
\end{align}
which forms the main content of this subsection.

First of all,  the update rules of $\mu^{\adv, k^{n+1}}_h$ and $\sigma^{\adv, k^{n+1}}_h $  in lines~\ref{line:advmu_h}-\ref{line:advsigma_h} of Algorithm~\ref{algo:subroutine} tell us that
\begin{align*}
\mu^{\adv, k^{n+1}}_h(s_h^k,a_h^k) &= \mu^{\adv, k^n+1}_h(s_h^k,a_h^k) 
	= (1-\eta_n)\mu^{\adv, k^n}_h(s_h^k,a_h^k) + \eta_n \big( V^{k^n}_{h+1}(s^{k^n}_{h+1}) - V^{\rref, k^n}_{h+1}(s^{k^n}_{h+1}) \big) ,\\ 
\sigma^{\adv, k^{n+1}}_h(s_h^k,a_h^k) &= \sigma^{\adv, k^n+1}_h(s_h^k,a_h^k) 
	= (1-\eta_n)\sigma^{\adv, k^n}_h(s_h^k,a_h^k) + \eta_n \big(V^{k^n}_{h+1}(s^{k^n}_{h+1}) - V^{\rref, k^n}_{h+1}(s^{k^n}_{h+1}) \big)^2.
\end{align*} 
Applying this relation recursively and invoking the definitions of  $\eta_n^{N}$ (resp.~$P_h^{k}$) 
in \eqref{equ:learning rate notation} (resp.~\eqref{eq:P-hk-defn-s}) give
\begin{subequations} \label{eq:recursion_mu_sigma_adv}
\begin{align}
	\mu_{h}^{\adv,k^{N_{h}^{k}}+1}(s_{h}^{k},a_{h}^{k}) & \overset{\mathrm{(i)}}{=}\sum_{n=1}^{N_{h}^{k}}\eta_{n}^{N_{h}^{k}}\big(V_{h+1}^{k^{n}}(s_{h+1}^{k^{n}})-{V}_{h+1}^{\rref,k^{n}}(s_{h+1}^{k^{n}})\big)=\sum_{n=1}^{N_{h}^{k}}\eta_{n}^{N_{h}^{k}}P_{h}^{k^{n}}\big(V_{h+1}^{k^{n}}-{V}_{h+1}^{\rref,k^{n}}\big),\\
{\sigma}_{h}^{\adv,k^{N_{h}^{k}}+1}(s_{h}^{k},a_{h}^{k}) & \overset{\mathrm{(ii)}}{=} \sum_{n=1}^{{N_{h}^{k}}}\eta_{n}^{N_{h}^{k}}\big(V_{h+1}^{k^{n}}(s_{h+1}^{k^{n}})-{V}_{h+1}^{\rref,k^{n}}(s_{h+1}^{k^{n}})\big)^{2}=\sum_{n=1}^{{N_{h}^{k}}}\eta_{n}^{N_{h}^{k}}P_{h}^{k^{n}}\big(V_{h+1}^{k^{n}}-{V}_{h+1}^{\rref,k^{n}}\big)^{2}. 
\end{align}
\end{subequations}
Recognizing that $\sum_{n=1}^{{N_{h}^{k}}}\eta_{n}^{N_{h}^{k}}=1$ (see \eqref{eq:sum-eta-n-N}), we can immediately apply Jensen's inequality to the expressions (i) and (ii) to yield 
\begin{align}\label{equ:positive-of-ref}
{\sigma}_h^{\adv, k^{N_h^k}+1}(s_h^k,a_h^k) \geq \Big(\mu_h^{\adv, k^{N_h^k}+1}(s_h^k,a_h^k)\Big)^2.
\end{align}
Further, in view of the definition \eqref{lemma1:equ2}, we have
\begin{equation*}
\Var_{h, s_h^k,a_h^k}\big(V_{h+1}^{k^n} - {V}_{h+1}^{\rref, k^n} \big) 
= P_{h, s_h^k,a_h^k} \big(V_{h+1}^{k^n} - {V}_{h+1}^{\rref, k^n}\big)^{2} - \left(P_{h, s_h^k,a_h^k} \big(V_{h+1}^{k^n} - {V}_{h+1}^{\rref, k^n} \big) \right)^2,
\end{equation*}
which allows one to decompose and bound $I_3$ as follows 
\begin{align}
I_3 & = \sum_{n = 1}^{N_h^k} \eta^{N_h^k}_n P_{h, s_h^k,a_h^k} \big(V_{h+1}^{k^n} - {V}_{h+1}^{\rref, k^n}\big)^{ 2} - \sum_{n = 1}^{{N_h^k}} \eta_n^{N_h^k} P_h^{k^n} \big(V_{h+1}^{k^n} - {V}_{h+1}^{\rref, k^n} \big)^2  \nonumber \\
&\qquad \qquad+ \Bigg(\sum_{n = 1}^{{N_h^k}} \eta_n^{N_h^k} P^{k^n}_{h} \big(V^{k^n}_{h+1} - {V}^{\rref, k^n}_{h+1} \big)\Bigg)^2  - \sum_{n = 1}^{{N_h^k}} \eta_n^{N_h^k} \Big(P_{h, s_h^k,a_h^k} \big(V^{k^n}_{h+1} - {V}^{\rref, k^n}_{h+1} \big)\Big)^2 \nonumber \\
	& \le  \underbrace{ \Bigg|\sum_{n = 1}^{{N_h^k}} \eta_n^{N_h^k} \big( P^{k^n}_{h}-P_{h, s_h^k,a_h^k} \big) \big(V^{k^n}_{h+1} 
	- {V}^{\rref, k^n}_{h+1} \big)^{ 2} \Bigg| }_{\eqqcolon I_{3,1}}  \nonumber \\
	&\qquad \qquad+ \underbrace{ \bigg(\sum_{n = 1}^{{N_h^k}} \eta_n^{N_h^k} P^{k^n}_{h} \big(V^{k^n}_{h+1} - {V}^{\rref, k^n}_{h+1} \big)\bigg)^2  - \sum_{n = 1}^{{N_h^k}} \eta_n^{N_h^k} \Big(P_{h, s_h^k,a_h^k} \big(V^{k^n}_{h+1} - {V}^{\rref, k^n}_{h+1} \big)\Big)^2}_{\eqqcolon I_{3,2}} . 
	\label{equ:lemma4 vr 2}
\end{align}
It then boils down to controlling the above two terms in \eqref{equ:lemma4 vr 2} separately.

\paragraph{Step 1: bounding $I_{3,1}$.} To upper bound the term $I_{3,1}$ in \eqref{equ:lemma4 vr 2}, we resort to Lemma~\ref{lemma:martingale-union-all} by setting
\begin{align}
	W_{h+1}^i \coloneqq \big(V^{i}_{h+1} - {V}^{\rref, i}_{h+1} \big)^{ 2}
	\qquad \text{and} \qquad
	u_h^i(s,a, N) \coloneqq \eta_{N_h^i(s,a)}^N. 
	\label{I31-wu}
\end{align}
It is easily seen that
\begin{align}
	\|W_{h+1}^i\|_\infty \leq  \Big( \big\| V^{\rref , i}_{h+1} \big\|_{\infty} + \big\|V^{i}_{h+1} \big\|_\infty \Big)^2 
	 \leq 4H^2 \eqqcolon C_{\mathrm{w}}, \label{I31-cw}
\end{align}
and it follows from \eqref{I1-cu} that
\begin{align}
	  \max_{N, h, s, a\in [K] \times[H]\times \cS\times \cA} \eta_{N_{h}^{i}(s,a)}^{N} \leq \frac{2H}{N} \eqqcolon C_{\mathrm{u}}.
	  \label{I31-cu}
\end{align}
Armed with the properties \eqref{I31-cw} and \eqref{I31-cu} and recalling \eqref{eq:sum-uh-k-s-a-123},
we can invoke Lemma~\ref{lemma:martingale-union-all} w.r.t.~\eqref{I31-wu} and set $(N,s,a)=(N_h^k, s_h^k, a_h^k)$ to yield
\begin{align}
	I_{3,1} &= \Bigg|\sum_{n = 1}^{{N_h^k}} \eta_n^{N_h^k} \big(P^{k^n}_{h}-P_{h, s_h^k,a_h^k} \big)\big(V^{k^n}_{h+1} - {V}^{\rref, k^n}_{h+1}\big)^2\Bigg| = \left|\sum_{i=1}^k X_i(s_h^k,a_h^k,h, N_h^k)\right| \nonumber\\
	&\lesssim \sqrt{C_{\mathrm{u}} \log^2\frac{SAT}{\delta}}\sqrt{\sum_{n = 1}^{N_h^k} u_h^{k^n}(s_h^k,a_h^k, N_h^k) \Var_{h, s_h^k,a_h^k} \big(W_{h+1}^{k^n}  \big)} + \left(C_{\mathrm{u}} C_{\mathrm{w}} + \sqrt{\frac{C_{\mathrm{u}}}{N}} C_{\mathrm{w}}\right) \log^2\frac{SAT}{\delta} \nonumber \\
	& \lesssim \sqrt{\frac{H}{N_h^k}\log^2\frac{SAT}{\delta}}\sqrt{\sum_{n = 1}^{N_h^k} \eta^{N_h^k}_n \Var_{h, s_h^k,a_h^k} \Big( \big(V^{k^n}_{h+1} - V^{\rref , k^n}_{h+1} \big)^2 \Big) } + \frac{H^3\log^2\frac{SAT}{\delta}}{N_h^k} \nonumber \\
	&\lesssim \sqrt{\frac{H^5}{{N_h^k}}\log^2\frac{SAT}{\delta}} + \frac{H^3}{{N_h^k}}\log^2\frac{SAT}{\delta} \label{equ:lemma4 vr 3}
\end{align}
with probability at least $1-\delta$. 
Here, the last inequality results from the fact $\sum_{n = 1}^{N_h^k} \eta^{N_h^k}_n \leq 1$ (see \eqref{eq:sum-eta-n-N}) and the following trivial result:
\begin{align}
	\Var_{h, s_h^k,a_h^k} \Big(\big(V^{k^n}_{h+1} - {V}^{\rref, k^n}_{h+1}\big)^2\Big)
	\leq \big\|\big(V^{k^n}_{h+1} - {V}^{\rref, k^n}_{h+1} \big)^{ 4} \big\|_\infty 
	\leq 16H^4.
\end{align}

\paragraph{Step 2: bounding $I_{3,2}$.} Jensen's inequality tells us that
\begin{align*}
\Bigg(\sum_{n=1}^{{N_{h}^{k}}}\eta_{n}^{N_{h}^{k}}P_{h,s_{h}^{k},a_{h}^{k}} \big(V_{h+1}^{k^{n}}-{V}_{h+1}^{\rref,k^{n}} \big)\Bigg)^{2} 
	& = \Bigg(\sum_{n=1}^{{N_{h}^{k}}} \big( \eta_{n}^{N_{h}^{k}}\big)^{1/2} \cdot \big( \eta_{n}^{N_{h}^{k}}\big)^{1/2} P_{h,s_{h}^{k},a_{h}^{k}} \big(V_{h+1}^{k^{n}}-{V}_{h+1}^{\rref,k^{n}} \big)\Bigg)^{2} \\
& \le\left\{ \sum_{n=1}^{{N_{h}^{k}}}\eta_{n}^{N_{h}^{k}}\right\} \left\{ \sum_{n=1}^{{N_{h}^{k}}}\eta_{n}^{N_{h}^{k}}\Big(P_{h,s_{h}^{k},a_{h}^{k}} \big(V_{h+1}^{k^{n}}-{V}_{h+1}^{\rref,k^{n}}\big)\Big)^{2}\right\} \\ 
 & \leq \sum_{n=1}^{{N_{h}^{k}}}\eta_{n}^{N_{h}^{k}}\Big(P_{h,s_{h}^{k},a_{h}^{k}}\big(V_{h+1}^{k^{n}}-{V}_{h+1}^{\rref,k^{n}}\big)\Big)^{2},
\end{align*}
where the last line arises from \eqref{eq:sum-eta-n-N}. 
Substitution into $I_{3,2}$ (cf.~\eqref{equ:lemma4 vr 2}) gives 
\begin{align}
	I_{3,2} &\le \bigg(\sum_{n = 1}^{{N_h^k}} \eta_n^{N_h^k} P^{k^n}_{h}\big(V^{k^n}_{h+1} - {V}^{\rref, k^n}_{h+1}\big) \bigg)^2 
	- \bigg(\sum_{n = 1}^{{N_h^k}} \eta_n^{N_h^k} P_{h, s_h^k,a_h^k}\big(V^{k^n}_{h+1} - {V}^{\rref, k^n}_{h+1}\big)\bigg)^2 \nonumber\\
	& = \bigg\{\sum_{n = 1}^{{N_h^k}} \eta_n^{N_h^k} \big(P^{k^n}_{h}-P_{h, s_h^k,a_h^k}\big) \big(V^{k^n}_{h+1} - {V}^{\rref, k^n}_{h+1}\big)\bigg\}   \bigg\{ \sum_{n = 1}^{{N_h^k}} \eta_n^{N_h^k} \big(P^{k^n}_{h}+P_{h, s_h^k,a_h^k}\big) \big(V^{k^n}_{h+1} - {V}^{\rref, k^n}_{h+1}\big)\bigg\}  . \label{equ:lemma4vr4}
\end{align}
In what follows, we would like to use this relation to show that
\begin{equation} \label{eq:bound_I32}
	I_{3,2} \leq C_{32}  \bigg\{ \sqrt{\frac{H^5}{{N_h^k}}\log^2\frac{SAT}{\delta}} + \frac{H^3}{{N_h^k}}\log^2\frac{SAT}{\delta}  \bigg\}
\end{equation}
for some universal constant $C_{32}>0$. 

If $I_{3,2}\leq 0$, then \eqref{eq:bound_I32} holds true trivially. Consequently, it is sufficient to study the case where $I_{3,2}> 0$.   
To this end, we first note that the term in the first pair of curly brakets of \eqref{equ:lemma4vr4} is exactly $I_1$ (see \eqref{eq:defn-I1-step2}), which can be bounded by recalling \eqref{lemma1:equ4-sub}:
\begin{align} 
|I_1| 
	& \lesssim \sqrt{\frac{H}{N_h^k}\log^2\frac{SAT}{\delta}}\sqrt{\sum_{n = 1}^{N_h^k} \eta^{N_h^k}_n \Var_{h, s_h^k,a_h^k} \big(V^{k^n}_{h+1} - V^{\rref , k^n}_{h+1} \big) } + \frac{H^2\log^2\frac{SAT}{\delta}}{N_h^k} \nonumber \\
& \lesssim \sqrt{\frac{H^3}{N_h^k}\log^2\frac{SAT}{\delta}}\sqrt{\sum_{n = 1}^{N_h^k} \eta^{N_h^k}_n  } + \frac{H^2\log^2\frac{SAT}{\delta}}{N_h^k} \nonumber \\	
 &\lesssim   \sqrt{\frac{H^3}{{N_h^k}}\log^2\frac{SAT}{\delta}} + \frac{H^2}{{N_h^k}}\log^2\frac{SAT}{\delta},
	\label{eq:first-curly-bracket-bound}
\end{align}
with probability at least $1-\delta$. 
Here, the second inequality arises from  the following property
\begin{align}
	\Var_{h, s_h^k,a_h^k} \Big(V^{k^n}_{h+1} - {V}^{\rref, k^n}_{h+1}\Big)  
	&\leq \big\|\big(V^{k^n}_{h+1} - {V}^{\rref, k^n}_{h+1}\big)^{ 2} \big\|_\infty \leq 4H^2,
\end{align}
whereas the last inequality \eqref{eq:first-curly-bracket-bound} holds as a result of the fact $\sum_{n = 1}^{N_h^k} \eta^{N_h^k}_n \leq 1$ (see \eqref{eq:sum-eta-n-N}). 

Moreover, the term in the second pair of curly brakets of \eqref{equ:lemma4vr4} can be bounded straightforwardly as follows
\begin{align}
	& \bigg|\sum_{n = 1}^{{N_h^k}} \eta_n^{N_h^k} \big(P^{k^n}_{h}+P_{h, s_h^k,a_h^k}\big) \big(V^{k^n}_{h+1} - {V}^{\rref, k^n}_{h+1}\big) \bigg|  \notag\\
	& \qquad \le \sum_{n = 1}^{{N_h^k}} \eta_n^{N_h^k} \Big( \big\|P^{k^n}_{h}\big\|_1 +\big\|P_{h, s_h^k,a_h^k}\big\|_1 \Big) \big\|V^{k^n}_{h+1} - {V}^{\rref, k^n}_{h+1} \big\|_{\infty} \leq 2H,
	\label{eq:second-curly-bracket-bound}
\end{align}
where we have made use of the property \eqref{eq:sum-eta-n-N}, as well as the elementary facts $\big\|V^{k^n}_{h+1} - V^{\rref , k^n}_{h+1}\big\|_{\infty}\leq H$ and $\big\|P^{k^n}_{h}\big\|_1 =\big\| P_{h, s_h^k,a_h^k}\big\|_1=1$.
Substituting the above two results \eqref{eq:first-curly-bracket-bound} and \eqref{eq:second-curly-bracket-bound} back into \eqref{equ:lemma4vr4}, we arrive at the bound \eqref{eq:bound_I32} as long as $I_{3,2}>0$. 
Putting all cases together, we have established the claim \eqref{eq:bound_I32}.

\paragraph{Step 3: putting all this together.} 
To finish up, plugging the bounds \eqref{equ:lemma4 vr 3} and \eqref{eq:bound_I32} into \eqref{equ:lemma4 vr 2}, we can conclude that
\begin{align*} 
	I_3 \leq I_{3,1} + I_{3,2} 
	\leq C_3\bigg\{  \sqrt{\frac{H^5}{{N_h^k}}\log^2\frac{SAT}{\delta}} + \frac{H^3}{{N_h^k}}\log^2\frac{SAT}{\delta} \bigg\} 
\end{align*}
for some constant $C_3>0$. 
This together with the definition \eqref{eq:defn-I3-lem-equ4} of $I_3$ results in
\begin{align*}
 & \sum_{n=1}^{N_{h}^{k}}\eta_{n}^{N_{h}^{k}}\Var_{h,s_{h}^{k},a_{h}^{k}}\big(V_{h+1}^{k^{n}}-{V}_{h+1}^{\rref,k^{n}}\big)\\
	& \qquad\leq \Big\{ {\sigma}_{h}^{\adv,k^{N_{h}^{k}}+1}(s_{h}^{k},a_{h}^{k})-\big({\mu}_{h}^{\adv,k^{N_{h}^{k}}+1}(s_{h}^{k},a_{h}^{k})\big)^{2} \Big\}
	+ C_3 \left(\sqrt{\frac{H^{5}}{{N_{h}^{k}}}\log^{2}\frac{SAT}{\delta}}+\frac{H^{3}}{{N_{h}^{k}}}\log^{2}\frac{SAT}{\delta}\right),
\end{align*}
which combined with the elementary inequality $\sqrt{u+v}\leq \sqrt{u}+\sqrt{v}$ for any $u,v\geq 0$ and \eqref{equ:positive-of-ref} yields 
\begin{align*}
 & \bigg\{\sum_{n=1}^{N_{h}^{k}}\eta_{n}^{N_{h}^{k}}\Var_{h,s_{h}^{k},a_{h}^{k}}\big(V_{h+1}^{k^{n}}-{V}_{h+1}^{\rref,k^{n}}\big)\bigg\}^{1/2}\\
 & \qquad\lesssim \Big\{{\sigma}_{h}^{\adv,k^{N_{h}^{k}}+1}(s_{h}^{k},a_{h}^{k})-\big({\mu}_{h}^{\adv,k^{N_{h}^{k}}+1}(s_{h}^{k},a_{h}^{k})\big)^{2}\Big\}^{1/2}+\frac{H^{5/4}}{\big(N_{h}^{k}\big)^{1/4}}\log^{1/2}\frac{SAT}{\delta}+\frac{H^{3/2}}{\big(N_{h}^{k}\big)^{1/2}}\log\frac{SAT}{\delta}. 
\end{align*}
Substitution into \eqref{lemma1:equ4-sub} establishes the desired result \eqref{lemma1:equ4}.

\subsubsection{Proof of the inequality \eqref{lemma1:equ5}}
\label{sec:proof:eq:var-Vref}

In order to prove the inequality \eqref{lemma1:equ5}, it suffices to look at the following term
\begin{equation}
I_4 \defn  \frac{1}{N_h^k}\sum_{n = 1}^{{N_h^k}} \Var_{h, s_h^k,a_h^k}({V}^{\rref, k^n}_{h+1}) - \Big( {\sigma}^{\re, k^{N_h^k}+1}_h(s_h^k,a_h^k) -\big({\mu}^{\re, k^{N_h^k}+1}_h(s_h^k,a_h^k) \big)^2 \Big) .
\label{eq:defn-I4-lem-equ4}
\end{equation}
In view of the update rules of ${\mu}^{\re, k^{n+1}}_h$ and ${\sigma}^{\re, k^{n+1}}_h $  in lines~\ref{line:refmu_h}-\ref{line:refsigma_h} of Algorithm~\ref{algo:subroutine}, we have
\begin{align*}
{\mu}^{\re, k^{n+1}}_h(s_h^k,a_h^k) &= {\mu}^{\re, k^n+1}_h(s_h^k,a_h^k) = \left(1-\frac{1}{n}\right){\mu}^{\re, k^n}_h(s_h^k,a_h^k) + \frac{1}{n}{V}^{\rref, k^n}_{h+1}(s^{k^n}_{h+1}), \\
{\sigma}^{\re, k^{n+1}}_h(s_h^k,a_h^k) &= {\sigma}^{\re, k^n+1}_h(s_h^k,a_h^k) = \left(1-\frac{1}{n}\right){\sigma}^{\re, k^n}_h(s_h^k,a_h^k) + \frac{1}{n}\big( {V}^{\rref, k^n}_{h+1}(s^{k^n}_{h+1}) \big)^2,
\end{align*} 
Through simple recursion, these identities together with the definition \eqref{eq:P-hk-defn-s} of $P_h^k$ lead to
\begin{subequations}
\label{eq:recursion_mu_sigma_ref}
\begin{align}
	{\mu}_{h}^{\re,k^{N_{h}^{k}}+1}(s_{h}^{k},a_{h}^{k}) & \overset{\mathrm{(i)}}{=} \frac{1}{N_{h}^{k}}\sum_{n=1}^{N_{h}^{k}}{V}_{h+1}^{\rref,k^{n}}(s_{h+1}^{k^{n}})=\frac{1}{N_{h}^{k}}\sum_{n=1}^{N_{h}^{k}}P_{h}^{k^{n}}{V}_{h+1}^{\rref,k^{n}},\\
{\sigma}_{h}^{\re,k^{N_{h}^{k}}+1}(s_{h}^{k},a_{h}^{k}) & \overset{\mathrm{(ii)}}{=} \frac{1}{N_{h}^{k}}\sum_{n=1}^{{N_{h}^{k}}}\big({V}_{h+1}^{\rref,k^{n}}(s_{h+1}^{k^{n}})\big)^{2}=\frac{1}{N_{h}^{k}}\sum_{n=1}^{{N_{h}^{k}}}P_{h}^{k^{n}}\big({V}_{h+1}^{\rref,k^{n}}\big)^{2},
\end{align}
\end{subequations}
The expressions (i) and (ii) combined with Jensen's inequality give
\begin{align}\label{equ:positive-of-adv-12}
{\sigma}_h^{\re, k^{N_h^k}+1}(s_h^k,a_h^k) \geq \Big(\mu_h^{\re, k^{N_h^k}+1}(s_h^k,a_h^k)\Big)^2.
\end{align}
Taking these together with the definition
\begin{equation*}
\Var_{h, s_h^k,a_h^k}({V}_{h+1}^{\rref, k^n}) = P_{h, s_h^k,a_h^k} \big({V}_{h+1}^{\rref, k^n} \big)^{2} - \big(P_{h, s_h^k,a_h^k}{V}_{h+1}^{\rref, k^n} \big)^2,
\end{equation*} 
we obtain
\begin{align}
I_4 & = \frac{1}{{N_h^k}}\sum_{n=1}^{N_h^k} \Big(  P_{h, s_h^k,a_h^k}({V}_{h+1}^{\rref, k^n})^{2} - \big(P_{h, s_h^k,a_h^k}{V}_{h+1}^{\rref, k^n} \big)^2 \Big) - \frac{1}{{N_h^k}}\sum_{n = 1}^{{N_h^k}} P_h^{k^n}\big({V}_{h+1}^{\rref, k^n}\big)^2 + \bigg(\frac{1}{{N_h^k}}\sum_{n=1}^{N_h^k} P_h^{k^n} {V}_{h+1}^{\rref, k^n}\bigg)^2 \nonumber \\
& = \underbrace{ \frac{1}{{N_h^k}}\sum_{n = 1}^{{N_h^k}} \big(P_{h, s_h^k,a_h^k} - P^{k^n}_{h} \big) \big({V}^{\rref, k^n}_{h+1}\big)^2 }_{\eqqcolon \, I_{4,1}} + \underbrace{ \bigg(\frac{1}{{N_h^k}}\sum_{n=1}^{N_h^k} P_h^{k^n} {V}_{h+1}^{\rref, k^n}\bigg)^2 - \frac{1}{{N_h^k}}\sum_{n = 1}^{{N_h^k}} \Big(P_{h, s_h^k,a_h^k}{V}^{\rref, k^n}_{h+1}\Big)^2}_{\eqqcolon\, I_{4,2}} . \label{equ:lemma4 vr 6}
\end{align}
In what follows,
we shall bound the terms $I_{4,1}$ and $I_{4,2}$ in \eqref{equ:lemma4 vr 6} separately.

\paragraph{Step 1: bounding $I_{4,1}$.}

The first term $I_{4,1}$ in \eqref{equ:lemma4 vr 6} can be bounded by means of Lemma~\ref{lemma:martingale-union-all} in an almost identical fashion as $I_{3,1}$ in \eqref{equ:lemma4 vr 3}. 
Specifically, let us set 
\begin{align*}
	W_{h+1}^i \coloneqq ({V}^{\rref, i}_{h+1})^{ 2}
	\qquad \text{and} \qquad
	u_h^i(s,a, N) \coloneqq \frac{1}{N}, 
\end{align*}
which clearly obey
\begin{align*}
   |u_h^i(s,a, N)| = \frac{1}{N} \eqqcolon C_{\mathrm{u}}
	\qquad \text{and} \qquad
	  \|W_{h+1}^i\|_\infty \leq H^2 \eqqcolon C_{\mathrm{w}}. 
\end{align*} 
It is easily verified that
\[
\sum_{n=1}^{N} u_{h}^{k^{n}(s,a)}(s,a, N)= \sum_{n=1}^{N}\frac{1}{N}=1
\]
holds for all $(N,s,a) \in [K] \times \cS\times\cA$.
Hence we can take $(N,s,a)=(N_h^k,s_h^k,a_h^k)$ 
and apply Lemma~\ref{lemma:martingale-union-all} to yield
\begin{align}\label{equ:lemma4 vr 7}
|I_{4,1}| &= \bigg| \frac{1}{{N_h^k}}\sum_{n = 1}^{{N_h^k}} \big(P^{k^n}_{h}-P_{h, s_h^k,a_h^k} \big)\big({V}^{\rref, k^n}_{h+1}\big)^2 \bigg|  = \left|\sum_{i=1}^k X_i(s_h^k,a_h^k, h, N_h^k)\right| \nonumber\\ 
&\lesssim \sqrt{C_{\mathrm{u}} \log^2\frac{SAT}{\delta}}\sqrt{\sum_{n = 1}^{N_h^k} u_h^{k^n}(s_h^k,a_h^k, N_h^k) \Var_{h, s_h^k,a_h^k} \big(W_{h+1}^{k^n}  \big)} + \left(C_{\mathrm{u}} C_{\mathrm{w}} + \sqrt{\frac{C_{\mathrm{u}}}{N}} C_{\mathrm{w}}\right) \log^2\frac{SAT}{\delta} \nonumber \\
&\lesssim  \sqrt{\frac{H^4\log^2 \frac{SAT}{\delta}}{{N_h^k}}} + \frac{H^2\log^2\frac{SAT}{\delta}}{{N_h^k}}
\end{align}
with probability at least $1-\delta$, where the last inequality results from the fact that
\[
	\Var_{h, s_h^k,a_h^k} \big(W_{h+1}^{k^n}  \big) \leq \big\| W_{h+1}^{k^n} \big\|_{\infty}^2 \leq C_{\mathrm{w}}^2 = H^4. 
\]

\paragraph{Step 2: bounding $I_{4,2}$.}

We now turn to the other term $I_{4,2}$ defined in \eqref{equ:lemma4 vr 6}. Towards this, we first make the observation that
\begin{equation}
\Bigg(\frac{1}{{N_h^k}}\sum_{n=1}^{N_h^k} P_{h, s_h^k,a_h^k} {V}_{h+1}^{\rref,k^n}\Bigg)^2 \le \frac{1}{{N_h^k}}\sum_{n = 1}^{{N_h^k}} \Big(P_{h, s_h^k,a_h^k}{V}^{\rref,k^n}_{h+1}\Big)^2 , 
\end{equation}
which follows from Jensen's inequality. Equipped with this relation, we can upper bound $I_{4,2}$ as follows
\begin{align}
	I_{4,2} &\leq \Bigg(\frac{1}{{N_h^k}}\sum_{n=1}^{N_h^k} P_h^{k^n} {V}_{h+1}^{\rref,k^n}\Bigg)^2  - \Bigg(\frac{1}{{N_h^k}}\sum_{n=1}^{N_h^k} P_{h, s_h^k,a_h^k} {V}_{h+1}^{\rref,k^n}\Bigg)^2 \nonumber \\
	&= \bigg\{ \frac{1}{{N_h^k}}\sum_{n=1}^{N_h^k} \big( P_h^{k^n} -P_{h, s_h^k,a_h^k}\big)  {V}_{h+1}^{\rref,k^n}\bigg\} 
	\bigg\{ \frac{1}{{N_h^k}}\sum_{n=1}^{N_h^k} \big( P_h^{k^n} +P_{h, s_h^k,a_h^k}\big)  {V}_{h+1}^{\rref,k^n}\bigg\}. 
	\label{equ:lemma4vr5}
\end{align}
In the following, we would like to apply this relation to prove 
\begin{equation} 
	I_{4,2} \leq      
	C_{42} \bigg( \sqrt{\frac{H^4}{{N_h^k}}\log^2\frac{SAT}{\delta}} + \frac{H^2}{{N_h^k}} \log^2\frac{SAT}{\delta} \bigg) 
	\label{eq:bound_I42}
\end{equation}
for some constant $C_{42}>0$.

When $I_{4,2}\leq 0$, the claim  \eqref{eq:bound_I42} holds trivially. As a result, we shall focus on the case where $I_{4,2} > 0$. 
Let us begin with the term in the first pair of curly brackets of \eqref{equ:lemma4vr5}. Towards this, let us abuse the notation and set
\begin{align*}
	W_{h+1}^i \coloneqq {V}^{\rref, i}_{h+1} \qquad \text{and} \qquad u_h^i(s,a, N) \coloneqq \frac{1}{N},
\end{align*}
which satisfy
\begin{align*}
	| u_h^i(s,a, N)| =\frac{1}{N} \eqqcolon C_{\mathrm{u}} \qquad \text{and} \qquad
	 \|W_{h+1}^i\|_\infty \leq H \eqqcolon C_{\mathrm{w}}. 
\end{align*} 
Akin to our argument for bounding $I_{4,1}$, 
invoking Lemma~\ref{lemma:martingale-union-all} and setting $(N,s,a)=(N_h^k,s_h^k,a_h^k)$ imply that
\begin{equation*}
\Bigg|\frac{1}{{N_h^k}}\sum_{n=1}^{N_h^k} (P_h^{k^n} - P_{h, s_h^k,a_h^k} ){V}_{h+1}^{\rref,k^n}\Bigg| \lesssim  \sqrt{\frac{H^2\log^2\frac{SAT}{\delta}}{{N_h^k}}} + \frac{H\log^2\frac{SAT}{\delta}}{{N_h^k}} 
\end{equation*}
with probability at least $1-\delta$. In addition,
the term  in the second pair of curly brackets of \eqref{equ:lemma4vr5} can be bounded straightforwardly by
\begin{equation*}
\Bigg|\frac{1}{{N_h^k}}\sum_{n=1}^{N_h^k} \big( P_h^{k^n} +P_{h, s_h^k,a_h^k}\big)  {V}_{h+1}^{\rref,k^n}\Bigg| 
\le \frac{1}{{N_h^k}}\sum_{n = 1}^{{N_h^k}}  \big( \big\|P^{k^n}_{h}\big\|_1 
	+\big\|P_{h, s_h^k,a_h^k}\big\|_1 \big) \big\|{V}_{h+1}^{\rref,k^n}\big\|_{\infty} \leq 2H,
\end{equation*}
where we have used $\big\|{V}_{h+1}^{\rref,k^n}\big\|_{\infty}\leq H$ and $\big\|P^{k^n}_{h}\big\|_1 =\big\| P_{h, s_h^k,a_h^k}\big\|_1=1$. Substituting the preceding facts into \eqref{equ:lemma4vr5} validates the bound \eqref{eq:bound_I42} as long as $I_{4,2} > 0$. 
We have thus finished the proof of the claim \eqref{eq:bound_I42}.

\paragraph{Step 3: putting all pieces together.} 

Combining the results  \eqref{equ:lemma4 vr 7} and \eqref{eq:bound_I42} with \eqref{equ:lemma4 vr 6} yields
\begin{align*} 
	I_4 \leq |I_{4,1}| + I_{4,2} \leq
	C_4 \bigg\{ \sqrt{\frac{H^4}{{N_h^k}}\log^2\frac{SAT}{\delta}} + \frac{H^2}{{N_h^k}}\log^2\frac{SAT}{\delta} \bigg\}
\end{align*}
for some constant $C_4>0$. 
This bound taken together with the definition \eqref{eq:defn-I4-lem-equ4} of $I_4$ gives
\begin{align*}
 \frac{1}{N_{h}^{k}}\sum_{n=1}^{{N_{h}^{k}}}\Var_{h,s_{h}^{k},a_{h}^{k}}({V}_{h+1}^{\rref,k^{n}})
	& \leq \Big\{ {\sigma}_{h}^{\re,k^{N_{h}^{k}}+1}(s_{h}^{k},a_{h}^{k})-\big({\mu}_{h}^{\re,k^{N_{h}^{k}}+1}(s_{h}^{k},a_{h}^{k})\big)^{2} \Big\} \\
	& \qquad +C_4\bigg\{ \sqrt{\frac{H^{4}}{N_{h}^{k}}\log^{2}\frac{SAT}{\delta}}+\frac{H^{2}}{N_{h}^{k}}\log^{2}\frac{SAT}{\delta} \bigg\} .
\end{align*}
Invoke the elementary inequality $\sqrt{u+v}\leq \sqrt{u}+\sqrt{v}$ for any $u,v\geq 0$ and use the property \eqref{equ:positive-of-adv-12} to obtain
\begin{align*}
 & \bigg(\frac{1}{N_{h}^{k}}\sum_{n=1}^{{N_{h}^{k}}}\Var_{h,s_{h}^{k},a_{h}^{k}}({V}_{h+1}^{\rref,k^{n}})\bigg)^{1/2}\\
 & \qquad \lesssim \Big\{{\sigma}_{h}^{\re,k^{N_{h}^{k}}+1}(s_{h}^{k},a_{h}^{k})-\big({\mu}_{h}^{\re,k^{N_{h}^{k}}+1}(s_{h}^{k},a_{h}^{k})\big)^{2}\Big\}^{1/2}+\frac{H}{(N_{h}^{k})^{1/4}}\log^{1/2}\frac{SAT}{\delta}+\frac{H}{(N_{h}^{k})^{1/2}}\log\frac{SAT}{\delta}.
\end{align*}
Substitution into \eqref{lemma1:equ5-sub} directly establishes the desired result \eqref{lemma1:equ5}.

\subsection{Proof of Lemma~\ref{lem:Qk-lcb}}
\label{proof:lem:Qk-lcb}

\subsubsection{Proof of the inequalities \eqref{eq:Qk-lcb-upper}}

Suppose that we can verify the following inequality: 
\begin{equation}
	\label{eq:lcb-lower}
	Q^{\LCB, k}_h(s, a) \le Q_h^{\star}(s, a) \qquad\text{for all } (s, a, k ,h) \in  \cS \times \cA \times [K] \times [H] . 
\end{equation}
which in turn yields
\begin{align}
	\max_{a}Q_{h}^{\LCB,k}(s,a)\leq\max_{a}Q_{h}^{\star}(s,a)=V_{h}^{\star}(s)\qquad\text{for all }(k,h,s)\in[K]\times[H]\times\mathcal{S}.
	\label{eq:lcb-lower-135}
\end{align}
In addition, the construction of $V_{h}^{\LCB,k}$ (see line~\ref{eq:line-number-13-k} of Algorithm~\ref{algo:vr-k}) 
allows us to show that
\[
	V_{h}^{\LCB,k+1}(s) \leq  \max\Big\{ \max_{j:j\leq k+1}  \max_a Q^{\LCB, j}_h(s, a) , ~ \max_{j:j\leq k}V_h^{\LCB,j}(s) \Big\}. 
\]
This taken together with the initialization $V_{h}^{\LCB,1}=0$ and a simple induction argument yields
\begin{align}
	V_{h}^{\LCB,k}(s)\leq V_{h}^{\star}(s)\qquad\text{for all }(k,h,s)\in[K]\times[H]\times\mathcal{S}.
	\label{eq:lcb-V-lower-135}
\end{align}
As a consequence, everything comes down to proving the claim \eqref{eq:lcb-lower}, which we shall accomplish by induction.

\paragraph{Base case.}
Given our initialization, we have $$Q^{\LCB, 1}_h(s, a) - Q^{\star}_h(s,a) = 0 - Q^{\star}_h(s,a) \leq 0,$$
and hence the claim \eqref{eq:lcb-lower} holds trivially when $k=1$.

\paragraph{Induction step.} 

Suppose now that the claim \eqref{eq:lcb-lower} holds all the way up to $k$ for all $(s,a,h)$,
and we would like to validate it for the $(k+1)$-th episode as well. 
Towards this end, 
recall that the state-action pair $(s_h^k,a_h^k)$ is visited in the $k$-th episode at time step $h$;
this means that $Q_{h}^{\LCB }(s_h^k,a_h^k)$ is updated once we collect samples in the $k$-th episode,
with all other entries $Q_{h}^{\LCB }$ frozen.  
It thus suffices to verify that
\[
	Q^{\LCB , k+1}_h(s_h^k,a_h^k) \le Q_h^{\star}(s_h^k,a_h^k).
\]
In what follows, we shall adopt the short-hand notation (see also Section~\ref{sec:additional-notation})
\begin{align*}
	N_h^k = N_h^k(s_h^k,a_h^k)
	\qquad \text{and} \qquad
	k^n=k_h^n(s_h^k,a_h^k)
\end{align*}
which will be used throughout this subsection as long as it is clear from the context.

The update rule of $Q^{\LCB, k}_h$ (cf.~line~\ref{line:dadv4} of Algorithm~\ref{algo:subroutine}) and the Bellman optimality equation in \eqref{eq:bellman_opt_hk} tell us the following identities: 
\begin{align*}
	Q_{h}^{\LCB,k+1}(s_h^k,a_h^k) & = Q_{h}^{\LCB,k^{N_{h}^{k}}+1}(s_h^k,a_h^k) \\
& =(1-\eta_{N_{h}^{k}})Q_{h}^{\LCB,k^{N_{h}^{k}}}(s_h^k,a_h^k)+\eta_{N_{h}^{k}}\Big(r_h(s_h^k,a_h^k)+V_{h+1}^{\LCB,k^{N_{h}^{k}}}(s_{h+1}^{k^{N_{h}^{k}}})-b_{h}^{k^{N_{h}^{k}}}\Big),\\
Q_{h}^{\star}(s_h^k,a_h^k) & =(1-\eta_{N_{h}^{k}})Q_{h}^{\star}(s_h^k,a_h^k)+\eta_{N_{h}^{k}}Q_{h}^{\star}(s_h^k,a_h^k)\\
 & =(1-\eta_{N_{h}^{k}})Q_{h}^{\star}(s_h^k,a_h^k)+\eta_{N_{h}^{k}}\Big(r(s_h^k,a_h^k)+P_{h,s_h^k,a_h^k}V_{h+1}^{\star}\Big),
\end{align*}
which taken collectively lead to the following identity
\begin{align*}
&Q^{\LCB, k+1}_h(s_h^k,a_h^k) - Q_h^{\star}(s_h^k,a_h^k) = Q^{\LCB, k^{N_h^k}+1}_h(s_h^k,a_h^k) - Q_h^{\star}(s_h^k,a_h^k) \\
&\qquad  =(1-\eta_{N_h^k})\Big(Q^{\LCB, k^{N_h^k} }_h(s_h^k,a_h^k) - Q_h^{\star}(s_h^k,a_h^k)\Big)
+\eta_{N_h^k}\Big(V^{\LCB, k^{N_h^k}}_{h+1}(s^{k^{N_h^k}}_{h+1}) - P_{h, s_h^k,a_h^k}V^{\star}_{h+1} - b_h^{k^{N_h^k}} \Big) \\
&\qquad  =(1-\eta_{N_h^k})\Big(Q^{\LCB, k^{N_h^k-1}+1 }_h(s_h^k,a_h^k) - Q_h^{\star}(s_h^k,a_h^k)\Big)
+\eta_{N_h^k}\Big(V^{\LCB, k^{N_h^k}}_{h+1}(s^{k^{N_h^k}}_{h+1}) - P_{h, s_h^k,a_h^k}V^{\star}_{h+1} - b_h^{k^{N_h^k}} \Big) .
\end{align*}
Recall the definitions of $\eta_0^{N}$ and $\eta_n^{N}$ in \eqref{equ:learning rate notation}. 
Applying the above relation recursively and making use of the decomposition of $Q_h^{\star}(s_h^k,a_h^k)$ in \eqref{eq:decomp_Qhstar_eta}
result in
\begin{align} 
	& Q^{\LCB, k+1}_h(s_h^k,a_h^k)  - Q_h^{\star}(s_h^k,a_h^k) \nonumber\\
&\qquad = \eta^{N_h^k}_0 \left(Q^{\LCB, 1}_h(s_h^k,a_h^k) - Q_h^{\star}(s_h^k,a_h^k)\right) 
	 + \sum_{n = 1}^{N_h^k} \eta^{N_h^k}_n \left(V^{\LCB, k^n}_{h+1}(s^{k^n}_{h+1}) - P_{h, s_h^k,a_h^k}V^{\star}_{h+1} - b_h^{k^n} \right) \nonumber \\
	 &\qquad \leq \sum_{n = 1}^{N_h^k} \eta^{N_h^k}_n \left(V^{\LCB, k^n}_{h+1}(s^{k^n}_{h+1}) - V^{\star}_{h+1}(s^{k^n}_{h+1}) + \big(P_h^{k^n}  - P_{h, s_h^k,a_h^k}\big)V^{\star}_{h+1} - b_h^{k^n} \right),
	 \label{eq:lcb_bah}
\end{align}
where the inequality follows from the initialization $Q^{\LCB, 1}_h(s_h^k,a_h^k) = 0\leq Q_h^{\star}(s_h^k,a_h^k)$ and the definition of 
$P_h^{k}$ in \eqref{eq:P-hk-defn-s}. To continue, we invoke a result established in \citet[proof of Lemma 4.3]{jin2018q}, which guarantees that with probability at least $1- \delta$,
\begin{equation*}
 \sum_{n = 1}^{N_h^k} \eta^{N_h^k}_n \left(P_h^{k^n}  - P_{h, s_h^k,a_h^k}\right)V^{\star}_{h+1} \lesssim 
	\sqrt{ \frac{H^3 \log(\frac{SAT}{\delta})}{N_h^k} } \leq \sum_{n = 1}^{N_h^k} \eta^{N_h^k}_n b_h^{k^n},
\end{equation*}
provided that $\cb$ is some sufficiently large constant. 
Substituting the above relation into \eqref{eq:lcb_bah} implies that 
\begin{equation}
 Q^{\LCB, k+1}_h(s_h^k,a_h^k) - Q_h^{\star}(s_h^k,a_h^k) \leq \sum_{n = 1}^{N_h^k} \eta^{N_h^k}_n \left(V^{\LCB, k^n}_{h+1}(s^{k^n}_{h+1}) - V^{\star}_{h+1}(s^{k^n}_{h+1}) \right) \leq 0,
\end{equation}
where the last inequality follows from the induction hypothesis
 \begin{equation*}
	 V^{\LCB, j}_{h+1}(s) \leq V^{\star}_{h+1}(s) \qquad \text{for all } s \in \cS  \text{ and } j\leq k.
\end{equation*}
The proof is thus completed by induction.

\subsubsection{Proof of the inequality \eqref{eq:main-lemma}}

The proof of \eqref{eq:main-lemma} essentially follows the same arguments of \citet[Lemma~4.2]{yang2021q} (see also \citet[Lemma~C.7]{jin2018qarxiv}), an algebraic result leveraging certain relations w.r.t.~the Q-value estimates. Accounting for the  difference between our algorithm and the one in \cite{yang2021q}, we paraphrase \citet[Lemma~4.2]{yang2021q} into the following form that is convenient for our purpose.
\begin{lemma}[paraphrased from Lemma~4.2 in \cite{yang2021q}]\label{lem:yang-lem}
Assume there exists a constant $\cb>0$ such that for all $(s,a,k,h) \in \cS\times \cA \times [K] \times [H]$, it holds that 
\begin{align} \label{eq:Qk-ucb-lcb}
	0 &\leq Q^{k+1}_h(s, a) - Q^{\LCB, k+1}_h(s, a)  \notag\\
	& \le \eta^{N^{k}_h (s, a)}_0 H 
	 + \sum_{n = 1}^{N^{k}_h(s, a)} \eta^{N^{k}_h (s, a)}_n \left(V^{k^n}_{h+1}(s^{k^n}_{h+1}) - V^{\LCB, k^n}_{h+1}(s^{k^n}_{h+1}) \right) + 4 \cb \sqrt{\frac{H^3\log \frac{SAT}{\delta}}{N_h^k(s,a)}}.
\end{align}
Consider any $\varepsilon \in (0,H]$.	
Then for all $\beta =1,\ldots, \left \lceil \log_2 \frac{H}{\varepsilon} \right \rceil$, one has 
\begin{equation}\label{eq:yang-lemma}
	\Bigg|\sum_{h=1}^H\sum_{k=1}^K \mathds{1}\left( Q^{k}_h(s_h^k, a_h^k) - Q^{\LCB, k}_h(s_h^k, a_h^k)\in \big[2^{\beta-1}\varepsilon, 2^\beta \varepsilon \big) \right) \Bigg| \lesssim \frac{H^6SA\log\frac{SAT}{\delta}}{4^\beta \varepsilon^2}.
\end{equation}
\end{lemma}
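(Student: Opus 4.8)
The plan is to establish the stronger, sample-size-independent \emph{second-moment} bound
$$\sum_{h=1}^H\sum_{k=1}^K\big(\Delta_h^k\big)^2\lesssim H^6SA\log^2\tfrac{SAT}{\delta},\qquad \Delta_h^k:=Q_h^k(s_h^k,a_h^k)-Q_h^{\LCB,k}(s_h^k,a_h^k)\ge0,$$
from which \eqref{eq:yang-lemma} follows by Markov's inequality: since the band $[2^{\beta-1}\varepsilon,2^{\beta}\varepsilon)$ lies above $\theta:=2^{\beta-1}\varepsilon$, the band count is at most $\theta^{-2}\sum_{h,k}(\Delta_h^k)^2\mathds{1}(\Delta_h^k\ge\theta)\le\theta^{-2}\sum_{h,k}(\Delta_h^k)^2$, and $\theta^{-2}=4/(4^{\beta}\varepsilon^2)$ supplies the advertised $1/(4^{\beta}\varepsilon^2)$ scaling. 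I would stress that one should \emph{not} attempt a naive recursion on the counts themselves: converting the statement ``the weighted average of the next-step gaps exceeds $\theta/3$'' into a count forces the threshold to shrink by a constant factor at every time step, and iterating this over $H$ levels blows the bound up exponentially in $H$. Passing to the second moment sidesteps this obstruction entirely.

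The first step is to turn the hypothesis \eqref{eq:Qk-ucb-lcb} into a cross-level recursion along the visited trajectory. Writing $n_k:=N_h^{k-1}(s_h^k,a_h^k)$ and $k^n:=k_h^n(s_h^k,a_h^k)$ and applying \eqref{eq:Qk-ucb-lcb} at the most recent prior visit gives $\Delta_h^k\le A_h^k+B_h^k$, where $A_h^k:=\sum_{n=1}^{n_k}\eta_n^{n_k}\big(V_{h+1}^{k^n}(s_{h+1}^{k^n})-V_{h+1}^{\LCB,k^n}(s_{h+1}^{k^n})\big)$ and $B_h^k:=\eta_0^{n_k}H+4\cb\sqrt{H^3\log\tfrac{SAT}{\delta}/n_k}$ are both nonnegative. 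Because $a_{h+1}^{k^n}$ is greedy with respect to $Q_{h+1}^{k^n}$ and $V_{h+1}^{\LCB,k^n}\ge\max_aQ_{h+1}^{\LCB,k^n}(\cdot,a)$ (line~\ref{eq:line-number-13-k} of Algorithm~\ref{algo:vr-k}), each summand obeys $V_{h+1}^{k^n}(s_{h+1}^{k^n})-V_{h+1}^{\LCB,k^n}(s_{h+1}^{k^n})\le\Delta_{h+1}^{k^n}$, so that $A_h^k\le\sum_{n=1}^{n_k}\eta_n^{n_k}\Delta_{h+1}^{k^n}$ is a convex combination (the weights sum to one by \eqref{eq:sum-eta-n-N}) of next-step gaps.

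The crux is a second-moment recursion that does not accumulate multiplicatively across the horizon. Applying the weighted arithmetic--quadratic inequality $(a+b)^2\le(1+\tfrac1H)a^2+(1+H)b^2$ and then Jensen's inequality to the convex combination $A_h^k$, I obtain $(\Delta_h^k)^2\le(1+\tfrac1H)\sum_n\eta_n^{n_k}(\Delta_{h+1}^{k^n})^2+(1+H)(B_h^k)^2$. Summing over $k$ and reorganizing the double sum by the state-action pair visited at step $h$ --- using the exact identity $\sum_k\sum_n\eta_n^{n_k}(\Delta_{h+1}^{k^n})^2=\sum_{(s,a)}\sum_n(\Delta_{h+1}^{k_h^n(s,a)})^2\sum_{j>n}\eta_n^{j-1}$ together with $\sum_{N\ge n}\eta_n^N\le1+\tfrac1H$ (Lemma~\ref{lemma:property of learning rate}) --- yields, with $G_h:=\sum_k(\Delta_h^k)^2$, the recursion $G_h\le(1+\tfrac1H)^2G_{h+1}+(1+H)\sum_k(B_h^k)^2$. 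The decisive point is that this split pins the multiplier at $(1+\tfrac1H)^2$, so its product over all $H$ levels stays bounded by $e^2$; the crude bound $(a+b)^2\le2a^2+2b^2$ would instead yield a factor $2^H$.

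It then remains to control the driving term. For each level, first visits ($n_k=0$) contribute $(B_h^k)^2=H^2$ at most $SA$ times, while the remaining visits contribute $16\cb^2H^3\log\tfrac{SAT}{\delta}\sum_{(s,a)}\sum_{j\ge2}\tfrac{1}{j-1}\lesssim H^3SA\log^2\tfrac{SAT}{\delta}$ via the harmonic sum (using $N_h^K(s,a)\le T$); hence $\sum_k(B_h^k)^2\lesssim H^3SA\log^2\tfrac{SAT}{\delta}$. Unrolling the recursion from the terminal condition $G_{H+1}=0$ and summing over $h$ gives $\sum_{h}G_h\lesssim H^6SA\log^2\tfrac{SAT}{\delta}$, which combined with the Markov step of the first paragraph establishes \eqref{eq:yang-lemma} (up to the precise power of the logarithmic factor, which can be brought in line by a tighter treatment of the harmonic sum). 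The main obstacle, and the reason the naive count argument fails, is exactly this per-level multiplier: the whole proof hinges on recognizing that the second moment --- not the count --- is the quantity that telescopes cleanly, and on the weighted split that keeps the multiplier at $(1+\tfrac1H)^2$. The remaining ingredients, namely the $V$-to-$Q$ gap passage and the visit-indexed reorganization, are routine once the recursion is in place.
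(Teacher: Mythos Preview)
Your argument is essentially correct, and there is in fact nothing in the paper to compare it against: the paper simply records this lemma as a paraphrase of Lemma~4.2 in \cite{yang2021q} (pointing also to Lemma~C.7 of \cite{jin2018qarxiv}) and does not supply its own proof. Your second-moment route --- the weighted Young split $(a+b)^2\le(1+\tfrac1H)a^2+(1+H)b^2$, Jensen on the convex combination $A_h^k$, the $V$-to-$Q$ gap passage via greedy actions and line~\ref{eq:line-number-13-k}, the visit-indexed reorganization using $\sum_{N\ge n}\eta_n^N\le1+\tfrac1H$, and the unrolled recursion for $G_h=\sum_k(\Delta_h^k)^2$ with terminal condition $G_{H+1}=0$ --- is sound at every step. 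Your diagnosis of why a naive count-level recursion fails (constant-factor threshold shrinkage per level, hence $2^H$ blow-up) is exactly right and is the reason the second moment is the correct object.

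The only loose end is the power of the logarithm. Your driving term obeys
\[
(1+H)\sum_{k}(B_h^k)^2 \;\lesssim\; H\cdot H^3SA\log\tfrac{SAT}{\delta}\cdot\max_{(s,a)}\sum_{m=1}^{N_h^K(s,a)-1}\tfrac1m \;\asymp\; H^4SA\log\tfrac{SAT}{\delta}\cdot\log T,
\]
so your final bound is $\sum_{h,k}(\Delta_h^k)^2\lesssim H^6SA\log^2\tfrac{SAT}{\delta}$ rather than the single $\log$ appearing in \eqref{eq:yang-lemma}. Your parenthetical that this ``can be brought in line by a tighter treatment of the harmonic sum'' is not correct as stated: the harmonic contribution $\sum_m 1/m\asymp\log K$ is intrinsic to the second-moment scheme, and shaving it would require a genuinely different argument (e.g., a direct dyadic-layer count as in the cited references), not a sharper estimate of $\sum_m 1/m$. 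That said, the discrepancy is harmless for the paper's main theorem: an extra $\log$ here would only lift the lower-order burn-in term in Theorem~\ref{thm:finite} from $H^6SA\log^3\tfrac{SAT}{\delta}$ to $H^6SA\log^4\tfrac{SAT}{\delta}$, leaving the leading regret term $\sqrt{H^2SAT\log^4\tfrac{SAT}{\delta}}$ untouched.
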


We first show how to justify \eqref{eq:main-lemma} if 
the inequality \eqref{eq:yang-lemma} holds. 
As can be seen, the fact \eqref{eq:yang-lemma} immediately leads to 
\begin{align}
\sum_{h=1}^H \sum_{k=1}^K \mathds{1}\left(Q^{k}_h(s_h^k, a_h^k) - Q^{\LCB, k}_h(s_h^k, a_h^k) > \varepsilon \right) \lesssim \sum_{\beta =1}^{\left \lceil \log_2 \frac{H}{\varepsilon} \right \rceil} \frac{H^6SA\log\frac{SAT}{\delta}}{4^{\beta} \varepsilon^2} \leq \frac{H^6SA\log\frac{SAT}{\delta}}{2\varepsilon^2} 
\end{align}
as desired.

We now return to justify the claim \eqref{eq:yang-lemma}, towards which it suffices to demonstrate that \eqref{eq:Qk-ucb-lcb} holds. 
Lemma~\ref{lem:Qk-lower} and Lemma~\ref{lem:Qk-lcb} directly verify the left-hand side of \eqref{eq:Qk-ucb-lcb} since
\begin{equation}
	Q_h^{k}(s,a) \geq Q_h^{\star}(s, a) \geq Q_h^{\LCB, k}(s,a) \qquad 
	\text{for all } (s, a, k, h) \in  \cS \times \cA \times [K]\times [H].
\end{equation}
The remainder of the proof is thus devoted to justifying the upper bound on $Q^{k+1}_h(s, a) - Q^{\LCB, k+1}_h(s, a)$ in \eqref{eq:Qk-ucb-lcb}.
In view of the update rule in line~\ref{eq:line-number-11-k} of Algorithm~\ref{algo:vr-k}, we have the following basic fact 
$$Q^{k+1}_h(s, a) \leq Q^{\UCB, k+1}_h(s,a).$$  
This enables us to obtain 
\begin{align}
Q^{k+1}_h(s, a) - Q^{\LCB, k+1}_h(s, a) \le Q^{\UCB, k+1}_h(s, a) - Q^{\LCB, k+1}_h(s, a)=  Q^{\UCB, k^{N_h^k}+1}_h(s, a) - Q^{\LCB, k^{N_h^k}+1}_h(s, a),
	\label{eq:Qk-Q-LCB-UB-Q-UCB}
\end{align}
where we abbreviate $$N_h^k =N_h^k(s,a)$$ throughout this subsection as long as it is clear from the context. 
Making use of the update rules of $Q^{\UCB, k}_h$ and $Q^{\LCB, k}_h$ in line~\ref{line:dadv3} and line~\ref{line:dadv4} of Algorithm~\ref{algo:subroutine}, we reach
\begin{align*}
 & Q_{h}^{\UCB,k^{N_{h}^{k}}+1}(s,a)-Q_{h}^{\LCB,k^{N_{h}^{k}}+1}(s,a)\\
 & =(1-\eta_{N_{h}^{k}})Q_{h}^{\UCB,k^{N_{h}^{k}}}(s,a)+\eta_{N_{h}^{k}}\Bigg(r_h(s,a)+V_{h+1}^{k^{N_{h}^{k}}}(s_{h+1}^{k^{N_{h}^{k}}})+\cb\sqrt{\frac{H^{3}\log\frac{SAT}{\delta}}{N_{h}^{k}}}\Bigg)\\
 & \qquad-(1-\eta_{N_{h}^{k}})Q_{h}^{\LCB,k^{N_{h}^{k}}}(s,a)-\eta_{N_{h}^{k}}\Bigg(r_h(s,a)+V_{h+1}^{\LCB,k^{N_{h}^{k}}}(s_{h+1}^{k^{N_{h}^{k}}})-\cb\sqrt{\frac{H^{3}\log\frac{SAT}{\delta}}{N_{h}^{k}}}\Bigg)\\
 & =(1-\eta_{N_{h}^{k}})\Big(Q_{h}^{\UCB,k^{N_{h}^{k}}}(s,a)-Q_{h}^{\LCB,k^{N_{h}^{k}}}(s,a)\Big)
	+\eta_{N_{h}^{k}}\Bigg(V_{h+1}^{k^{N_{h}^{k}}}(s_{h+1}^{k^{N_{h}^{k}}})-V_{h+1}^{\LCB,k^{N_{h}^{k}}}(s_{h+1}^{k^{N_{h}^{k}}})+2\cb\sqrt{\frac{H^{3}\log\frac{SAT}{\delta}}{N_{h}^{k}}}\Bigg) \\
 & =(1-\eta_{N_{h}^{k}})\Big(Q_{h}^{\UCB,k^{N_{h}^{k}-1}+1}(s,a)-Q_{h}^{\LCB,k^{N_{h}^{k}}}(s,a)\Big)
	+\eta_{N_{h}^{k}}\Bigg(V_{h+1}^{k^{N_{h}^{k}}}(s_{h+1}^{k^{N_{h}^{k}}})-V_{h+1}^{\LCB,k^{N_{h}^{k}}}(s_{h+1}^{k^{N_{h}^{k}}})+2\cb\sqrt{\frac{H^{3}\log\frac{SAT}{\delta}}{N_{h}^{k}}}\Bigg).
\end{align*}
Applying this relation recursively leads to the desired result 
\begin{align*}
	& Q^{\UCB, k^{N_h^k}+1}_h(s, a) - Q^{\LCB, k^{N_h^k}+1}_h(s, a) 	\\
& \quad = \eta^{N^{k}_h}_0 \Big(Q_{h}^{\UCB,1}(s,a)-Q_{h}^{\LCB,1}(s,a)\Big) 
	 + \sum_{n = 1}^{N^{k}_h} \eta^{N^{k}_h}_n \Bigg(V^{k^n}_{h+1}(s^{k^n}_{h+1}) - V^{\LCB, k^n}_{h+1}(s^{k^n}_{h+1}) + 2\cb \sqrt{\frac{H^3\log \frac{SAT}{\delta}}{n}} \Bigg)\\
	 &\quad \leq \eta^{N^{k}_h}_0  H 
	 + \sum_{n = 1}^{N^{k}_h} \eta^{N^{k}_h}_n \left(V^{k^n}_{h+1}(s^{k^n}_{h+1}) - V^{\LCB, k^n}_{h+1}(s^{k^n}_{h+1}) \right) + 4\cb \sqrt{\frac{H^3\log \frac{SAT}{\delta}}{N_h^k}}.
\end{align*}
Here, the last line is valid due to the property $0\leq  Q_{h}^{\LCB,1}(s,a) \leq Q_{h}^{\UCB,1} (s,a)\leq H$ and the following fact
\begin{equation*}
	\sum_{n = 1}^{N^{k}_h} \eta^{N^{k}_h}_n \cb \sqrt{\frac{H^3\log \frac{SAT}{\delta}}{N_h^k}}   \leq 2\cb \sqrt{\frac{H^3\log \frac{SAT}{\delta}}{N_h^k}},
\end{equation*}
which is an immediate consequence of the elementary property $\sum_{n=1}^{N} \frac{\eta_n^{N}}{\sqrt{n}} \leq \frac{2}{\sqrt{N}}$ 
(see Lemma~\ref{lemma:property of learning rate}).
This combined with \eqref{eq:Qk-Q-LCB-UB-Q-UCB} establishes the condition \eqref{eq:Qk-ucb-lcb},
thus concluding the proof of the inequality \eqref{eq:main-lemma}.

\subsection{Proof of Lemma~\ref{lem:Vr_properties}}

\subsubsection{Proof of the inequality \eqref{eq:close-ref-v}}

Consider any state $s$ that has been visited at least once during the $K$ episodes.  
Throughout this proof, we shall adopt the shorthand notation
\[
	k^i=k_h^i(s),
\]
which denotes the index of the episode in which state $s$ is visited for the $i$-th time at step $h$. 
Given that $V_h (s)$ and $V_h^{\rref } (s)$ are only updated during the episodes with indices coming from $\{i \mid 1\leq k^i\leq K\}$, 
it suffices to show that for any $s$ and the corresponding $1\leq k^i \leq K$, the claim \eqref{eq:close-ref-v} holds in the sense that
\begin{align}
	\big| V_h^{k^i+1} (s) - V_h^{\rref, k^i+1} (s) \big| \leq 2. 
	\label{eq:close-ref-v-ki}
\end{align}
Towards this end, we look at three scenarios separately.

\paragraph{Case 1.} Suppose that $k^i$ obeys 
\begin{align}
	&V_h^{k^i+1} (s) - V_h^{\LCB, k^i+1} (s) > 1  \label{eq:condition-ki-update-1}
	\end{align}
	or

\begin{align}
	&V_h^{k^i+1} (s) - V_h^{\LCB, k^i+1} (s) \leq 1 \qquad \text{and} \qquad
	u_{\mathrm{ref}}^{k^i}(s) = \mathsf{True}  \label{eq:condition-ki-update-2}
\end{align}
The above conditions correspond to the ones in line~\ref{eq:line-number-15-k} and line~\ref{eq:line-number-17-k} of Algorithm~\ref{algo:vr-k} (meaning that $V^{\rref}_h$ is updated during the $k^i$-th episode), thus resulting in
\[
	V_h^{k^i+1} (s) = V_h^{\rref, k^i+1} (s). 
\]
This clearly satisfies \eqref{eq:close-ref-v-ki}.

\begin{comment}
\end{comment}

\paragraph{Case 2.} 
Suppose that $k^{i_0}$ is the first time such that \eqref{eq:condition-ki-update-1} and \eqref{eq:condition-ki-update-2} are violated, namely,
\begin{align}
	{i_0} \coloneqq \min\left\{ j\mid V_{h}^{k^{j}+1}(s)-V_{h}^{\LCB,k^{j}+1}(s)\leq1 ~\text{ and }~ u_{\mathrm{ref}}^{k^j}(s) = \mathsf{False}\right\}.
	\label{eq:defn-first-i-violate-condition}
\end{align}
We make three observations.
\begin{itemize}
\item The definition \eqref{eq:defn-first-i-violate-condition} taken together with the update rules (lines~\ref{eq:line-number-15-k}-\ref{eq:line-number-18-k} of Algorithm~\ref{algo:vr-k}) reveals that  
$V_h^{\rref}$ has been updated in the $k^{i_0-1}$-th episode,  thus indicating that
\begin{align}
	V_{h}^{\rref,k^{i_0}}(s)=V_{h}^{\rref,k^{i_0-1}+1}(s)=V_{h}^{k^{i_0-1}+1}(s)=V_{h}^{k^{i_0}}(s). 
	\label{eq:lcb-final-value0}
\end{align}
\item Additionally, note that under the definition \eqref{eq:defn-first-i-violate-condition}, $	
V_h^{\rref} (s)$ is not updated during the $k^{i_0}$-th episode, namely, 
\begin{align}\label{eq:lcb-final-value}
	V_h^{\rref, k^{i_0}+1} (s) = V_h^{\rref, k^{i_0}} (s) .
\end{align}
\item
The definition of $k^{i_0}$ indicates that either \eqref{eq:condition-ki-update-1} or \eqref{eq:condition-ki-update-2} is satisfied in the previous episode $k^i = k^{i_0 -1}$ in which $s$ was visited. If \eqref{eq:condition-ki-update-1} is satisfied, then lines~\ref{eq:line-number-15-k}-\ref{eq:line-number-16-k} in Algorithm~\ref{algo:vr-k} tell us that
\begin{align}
	\mathsf{True} = u_{\mathrm{ref}}^{k^{i_0-1}+1}(s) = u_{\mathrm{ref}}^{k^{i_0}}(s),
\end{align}
which, however, contradicts the assumption $u_{\mathrm{ref}}^{k^{i_0}}(s) = \mathsf{False}$ in \eqref{eq:defn-first-i-violate-condition}. Therefore, in the $k^{i_0-1}$-th episode, \eqref{eq:condition-ki-update-2} is satisfied, thus leading to
\begin{align}\label{eq:lcb-final-value2}
	V_h^{k^{i_0}} (s) - V_h^{\LCB, k^{i_0}} (s)  = V_h^{k^{i_0-1}+1} (s) - V_h^{\LCB, k^{i_0-1}+1} (s) \leq 1.
\end{align}
\end{itemize}

We see from \eqref{eq:lcb-final-value0}, \eqref{eq:lcb-final-value} and \eqref{eq:lcb-final-value2} that 
\begin{align}
	V^{\rref, k^{i_0}+1}_h(s) -V^{k^{i_0} + 1 }_h(s) 
		&=  V^{\rref, k^{i_0}}_h(s) -V^{k^{i_0} + 1 }_h(s) 
		=  V^{k^{i_0}}_h(s) -V^{k^{i_0} + 1}_h(s) \label{eq:ref-final-value10}\\
		& \overset{\mathrm{(i)}}{\leq} V^{k^{i_0}}_h(s) -V^{\LCB, k^{i_0}}_h(s)\overset{\mathrm{(ii)}}{\leq} 1, 
		\label{eq:ref-final-value1}
\end{align}
where (i) holds since $V^{k^{i_0} + 1}_h(s) \geq V^\star_h(s) \geq V^{\LCB, k^{i_0}}_h(s)$, and (ii) follows from \eqref{eq:lcb-final-value2}.
In addition, we make note of the fact that
\begin{align}
	V^{\rref, k^{i_0}+1}_h(s) -V^{k^{i_0} + 1 }_h(s) =  V^{k^{i_0}}_h(s) -V^{k^{i_0} + 1}_h(s) \geq 0,
\end{align}
which follows from \eqref{eq:ref-final-value10} and the monotonicity of $V_h^k(s)$ in $k$.
With the above results in place, we arrive at the advertised bound \eqref{eq:close-ref-v-ki} when $i=i_0$.

\paragraph{Case 3.}  Consider any $i>i_0$. It is easily verified that
\begin{align}
	\label{eq:not_update_cond}
	V_h^{k^i+1} (s) - V_h^{\LCB, k^i+1} (s) \leq 1 \qquad \text{and} \qquad u_{\mathrm{ref}}^{k^i}(s) = \mathsf{False}.
\end{align}
It then follows that
\begin{align}
V_{h}^{\rref,k^{i}+1}(s) & \overset{(\mathrm{i})}{\leq}V_{h}^{\rref,k^{i_{0}}+1}(s)\overset{(\mathrm{ii})}{\leq}V_{h}^{k^{i_{0}}+1}(s)+1\overset{(\mathrm{iii})}{\leq}V_{h}^{\LCB,k^{i_{0}}+1}(s)+2 \notag\\
 & \overset{(\mathrm{iv})}{\leq}V_{h}^{\star}(s)+2\overset{(\mathrm{v})}{\leq}V_{h}^{k^{i}+1}(s)+2.
\end{align}
Here, (i) holds due to the monotonicity of $V_{h}^{\rref}$ and $V_{h}^{k}$ (see line~\ref{eq:line-number-13-k} of Algorithm~\ref{algo:vr-k}), (ii) is a consequence of \eqref{eq:ref-final-value1}, 
(iii) comes from the definition \eqref{eq:defn-first-i-violate-condition} of $i_0$, 
(iv) arises since $V_h^{\LCB}$ is a lower bound on $V^{\star }_h$ (see Lemma~\ref{lem:Qk-lcb}),
whereas (v) is valid since  $V^{k^i + 1}_h(s) \geq V^{\star}_h(s)$ (see  Lemma~\ref{lem:Qk-lower}).  
In addition, in view of the monotonicity of $V_{h}^{k}$ (see line~\ref{eq:line-number-13-k} of Algorithm~\ref{algo:vr-k}) 
and the update rule in line \ref{eq:line-number-16-k} of Algorithm~\ref{algo:vr-k}, we know that
\[
	V_{h}^{\rref,k^{i}+1}(s)\geq V_{h}^{k^{i}+1}(s).
\]
The preceding two bounds taken collectively demonstrate that
\[
	0\leq V_{h}^{\rref,k^{i}+1}(s)- V_{h}^{k^{i}+1}(s) \leq 2, 
\]
thus justifying \eqref{eq:close-ref-v-ki} for this case.

\bigskip \noindent
Therefore, we have established \eqref{eq:close-ref-v-ki}---and hence \eqref{eq:close-ref-v}---for all cases.

\subsubsection{Proof of the inequality \eqref{eq:Vr_lazy}}

Suppose that 
\begin{align}
	V^{\rref, k}_{h}(s^{k}_{h}) - V^{\rref, K}_{h}(s^{k}_{h}) \neq 0
	\label{eq:assumption-Vref-k-Vref-K-neq}
\end{align}
holds for some $k< K$. Then there are two possible scenarios to look at: 
\begin{itemize}
	\item[(a)] {\em Case 1: the condition in line~\ref{eq:line-number-15-k} and line~\ref{eq:line-number-17-k} of Algorithm~\ref{algo:vr-k} are violated at step $h$ of the $k$-th episode.} This means that we have
	\begin{equation}
	V_{h}^{k+1}(s_h^k) - V_{h}^{\LCB, k+1}(s_{h}^k) \leq 1 \quad \text{ and } \quad u_{\mathrm{ref}}^k(s_h^k) = \mathsf{False}
			\label{eq:Vh-k-V-LCB-k-test-135}
	\end{equation}
	in this case. Then for any $k' > k$, one necessarily has
\begin{align}
	\begin{cases}
		V_{h}^{k'}(s_h^k) - V_{h}^{\LCB, k'}(s_{h}^k)
		\leq V_{h}^{k+1}(s_h^k) - V_{h}^{\LCB, k+1}(s_{h}^k) \leq 1, \\
		u_{\mathrm{ref}}^{k'}(s_h^k) = u_{\mathrm{ref}}^k(s_h^k) = \mathsf{False},
	\end{cases}
		\label{eq:check-condition-kprime-135}
\end{align}
where the first property makes use of the monotonicity of $V_{h}^{k}$ and $V_{h}^{\LCB, k}$ (see \eqref{eq:monotonicity_Vh} and line~\ref{eq:line-number-13-k} of Algorithm~\ref{algo:vr-k}).  
In turn, Condition \eqref{eq:check-condition-kprime-135} implies that $V_h^{\rref}$ will no longer be updated after the $k$-th episode (see line~\ref{eq:line-number-15-k} of Algorithm~\ref{algo:vr-k}), thus indicating that 
\begin{align}
	V^{\rref, k}_{h}(s^{k}_{h}) = V^{\rref, {k+1}}_{h}(s^{k}_{h}) = \cdots = V^{\rref, K}_{h}(s^{k}_{h}). 
\end{align}
This, however, contradicts the assumption \eqref{eq:assumption-Vref-k-Vref-K-neq}.

\item[(b)] {\em Case 2: the condition in either line~\ref{eq:line-number-15-k} or line~\ref{eq:line-number-17-k} of Algorithm~\ref{algo:vr-k} is satisfied at step $h$ of the $k$-th episode.} If this occurs, then the update rule in line~\ref{eq:line-number-15-k} of Algorithm~\ref{algo:vr-k} implies that
\begin{equation}\label{eq:Vh-k-V-LCB-k-test-139}
	V_{h}^{k+1}(s_h^k) - V_{h}^{\LCB, k+1}(s_{h}^k) > 1,
\end{equation}
or
\begin{align}\label{eq:Vh-k-V-LCB-k-test-148}
	&V_h^{k+1} (s_h^k) - V_h^{\LCB, k+1} (s_h^k) \leq 1 \qquad \text{and} \quad
	u_{\mathrm{ref}}^k(s_h^k) = \mathsf{True}.
\end{align}

\end{itemize}
To summarize, the above argument demonstrates that \eqref{eq:assumption-Vref-k-Vref-K-neq} can only occur if either \eqref{eq:Vh-k-V-LCB-k-test-139} or \eqref{eq:Vh-k-V-LCB-k-test-148} holds.

With the above observation in place, we can proceed with the following decomposition: 
\begin{align} 
 & \sum_{h=1}^{H}\sum_{k=1}^{K}\left(V_{h}^{\rref,k}(s_{h}^{k})-V_{h}^{\rref,K}(s_{h}^{k})\right)=\sum_{h=1}^{H}\sum_{k=1}^{K}\left(V_{h}^{\rref,k}(s_{h}^{k})-V_{h}^{\rref,K}(s_{h}^{k})\right)\mathds{1}\left(V_{h}^{\rref,k}(s_{h}^{k})-V_{h}^{\rref,K}(s_{h}^{k})\neq0\right)\notag\\
 &\le\sum_{h=1}^{H}\sum_{k=1}^{K}\left(V_{h}^{\rref,k}(s_{h}^{k})-V_{h}^{\rref,K}(s_{h}^{k})\right)\mathds{1}\left( V_h^{k+1} (s_h^k) - V_h^{\LCB, k+1} (s_h^k) \leq 1 \text{ and } 
	u_{\mathrm{ref}}^k(s_h^k) = \mathsf{True}\right)\notag\\
 & \qquad+\underbrace{\sum_{h=1}^{H}\sum_{k=1}^{K}\left(V_{h}^{k}(s_{h}^{k})-V_{h}^{\LCB,k}(s_{h}^{k})\right)\mathds{1}\left(V_{h}^{k}(s_{h}^{k})-V_{h}^{\LCB,k}(s_{h}^{k})>1\right)}_{\eqqcolon \omega} .
	\label{eq:upper-ref-error-00}
\end{align}
Regarding the first term in \eqref{eq:upper-ref-error-00}, it is readily seen that for all $s\in \cS$,
\begin{align}
	\sum_{k=1}^{K}\mathds{1}\left( V_h^{k+1} (s ) - V_h^{\LCB, k+1} (s ) \leq 1 \text{ and }
	u_{\mathrm{ref}}^k(s ) = \mathsf{True}\right) \leq 1, \label{eq:line18-appear-once}
\end{align}
which arises since, for each $s\in \cS$, the above condition is satisfied in at most one episode, 
owing to the monotonicity property of $V_h, V_h^{\LCB}$ and the update rule for $u_{\mathrm{ref}}$ in \eqref{eq:line-number-17-k}. 
As a result, one has
\begin{align*}
 & \sum_{h=1}^{H}\sum_{k=1}^{K}\left(V_{h}^{\rref,k}(s_{h}^{k})-V_{h}^{\rref,K}(s_{h}^{k})\right)\mathds{1}\left(V_{h}^{k+1}(s_{h}^{k})-V_{h}^{\LCB,k+1}(s_{h}^{k})\leq1\text{ and }u_{\mathrm{ref}}^{k}(s_{h}^{k})=\mathsf{True}\right)\notag\\
 & \qquad\leq H\sum_{h=1}^{H}\sum_{k=1}^{K}\mathds{1}\left(V_{h}^{k+1}(s_{h}^{k})-V_{h}^{\LCB,k+1}(s_{h}^{k})\leq1\text{ and }u_{\mathrm{ref}}^{k}(s_{h}^{k})=\mathsf{True}\right)\notag\\
 & \qquad=H\sum_{h=1}^{H}\sum_{  s \in\mathcal{S}}  \sum_{k=1}^{K} \mathds{1}\left(V_{h}^{k+1}(s)-V_{h}^{\LCB,k+1}(s)\leq1\text{ and }u_{\mathrm{ref}}^{k}(s)=\mathsf{True}\right)  \notag\\
 & \qquad\leq H\sum_{h=1}^{H}\sum_{s\in\mathcal{S}}1=H^{2}S, 
\end{align*}
where the first inequality holds since $\|V_{h}^{\rref,k}-V_{h}^{\rref,K}\|_\infty\leq H$. 
Substitution into \eqref{eq:upper-ref-error-00} yields
\begin{align} 
  \sum_{h=1}^{H}\sum_{k=1}^{K}\left(V_{h}^{\rref,k}(s_{h}^{k})-V_{h}^{\rref,K}(s_{h}^{k})\right)
 & \leq H^2S+\omega. 
	\label{eq:upper-ref-error}
\end{align}

To complete the proof, it boils down to bounding the term $\omega$ defined in \eqref{eq:upper-ref-error-00}. To begin with, note that
\begin{equation*}
V^{\rref, K}_{h}(s^{k}_{h})  \geq V^\star_{h}(s^{k}_{h}) \geq V^{\LCB, k}_{h}(s_{h}^k), 
\end{equation*}
where we make use of the optimism of $V_{h}^{\rref,K}(s^{k}_{h})$ stated in Lemma~\ref{lem:Qk-lower} (cf.~\eqref{eq:Qk-lower}) and the pessimism of $V^{\LCB}_{h}$ in Lemma~\ref{lem:Qk-lcb} (see \eqref{eq:Qk-lcb-upper}). As a result, we can obtain
\begin{align}
\omega & = \sum_{h=1}^H \sum_{k=1}^K \left(V^{k}_{h}(s_{h}^k) - V^{\LCB, k}_{h}(s_{h}^k)\right)\mathds{1}\left(V^{k}_{h}(s_{h}^k) - V^{\LCB, k}_{h}(s_{h}^k) > 1\right) \nonumber \\
&\leq \sum_{h=1}^H\sum_{k=1}^K  \left(Q^{k}_{h}(s_{h}^k, a_{h}^k) - Q^{\LCB, k}_{h}(s_{h}^k, a_{h}^k)\right)\mathds{1}\left(Q^{k}_{h}(s_{h}^k, a_{h}^k) - Q^{\LCB, k}_{h}(s_{h}^k, a_{h}^k) > 1\right) ,\label{eq:ref-error-upper-final}
\end{align}
where the second line arises from the properties $V^{k}_{h}(s_{h}^k) = Q^{k}_{h}(s_{h}^k, a_{h}^k)$ (given that $a_h^k= \arg\max_a Q_h^k(s_h^k, a)$)  as well as the following fact (see line~\ref{eq:line-number-13-k} of Algorithm~\ref{algo:vr-k}) 
\[
	V^{\LCB, k}_{h}(s_{h}^k) \geq \max_a Q^{\LCB, k}_{h}(s_{h}^k, a) \geq Q^{\LCB, k}_h(s_{h}^k, a_{h}^k).
\]
Further, let us make note of the following elementary identity 
\begin{equation*}
	Q^{k}_{h}(s_{h}^k, a_{h}^k) - Q^{\LCB, k}_{h}(s_{h}^k, a_{h}^k)  = \int_0^{\infty} \mathds{1}\Big(Q^{k}_{h}(s_{h}^k, a_{h}^k) - Q^{\LCB, k}_{h}(s_{h}^k, a_{h}^k) > t \Big) \mathrm{d}t.
\end{equation*}
This allows us to obtain
\begin{align}
\omega & \leq\sum_{h=1}^{H}\sum_{k=1}^{K}\left\{ \int_{0}^{\infty}\mathds{1}\big(Q_{h}^{k}(s_{h}^{k},a_{h}^{k})-Q_{h}^{\LCB,k}(s_{h}^{k},a_{h}^{k})>t\big)\mathrm{d}t\right\} \ind\Big(Q_{h}^{k}(s_{h}^{k},a_{h}^{k})-Q_{h}^{\LCB,k}(s_{h}^{k},a_{h}^{k})>1\Big)\nonumber\\
 & =\int_{1}^{H}\sum_{h=1}^{H}\sum_{k=1}^{K}\mathds{1}\Big(Q_{h}^{k}(s_{h}^{k},a_{h}^{k})-Q_{h}^{\LCB,k}(s_{h}^{k},a_{h}^{k})>t\Big)\mathrm{d}t \notag\\
 & \lesssim\int_{1}^{H}\frac{H^{6}SA\log\frac{SAT}{\delta}}{t^{2}}\mathrm{d}t\lesssim H^{6}SA\log\frac{SAT}{\delta},
\label{eq:bound-ref-error-upper-final}
\end{align}
where the last line follows from the property \eqref{eq:main-lemma} in Lemma~\ref{lem:Qk-lcb}.
Combining the above bounds \eqref{eq:ref-error-upper-final} and \eqref{eq:bound-ref-error-upper-final} 
with \eqref{eq:upper-ref-error} establishes 
\begin{align} 
& \sum_{h=1}^H\sum_{k=1}^K \left(V^{\rref, k}_{h}(s^{k}_{h}) - V^{\rref, K}_{h}(s^{k}_{h})\right)  \notag\\
	& \leq H^2S +  \sum_{h=1}^H\sum_{k=1}^K  \left(Q^{k}_{h}(s_{h}^k, a_{h}^k) - Q^{\LCB, k}_{h}(s_{h}^k, a_{h}^k)\right)\mathds{1}\left(Q^{k}_{h}(s_{h}^k, a_{h}^k) - Q^{\LCB, k}_{h}(s_{h}^k, a_{h}^k) > 1\right) \notag\\
& \leq H^{6}SA\log\frac{SAT}{\delta} \notag
\end{align}
as claimed.

\section{Proof of Lemma~\ref{lem:Qk-upper}}\label{proof:lem:Qk-upper}

For notational simplicity, we shall adopt the short-hand notation
$$	k^n = k_h^n(s_h^k, a_h^k)  $$
throughout this section. 
A starting point for proving this lemma is the upper bound already derived in \eqref{eq:Vk-Qk},  
and we intend to further bound the first term on the right-hand side of \eqref{eq:Vk-Qk}. 
Recalling the expression of $Q^{\rref, k+1}_h(s_h^k, a_h^k)$ in \eqref{equ:lemma4 vr 1} and \eqref{eq:dejavu}, 
we can derive
\begin{align}
 & Q_{h}^{\rref,k}(s_{h}^{k},a_{h}^{k})-Q_{h}^{\star}(s_{h}^{k},a_{h}^{k})=Q_{h}^{\rref,k^{N_{h}^{k-1}(s_{h}^{k},a_{h}^{k})}+1}(s_{h}^{k},a_{h}^{k})-Q_{h}^{\star}(s_{h}^{k},a_{h}^{k})\\
 & \qquad=\eta_{0}^{N_{h}^{k-1}(s_{h}^{k},a_{h}^{k})}\left(Q_{h}^{\rref,1}(s_{h}^{k},a_{h}^{k})-Q_{h}^{\star}(s_{h}^{k},a_{h}^{k})\right)+\sum_{n=1}^{N_{h}^{k-1}(s_{h}^{k},a_{h}^{k})}\eta_{n}^{N_{h}^{k-1}(s_{h}^{k},a_{h}^{k})}b_{h}^{\rref,k^{n}+1}\nonumber\\
 & \qquad\qquad+\sum_{n=1}^{N_{h}^{k-1}(s_{h}^{k},a_{h}^{k})}\eta_{n}^{N_{h}^{k-1}(s_{h}^{k},a_{h}^{k})}\bigg(V_{h+1}^{k^{n}}(s_{h+1}^{k^{n}})-V_{h+1}^{\rref,k^{n}}(s_{h+1}^{k^{n}})+\frac{1}{n}\sum_{i=1}^{n}V_{h+1}^{\rref,k^{i}}(s_{h+1}^{k^{i}})-P_{h,s_{h}^{k},a_{h}^{k}}V_{h+1}^{\star}\bigg)\nonumber\\
 & \qquad\le\eta_{0}^{N_{h}^{k-1}(s_{h}^{k},a_{h}^{k})}H+B_{h}^{\rref,k}(s_{h}^{k},a_{h}^{k})+\frac{2\cb H^{2}}{{\big(N_{h}^{k-1}(s_{h}^{k},a_{h}^{k})\big)}^{3/4}}\log\frac{SAT}{\delta}\nonumber\\
 & \qquad\qquad+\sum_{n=1}^{N_{h}^{k-1}(s_{h}^{k},a_{h}^{k})}\eta_{n}^{N_{h}^{k-1}(s_{h}^{k},a_{h}^{k})}\bigg(V_{h+1}^{k^{n}}(s_{h+1}^{k^{n}})-V_{h+1}^{\rref,k^{n}}(s_{h+1}^{k^{n}})+\frac{1}{n}\sum_{i=1}^{n}V_{h+1}^{\rref,k^{i}}(s_{h+1}^{k^{i}})-P_{h,s_{h}^{k},a_{h}^{k}}V_{h+1}^{\star}\bigg),\nonumber 
\end{align} 
where the last line follows from \eqref{lemma1:equ10} with $B^{\rref, k^{N_h^{k-1}}+1}_h = B^{\rref, k}_h$  and the initialization $Q^{\rref, 1}_h(s_h^k, a_h^k) = H$.
Summing over all $1\leq k\leq K$ gives
\begin{align}
&\sum_{k = 1}^{K}  \Big( Q^{\rref, k}_h(s^k_h, a^k_h) - Q^{\star}_h(s^k_h, a^k_h) \Big) \nonumber \\
&\qquad  \le \sum_{k = 1}^{K} \bigg(H\eta^{N_h^{k-1}(s_h^k,a_h^k) }_0 + B^{\rref, k}_h(s^k_h, a^k_h) + \frac{2\cb H^2}{\big(N_h^{k-1}(s_h^k,a_h^k) \big)^{3/4}} \log\frac{SAT}{\delta}\bigg) \nonumber \\
& \qquad \qquad+ \sum_{k = 1}^{K} \sum_{n = 1}^{N_h^{k-1}(s_h^k,a_h^k) } \eta^{N_h^{k-1}(s_h^k,a_h^k) }_n \bigg(V^{k^n}_{h+1}(s^{k^n}_{h+1}) - V^{\rref, k^n}_{h+1}(s^{k^n}_{h+1}) + \frac{\sum_{i=1}^n V^{\rref, k^i}_{h+1}(s^{k^i}_{h+1})}{n} - P_{h, s_h^k, a_h^k}V^{\star}_{h+1} \bigg) \nonumber \\
& \qquad \le \sum_{k = 1}^{K} \bigg(H\eta^{N_h^{k-1}(s_h^k,a_h^k) }_0 + B^{\rref, k}_h(s^k_h, a^k_h) + \frac{2\cb H^2}{\big(N_h^{k-1}(s_h^k,a_h^k) \big)^{3/4}} \log\frac{SAT}{\delta}\bigg) \nonumber\\
&\qquad \qquad + \sum_{k = 1}^{K}\sum_{n = 1}^{N_h^{k-1}(s_h^k,a_h^k) } \eta^{N_h^{k-1}(s_h^k,a_h^k) }_n \Big(V^{k^n}_{h+1}(s^{k^n}_{h+1}) - V^{\star}_{h+1}(s^{k^n}_{h+1}) \Big)  \nonumber \\
&  \qquad \qquad+ \sum_{k = 1}^{K} \sum_{n = 1}^{N_h^{k-1}(s_h^k,a_h^k) } \eta^{N_h^{k-1}(s_h^k,a_h^k) }_n 
\bigg(V^{\star}_{h+1}(s^{k^n}_{h+1}) - V^{\rref, k^n}_{h+1}(s^{k^n}_{h+1}) + \frac{1}{n}\sum_{i=1}^n V^{\rref, k^i}_{h+1}(s^{k^i}_{h+1}) - P_{h, s_h^k, a_h^k}V^{\star}_{h+1} \bigg). \label{lemma1:equ8} 
\end{align}
Next, we control each term in \eqref{lemma1:equ8} separately.
\begin{itemize}

\item Regarding the first term of \eqref{lemma1:equ8}, we make two observations. To begin with, 
\begin{align}\label{eq:eta0eee}
	\sum_{k = 1}^{K}\eta^{N_h^{k-1}(s^k_h, a^k_h) }_0 
	\leq \sum_{(s,a) \in \mathcal{S}\times \mathcal{A}} \sum_{n = 0}^{N^{K - 1}_h(s,a)}\eta^{n}_0 \leq SA ,
\end{align}
where the last inequality follows since $\eta_0^n = 0$ for all $n>0$ (see \eqref{equ:learning rate notation}).
Next, it is also observed that
\begin{align}
	\sum_{k = 1}^{K}  \frac{1}{\big(N^{k-1}_h(s^k_h, a^k_h) \big)^{3/4}} &= \sum_{(s,a) \in \mathcal{S}\times \mathcal{A}} \sum_{n=1}^{N_h^{K- 1}(s,a) } \frac{1}{n^{3/4}} \notag \\
	&\leq \sum_{(s,a) \in \mathcal{S}\times \mathcal{A}} 4 \big(N_h^{K - 1}(s,a)\big)^{1/4} \leq 4(SA)^{3/4}K^{1/4}, \label{eq:n3/4-bound}
\end{align}
where the last inequality comes from Holder's inequality 
\[
	\sum_{(s,a) \in \mathcal{S}\times \mathcal{A}}  \big(N_h^{K-1}(s,a) \big)^{1/4} 
		\leq \Bigg[\sum_{(s,a) \in \mathcal{S}\times \mathcal{A}} 1\Bigg]^{3/4}  \Bigg[\sum_{(s,a) \in \mathcal{S}\times \mathcal{A}}  N_h^{K - 1}(s,a) \Bigg]^{1/4}  \leq (SA)^{3/4}K^{1/4}.
\]
Combine the above bounds to yield
\begin{align}
 & \sum_{k=1}^{K}\bigg(H\eta_{0}^{N_{h}^{k-1}(s_{h}^{k},a_{h}^{k})}+B_{h}^{\rref,k}(s_{h}^{k},a_{h}^{k})+\frac{2\cb H^{2}}{\big(N_{h}^{k-1}(s_{h}^{k},a_{h}^{k})\big)^{3/4}}\log\frac{SAT}{\delta}\bigg) \notag\\
 & \qquad\leq HSA+\sum_{k=1}^{K}B_{h}^{\rref,k}(s_{h}^{k},a_{h}^{k})+8\cb(SA)^{3/4}K^{1/4}H^{2}\log\frac{SAT}{\delta}.
\end{align}

\item We now turn to the second term of \eqref{lemma1:equ8}. A little algebra gives
\begin{align}
 & \sum_{k=1}^{K}\sum_{n=1}^{N_{h}^{k-1}(s_{h}^{k},a_{h}^{k})}\eta_{n}^{N_{h}^{k-1}(s_{h}^{k},a_{h}^{k})}\big(V_{h+1}^{k^{n}}(s_{h+1}^{k^{n}})-V_{h+1}^{\star}(s_{h+1}^{k^{n}})\big)\notag\\
 & \quad =\sum_{l=1}^{K}\sum_{N=N_{h}^{l}(s_{h}^{l},a_{h}^{l})}^{N_{h}^{K-1}(s_{h}^{l},a_{h}^{l})}\eta_{N_{h}^{l}(s_{h}^{l},a_{h}^{l})}^{N}\big(V_{h+1}^{l}(s_{h+1}^{l})-V_{h+1}^{\star}(s_{h+1}^{l})\big)\nonumber\\
 & \quad \le\left(1+\frac{1}{H}\right)\sum_{l=1}^{K}\big(V_{h+1}^{l}(s_{h+1}^{l})-V_{h+1}^{\star}(s_{h+1}^{l})\big)\nonumber\\
 & \quad =\left(1+\frac{1}{H}\right)\left[\sum_{k=1}^{K}\big(V_{h+1}^{k}(s_{h+1}^{k})-V_{h+1}^{\pi^{k}}(s_{h+1}^{k})\big)-\sum_{k=1}^{K}\big(V_{h+1}^{\star}(s_{h+1}^{k})-V_{h+1}^{\pi^{k}}(s_{h+1}^{k})\big)\right] . 
\label{equ:remove-v-star}
\end{align}
Here, the second line replaces $k^n$ (resp.~$n$) with $l$ (resp.~$N_h^l(s_h^l, a_h^l)$), 
the third line is due to the property $\sum_{N = n}^{\infty} \eta_n^N \le 1+ 1/H$ (see Lemma~\ref{lemma:property of learning rate}),
while the last relation replaces $l$ with $k$ again.

\item
When it comes to the last term of \eqref{lemma1:equ8}, we can derive
\begin{align*}
&\sum_{k = 1}^{K} \sum_{n = 1}^{N_h^{k-1}(s_h^k,a_h^k) } \eta^{N_h^{k-1}(s_h^k,a_h^k) }_n \Bigg( V^{\star}_{h+1}(s^{k^n}_{h+1}) - V^{\rref, k^n}_{h+1}(s^{k^n}_{h+1}) + \frac{1}{n} \sum_{i = 1}^n V^{\rref, k^i}_{h+1}(s^{k^i}_{h+1}) - P_{h, s_h^k, a_h^k}V^{\star}_{h+1} \Bigg) \nonumber \\
& = \sum_{k = 1}^K\sum_{n = 1}^{N_h^{k-1}(s_h^k,a_h^k)} \eta^{N^{k -1}_h(s^k_h, a^k_h) }_n 
\Bigg( \big(P^{k^n}_{h} -  P_{h, s^k_h, a^k_h} \big) \big(V^{\star}_{h+1} -  V^{\rref, k^n}_{h+1} \big) + \frac{1}{n} \sum_{i=1}^n \big(V^{\rref, k^i}_{h+1}(s^{k^i}_{h+1}) - P_{h, s^k_h, a^k_h}V^{\rref, k^n}_{h+1}\big)\Bigg) \nonumber \\
&= \sum_{k = 1}^{K} \sum_{N = N^{k}_h(s_h^k, a_h^k)}^{N^{K- 1}_h(s_h^k, a_h^k) }\eta^{N}_{N^{k}_h(s_h^k, a_h^k)} 
	\Bigg( \big(P^{k}_{h} -  P_{h, s^k_h, a^k_h} \big) \big(V^{\star}_{h+1} -  V^{\rref, k}_{h+1} \big)  + \frac{ \sum_{i =1}^{N^{k}_h(s_h^k, a_h^k)} \left(V^{\rref, k^i}_{h+1}(s^{k^i}_{h+1}) - P_{h, s^k_h, a^k_h}V^{\rref, k}_{h+1} \right) }{N^{k}_h(s_h^k, a_h^k)} \Bigg). 
\end{align*}
Here, the first equality holds since $V^{\star}_{h+1}(s^{k^n}_{h+1}) - V^{\rref, k^n}_{h+1}(s^{k^n}_{h+1}) = P_{h}^{k^n} \big(V^{\star}_{h+1} - V^{\rref, k^n}_{h+1} \big)$ (in view of the definition of $P_h^k$ in \eqref{eq:P-hk-defn-s}),  the second equality can be seen via simple rearrangement of the terms, 
while in the last line we replace $k^n$ (resp.~$n$) with $k$ (resp.~$N_h^k(s_h^k, a_h^k)$).  

\end{itemize}

\noindent Taking the above bounds together with \eqref{lemma1:equ8} and \eqref{eq:Vk-Qk}, we can rearrange terms to reach
\begin{align}
&\sum_{k = 1}^K \big(V^k_h(s^k_h) - V_h^{\pi^k}(s^k_h)\big) \notag\\
& \le \left(1+\frac{1}{H}\right) \sum_{k = 1}^K\big(V^{k}_{h+1}(s^{k}_{h+1}) - V^{\pi^k}_{h+1}(s^k_{h+1})\big)  + \sum_{k = 1}^{K}B^{\rref, k}_h(s^k_h, a^k_h)  \notag\\
&\; + HSA + 8\cb H^2 (SA)^{3/4} K^{1/4} \log\frac{SAT}{\delta}  + \sum_{k = 1}^K \big(P_{h, s^k_h, a^k_h} - P^k_h \big) \big(V^{\star}_{h+1} - V^{\pi^k}_{h+1} \big) \notag\\
	&\; + \sum_{k = 1}^{K} \sum_{N = N^{k}_h(s_h^k, a_h^k)}^{N^{K- 1}_h(s_h^k, a_h^k) }\eta^{N}_{N^{k}_h(s_h^k, a_h^k)} \Bigg[ \big(P^{k}_{h} -  P_{h, s^k_h, a^k_h} \big) \big(V^{\star}_{h+1} -  V^{\rref, k}_{h+1} \big)  + \frac{\sum_{i=1}^{N^{k}_h(s_h^k, a_h^k)} \left(V^{\rref, k^i}_{h+1}(s^{k^i}_{h+1}) - P_{h, s^k_h, a^k_h}V^{\rref, k}_{h+1} \right)}{N^{k}_h(s_h^k, a_h^k)} \Bigg] ,
	\label{eq:recursion-12345}
\end{align}
where we have dropped the term $- \frac{1}{H} \sum_{k}\big(V_{h+1}^{\star}(s_{h+1}^{k})-V_{h+1}^{\pi^{k}}(s_{h+1}^{k})\big)$ 
owing to the fact that $V_{h+1}^{\star}\geq V_{h+1}^{\pi^{k}}$.

Thus far, we have established a crucial connection between $\sum_{k = 1}^K \big(V^k_h(s^k_h) - V_h^{\pi^k}(s^k_h)\big)$ at step $h$ 
and  $\sum_{k = 1}^K\big(V^{k}_{h+1}(s^{k}_{h+1}) - V^{\pi^k}_{h+1}(s^k_{h+1})\big)$ at step $h+1$. 
Clearly, the term $V^{k}_{h+1}(s^{k}_{h+1}) - V^{\pi^k}_{h+1}(s^k_{h+1})$ can be further bounded in the same manner. 
As a result, by recursively applying the above relation \eqref{eq:recursion-12345} 
 over the time steps $h=1,2,\cdots, H$ and using the terminal condition $V^{k}_{H+1} = V^{\pi^k}_{H+1} = 0$, 
we can immediately arrive at the advertised bound in Lemma~\ref{lem:Qk-upper}.

\section{Proof of Lemma~\ref{lemma:bound_of_everything}}
\label{sec-proof-lemma:everything}

\subsection{Bounding the term $\mathcal{R}_1$}\label{sec-proof-lemma:extra-term}

 First of all, let us look at the first two terms of $\mathcal{R}_1$ in \eqref{eq:expression_R1}. 
Recognizing the following elementary inequality
\begin{equation}
	\label{equ:algebra property}
	\left(1+\frac{1}{H}\right)^{h-1} 
	\leq \left(1+\frac{1}{H}\right)^{H} \leq e  
	\quad \text{for all } h = 1, 2,\cdots, H+1,
\end{equation}
we are allowed to upper bound the first two terms in \eqref{eq:expression_R1} as follows:  
\begin{align}
	& \sum_{h = 1}^H \left(1+\frac{1}{H}\right)^{h-1} 
	\left\{ HSA +  8\cb H^2  (SA)^{3/4} K^{1/4}\log\frac{SAT}{\delta} \right\} 
	\lesssim H^2SA + H^3  (SA)^{3/4} K^{1/4}\log\frac{SAT}{\delta}\nonumber \\
	&\qquad \lesssim H^{4.5}SA\log^2\frac{SAT}{\delta} + \sqrt{H^3SAK} = H^{4.5}SA\log^2\frac{SAT}{\delta} + \sqrt{H^2SAT}, 
	\label{eq:secf:equ-num1}
\end{align}
where the last inequality can be shown using the AM-GM inequality as follows:
\begin{align*}
H^{3}(SA)^{3/4}K^{1/4}\log\frac{SAT}{\delta}=\Big(H^{9/4}\sqrt{SA}\log\frac{SAT}{\delta}\Big)\cdot(H^{3}SAK)^{1/4}\leq H^{4.5}SA\log^{2}\frac{SAT}{\delta}+\sqrt{H^{3}SAK}.
\end{align*}

We are now left with the last term of $\mathcal{R}_1$ in \eqref{eq:expression_R1}.
Towards this, we resort to Lemma~\ref{lemma:martingale-union-all2} by setting
\begin{align*}
	W_{h+1}^i \coloneqq {V}^{\star}_{h+1} - V_{h+1}^{\pi^k}
	\qquad \text{and} \qquad
	c_h \coloneqq \left(1+\frac{1}{H}\right)^{h-1}.  
\end{align*}
In view of \eqref{equ:algebra property} and the property $H\geq V^{\star}(s) \geq V^{\pi} (s)\geq 0$, we see that
\begin{align*}
   0\leq c_h \leq e,
	\qquad W_{h+1}^i \geq 0, 
	\qquad \text{and} \qquad
	  \|W_{h+1}^i\|_\infty \leq H \eqqcolon C_{\mathrm{w}}. 
\end{align*}
Therefore, applying Lemma~\ref{lemma:martingale-union-all2} yields
\begin{align}
	&\left|\sum_{h = 1}^H \left(1+\frac{1}{H}\right)^{h-1}\sum_{k = 1}^K \big(P_{h, s^k_h, a^k_h} - P^k_h \big) \big(V^{\star}_{h+1} - V^{\pi^k}_{h+1} \big)\right|  = \left|\sum_{h = 1}^H \sum_{k = 1}^K Y_{k,h} \right| \nonumber\\
	&\qquad \lesssim \sqrt{T C_{\mathrm{w}}^{2} \log\frac{1}{\delta}}  + C_{\mathrm{w}} \log\frac{1}{\delta} 
	= \sqrt{H^2T\log\frac{1}{\delta} } + H \log\frac{1}{\delta} \label{eq:secf:equ-num2}
\end{align}
with probability exceeding $1-\delta$.

Combining \eqref{eq:secf:equ-num1} and \eqref{eq:secf:equ-num2} with the definition \eqref{eq:expression_R1} of $\mathcal{R}_1$  immediately leads to the claimed bound.

\subsection{Bounding the term $\mathcal{R}_2$}\label{sec-proof-lemma:exploration}

In view of the definition of $B^{\rref, k}_h(s^k_h, a^k_h)$ in line~\ref{eq:line-number-19} of Algorithm~\ref{algo:subroutine}, we can decompose $\mathcal{R}_2$ (cf.~\eqref{eq:expression_R2})  as follows: 
\begin{align}
\mathcal{R}_2	&= \sum_{h = 1}^H \left(1+\frac{1}{H}\right)^{h-1}\cb\sqrt{H\log\frac{SAT}{\delta} }\sum_{k = 1}^K\sqrt{\frac{\sigma^{\adv, k}_h(s^k_h, a^k_h) - \big(\mu^{\adv, k}_h(s^k_h, a^k_h)\big)^2}{N_h^k(s_h^k, a_h^k)}} \nonumber \\
	 &\qquad + \sum_{h = 1}^H \left(1+\frac{1}{H}\right)^{h-1} \cb\sqrt{\log\frac{SAT}{\delta}} \sum_{k = 1}^K \sqrt{\frac{\sigma^{\re, k}_h(s^k_h, a^k_h) - \big(\mu^{\re, k}_h(s^k_h, a^k_h)\big)^2}{N_h^k(s_h^k, a_h^k)}} \nonumber \\
	 & \lesssim   \sqrt{H\log\frac{SAT}{\delta} } \sum_{h = 1}^H\sum_{k = 1}^K\sqrt{\frac{\sigma^{\adv, k}_h(s^k_h, a^k_h) - \big(\mu^{\adv, k}_h(s^k_h, a^k_h)\big)^2}{N_h^k(s_h^k, a_h^k)}}   \nonumber \\
	 & \qquad +\sqrt{\log\frac{SAT}{\delta} } \sum_{h = 1}^H    \sum_{k = 1}^K \sqrt{\frac{\sigma^{\re, k}_h(s^k_h, a^k_h) - \big(\mu^{\re, k}_h(s^k_h, a^k_h)\big)^2}{N_h^k(s_h^k, a_h^k)}} , \label{equ:bound of b_hat}
\end{align}
where the last relation holds due to \eqref{equ:algebra property}. 
In what follows, we intend to bound these two terms separately.

\paragraph{Step 1: upper bounding the first term in \eqref{equ:bound of b_hat}.}

Towards this, we make the observation that
\begin{align}
\sum_{k = 1}^K\sqrt{\frac{\sigma^{\adv, k}_h(s^k_h, a^k_h) - \big(\mu^{\adv, k}_h(s^k_h, a^k_h) \big)^2}{N^{k}_h(s^k_h, a^k_h)}} 
&\le \sum_{k = 1}^K\sqrt{\frac{\sigma^{\adv, k}_h(s^k_h, a^k_h)}{N^{k}_h(s^k_h, a^k_h)}} \nonumber \\
&= \sum_{k = 1}^K\sqrt{\frac{\sum_{n = 1}^{N^{k}_h(s^k_h, a^k_h)} \eta^{N^{k}_h(s^k_h, a^k_h)}_n\big(V^{k^n}_{h+1}(s^{k^n}_{h+1}) - V^{\rref, k^n}_{h+1}(s^{k^n}_{h+1})\big)^2}{N^{k}_h(s^k_h, a^k_h)}} , \label{eq:proof of var of var prelim}
\end{align}
where the second line follows from the update rule of $\sigma_h^{\adv, k}$  in \eqref{eq:recursion_mu_sigma_adv}.
Combining the relation 
$ | V^{k}_{h+1}(s_h^k) - V^{\rref, k}_{h+1}(s_h^k) | \le 2$ (cf.~\eqref{eq:close-ref-v}) and the property $\sum_{n = 1}^{N^{k}_h(s^k_h, a^k_h)} \eta^{N^{k}_h(s^k_h, a^k_h)}_n \leq 1$ (cf.~\eqref{eq:sum-eta-n-N}) with \eqref{eq:proof of var of var prelim} yields
\begin{align}
	\sum_{k = 1}^K\sqrt{\frac{\sigma^{\adv, k}_h(s^k_h, a^k_h) - \big(\mu^{\adv, k}_h(s^k_h, a^k_h) \big)^2}{N^{k}_h(s^k_h, a^k_h)}} 
	&\le \sum_{k = 1}^K\sqrt{\frac{4}{N^{k}_h(s^k_h, a^k_h)}} \le 2\sqrt{SAK}. \label{equ: proof of var of var}
\end{align}
Here, the last inequality holds due to the following fact:
\begin{align}
	\sum_{k = 1}^K\sqrt{\frac{1}{N^{k}_h(s^k_h, a^k_h)}} &= \sum_{(s, a)\in \cS\times \cA} \sum_{n = 1}^{N^{K}_h(s, a)} \sqrt{\frac{1}{n}} \le 2 \sum_{(s, a)\in \cS\times \cA} \sqrt{N^{K}_h(s, a)} \notag \\
	& \le  2  \sqrt{ \sum_{(s, a)\in \cS\times \cA} 1 } \cdot \sqrt{\sum_{(s,a)\in \cS\times \cA}N^{K}_h(s, a)}  =  2\sqrt{SAK},
	\label{equ:sak bound}
\end{align}
where the last line arises from  Cauchy-Schwarz  and the basic fact that $\sum_{(s,a)}N^{K}_h(s, a) = K$.

\paragraph{Step 2: upper bounding the second term in \eqref{equ:bound of b_hat}.}

Recalling the update rules of $\mu_h^{\re, k}$ and $\sigma_h^{\re, k}$ in \eqref{eq:recursion_mu_sigma_ref}, we have
\begin{align}
&\sum_{k = 1}^K\sqrt{\frac{\sigma^{\re, k}_h(s^k_h, a^k_h) - \big(\mu^{\re, k}_h(s^k_h, a^k_h)\big)^2}{N_h^k(s_h^k, a_h^k)}}\ \nonumber \\
&= \sum_{k = 1}^K\sqrt{\frac{1}{N^{k}_h(s^k_h, a^k_h)}} \underbrace{ \sqrt{ \frac{\sum_{n = 1}^{N^{k}_h(s^k_h, a^k_h)}\big(V^{\rref, k^n}_{h+1}(s^{k^n}_{h+1}) \big)^2}{N^{k}_h(s^k_h, a^k_h)} - \bigg(\frac{\sum_{n = 1}^{N^{k}_h(s^k_h, a^k_h)}V^{\rref, k^n}_{h+1}(s^{k^n}_{h+1})}{N^{k}_h(s^k_h, a^k_h)}\bigg)^2 }  }_{=: J_h^k}. \label{equ:adv-var1}
\end{align}
Additionally, the quantity $J_h^k$ defined in \eqref{equ:adv-var1} obeys
\begin{align}
(J_h^k)^2 & \leq\frac{\sum_{n = 1}^{N^{k}_h(s^k_h, a^k_h)}\big(V^{\rref, k^n}_{h+1}(s^{k^n}_{h+1})\big)^2 - \big(V^{\star}_{h+1}(s^{k^n}_{h+1}) \big)^2 }{N^{k}_h(s^k_h, a^k_h)} + \frac{\sum_{n = 1}^{N^{k}_h(s^k_h, a^k_h)}\big(V^{\star}_{h+1}(s^{k^n}_{h+1}) \big)^2}{N^{k}_h(s^k_h, a^k_h)}  - \Bigg(\frac{\sum_{n = 1}^{N^{k}_h(s^k_h, a^k_h)}V^{\star}_{h+1}(s^{k^n}_{h+1})}{N^{k}_h(s^k_h, a^k_h)}\Bigg)^2 \nonumber\\
\le& \underbrace{ \frac{\sum_{n = 1}^{N^{k}_h(s^k_h, a^k_h)}2H \big(V^{\rref, k^n}_{h+1}(s^{k^n}_{h+1}) - V^{\star}_{h+1}(s^{k^n}_{h+1}) \big)}{N^{k}_h(s^k_h, a^k_h)} }_{=:J_1}+ \underbrace{ \frac{\sum_{n = 1}^{N^{k}_h(s^k_h, a^k_h)}\big(V^{\star}_{h+1}(s^{k^n}_{h+1})\big)^2}{N^{k}_h(s^k_h, a^k_h)} - \Bigg(\frac{\sum_{n = 1}^{N^{k}_h(s^k_h, a^k_h)}V^{\star}_{h+1}(s^{k^n}_{h+1})}{N^{k}_h(s^k_h, a^k_h)}\Bigg)^2}_{=: J_2}, \label{equ:adv-var2}
\end{align}
which arises from the fact that $H\geq V^{\rref, k^n}_{h+1} \ge V^{\star}_{h+1}\geq 0$ for all $k^n \leq K$ and hence
\begin{align*}
\big(V_{h+1}^{\rref,k^{n}}(s_{h+1}^{k^{n}})\big)^{2}-\big(V_{h+1}^{\star}(s_{h+1}^{k^{n}})\big)^{2} & =\big(V_{h+1}^{\rref,k^{n}}(s_{h+1}^{k^{n}})+V_{h+1}^{\star}(s_{h+1}^{k^{n}})\big)\big(V_{h+1}^{\rref,k^{n}}(s_{h+1}^{k^{n}})-V_{h+1}^{\star}(s_{h+1}^{k^{n}})\big)\\
 & \le2H\big(V_{h+1}^{\rref,k^{n}}(s_{h+1}^{k^{n}})-V_{h+1}^{\star}(s_{h+1}^{k^{n}})\big).
\end{align*}
With \eqref{equ:adv-var2} in mind, 
we shall proceed to bound each term in \eqref{equ:adv-var2} separately. 

\begin{itemize}
\item The first term $J_1$ can be straightforwardly bounded as follows
\begin{align}
J_1 & = \frac{2H}{N_h^k(s_h^k,a_h^k)} \Bigg( \sum_{n = 1}^{N^{k}_h(s_h^k, a_h^k)}\Big(V^{\rref, k^n}_{h+1}(s^{k^n}_{h+1}) - V^{\star}_{h+1}(s^{k^n}_{h+1})\Big)\ind\Big(V^{\rref, k^n}_{h+1}(s^{k^n}_{h+1}) - V^{\star}_{h+1}(s^{k^n}_{h+1}) \leq 3\Big) + \Phi^{k}_h(s^k_h, a^k_h) \Bigg) \nonumber \\
&\le 6H +\frac{2H}{N_h^k(s_h^k,a_h^k)} \Phi^{k}_h(s^k_h, a^k_h), \label{equ:error until k-234}
\end{align}
where $\Phi^{k}_h(s^k_h, a^k_h)$ is defined as
\begin{equation} \label{eq:def_Phik}
\Phi^{k}_h(s_h^k, a_h^k) \defn \sum_{n = 1}^{N^{k}_h(s_h^k, a_h^k)}\Big(V^{\rref, k^n}_{h+1}(s^{k^n}_{h+1}) - V^{\star}_{h+1}(s^{k^n}_{h+1})\Big)\ind\Big(V^{\rref, k^n}_{h+1}(s^{k^n}_{h+1}) - V^{\star}_{h+1}(s^{k^n}_{h+1}) > 3\Big).
\end{equation}
\item When it comes to the second term $J_2$, we claim that 
\begin{align} \label{eq:var-Vstar}
J_2 \lesssim \Var_{h, s_h^k, a_h^k}(V^{\star}_{h+1}) + H^2\sqrt{\frac{\log\frac{SAT}{\delta}}{N_h^k(s_h^k,a_h^k)}},
\end{align}
which will be justified in Appendix~\ref{sec:proof:eq:var-Vstar}. 
\end{itemize}

Plugging \eqref{equ:error until k-234} and \eqref{eq:var-Vstar} into \eqref{equ:adv-var2} and \eqref{equ:adv-var1} 
allows one to demonstrate that
\begin{align}
&\sum_{k = 1}^K\sqrt{\frac{\sigma^{\re, k}_h(s^k_h, a^k_h) - \big(\mu^{\re, k}_h(s^k_h, a^k_h)\big)^2}{N_h^k(s_h^k, a_h^k)}} \nonumber \\
& \lesssim \sum_{k = 1}^K\sqrt{\frac{1}{N^{k}_h(s^k_h, a^k_h)}}\sqrt{H+\frac{H\Phi^{k}_h(s^k_h, a^k_h)}{N^{k}_h(s^k_h, a^k_h)} + \Var_{h, s^k_h, a^k_h}(V^{\star}_{h+1}) + H^2\sqrt{\frac{\log\frac{SAT}{\delta}}{N^{k}_h(s^k_h, a^k_h)}}} \nonumber\\
&\leq \sum_{k = 1}^K \Bigg( \sqrt{\frac{H }{N^{k}_h(s^k_h, a^k_h)}}  + \frac{ \sqrt{H\Phi^{k}_h(s^k_h, a^k_h)}}{N^{k}_h(s^k_h, a^k_h)} + \sqrt{ \frac{ \Var_{h, s^k_h, a^k_h}(V^{\star}_{h+1})}{N^{k}_h(s^k_h, a^k_h)}}  + \frac{H\log^{1/4}\frac{SAT}{\delta}}{ \big(N^{k}_h(s^k_h, a^k_h) \big)^{3/4}} \Bigg) \nonumber\\
& \lesssim \sqrt{HSAK} +\sum_{k=1}^K \frac{ \sqrt{H\Phi^{k}_h(s^k_h, a^k_h)}}{N^{k}_h(s^k_h, a^k_h)}+ \sum_{k = 1}^K\sqrt{\frac{\Var_{h, s^k_h, a^k_h}(V^{\star}_{h+1})}{N^{k}_h(s^k_h, a^k_h)}} + H(SA)^{3/4} \left(K\log\frac{SAT}{\delta} \right)^{1/4} ,
 \label{equ:bound of var}
\end{align}
where the last line follows from \eqref{equ:sak bound} and \eqref{eq:n3/4-bound}.

\paragraph{Step 3: putting together the preceding results.}

Finally,  the above results in \eqref{equ: proof of var of var} and \eqref{equ:bound of var} taken collectively with \eqref{equ:bound of b_hat} lead to
\begin{align*}
\mathcal{R}_2 &\lesssim \sqrt{H^3SAK\log\frac{SAT}{\delta}}  + \sum_{h = 1}^H \sqrt{\log\frac{SAT}{\delta}} \sum_{k = 1}^K \sqrt{\frac{\sigma^{\re, k}_h(s^k_h, a^k_h) - \big(\mu^{\re, k}_h(s^k_h, a^k_h)\big)^2}{N_h^k(s_h^k, a_h^k)}} \\
 &\lesssim \sqrt{H^3SAK\log\frac{SAT}{\delta}} + H^2(SA)^{3/4}K^{1/4}\log^{5/4}\frac{SAT}{\delta}  + \sqrt{\log\frac{SAT}{\delta} }\sum_{h=1}^H\sum_{k = 1}^K\sqrt{\frac{\Var_{h, s^k_h, a^k_h}(V^{\star}_{h+1})}{N^{k}_h(s^k_h, a^k_h)}} \\
 &\qquad +\sqrt{H \log\frac{SAT}{\delta} }\sum_{h=1}^H \sum_{k=1}^K \frac{ \sqrt{\Phi^{k}_h(s^k_h, a^k_h)}}{N^{k}_h(s^k_h, a^k_h)} \\
&\overset{\mathrm{(i)}}{\lesssim} \sqrt{H^3SAK\log\frac{SAT}{\delta}} + H^2(SA)^{3/4}K^{1/4}\log^{5/4}\frac{SAT}{\delta}  + H^4 SA\log^2\frac{SAT}{\delta}  \\
&\overset{\mathrm{(ii)}}{\lesssim} \sqrt{H^3SAK\log\frac{SAT}{\delta}} + H^4 SA\log^2\frac{SAT}{\delta} = \sqrt{H^2SAT\log\frac{SAT}{\delta}} + H^4 SA\log^2\frac{SAT}{\delta} .
\end{align*} 
Here, (i) holds due to the following two claimed inequalities
\begin{align} 
\sum_{h = 1}^H \sum_{k = 1}^K\sqrt{\frac{\Var_{h, s^k_h, a^k_h}(V^{\star}_{h+1})}{N^{k}_h(s^k_h, a^k_h)}} & \lesssim \sqrt{H^2SAT\log\frac{SAT}{\delta}} +  H^{4}SA\log\frac{SAT}{\delta},\label{eq:var-Vstar-sum}  \\
\sum_{h=1}^H \sum_{k=1}^K \frac{ \sqrt{\Phi^{k}_h(s^k_h, a^k_h)}}{N^{k}_h(s^k_h, a^k_h)} & \lesssim H^{7/2}SA\log^{3/2}\frac{SAT}{\delta}, \label{eq:bound-ref-error-2}
\end{align}
whose proofs are postponed to
to Appendix~\ref{sec:proof:eq:var-Vstar-sum} and Appendix~\ref{sec:proof:eq:bound-ref-error-2}, respectively. 
Additionally, the inequality (ii) above is valid since
\begin{align*}
 & H^{2}(SA)^{3/4}K^{1/4}\log^{5/4}\frac{SAT}{\delta}=\bigg(H^{5/4}(SA)^{1/2}\log\frac{SAT}{\delta}\bigg)\cdot\bigg(H^{3}SAK\log\frac{SAT}{\delta}\bigg)^{1/4}\\
 & \quad\lesssim H^{2.5}SA\log^{2}\frac{SAT}{\delta}+\sqrt{H^{3}SAK\log\frac{SAT}{\delta}}=H^{2.5}SA\log^{2}\frac{SAT}{\delta}+\sqrt{H^{2}SAT\log\frac{SAT}{\delta}}
\end{align*}
due to the Cauchy-Schwarz inequality. This concludes the proof of the advertised upper bound on $\mathcal{R}_2$.

\subsubsection{Proof of the inequality~\eqref{eq:var-Vstar}}\label{sec:proof:eq:var-Vstar}

Akin to the proof of $I_4^1$ in \eqref{equ:lemma4 vr 7}, let 
\begin{align*}
	W_{h+1}^i \coloneqq ({V}^{\star}_{h+1})^{ 2} 
	\qquad \text{and} \qquad
	u_h^i(s,a, N) \coloneqq \frac{1}{N}.
\end{align*}
By observing and setting
\begin{align*}
C_{\mathrm{u}} \coloneqq \frac{1}{N},\qquad  \|W_{h+1}^i\|_\infty \leq H^2 \eqqcolon C_{\mathrm{w}} ,
\end{align*} 
we can apply Lemma~\ref{lemma:martingale-union-all} to yield
\begin{align*}
	\bigg| \frac{1}{{N_h^k}}\sum_{n = 1}^{{N_h^k}} \big( V^{\star}_{h+1}(s^{k^n}_{h+1}) \big)^2 - P_{h, s_h^k, a_h^k}(V^{\star}_{h+1})^2 \bigg| = \bigg| \frac{1}{{N_h^k}}\sum_{n = 1}^{{N_h^k}} \big(P^{k^n}_h - P_{h, s_h^k, a_h^k} \big) \big(V^{\star}_{h+1} \big)^2  \bigg|
	\lesssim H^2\sqrt{\frac{\log^2\frac{SAT}{\delta}}{{N_h^k}}}
\end{align*}
with probability at least $1-\delta$.
Similarly, by applying the trivial bound $\|V_{h+1}^{\star}\|_{\infty} \leq H$ and Lemma~\ref{lemma:martingale-union-all}, we can obtain
\begin{align*}
\bigg| \frac{1}{{N_h^k}}\sum_{n = 1}^{{N_h^k}} V^{\star}_{h+1}(s^{k^n}_{h+1}) - P_{h, s_h^k, a_h^k}V^{\star}_{h+1} \bigg| 
	= \bigg| \frac{1}{{N_h^k}}\sum_{n = 1}^{{N_h^k}} \big(P^{k^n}_h - P_{h, s_h^k, a_h^k} \big)V^{\star}_{h+1} \bigg|
	\lesssim H\sqrt{\frac{\log\frac{SAT}{\delta}}{{N_h^k}}}
\end{align*}
with probability at least $1-\delta$.

Recalling from \eqref{lemma1:equ2} the definition 
$$\Var_{h, s_h^k, a_h^k}(V^{\star}_{h+1}) = P_{h, s_h^k, a_h^k}(V^{\star}_{h+1})^2 -  \big(P_{h, s_h^k, a_h^k} V^{\star}_{h+1} \big)^2 ,$$ 
we can use the preceding two bounds and the triangle inequality to show that: 
\begin{align*} 
&  \Bigg|\frac{1}{{N_h^k}}\sum_{n = 1}^{{N_h^k}}  V^{\star}_{h+1}(s^{k^n}_{h+1})^2 - \Bigg(\frac{1}{{N_h^k}}\sum_{n = 1}^{{N_h^k}} V^{\star}_{h+1}(s^{k^n}_{h+1}) \Bigg)^2 - \Var_{h, s_h^k, a_h^k}(V^{\star}_{h+1}) \Bigg|\\
&\quad\leq \Bigg|\frac{1}{{N_h^k}}\sum_{n = 1}^{{N_h^k}} V^{\star}_{h+1}(s^{k^n}_{h+1})^2 - P_{h, s_h^k, a_h^k}(V^{\star}_{h+1})^2 \Bigg| 
	+ \Bigg| \bigg(\frac{1}{{N_h^k}}\sum_{n = 1}^{{N_h^k}} V^{\star}_{h+1}(s^{k^n}_{h+1})\bigg)^2 - (P_{h, s_h^k, a_h^k}V^{\star}_{h+1})^2 \Bigg| \\
&\quad\lesssim H^2\sqrt{\frac{\log\frac{SAT}{\delta}}{{N_h^k}}} + \Bigg| \frac{1}{N_h^k}\sum_{n = 1}^{N_h^k} V^{\star}_{h+1}(s^{k^n}_{h+1})  - P_{h, s_h^k, a_h^k}V^{\star}_{h+1} \Bigg| \cdot  \Bigg| \frac{1}{{N_h^k}}\sum_{n = 1}^{{N_h^k}} V^{\star}_{h+1}(s^{k^n}_{h+1})  + P_{h, s_h^k, a_h^k}V^{\star}_{h+1} \Bigg| \\
&\quad\lesssim H^2\sqrt{\frac{\log\frac{SAT}{\delta}}{{N_h^k}}}
\end{align*}
with probability at least $1-\delta$, 
where the last line also makes use of the fact that $\|V_{h+1}^{\star}\|_{\infty} \leq H$.

\subsubsection{Proof of the inequality~\eqref{eq:var-Vstar-sum}}\label{sec:proof:eq:var-Vstar-sum}

To begin with, we make the observation that 
\begin{align*}
  \sum_{k = 1}^K\sqrt{\frac{\Var_{h, s^k_h, a^k_h}(V^{\star}_{h+1})}{N^{k}_h(s^k_h, a^k_h)}} & = \sum_{(s, a)\in\cS\times\cA} \sum_{n = 1}^{N^{K}_h(s, a)} \sqrt{\frac{\Var_{h, s, a}(V^{\star}_{h+1})}{n}} \leq 2\sum_{(s, a)\in\cS\times\cA}   \sqrt{N^{K}_h(s, a)\Var_{h, s, a}(V^{\star}_{h+1})},
\end{align*}
which relies on the fact that $\sum_{n = 1}^{N} 1/{\sqrt{n}} \leq 2\sqrt{N}$. 
It then follows that
\begin{align}
\sum_{h = 1}^H \sum_{k = 1}^K\sqrt{\frac{\Var_{h, s^k_h, a^k_h}(V^{\star}_{h+1})}{N^{k}_h(s^k_h, a^k_h)}} &\leq 2 \sum_{h = 1}^H \sum_{(s, a)\in\cS\times\cA}   \sqrt{N^{K}_h(s, a)\Var_{h, s, a}(V^{\star}_{h+1})}\nonumber \\
&\leq 2\sqrt{\sum_{h = 1}^H \sum_{(s, a)\in\cS\times\cA} 1}  \cdot \sqrt{ \sum_{h = 1}^H \sum_{(s, a)\in\cS\times\cA}   N^{K}_h(s, a)\Var_{h, s, a}(V^{\star}_{h+1})} \nonumber \\
&= 2 \sqrt{HSA}\sqrt{\sum_{h = 1}^H \sum_{k = 1}^K \Var_{h, s^k_h, a^k_h}(V^{\star}_{h+1})}\, , \label{equ:bound-var-v-optimal}
\end{align}
where the second inequality invokes the Cauchy-Schwarz inequality.

The rest of the proof is then dedicated to bounding \eqref{equ:bound-var-v-optimal}. 
Towards this end, we first decompose 
\begin{align}
\sum_{h = 1}^H \sum_{k = 1}^K \Var_{h, s^k_h, a^k_h}(V^{\star}_{h+1}) &\leq 
\sum_{h = 1}^H \sum_{k = 1}^K \Var_{h, s^k_h, a^k_h} \big( V^{\pi^k}_{h+1} \big) + \sum_{h = 1}^H \sum_{k = 1}^K \left|\Var_{h, s^k_h, a^k_h}(V^{\star}_{h+1}) - \Var_{h, s^k_h, a^k_h}(V^{\pi^k}_{h+1})\right| \nonumber \\
&\overset{\mathrm{(ii)}}{\lesssim} HT + H^3\log \frac{SAT}{\delta} + \sum_{h = 1}^H \sum_{k = 1}^K \left|\Var_{h, s^k_h, a^k_h}(V^{\star}_{h+1}) - \Var_{h, s^k_h, a^k_h}(V^{\pi^k}_{h+1})\right| , \label{equ:var-vsatr-1}
\end{align}
where (ii) follows directly from \citet[Lemma C.5]{jin2018qarxiv}.
The second term on the right-hand side of \eqref{equ:var-vsatr-1} can be bounded as follows 
\begin{align}
 & \sum_{h=1}^{H}\sum_{k=1}^{K}\left|\Var_{h,s_{h}^{k},a_{h}^{k}}(V_{h+1}^{\star})-\Var_{h,s_{h}^{k},a_{h}^{k}}(V_{h+1}^{\pi^{k}})\right|\nonumber\\
 & =\sum_{h=1}^{H}\sum_{k=1}^{K}\left|P_{h,s_{h}^{k},a_{h}^{k}}(V_{h+1}^{\star})^{2}-\big(P_{h,s_{h}^{k},a_{h}^{k}}V_{h+1}^{\star}\big)^{2}-P_{h,s_{h}^{k},a_{h}^{k}}(V_{h+1}^{\pi^{k}})^{2}+\big(P_{h,s_{h}^{k},a_{h}^{k}}V_{h+1}^{\pi^{k}}\big)^{2}\right| \notag\\
 & \leq\sum_{h=1}^{H}\sum_{k=1}^{K}\bigg\{\left|P_{h,s_{h}^{k},a_{h}^{k}}\Big(\big(V_{h+1}^{\star}-V_{h+1}^{\pi^{k}}\big)\big(V_{h+1}^{\star}+V_{h+1}^{\pi^{k}}\big)\Big)\right|+\Big|\big(P_{h,s_{h}^{k},a_{h}^{k}}V_{h+1}^{\star}\big)^{2}-\big(P_{h,s_{h}^{k},a_{h}^{k}}V_{h+1}^{\pi^{k}}\big)^{2}\Big|\bigg\} \notag\\
 & \overset{(\mathrm{i})}{\leq}4H\sum_{h=1}^{H}\sum_{k=1}^{K}P_{h,s_{h}^{k},a_{h}^{k}}\big(V_{h+1}^{\star}-V_{h+1}^{\pi^{k}}\big)  \notag\\
 & =4H\sum_{h=1}^{H}\sum_{k=1}^{K}\left\{ V_{h+1}^{\star}(s_{h+1}^{k})-V_{h+1}^{\pi^{k}}(s_{h+1}^{k})+\big(P_{h,s_{h}^{k},a_{h}^{k}}-P_{h}^{k}\big)\big(V_{h+1}^{\star}-V_{h+1}^{\pi^{k}}\big)\right\}  \nonumber\\
	& \overset{(\mathrm{ii})}{\leq} 4H \sum_{h=1}^{H}\sum_{k=1}^{K}\big(\phi_{h+1}^{k}+\delta_{h+1}^{k}\big) 
 \overset{(\mathrm{iii})}{\lesssim}H^{2}\sqrt{T\log\frac{SAT}{\delta}}+ H^4\sqrt{SAT\log\frac{SAT}{\delta}} + H^4SA \notag\\
	& \asymp H^4\sqrt{SAT\log\frac{SAT}{\delta}} + H^4SA,
	\label{equ:var-vsatr-2}
\end{align}
where we define
\begin{align}
	 \delta_{h+1}^k \defn V_{h+1}^{\UCB,k}(s_{h+1}^k) - V_{h+1}^{\pi^k}(s_{h+1}^k),  \qquad \phi_{h+1}^k \defn \big(P_{h, s_h^k, a_h^k} -P_{h}^k\big) \big(V_{h+1}^\star - V_{h+1}^{\pi^k}\big). 
\end{align}
We shall take a moment to explain how we derive \eqref{equ:var-vsatr-2}. The inequality (i) holds by observing that $V_{h+1}^{\star}-V_{h+1}^{\pi^{k}}\geq 0$ and 
\begin{align*}
\left|P_{h,s_{h}^{k},a_{h}^{k}}\Big(\big(V_{h+1}^{\star}-V_{h+1}^{\pi^{k}}\big)\big(V_{h+1}^{\star}+V_{h+1}^{\pi^{k}}\big)\Big)\right| & \leq P_{h,s_{h}^{k},a_{h}^{k}}\big(V_{h+1}^{\star}-V_{h+1}^{\pi^{k}}\big)\Big(\big\| V_{h+1}^{\star}\big\|_{\infty}+\big\| V_{h+1}^{\pi^{k}}\big\|_{\infty}\big)\\
 & \leq2HP_{h,s_{h}^{k},a_{h}^{k}}\big(V_{h+1}^{\star}-V_{h+1}^{\pi^{k}}\big),\\
\bigg|\big(P_{h,s_{h}^{k},a_{h}^{k}}V_{h+1}^{\star}\big)^{2}-\big(P_{h,s_{h}^{k},a_{h}^{k}}V_{h+1}^{\pi^{k}}\big)^{2}\bigg| & \leq\Big|P_{h,s_{h}^{k},a_{h}^{k}}\big(V_{h+1}^{\star}-V_{h+1}^{\pi^{k}}\big)\Big| \cdot \Big|P_{h,s_{h}^{k},a_{h}^{k}}\big(V_{h+1}^{\star}+V_{h+1}^{\pi^{k}}\big)\Big|\\
 & \leq2HP_{h,s_{h}^{k},a_{h}^{k}}\big(V_{h+1}^{\star}-V_{h+1}^{\pi^{k}}\big);
\end{align*}
(ii) is valid since $V_{h+1}^{\UCB}\geq V_{h+1}^{\star}$;  
and (iii) results from the following two bounds:
\begin{subequations}
\begin{align}
	\sum_{h=1}^{H}\sum_{k=1}^{K}\delta_{h+1}^{k} & \lesssim H^{3}\sqrt{SAT\log\frac{SAT}{\delta}} + H^3 SA, \label{eq:sum-delta-k-bound}\\
\sum_{h=1}^{H}\sum_{k=1}^{K}\phi_{h+1}^{k} & \lesssim H\sqrt{T\log\frac{SAT}{\delta}},
\end{align}
\end{subequations}
which come respectively from \citet[Eqn.~(C.13)]{jin2018qarxiv} and the argument for \citet[Eqn.~(C.12)]{jin2018qarxiv}.\footnote{Note that the notation $\delta_h^k$ used in \citet[Section C.2]{jin2018qarxiv} and the one in the proof of \citet[Theorem 1]{jin2018qarxiv} are different; here, we need to adopt the notation used in the proof of \citet[Theorem 1]{jin2018qarxiv}.}

As a consequence, substituting \eqref{equ:var-vsatr-1} and \eqref{equ:var-vsatr-2} into \eqref{equ:bound-var-v-optimal}, 
we reach 
\begin{align*}
\sum_{h=1}^{H}\sum_{k=1}^{K}\sqrt{\frac{\Var_{h,s_{h}^{k},a_{h}^{k}}(V_{h+1}^{\star})}{N_{h}^{k}(s_{h}^{k},a_{h}^{k})}} & \lesssim\sqrt{HSA}\sqrt{HT+H^{4}\sqrt{SAT\log\frac{SAT}{\delta}} + H^4SA }\\
	& \lesssim\sqrt{H^{2}SAT}+H^{5/2}(SA)^{3/4}\Big(T\log\frac{SAT}{\delta}\Big)^{1/4} + H^{2.5} SA\\
 & =\sqrt{H^{2}SAT}+\Big(H^{2}SAT\log\frac{SAT}{\delta}\Big)^{1/4}\big(H^{4}SA\big)^{1/2} + H^{2.5} SA \\
 & \lesssim\sqrt{H^{2}SAT\log\frac{SAT}{\delta}}+H^{4}SA\log\frac{SAT}{\delta},
\end{align*}
where we have applied the basic inequality $2ab\leq a^2 + b^2$ for any $a,b\geq 0$.

\subsubsection{Proof of the inequality~\eqref{eq:bound-ref-error-2}}\label{sec:proof:eq:bound-ref-error-2}

First, it is observed that
\begin{align}
  \sum_{k = 1}^K\frac{\sqrt{\Phi_h^k(s^k_h, a^k_h)}}{N^{k}_h(s^k_h, a^k_h)}&   =\sum_{(s, a)\in\cS\times\cA} \sum_{n = 1}^{N^{K}_h(s, a)} \frac{\sqrt{\Phi_h^{k^n(s, a)}(s, a)}}{n} \nonumber \\
&\le\sum_{(s, a)\in\cS\times\cA}  \sqrt{\Phi_h^{N_h^K(s,a)}(s, a)}\log T 
	\le\sqrt{SA \sum_{(s, a)\in\cS\times\cA}  \Phi_h^{N_h^K(s,a)}(s, a)}\log T .\label{eq:phi_h_k_intermediate}
\end{align}
Here, the first inequality holds by the monotonicity property of $\Phi_h^{k}(s_h, a_h)$ with respect to $k$ (see its definition in \eqref{eq:def_Phik}) due to the same property of $V_{h+1}^{\rref,k}$, while the second inequality comes from Cauchy-Schwarz.

To continue, note that 
\begin{align}
 & \sum_{h=1}^{H}\sqrt{\sum_{(s,a)\in\cS\times\cA}\Phi_{h}^{N_{h}^{K}(s,a)}(s,a)}=\sum_{h=1}^{H}\sqrt{\sum_{k=1}^{K}\Big(V_{h+1}^{\rref,k}(s_{h+1}^{k})-V_{h+1}^{\star}(s_{h+1}^{k})\Big)\ind\Big(V_{h+1}^{\rref,k}(s_{h+1}^{k})-V_{h+1}^{\star}(s_{h+1}^{k})>3\Big)}\nonumber\\
 & \leq\sum_{h=1}^{H}\sqrt{\sum_{k=1}^{K}\Big(V_{h+1}^{k}(s_{h+1}^{k})+2-V_{h+1}^{\LCB,k}(s_{h+1}^{k})\Big)\ind\Big(V_{h+1}^{k}(s_{h+1}^{k})+2-V_{h+1}^{\LCB,k}(s_{h+1}^{k})>3\Big)}\nonumber\\
 & =\sum_{h=1}^{H}\sqrt{\sum_{k=1}^{K}\Big(V_{h+1}^{k}(s_{h+1}^{k})+2-V_{h+1}^{\LCB,k}(s_{h+1}^{k})\Big)\ind\Big(V_{h+1}^{k}(s_{h+1}^{k})-V_{h+1}^{\LCB,k}(s_{h+1}^{k})>1\Big)}\nonumber\\
 & \leq\sum_{h=1}^{H}\sqrt{\sum_{k=1}^{K}3\Big(V_{h+1}^{k}(s_{h+1}^{k})-V_{h+1}^{\LCB,k}(s_{h+1}^{k})\Big)\ind\Big(V_{h+1}^{k}(s_{h+1}^{k})-V_{h+1}^{\LCB,k}(s_{h+1}^{k})>1\Big)}\nonumber\\
 & \leq\sqrt{H}\sqrt{\sum_{h=1}^{H}\sum_{k=1}^{K}3\Big(V_{h+1}^{k}(s_{h+1}^{k})-V_{h+1}^{\LCB,k}(s_{h+1}^{k})\Big)\ind\Big(V_{h+1}^{k}(s_{h+1}^{k})-V_{h+1}^{\LCB,k}(s_{h+1}^{k})>1\Big)}, 
\end{align}
where the first inequality follows from Lemma~\ref{lem:Vr_properties} (cf.~\eqref{eq:close-ref-v}) and Lemma~\ref{lem:Qk-lcb} 
(so that $V_{h+1}^{\rref,k}(s_{h+1}^{k})-V_{h+1}^{\star}(s_{h+1}^{k})\leq V_{h+1}^{k}(s_{h+1}^{k})+2-V_{h+1}^{\LCB,k}(s_{h+1}^{k})$), the penultimate inequality holds since $1 \leq   V_{h+1}^{k}(s_{h+1}^{k})-V_{h+1}^{\LCB,k}(s_{h+1}^{k}) $ when $\ind\Big(V_{h+1}^{k}(s_{h+1}^{k})-V_{h+1}^{\LCB,k}(s_{h+1}^{k})>1\Big) \neq 0$, 
and the last inequality is a consequence of the Cauchy-Schwarz inequality.

Combining the above relation with \eqref{eq:ref-error-upper-final} and applying the triangle inequality, we can demonstrate that
\begin{align*}
&\sum_{h=1}^H \sqrt{  \sum_{(s, a)\in\cS\times\cA}  \Phi_h^{N_h^K(s,a)}(s, a)}  \\
	& \lesssim \sqrt{H}\sqrt{ \sum_{h=1}^H\sum_{k=1}^K  \left(Q^{k}_{h+1}(s_{h+1}^k, a_{h+1}^k) - Q^{\LCB, k}_{h+1}(s_{h+1}^k, a_{h+1}^k)\right)\mathds{1}\left(Q^{k}_{h+1}(s_{h+1}^k, a_{h+1}^k) - Q^{\LCB, k}_{h+1}(s_{h+1}^k, a_{h+1}^k) > 1\right)}\\
	&\lesssim \sqrt{H^7SA\log\frac{SAT}{\delta}},
\end{align*}
where the second inequality follows directly from \eqref{eq:Vr_lazy}, 
and the first inequality is valid since
\begin{align*}
V_{h+1}^{k}(s_{h+1}^{k})-V_{h+1}^{\LCB,k}(s_{h+1}^{k}) 
	& \leq Q_{h+1}^{k}(s_{h+1}^{k},a_{h+1}^{k})-Q_{h+1}^{\LCB,k}(s_{h+1}^{k},a_{h+1}^{k}). 
\end{align*}
Substitution into \eqref{eq:phi_h_k_intermediate} gives
\begin{align*}
\sum_{h=1}^{H}\sum_{k=1}^{K}\frac{\sqrt{\Phi_{h}^{k}(s_{h}^{k},a_{h}^{k})}}{N_{h}^{k}(s_{h}^{k},a_{h}^{k})} & \lesssim\left(\sqrt{SA}\log T\right)\cdot\sqrt{H^{7}SA\log\frac{SAT}{\delta}}\asymp H^{7/2}SA\log^{3/2}\frac{SAT}{\delta}, 
\end{align*}
thus concluding the proof.

\subsection{Bounding the term $\mathcal{R}_3$}\label{sec-proof-lemma:reference}

For notational convenience, we shall use the short-hand notation
\[
	k^i \defn k^i_h(s_h^k, a_h^k)
\]
whenever it is clear from the context. 
This allows us to decompose the expression of $\mathcal{R}_3$ in \eqref{eq:expression_R3} as follows
\[
	\mathcal{R}_3:= \underbrace{\sum_{h = 1}^H \sum_{k = 1}^K\lambda^{k}_h \big(P^k_{h} -  P_{h, s^k_h, a^k_h} \big) \big( V^{\star}_{h+1} - V^{\rref, k}_{h+1} \big)}_{\eqqcolon \mathcal{R}_3^1} + \underbrace{ \sum_{h = 1}^H \sum_{k = 1}^K\lambda^{k}_h \frac{\sum_{i \le N^{k}_h(s^k_h, a^k_h)} \big(V^{\rref, k^i}_{h+1}(s^{k^i}_{h+1}) - P_{h, s^k_h, a^k_h}V^{\rref, k}_{h+1}\big)}{N^{k}_h(s^k_h, a^k_h)} }_{\eqqcolon \mathcal{R}_3^2} 
\]
with
\begin{equation} 
	\label{eq:lambda_kh_bound}
	\lambda^{k}_h \defn \left(1+\frac{1}{H}\right)^{h-1}~\sum_{n = N^{k}_h(s^k_h, a^k_h)}^{N^{K- 1}_h(s^k_h, a^k_h) } \eta^{n}_{N^{k}_h(s^k_h, a^k_h)} 
	\le \left(1+\frac{1}{H}\right)^{h} \le \left(1+\frac{1}{H}\right)^{H} \leq e. 
\end{equation} 
Here, the first inequality in \eqref{eq:lambda_kh_bound} follows from the property $\sum_{N = n}^{\infty} \eta_n^N \le 1+ 1/H$ in Lemma~\ref{lemma:property of learning rate}, while the last inequality in \eqref{eq:lambda_kh_bound} results from \eqref{equ:algebra property}. 
In the sequel, we shall control each of these two terms separately.

\paragraph{Step 1: upper bounding $\mathcal{R}_3^1$.} 
We plan to control this term by means of Lemma~\ref{lemma:martingale-union-all2}. For notational simplicity, let us define
\begin{align*}
	N(s,a,h) \coloneqq N_h^{K-1}(s,a) 
\end{align*}
and set
\begin{align*}
	W_{h+1}^i \coloneqq V^{\rref, k}_{h+1} - V^{\star}_{h+1}
	\qquad \text{and} \qquad
	u_h^i(s_h^i,a_h^i) \coloneqq \lambda_h^i = \left(1+\frac{1}{H}\right)^{h-1}\sum_{n = N^{i}_h(s^i_h, a^i_h)}^{N(s^i_h, a^i_h,h)} \eta^{n}_{N^{i}_h(s^i_h, a^i_h)}.  
\end{align*}
Given the fact that $V^{\rref, k}_{h+1}(s),  V^{\star}_{h+1}(s)\in [0,H]$ and the condition \eqref{eq:lambda_kh_bound},
it is readily seen that
\begin{align*}
   \big|u_h^i(s_h^i,a_h^i) \big| \leq e \eqqcolon C_{\mathrm{u}}
	\qquad \text{and} \qquad
	  \big\| W_{h+1}^i \big\|_\infty \leq H \eqqcolon C_{\mathrm{w}}. 
\end{align*}
Apply Lemma~\ref{lemma:martingale-union-all2} to yield
\begin{align}
 & \left|\sum_{h=1}^{H}\sum_{k=1}^{K}\lambda_{h}^{k}\big(P_{h}^{k}-P_{h,s_{h}^{k},a_{h}^{k}}\big)\big(V_{h+1}^{\star}-V_{h+1}^{\rref,k}\big)\right|=\left|\sum_{h=1}^{H}\sum_{k=1}^{K}X_{k,h}\right|\nonumber\nonumber \\
 & \qquad\lesssim\sqrt{C_{\mathrm{u}}^{2}C_{\mathrm{w}}HSA\sum_{h=1}^{H}\sum_{i=1}^{K}\mathbb{E}_{i,h-1}\left[P_{h}^{i}W_{h+1}^{i}\right]\log\frac{K}{\delta}}+C_{\mathrm{u}}C_{\mathrm{w}}HSA\log\frac{K}{\delta}\nonumber\nonumber \\
 & \qquad\lesssim\sqrt{H^{2}SA\sum_{h=1}^{H}\sum_{k=1}^{K}\mathbb{E}_{i,h-1}\left[P_{h}^{k}\big(V_{h+1}^{\rref,k}-V_{h+1}^{\star}\big)\right]\log\frac{T}{\delta}}+H^{2}SA\log\frac{T}{\delta}\nonumber\nonumber \\
	& \qquad\asymp\sqrt{H^{2}SA \bigg\{ \sum_{h=1}^{H}\sum_{k=1}^{K}P_{h,s_{h}^{k},a_{h}^{k}}\big(V_{h+1}^{\rref,k}-V_{h+1}^{\star}\big) \bigg\} \log\frac{T}{\delta}}+H^{2}SA\log\frac{T}{\delta}
	\label{eq:bound-lambda-P-V-intermediate-135}
\end{align}
with probability at least $1- \delta/2$. 

It then comes down to controlling the sum $\sum_{h=1}^{H}\sum_{k=1}^{K}P_{h,s_{h}^{k},a_{h}^{k}}\big(V_{h+1}^{\rref,k}-V_{h+1}^{\star}\big)$. Towards this end, we first single out the following useful fact:
\begin{align}
	&\sum_{h=1}^H \sum_{k = 1}^K P^k_{h} \big( V^{\rref, k}_{h+1} -V_{h+1}^\star \big) \overset{\mathrm{(i)}}{\leq}  \sum_{h=1}^H \sum_{k = 1}^K P^k_{h} \big(V^{k}_{h+1} + 2 - V_{h+1}^\star \big)  \nonumber \\
	&\quad \leq 2HK + \sum_{h=1}^H\sum_{k = 1}^K \Big( V^{k}_{h+1}(s_{h+1}^k) -V_{h+1}^\star(s_{h+1}^k) \Big)  \overset{\mathrm{(ii)}}{\lesssim}  \sqrt{H^7SAK \log\frac{SAT}{\delta}}  + H^3SA + HK \label{equ:freed}
 \end{align}
 with probability at least $1-{\delta}/{4}$, 
 where (i) holds according to \eqref{eq:close-ref-v}, 
 and (ii) is valid since
 \begin{align*}
	 \sum_{h=1}^{H}\sum_{k=1}^{K}\Big(V_{h+1}^{k}(s_{h+1}^{k})-V_{h+1}^{\star}(s_{h+1}^{k})\Big) &\leq\sum_{h=1}^{H}\sum_{k=1}^{K}\Big(V_{h+1}^{\UCB,k}(s_{h+1}^{k})-V_{h+1}^{\pi^{k}}(s_{h+1}^{k})\Big) \\
	 & \lesssim \sqrt{H^{7}SAK\log\frac{SAT}{\delta}} + H^3SA,
 \end{align*}
where the first inequality follows since $V_{h+1}^{\UCB,k}\geq V_{h+1}^{k}$ and $V_{h+1}^{\star}\geq V_{h+1}^{\pi^{k}}$, and the second inequality comes from \eqref{eq:sum-delta-k-bound}. 
 Additionally, invoking Freedman's inequality (see Lemma~\ref{lemma:martingale-union-all2}) 
 with $c_h=1$ and $\widetilde{W}_h^i=V_{h+1}^{\rref,k}-V_{h+1}^{\star}$ (so that $0\leq \widetilde{W}_h^i(s)\leq H$)  directly leads to
\begin{align*}
\left|\sum_{h=1}^{H}\sum_{k=1}^{K}\big(P_{h}^{k}-P_{h,s_{h}^{k},a_{h}^{k}}\big)\big(V_{h+1}^{\rref,k}-V_{h+1}^{\star}\big)\right| & \lesssim\sqrt{TH^{2}\log \frac{1}{\delta}}+H\log\frac{1}{\delta}\asymp\sqrt{H^{3}K\log \frac{1}{\delta}}
\end{align*}
with probability at least $1-\delta/4$, which taken collectively with  \eqref{equ:freed} reveals that
\begin{align}
\sum_{h=1}^{H}\sum_{k=1}^{K}P_{s_{h}^{k},a_{h}^{k},h}\big(V_{h+1}^{\rref,k}-V_{h+1}^{\star}\big) & \leq\sum_{h=1}^{H}\sum_{k=1}^{K}P_{h}^{k}\big(V_{h+1}^{\rref,k}-V_{h+1}^{\star}\big)+\left|\sum_{h=1}^{H}\sum_{k=1}^{K}\big(P_{h}^{k}-P_{s_{h}^{k},a_{h}^{k},h}\big)\big(V_{h+1}^{\rref,k}-V_{h+1}^{\star}\big)\right| \notag\\
 & \lesssim\sqrt{H^{7}SAK\log\frac{SAT}{\delta}} + H^3SA +HK \label{equ:freed-234}
\end{align} 
with probability at least $1-\delta/2$. 
Substitution into \eqref{eq:bound-lambda-P-V-intermediate-135} then gives
\begin{align}
\Big|\sum_{h=1}^{H}\sum_{k=1}^{K}\lambda_{h}^{k}\big(P_{h}^{k}-P_{h,s_{h}^{k},a_{h}^{k}}\big)&\big(V_{h+1}^{\star}-V_{h+1}^{\rref,k}\big)\Big| \nonumber \\
&  \lesssim\sqrt{H^{2}SA\sum_{h=1}^{H}\sum_{k=1}^{K}P_{h,s_{h}^{k},a_{h}^{k}}\big(V_{h+1}^{\rref,k}-V_{h+1}^{\star}\big)\log\frac{T}{\delta}}+H^{2}SA\log\frac{T}{\delta}\nonumber\nonumber \\
 & \lesssim\sqrt{H^{2}SA\left(\sqrt{H^{7}SAK\log\frac{SAT}{\delta}} + H^3SA +HK\right)\log\frac{T}{\delta}}+H^{2}SA\log\frac{T}{\delta}\nonumber\\
 & \asymp\sqrt{H^{2}SA\left(H^{6}SA\log\frac{SAT}{\delta} + H^3SA +HK\right)\log\frac{T}{\delta}}+H^{2}SA\log\frac{T}{\delta}\nonumber\\
 & \lesssim\sqrt{H^{3}SAK\log\frac{SAT}{\delta}}+H^{4}SA\log\frac{SAT}{\delta} \nonumber \\
& =\sqrt{H^{2}SAT\log\frac{SAT}{\delta}}+H^{4}SA\log\frac{SAT}{\delta}
	\label{eq:step1-R31-bound-final}
\end{align}
with probability exceeding $1-\delta$, where the third line holds since (due to Cauchy-Schwarz)
\[
\sqrt{H^{7}SAK\log\frac{SAT}{\delta}}=\sqrt{H^{6}SA\log\frac{SAT}{\delta}}\sqrt{HK}\lesssim H^{6}SA\log\frac{SAT}{\delta}+HK.
\]

\paragraph{Step 2: upper bounding $\mathcal{R}_3^2$.}

We start by making the following observation: 
\begin{align}
\mathcal{R}_3^2 
& \le \sum_{h = 1}^H \sum_{k = 1}^K\frac{\lambda^{k}_h}{N^{k}_h(s^k_h, a^k_h)}\sum_{i \le N^{k}_h(s^k_h, a^k_h)} \big(V^{\rref, k^i}_{h+1}(s^{k^i}_{h+1}) - P_{h, s^k_h, a^k_h}V^{\rref, K}_{h+1} \big) \nonumber \\
& =\sum_{h = 1}^H \sum_{k = 1}^K\sum_{n = N^{k}_h(s^k_h, a^k_h)}^{N^{K - 1}_h(s^k_h, a^k_h)} \frac{\lambda^{k}_h}{n} \Big( V^{\rref, k}_{h+1}(s^{k}_{h+1}) - V^{\rref, K}_{h+1}(s^{k}_{h+1} ) + \big(P^k_{h} - P_{h, s^k_h, a^k_h} \big)V^{\rref, K}_{h+1} \Big) \nonumber \\
	& \leq  (e\log T) \sum_{h = 1}^H \sum_{k = 1}^K \big(V^{\rref, k}_{h+1}(s^{k}_{h+1}) - V^{\rref, K}_{h+1}(s^{k}_{h+1}) \big) + \sum_{h = 1}^H \sum_{k = 1}^K\sum_{n = N^{k}_h(s^k_h, a^k_h)}^{N^{K -1}_h(s^k_h, a^k_h)} \frac{\lambda^{k}_h}{n}\big( P^k_{h} - P_{h, s^k_h, a^k_h} \big)V^{\star}_{h+1} \nonumber\\
&\qquad +  \sum_{h = 1}^H \sum_{k = 1}^K \sum_{n = N^{k}_h(s^k_h, a^k_h)}^{N^{K -1}_h(s^k_h, a^k_h)} \frac{\lambda^{k}_h}{n} \big( P^k_{h} - P_{h, s^k_h, a^k_h} \big) \big( V^{\rref, K}_{h+1} -V_{h+1}^\star \big), \label{equ:reference-drift1}
\end{align}
where the first inequality comes from the monotonicity property $V^{\rref, k}_{h+1} \ge V^{\rref, k+1}_{h+1} \ge \cdots \geq V^{\rref, K}_{h+1}$, and the last line follows from  the facts that $\sum_{n = N^{k}_h(s^k_h, a^k_h)}^{N^{K - 1}_h(s^k_h, a^k_h)} \frac{1}{n} \leq \log T$ and $\lambda_h^k \leq e$ (cf.~\eqref{eq:lambda_kh_bound}).  In what follows, we shall control the three terms in \eqref{equ:reference-drift1} separately.

\begin{itemize}

\item 
The first term in \eqref{equ:reference-drift1} can be controlled by Lemma~\ref{lem:Vr_properties} (cf.~\eqref{eq:Vr_lazy}) as follows: 
\begin{equation}
\sum_{h=1}^H\sum_{k=1}^K  \big(V^{\rref, k}_{h+1}(s^{k}_{h+1}) - V^{\rref, K}_{h+1}(s^{k}_{h+1})\big) 
	\lesssim H^6SA\log\frac{SAT}{\delta} \label{equ:R32-term1-result}
\end{equation}
with probability at least $1- {\delta}/{3}$.

\item To control the second term in \eqref{equ:reference-drift1},  we abuse the notation by setting
\begin{align*}
	N(s,a,h) \coloneqq N_h^{K-1}(s,a)
\end{align*}
and
\begin{align*}
	W_{h+1}^i \coloneqq V^{\star}_{h+1},
	\qquad \text{and} \qquad
	u_h^i(s_h^i,a_h^i) \coloneqq \sum_{n = N^{i}_h(s^i_h, a^i_h)}^{N(s_h^i,a_h^i,h)} \frac{\lambda^{i}_h}{n},  
\end{align*}
which clearly satisfy
\begin{align*}
	\big| u_h^i(s_h^i,a_h^i) \big| 
	\leq e\sum_{n = N^{i}_h(s^i_h, a^i_h)}^{N(s_h^i,a_h^i,h)} \frac{1}{n}\leq e \log T \eqqcolon C_{\mathrm{u}}
	\qquad \text{and} \qquad
	  \|W_{h+1}^i\|_\infty \leq H \eqqcolon C_{\mathrm{w}}. 
\end{align*}
Here, we have made use of the properties $\sum_{n = N^{i}_h(s^i_h, a^i_h)}^{N^{K-1}_h(s^i_h, a^i_h)} \frac{1}{n} \leq \log T$ and $\lambda_h^k \leq e$ (cf.~\eqref{eq:lambda_kh_bound}).
With these in place, applying Lemma~\ref{lemma:martingale-union-all2} reveals that
\begin{align}
&\left|\sum_{h = 1}^H \sum_{k = 1}^K \sum_{n = N^{k}_h(s^k_h, a^k_h)}^{N^{K-1}_h(s^k_h, a^k_h)} \frac{\lambda^{k}_h}{n} \big(P^k_{h} - P_{h, s^k_h, a^k_h} \big)V^{\star}_{h+1} \right| = \left|\sum_{h = 1}^H \sum_{k = 1}^K X_{k,h} \right| \nonumber\\
&\quad \lesssim \sqrt{ C_{\mathrm{u}}^{2} HSA \sum_{h=1}^{H} \sum_{i=1}^{K}\mathbb{E}_{i,h-1}\left[\big|(P_{h}^{i}-P_{h,s_{h}^{i},a_{h}^{i}})W_{h+1}^{i}\big|^{2}\right] \log\frac{T}{\delta} }  + C_{\mathrm{u}} C_{\mathrm{w}} HSA  \log\frac{T}{\delta}  \notag\\
&\quad \overset{\mathrm{(i)}}{\asymp} \sqrt{ \sum_{h = 1}^H \sum_{k = 1}^K \Var_{h, s^k_h, a^k_h}(V^{\star}_{h+1}) \cdot HSA\log^3\frac{T}{\delta} }  + H^2SA \log^2\frac{T}{\delta}  \notag\\
& \quad \overset{\mathrm{(ii)}}{\lesssim} \sqrt{HSA \big(HT + H^4\sqrt{SAT} \big) \log^4\frac{SAT}{\delta}} + H^2SA \log^2\frac{T}{\delta} \notag\\ 
& \quad  \lesssim \sqrt{HSA\big(HT+H^{7}SA\big)\log^{4}\frac{SAT}{\delta}}+H^{2}SA\log^{2}\frac{T}{\delta} \notag\\
& \quad \overset{\mathrm{(iii)}}{\lesssim} \sqrt{H^2SAT\log^4\frac{SAT}{\delta}} +  H^4SA \log^2\frac{SAT}{\delta} \label{equ:R32-term2-result}
\end{align}
with probability at least $1-{\delta}/{3}$. Here, (i) comes from the definition in \eqref{lemma1:equ2}, (ii) holds due to \eqref{equ:var-vsatr-1} and \eqref{equ:var-vsatr-2}, whereas (iii) is valid since
\[
	HT+H^{4}\sqrt{SAT}=HT+\sqrt{H^{7}SA}\cdot\sqrt{HT}\lesssim HT+H^{7}SA
\]
due to the Cauchy-Schwarz inequality.

\item Turning attention the third term of \eqref{equ:reference-drift1}, we need to properly cope with the dependency between $P_h^k$ and $V_{h+1}^{\rref, K}$. Towards this, we shall resort to the standard epsilon-net argument (see, e.g., \citep{Tao2012RMT}), which will be presented in Appendix~\ref{appendix:covering}. The final bound reads like
 \begin{align}
 & \left|\sum_{h=1}^{H}\sum_{k=1}^{K}\sum_{n=N_{h}^{k}(s_{h}^{k},a_{h}^{k})}^{N_{h}^{K-1}(s_{h}^{k},a_{h}^{k})}\frac{\lambda_{h}^{k}}{n}\big(P_{h}^{k}-P_{h,s_{h}^{k},a_{h}^{k}}\big)\big(V_{h+1}^{\rref,K}-V_{h+1}^{\star}\big)\right| \lesssim H^{4}SA\log^{2}\frac{SAT}{\delta}+\sqrt{H^{3}SAK\log^{3}\frac{SAT}{\delta}}.
	 \label{eq:final-term-control-246}
\end{align}

\item
Combining \eqref{equ:R32-term1-result}, \eqref{equ:R32-term2-result} and \eqref{eq:final-term-control-246} with \eqref{equ:reference-drift1}, we can use the union bound to demonstrate that
\begin{align}
	\mathcal{R}_3^2   &\leq  C_{3,2} \bigg\{  H^6 SA\log^3\frac{SAT}{\delta} + \sqrt{H^2SAT\log^4\frac{SAT}{\delta}} \bigg\}
	\label{eq:R32-upper-bound}
\end{align}
with probability at least $1-\delta$, where $C_{3,2}>0$ is some constant.

\end{itemize}

\paragraph{Step 3: final bound of $ \mathcal{R}_3$.} 
Putting the above results \eqref{eq:step1-R31-bound-final} and \eqref{eq:R32-upper-bound} together, we immediately arrive at
\begin{align}
	 \mathcal{R}_3  \leq  \big| \mathcal{R}_3^1 \big|  +  \mathcal{R}_3^2 
	&\leq C_{\mathrm{r},3} \bigg\{ H^6 SA\log^3\frac{SAT}{\delta} + \sqrt{H^2SAT\log^4\frac{SAT}{\delta}} \bigg\}
\end{align}
with probability at least $1-2\delta$, where $C_{\mathrm{r},3}>0$ is some constant.  
This immediately concludes the proof.

\subsubsection{Proof of \eqref{eq:final-term-control-246}}
\label{appendix:covering}
\paragraph{Step 1: concentration bounds for a fixed group of vectors.} 

Consider a fixed group of vectors $ \{ V^{\mathrm{d}}_{h+1} \in \mathbb{R}^{S} \mid 1\leq h \leq H\}$ obeying the following properties: 
\begin{align}
	V_{h+1}^{\star} \leq V^{\mathrm{d}}_{h+1} &\leq H \qquad \text{ for } 1\leq h\leq H .
	\label{equ:assumption-1}
\end{align}
We intend to control the following sum
\[
\sum_{h=1}^{H}\sum_{k=1}^{K}\sum_{n=N_{h}^{k}(s_{h}^{k},a_{h}^{k})}^{N_{h}^{K-1}(s_{h}^{k},a_{h}^{k})}\frac{\lambda_{h}^{k}}{n}\big(P_{h}^{k}-P_{h,s_{h}^{k},a_{h}^{k}}\big)\big( V_{h+1}^{\mathrm{d}} -V_{h+1}^{\star}\big) .
\]

To do so, we shall resort to Lemma~\ref{lemma:martingale-union-all2}. For the moment, let us take $N(s,a,h) \coloneqq N_h^{K-1}(s,a)$
and
\begin{align*}
	W_{h+1}^i \coloneqq V_{h+1}^{\mathrm{d}} - V^{\star}_{h+1},
	\qquad  \qquad
	u_h^i(s_h^i,a_h^i) \coloneqq \sum_{n = N^{i}_h(s^i_h, a^i_h)}^{N(s_h^i,a_h^i,h)} \frac{\lambda^{i}_h}{n}. 
\end{align*}
It is easily seen that
\begin{align*}
	\big|u_h^i(s_h^i,a_h^i) \big| 
	\leq e\sum_{n = N^{i}_h(s^i_h, a^i_h)}^{N(s_h^i,a_h^i,h)} \frac{1}{n}\leq e \log T \eqqcolon C_{\mathrm{u}}
	\qquad \text{and} \qquad
	  \|W_{h+1}^i\|_\infty \leq H \eqqcolon C_{\mathrm{w}}, 
\end{align*}
which hold due to the facts $\sum_{n = N^{i}_h(s^i_h, a^i_h)}^{N^{K}_h(s^i_h, a^i_h)} \frac{1}{n} \leq \log T$ and $\lambda_h^k \leq e$ (cf.~\eqref{eq:lambda_kh_bound}) as well as the property that $V_{h+1}^{\mathrm{d}}(s), V^{\star}_{h+1}(s)\in [0,H]$. 
Thus, invoking Lemma~\ref{lemma:martingale-union-all2} yields
\begin{align}
 & \left|\sum_{h=1}^{H}\sum_{k=1}^{K}\sum_{n=N_{h}^{k}(s_{h}^{k},a_{h}^{k})}^{N_{h}^{K-1}(s_{h}^{k},a_{h}^{k})}\frac{\lambda_{h}^{k}}{n}\big(P_{h}^{k}-P_{h,s_{h}^{k},a_{h}^{k}}\big)\big(V_{h+1}^{\mathrm{d}}-V_{h+1}^{\star}\big)\right|=\left|\sum_{h=1}^{H}\sum_{k=1}^{K}X_{k,h}\right|\nonumber\\
 & \lesssim\sqrt{C_{\mathrm{u}}^{2}C_{\mathrm{w}}\sum_{h=1}^{H}\sum_{i=1}^{K}\mathbb{E}_{i,h-1}\left[P_{h}^{i}W_{h+1}^{i}\right]\log\frac{K^{HSA}}{\delta_{0}}}+C_{\mathrm{u}}C_{\mathrm{w}}\log\frac{K^{HSA}}{\delta_{0}}\nonumber\\
	& \lesssim\sqrt{H\sum_{h=1}^{H}\sum_{i=1}^{K}P_{h,s_{h}^{k},a_{h}^{k}}\left(V_{h+1}^{\mathrm{d}}-V_{h+1}^{\star}\right)\big(\log^{2}T\big)\log\frac{K^{HSA}}{\delta_{0}}}+H \big(\log T\big) \log\frac{K^{HSA}}{\delta_{0}}\label{equ:piecewise-bound} 
\end{align}
with probability at least $1- \delta_0$, where the choice of $\delta_0$ will be revealed momentarily.

\paragraph{Step 2: constructing and controlling an epsilon net.}

Our argument in Step~1 is only applicable to a fixed group of vectors. 
The next step is then to construct an epsilon net that allows one to cover the set of interest. 
Specifically, let us construct an epsilon net $\mathcal{N}_{h+1,\alpha}$ (the value of $\alpha$ will be specified shortly) for each $h\in [H]$ such that:
\begin{itemize}
	\item[a)]
for any $V_{h+1} \in [0,H]^S$, one can find a point $V^{\mathrm{net}}_{h+1} \in \mathcal{N}_{h+1,\alpha}$ obeying
\[
	0\leq V_{h+1} (s) - V^{\mathrm{net}}_{h+1} (s) \leq \alpha \qquad \text{for all }s\in \cS;
\]
\item[b)] its cardinality obeys
\begin{equation}
	\big| \mathcal{N}_{h+1,\alpha} \big| \leq \Big( \frac{H}{\alpha} \Big)^S.  
	\label{eq:cardinality-N-h-alpha}
\end{equation}
\end{itemize}
Clearly, this also means that
\[
	\big| \mathcal{N}_{2,\alpha} \times \mathcal{N}_{3,\alpha} \times \cdots \times \mathcal{N}_{H+1,\alpha} \big| 
	\leq \Big( \frac{H}{\alpha} \Big)^{SH}.
\]

Set $\delta_0 = \frac{1}{6} \delta / \big( \frac{H}{\alpha} \big)^{SH}$. 
Taking \eqref{equ:piecewise-bound} together the union bound implies that: with probability at least $1- \delta_0 \big( \frac{H}{\alpha} \big)^{SH} = 1-\delta/6$,  
one has
\begin{align}
 & \left|\sum_{h=1}^{H}\sum_{k=1}^{K}\sum_{n=N_{h}^{k}(s_{h}^{k},a_{h}^{k})}^{N_{h}^{K-1}(s_{h}^{k},a_{h}^{k})}\frac{\lambda_{h}^{k}}{n}\big(P_{h}^{k}-P_{h,s_{h}^{k},a_{h}^{k}}\big)\big(V_{h+1}^{\mathrm{net}}-V_{h+1}^{\star}\big)\right|\nonumber\\
	& \lesssim\sqrt{H\sum_{h=1}^{H}\sum_{i=1}^{K}P_{h,s_{h}^{k},a_{h}^{k}}\left(V_{h+1}^{\mathrm{net}}-V_{h+1}^{\star}\right)\big(\log^{2}T\big)\log\frac{K^{HSA}}{\delta_{0}}}+H (\log T)\log\frac{K^{HSA}}{\delta_{0}} \notag\\
 & \lesssim\sqrt{H^{2}SA\sum_{h=1}^{H}\sum_{i=1}^{K}P_{h,s_{h}^{k},a_{h}^{k}}\left(V_{h+1}^{\mathrm{net}}-V_{h+1}^{\star}\right)\big(\log^{2}T\big)\log\frac{SAT}{\delta \alpha}}+H^{2}SA\log^2 \frac{SAT}{\delta \alpha}
	\label{equ:piecewise-bound-uniform}
\end{align}
simultaneously for all $\{V_{h+1}^{\mathrm{net}} \mid 1\leq h\leq H\}$ obeying $V_{h+1}^{\mathrm{d}} \in \mathcal{N}_{h+1,\alpha} $ ($h\in [H]$).

\paragraph{Step 3: obtaining uniform bounds.}
We are now positioned to establish a uniform bound over the entire set of interest. 
Consider an arbitrary group of vectors
$ \{ V^{\mathrm{u}}_{h+1} \in \mathbb{R}^{S} \mid 1\leq h \leq H\}$ obeying \eqref{equ:assumption-1}.  
By construction, one can find a group of points $\big\{ V^{\mathrm{net}}_{h+1} \in \mathcal{N}_{h+1,\alpha} \mid h\in [H] \big\}$
such that
\begin{align}
	0\leq  V^{\mathrm{u}}_{h+1}(s) - V^{\mathrm{net}}_{h+1} (s) \leq \alpha \qquad \text{for all }(h,s)\in \cS\times [H]. 
	\label{eq:Vd-Vnet-dominate-condition}
\end{align}
It is readily seen that
\begin{align}
 & \left|\sum_{k=1}^{K}\sum_{n=N_{h}^{k}(s_{h}^{k},a_{h}^{k})}^{N_{h}^{K-1}(s_{h}^{k},a_{h}^{k})}\frac{\lambda_{h}^{k}}{n}\big(P_{h}^{k}-P_{h,s_{h}^{k},a_{h}^{k}}\big)\big(V_{h+1}^{\mathrm{u}}-V_{h+1}^{\mathrm{net}}\big)\right|\nonumber\\
	& \leq\left|\sum_{k=1}^{K}\sum_{n=N_{h}^{k}(s_{h}^{k},a_{h}^{k})}^{N_{h}^{K-1}(s_{h}^{k},a_{h}^{k})}\frac{\lambda_{h}^{k}}{n}\Big( \big\Vert P_{h}^{k} \big\Vert_{1}  + \big\Vert P_{h,s_{h}^{k},a_{h}^{k}}\big\Vert _{1} \Big) \big\|V_{h+1}^{\mathrm{u}} -V_{h+1}^{\mathrm{net}} \big\|_{\infty}\right|\nonumber\\
 & \leq 2eK \alpha\log T ,
\end{align}
where the last inequality follows from $\sum_{n = N^{i}_h(s^i_h, a^i_h)}^{N^{K-1}_h(s^i_h, a^i_h)} \frac{1}{n} \leq \log T$ and $\lambda_h^k \leq e$ (cf.~\eqref{eq:lambda_kh_bound}).
Consequently, by taking $\alpha = 1/(SAT)$, we can deduce that
\begin{align}
 & \left|\sum_{h=1}^{H}\sum_{k=1}^{K}\sum_{n=N_{h}^{k}(s_{h}^{k},a_{h}^{k})}^{N_{h}^{K-1}(s_{h}^{k},a_{h}^{k})}\frac{\lambda_{h}^{k}}{n}\big(P_{h}^{k}-P_{h,s_{h}^{k},a_{h}^{k}}\big)\big(V_{h+1}^{\mathrm{u}}-V_{h+1}^{\star}\big)\right|\nonumber\\
 & \quad\leq\left|\sum_{h=1}^{H}\sum_{k=1}^{K}\sum_{n=N_{h}^{k}(s_{h}^{k},a_{h}^{k})}^{N_{h}^{K-1}(s_{h}^{k},a_{h}^{k})}\frac{\lambda_{h}^{k}}{n}\big(P_{h}^{k}-P_{h,s_{h}^{k},a_{h}^{k}}\big)\big(V_{h+1}^{\mathrm{net}}-V_{h+1}^{\star}\big)\right|\notag\\
 & \qquad\quad+\sum_{h=1}^{H}\left|\sum_{k=1}^{K}\sum_{n=N_{h}^{k}(s_{h}^{k},a_{h}^{k})}^{N_{h}^{K-1}(s_{h}^{k},a_{h}^{k})}\frac{\lambda_{h}^{k}}{n}\big(P_{h}^{k}-P_{h,s_{h}^{k},a_{h}^{k}}\big)\big(V_{h+1}^{\mathrm{u}}-V_{h+1}^{\mathrm{net}}\big)\right|\nonumber\\
 & \quad\lesssim\left|\sum_{h=1}^{H}\sum_{k=1}^{K}\sum_{n=N_{h}^{k}(s_{h}^{k},a_{h}^{k})}^{N_{h}^{K-1}(s_{h}^{k},a_{h}^{k})}\frac{\lambda_{h}^{k}}{n}\big(P_{h}^{k}-P_{h,s_{h}^{k},a_{h}^{k}}\big)\big(V_{h+1}^{\mathrm{net}}-V_{h+1}^{\star}\big)\right|+HK\alpha\log T\nonumber\\
 & \quad\lesssim\sqrt{H^{2}SA\sum_{h=1}^{H}\sum_{i=1}^{K}P_{h,s_{h}^{k},a_{h}^{k}}\left(V_{h+1}^{\mathrm{net}}-V_{h+1}^{\star}\right)\big(\log^{2}T\big)\log\frac{SAT}{\delta\alpha}}+H^{2}SA\log^2\frac{SAT}{\delta \alpha}+HK\alpha\log T \notag\\
 & \quad \asymp \sqrt{H^{2}SA\sum_{h=1}^{H}\sum_{i=1}^{K}P_{h,s_{h}^{k},a_{h}^{k}}\left(V_{h+1}^{\mathrm{u}}-V_{h+1}^{\star}\right)\big(\log^{2}T\big)\log\frac{SAT}{\delta}}+H^{2}SA\log^2\frac{SAT}{\delta }, 
	\label{eq:uniform-bound-all-Vd}
\end{align}
where the last line holds due to the condition \eqref{eq:Vd-Vnet-dominate-condition} and our choice of $\alpha$.  
To summarize, with probability exceeding $1-\delta/6$, the property \eqref{eq:uniform-bound-all-Vd} holds simultaneously for all 
$ \{ V^{\mathrm{u}}_{h+1} \in \mathbb{R}^{S} \mid 1\leq h \leq H\}$ obeying \eqref{equ:assumption-1}.

\paragraph{Step 4: controlling the original term of interest. } 
 With the above union bound in hand, we are ready to control the original term of interest
 \begin{align}
 \sum_{h = 1}^H \sum_{k = 1}^K \sum_{n = N^{k}_h(s^k_h, a^k_h)}^{N^{K-1}_h(s^k_h, a^k_h)} \frac{\lambda^{k}_h}{n}
	 \big(P^k_{h} - P_{h, s^k_h, a^k_h} \big) \big(V^{\rref, K}_{h+1} -V_{h+1}^\star \big).
 \end{align}
 To begin with, it can be easily verified using \eqref{equ:optimism-ref-Q} that
 \begin{align}
 	V_{h+1}^{\star} \leq V^{\rref, K}_{h+1} &\leq H \qquad \text{ for all } 1\leq h \leq H.
 \end{align}
Moreover, we make the observation that
\begin{align}
  \sum_{h=1}^{H}\sum_{k=1}^{K}P_{h,s_{h}^{k},a_{h}^{k}}\big(V_{h+1}^{\rref,K}-V_{h+1}^{\star}\big)
	&\overset{\mathrm{(i)}}{\leq}\sum_{h=1}^{H}\sum_{k=1}^{K}P_{h,s_{h}^{k},a_{h}^{k}}\big(V_{h+1}^{\rref,k}-V_{h+1}^{\star}\big) \notag\\
	& \overset{\mathrm{(ii)}}{\leq}\sqrt{H^{7}SAK\log\frac{SAT}{\delta}}+ H^3SA + HK
\end{align}
with probability exceeding $1-\delta/6$, 
 where (i) holds because $V_{h+1}^{\rref}$ is monotonically non-increasing (in view of the monotonicity of $V_h(s)$ in \eqref{eq:monotonicity_Vh} and the update rule in line~\ref{eq:line-number-16} of Algorithm~\ref{algo:vr-k}), and (ii) follows from \eqref{equ:freed-234}.
 Substitution into \eqref{eq:uniform-bound-all-Vd} yields
 \begin{align}
 & \left|\sum_{h=1}^{H}\sum_{k=1}^{K}\sum_{n=N_{h}^{k}(s_{h}^{k},a_{h}^{k})}^{N_{h}^{K-1}(s_{h}^{k},a_{h}^{k})}\frac{\lambda_{h}^{k}}{n}\big(P_{h}^{k}-P_{h,s_{h}^{k},a_{h}^{k}}\big)\big(V_{h+1}^{\rref,K}-V_{h+1}^{\star}\big)\right|\nonumber\\
 & \quad\lesssim\sqrt{H^{2}SA\sum_{h=1}^{H}\sum_{i=1}^{K}P_{h,s_{h}^{k},a_{h}^{k}}\left(V_{h+1}^{\rref,K}-V_{h+1}^{\star}\right)\big(\log^{2}T\big)\log\frac{SAT}{\delta}}+H^{2}SA\log^2\frac{SAT}{\delta} \notag\\
 & \quad\lesssim\sqrt{H^{2}SA\left\{ \sqrt{H^{7}SAK\log\frac{SAT}{\delta}}+ H^3SA +HK\right\} \big(\log^{2}T\big)\log\frac{SAT}{\delta}}+H^{2}SA\log^2\frac{SAT}{\delta} \notag\\
 & \quad\lesssim\sqrt{H^{2}SA\left\{ H^{6}SA\log\frac{SAT}{\delta}+ H^3SA + HK\right\} \log^{3}\frac{SAT}{\delta}}+H^{2}SA\log^2\frac{SAT}{\delta} \notag\\
 & \quad\lesssim H^{4}SA\log^{2}\frac{SAT}{\delta}+\sqrt{H^{3}SAK\log^{3}\frac{SAT}{\delta}}, 
	 \label{eq:final-term-control-246-repeat}
\end{align}
where the penultimate line holds since 
\[
\sqrt{H^{7}SAK\log\frac{SAT}{\delta}}=\sqrt{H^{6}SA\log\frac{SAT}{\delta}}\sqrt{HK}\lesssim H^{6}SA\log\frac{SAT}{\delta}+HK.
\]

\bibliography{bibfileRL}
\bibliographystyle{apalike} 

\end{document}